\pgfplotsset{compat=1.17} 
\DeclareMathOperator*{\argmax}{arg\,max}
\theoremstyle{plain} 
\newtheorem{theorem}{Theorem}
\newtheorem{corollary}{Corollary}
\newtheorem{definition}{Definition}
\newtheorem{lemma}{Lemma}
\newtheorem{proposition}{Proposition}
\theoremstyle{definition} \newtheorem{remark}{Remark}
\theoremstyle{definition} 
\begin{document}


\title{Addressing GAN Training Instabilities via Tunable Classification Losses}

\author{Monica Welfert*, Gowtham R. Kurri*, Kyle Otstot, Lalitha Sankar
\thanks{* Equal contribution}
\thanks{This work is supported in part by NSF grants CIF-1901243, CIF-1815361, CIF-2007688, DMS-2134256, and SCH-2205080.}
\thanks{M. Welfert, K. Otstot and L. Sankar are with the School of Electrical, Computer, and Energy Engineering, Arizona State University, Tempe, AZ 85281 USA (email: \{mwelfert, lsankar, kotstot\}@asu.edu). 

Gowtham R. Kurri was with the School of Electrical, Computer and Energy Engineering at Arizona State University at the time the work was done. He is now with the Signal Processing and Communications Research Centre at International Institute of Information Technology, Hyderabad, India (e-mail: {gowtham.kurri@iiit.ac.in}).}
\thanks{Manuscript received ; revised .}}

\markboth{Journal of \LaTeX\ Class Files,~Vol.~, No.~, }%
{Shell \MakeLowercase{\textit{et al.}}: A Sample Article Using IEEEtran.cls for IEEE Journals}

\IEEEpubid{0000--0000/00\$00.00~\copyright~2023 IEEE}

\maketitle

\begin{abstract}
Generative adversarial networks (GANs), modeled as a zero-sum game between a generator (G) and a discriminator (D), allow generating synthetic data with formal guarantees. 
Noting that D is a classifier, we begin by reformulating the GAN value function using 
class probability estimation (CPE) losses. 
{We prove a two-way correspondence between CPE loss GANs and $f$-GANs which minimize $f$-divergences.} 
We also show that all symmetric $f$-divergences are equivalent in convergence. 
In the finite sample and model capacity setting, we define and obtain bounds on estimation and generalization errors. We specialize these results to $\alpha$-GANs, defined using $\alpha$-loss, a tunable CPE loss family parametrized by $\alpha\in(0,\infty]$. 
We next introduce a class of dual-objective GANs to address training instabilities of GANs by modeling each player's objective using $\alpha$-loss to obtain $(\alpha_D,\alpha_G)$-GANs. We show that the resulting non-zero sum game simplifies to minimizing an $f$-divergence under appropriate conditions on $(\alpha_D,\alpha_G)$. Generalizing this dual-objective formulation using CPE losses, we define and obtain upper bounds on an appropriately defined estimation error. Finally, we highlight the value of tuning $(\alpha_D,\alpha_G)$ in alleviating training instabilities for the synthetic 2D Gaussian mixture ring as well as the large publicly available Celeb-A and LSUN Classroom image datasets.
\end{abstract}

\begin{IEEEkeywords}
generative adversarial networks, CPE loss formulation, estimation error, training instabilities, dual objectives.
\end{IEEEkeywords}

\section{Introduction}

\IEEEPARstart{G}{enerative} adversarial networks (GANs) have become a crucial data-driven tool for generating synthetic data. GANs are generative models trained to produce samples from an unknown (real) distribution using a finite number of training data samples. They consist of two modules, a generator G and a discriminator D, parameterized by vectors $\theta\in\Theta\subset \mathbb{R}^{n_g}$ and $\omega\in\Omega\subset\mathbb{R}^{n_d}$, respectively, which play an adversarial game with each other. The generator $G_\theta$ maps noise $Z\sim P_Z$ to a data sample in $\mathcal{X}$ via the mapping $z\mapsto G_\theta(z)$ and aims to mimic data from the real distribution $P_{r}$. The discriminator $D_\omega$ takes as input $x\in\mathcal{X}$ and classifies it as real or generated by computing a score $D_\omega(x)\in[0,1]$ which reflects the probability that $x$ comes from $P_r$ (real) as opposed to $P_{G_\theta}$ (synthetic). For a chosen value function $V(\theta,\omega)$, the adversarial game between G and D can be formulated as a zero-sum min-max problem given by 
\thinmuskip=1mu
\begin{align}\label{eqn:GANgeneral}
    \inf_{\theta\in\Theta}\sup_{\omega\in\Omega} \,V(\theta,\omega). 
\end{align}
Goodfellow \emph{et al.}~\cite{Goodfellow14} introduce the vanilla GAN for which 
\thickmuskip=2mu
\medmuskip=0mu
\begin{align}
V_\text{VG}(\theta,\omega) \nonumber =\mathbb{E}_{X\sim P_r}[\log{D_\omega(X)}]+\mathbb{E}_{X\sim P_{G_\theta}}[\log{(1-D_\omega(X))}].
\end{align}
For this $V_\text{VG}$, they show that when the discriminator class $\{D_\omega\}_{\omega\in\Omega}$ is rich enough, \eqref{eqn:GANgeneral} simplifies to minimizing
the Jensen-Shannon divergence~\cite{Lin91} between $P_r$ and $P_{G_\theta}$. 

Various other GANs have been studied in the literature using different value functions, including
$f$-divergence based GANs called $f$-GANs~\cite{NowozinCT16}, IPM based GANs~\cite{ArjovskyCB17,sriperumbudur2012empirical,liang2018well}, etc. 
Observing that the discriminator is a classifier, recently, in~\cite{KurriSS21}, we show that the value function in \eqref{eqn:GANgeneral} can be written using a class probability estimation (CPE) loss $\ell(y,\hat{y})$ whose inputs are the true label $y\in\{0,1\}$ and predictor $\hat{y}\in[0,1]$ (soft prediction of $y$) as
\begin{align*}
   V(\theta,\omega) =\mathbb{E}_{X\sim P_r}[-\ell(1,D_\omega(X))]+\mathbb{E}_{X\sim P_{G_\theta}}[-\ell(0,D_\omega(X))].
\end{align*}
We further introduce $\alpha$-GAN in \cite{KurriSS21}
using the tunable CPE loss $\alpha$-loss~\cite{sypherd2019tunable,sypherd2022journal}, defined for $\alpha \in(0,\infty]$ as
\begin{align} \label{eq:cpealphaloss}
\ell_\alpha(y,\hat{y})\coloneqq\frac{\alpha}{\alpha-1}\left(1-y\hat{y}^{\frac{\alpha-1}{\alpha}}-(1-y)(1-\hat{y})^{\frac{\alpha-1}{\alpha}}\right),
\end{align}
and show that this $\alpha$-GAN formulation recovers various $f$-divergence based GANs including the Hellinger GAN~\cite{NowozinCT16} ($\alpha=1/2$), the vanilla GAN~\cite{Goodfellow14} ($\alpha=1$), and the Total Variation (TV) GAN~\cite{NowozinCT16} ($\alpha=\infty$). Further, for a large enough discriminator class, we also show that the min-max optimization for $\alpha$-GAN in \eqref{eqn:GANgeneral} simplifies to minimizing the Arimoto divergence~\cite{osterreicher1996class,LieseV06}. In \cite{kurri-2022-convergence}, we also show that the resulting Arimoto divergences are equivalent in convergence.

While each of the abovementioned GANs have distinct advantages, they continue to 
suffer from one or more types of training instabilities, including vanishing/exploding gradients, mode collapse, and sensitivity to hyperparameter tuning.  
In \cite{Goodfellow14}, Goodfellow \emph{et al.} note that the generator's objective in the vanilla GAN can \emph{saturate} early in training (due to the use of the sigmoid activation) when D can easily distinguish between the real and synthetic samples, i.e., when the output of D is near zero for all synthetic samples, leading to vanishing gradients. Further, a confident D induces a steep gradient at samples close to the real data, thereby preventing G from learning such samples due to exploding gradients.  To alleviate these, \cite{Goodfellow14} proposes a \emph{non-saturating} (NS) generator objective:
\begin{align}
    V_\text{VG}^\text{NS}(\theta,\omega)=\mathbb{E}_{X\sim P_{G_\theta}}[-\log{D_\omega(X)}].
\end{align}

This NS version of the vanilla GAN 
may be viewed as involving different objective functions for the two players (in fact, with two versions of the $\alpha=1$ CPE loss, i.e., log-loss, for D and G). However, it continues to suffer from mode collapse \cite{arjovsky2017towards,wiatrak2019stabilizing} due to failure to converge and sensitivity to hyperparameter initialization (e.g. learning rate) because of large gradients. While other dual-objective GANs have also been proposed 
(e.g., Least Squares GAN (LSGAN)~\cite{Mao_2017_LSGAN}, R\'{e}nyiGAN~\cite{bhatia2021least}, NS $f$-GAN \cite{NowozinCT16}, hybrid $f$-GAN \cite{poole2016improved}), few have successfully addressed the landscape of training instabilities.


Recent results have shown that $\alpha$-loss demonstrates desirable gradient behaviors for different $\alpha$ values \cite{sypherd2022journal}. These results also assure learning robust classifiers that can reduce the confidence of D (a classifier); this, in turn, can allow G to learn without gradient issues. More broadly, by using different loss-based value functions for D and G, we can fully exploit this varying gradient behavior.  
To this end, in \cite{welfert2023alpha_d} we 
introduce a different $\alpha$-loss objective\footnote{Throughout the paper, we use the terms objective and value function interchangeably.} for each player 
and propose a tunable dual-objective $(\alpha_D,\alpha_G)$-GAN, where the value functions of D and G are written in terms of $\alpha$-loss with parameters $\alpha_D\in(0,\infty]$ and $\alpha_G\in(0,\infty]$, respectively. 

This paper ties together and significantly enhances our prior results investigating single-objective CPE loss-based GANs including $\alpha$-GAN \cite{KurriSS21,kurri-2022-convergence} and dual-objective GANs including $(\alpha_D,\alpha_G)$-GANs \cite{welfert2023alpha_d}. We list below all our contributions (while highlighting novelty relative to \cite{KurriSS21,kurri-2022-convergence,welfert2023alpha_d}) for both single- and dual-objective GANs. 


\subsection{Our Contributions}

\noindent \textbf{Single-objective GANs:}
\begin{itemize}[leftmargin=*]
\item We review CPE loss GANs and include a two-way correspondence between CPE loss GANs and $f$-divergences (Theorem~\ref{thm:correspondence}) previously published in \cite{kurri-2022-convergence}. We note that we include a more comprehensive proof of this result here. We review $\alpha$-GANs, originally proposed in \cite{KurriSS21}, and present the optimal strategies for G and D, provided they have sufficiently large capacity and infinite samples (Theorem~\ref{thm:alpha-GAN}). We also include a result from \cite{KurriSS21} showing that $\alpha$-GAN interpolates between various $f$-GANs including vanilla GAN ($\alpha=1$), Hellinger GAN~\cite{NowozinCT16} ($\alpha=1/2$), and Total Variation GAN~\cite{NowozinCT16} ($\alpha=\infty$) by tuning $\alpha$ (Theorem~\ref{thm:fgans}).
\item A novel contribution of this work is proving an equivalence between a CPE loss GAN and a corresponding $f$-GAN (Theorem \ref{theorem:equivalence-fGAN-CPEGAN}). We specialize this for 
$\alpha$-GANs and $f_\alpha$-GANs to show that one can go between the two formulations using a bijective activation function  (Theorem \ref{thm:obj-equiv-gen} and Corollary \ref{corollary:equivalence-falphaGAN-alphaGAN}).
\item We study \emph{convergence} properties of CPE loss GANs in the presence of sufficiently large number of samples and discriminator capacity. We show that all symmetric $f$-divergences are \emph{equivalent} in convergence (Theorem~\ref{thm:equivalenceinconvergence}) generalizing an equivalence proven in our prior work \cite{kurri-2022-convergence} for Arimoto divergences. We remark that the proof techniques used here give rise to a conceptually simpler proof of equivalence between Jensen-Shannon divergence and total variation distance proved earlier by Arjovsky \emph{et al.}~\cite[Theorem~2(1)]{ArjovskyCB17}.
\item In the setting of finite training samples and limited capacity for the generator and discriminator models, we extend the definition of generalization, first introduced by Arora \emph{et al.} \cite{AroraGLMZ17}, to CPE loss GANs. We do so by introducing a refined neural net divergence and prove that it indeed generalizes with 
increasing number of training samples (Theorem \ref{thm:generalizationofarora}). 
\item To conclude our results on single-objective GANs, we review the definition of estimation error for CPE loss GANs introduced in \cite{kurri-2022-convergence}, present an upper bound on the error originally proven in \cite{kurri-2022-convergence} (Theorem \ref{thm:estimationerror-upperbound}), and a matching lower bound under additional assumptions for $\alpha$-GANs previously proven in \cite{welfert2023alpha_d} (Theorem \ref{thm:est-error-lower-bound-alpha-infinity}).
\end{itemize}

\noindent \textbf{Dual-objective GANs:}

\begin{itemize}[leftmargin=*]
\item {We begin by reviewing $(\alpha_D,\alpha_G)$-GANs, originally proposed in \cite{welfert2023alpha_d}, and the corresponding optimal strategies for D and G for appropriate $(\alpha_D,\alpha_G)$ values (Theorem \ref{thm:alpha_D,alpha_G-GAN-saturating}). We also review the non-saturating version of $(\alpha_D,\alpha_G)$-GANs, also proposed in \cite{welfert2023alpha_d}, and present its Nash equilibrium strategies for D and G (Theorem \ref{thm:alpha_D,alpha_G-GAN-nonsaturating}).}
\item A novel contribution of this work is a gradient analysis highlighting the effect of tuning $(\alpha_D,\alpha_G)$ on the magnitude of the gradient of the generator's loss for both the saturating and non-saturating versions of the $(\alpha_D,\alpha_G)$-GAN formulation (Theorem \ref{thm:sat-gradient}).
\item {We introduce a dual-objective CPE loss GAN formulation generalizing our dual-objective $(\alpha_D,\alpha_G)$-GAN formulation in \cite{welfert2023alpha_d}. For this non-zero sum game, we present the optimal strategies for D and G and prove that for the optimal $D_{\omega^*}$, G minimizes an $f$-divergence under certain conditions (Proposition \ref{prop:dual-objective-CPE-loss-GAN-strategies}).}
\item 
We generalize the definition of estimation error we introduced in \cite{welfert2023alpha_d} for $(\alpha_D,\alpha_G)$-GANs to dual-objective CPE loss GANs. We present an upper bound on the error (Theorem \ref{thm:estimationerror-upperbound-double-objective}), and show that this result subsumes that for $(\alpha_D,\alpha_G)$-GANs in \cite{welfert2023alpha_d}. 
\item Focusing on $(\alpha_D,\alpha_G)$-GANs, we demonstrate empirically that tuning $\alpha_D$ and $\alpha_G$ significantly reduces vanishing and exploding gradients and alleviates mode collapse on a synthetic 2D-ring dataset (originally published in \cite{welfert2023alpha_d}). For the high-dimensional Celeb-A and LSUN Classroom datasets, we show that our tunable approach is more robust in terms of the Fréchet Inception Distance (FID) to the choice of GAN hyperparameters, including number of training epochs and learning rate, relative to both vanilla GAN and LSGAN.
\item Finally, throughout the paper, we illustrate the effect of tuning $(\alpha_D,\alpha_G)$ on training instabilities including vanishing and exploding gradients, as well as model oscillation and mode collapse.
\end{itemize}

\subsection{Related Work}

GANs face several challenges that threaten their training stability \cite{Goodfellow14,salimans2016improved,zhao2018bias,huszar2015not}, such as
vanishing/exploding gradients, mode collapse, sensitivity to hyperparameter initialization, and model oscillation, which occurs when the generated data oscillates around modes in real data due to large gradients.
Many GAN variants have been proposed to stabilize training by changing the objective optimized \cite{Goodfellow14,Mao_2017_LSGAN,NowozinCT16,li2017mmd,berthelot2017began,ArjovskyCB17,poole2016improved,GulrajaniAADC17,mroueh2017fisher,mroueh2017mcgan,bhatia2021least} or the architecture design \cite{radford2015,donahue2016adversarial,karras2017progressive,karras2019style}. Since we focus on tuning the objective, we restrict discussions and comparisons to similar approaches. 
Approaches modifying the objective can be categorized as single-objective or dual-objective variants. 
For the single objective setting, arguing that vanishing gradients are due to the sensitivity of $f$-divergences to mismatch in distribution supports, Arjovsky \emph{et al.}~\cite{ArjovskyCB17} proposed Wasserstein GAN (WGAN) using a ``weaker" Euclidean distance between distributions. However, this formulation requires a Lipschitz constraint on D, which in practice is achieved either via clipping model weights  or using a computationally expensive gradient penalty method \cite{GulrajaniAADC17}.  More generally, a broader class of GANs based on integral probability metric (IPM) distances have been proposed, including MMD GANs \cite{dziugaite2015training,li2015generative}, Sobolev GANs \cite{mroueh2017sobolev}, (surveyed in \cite{liang2018well}), and total variation GANs \cite{PACGANLin}. Our work focuses on classifier based GANs, and does not require clipping or penalty methods, thus limiting meaningful comparisons with IPM-based GANs.
Finally, for single-{objective} GANs, many theoretical approaches to GANs assume that a particular divergence is minimized and study the role of regularization methods \cite{reshetova2021entropic,mesa2019distributed}. Our work goes beyond these approaches by explicitly analyzing the value function optimizations of both D and G, thereby enabling understanding and addressing training instabilities.

Noting the benefit of using different objectives for the D and G, various dual-objective GANs, beyond the NS vanilla GAN, have been proposed. Mao \emph{et al.} \cite{Mao_2017_LSGAN} proposed Least Squares GAN (LSGAN) where the objectives for D and G use different linear combinations of squared loss-based measures.
LSGANs can be viewed as state of the art in highlighting the effect of objective in GAN performance; therefore, in addition to vanilla GAN, we contrast our results to this work, as it allows for a fair comparison when choosing the same hyperparameters including model architecture, learning rate, initialization, optimization methodology, etc. for both approaches. 
Dual objective variants including R\'{e}nyiGAN~\cite{bhatia2021least}, least $k$th-order GANs ~\cite{bhatia2021least}, NS $f$-GAN \cite{NowozinCT16}, and hybrid $f$-GAN \cite{poole2016improved} have also been proposed. Recently, \cite{veiner2023unifying} attempts to unify a variety of divergence-based GANs (including special cases of both our $(\alpha_D,\alpha_G)$-GANs and LSGANs) via $\mathcal{L}_\alpha$-GANs. However, our work is distinct in highlighting the role of GAN objectives in reducing training instabilities.
Finally, it is worth mentioning that dual objectives have been shown to be essential 
in the context of learning models robust to adversarial attacks \cite{robey2023adversarial}. 


Generalization for single-objective GANs was first introduced by Arora \emph{et al.} \cite{AroraGLMZ17}. Our work is the first to extend the definition of generalization to incorporate CPE losses. There is a growing interest in studying and constructing bounds on the estimation error in training GANs \cite{zhang2017discrimination,liang2018well,JiZL21}. Estimation error evaluates the performance of a limited fixed capacity generator (e.g., a class of neural networks) learned with finite samples relative to the best generator. The results in \cite{zhang2017discrimination,liang2018well,JiZL21} study estimation error using a specific formulation that does not take into account the loss used and also define estimation error only in the single-objective setting. In this work, we study the impact of the loss used as well as the dual-objective formulation on the estimation error guarantees. To the best of our knowledge, this is the first result of this kind for dual-objective GANs. 

The remainder of the paper is organized as follows. We review various GANs in the literature, classification loss functions, particularly $\alpha$-loss, and GAN training instabilities in Section~\ref{section:preliminaries}. In Section~\ref{sec:loss-function-perspective-single-objective}, we present and analyze the loss function perspective of GANs and introduce tunable $\alpha$-GANs. In Section~\ref{sec:dual-objective}, we propose and analyze dual-objective $(\alpha_D,\alpha_G)$-GANs and introduce a dual-objective CPE-loss GAN formulation. Finally, in Section~\ref{sec:experimental-results}, we highlight the value of tuning $(\alpha_D,\alpha_G)$ for $(\alpha_D,\alpha_G)$-GANs on several datasets. {All proofs and additional experimental results can be found in the accompanying supplementary material (Appendices A-Q).}

\section{Preliminaries: Overview of GANs and Loss Functions for Classification}\label{section:preliminaries}
\subsection{Background on GANs}
We begin by presenting an overview of GANs in the literature. Let $P_r$ be a probability distribution over $\mathcal{X}\subset\mathbb{R}^d$, which the generator wants to learn \emph{implicitly} by producing samples by playing a competitive game with a discriminator in an adversarial manner. 
We parameterize the generator G and the discriminator D by vectors $\theta\in\Theta\subset \mathbb{R}^{n_g}$ and $\omega\in\Omega\subset\mathbb{R}^{n_d}$, respectively, and write $G_\theta$ and $D_\omega$ ($\theta$ and $\omega$ are typically the weights of neural network models for the generator and the discriminator, respectively). The generator $G_\theta$ takes as input a $d^\prime(\ll d)$-dimensional latent noise $Z\sim P_Z$ and maps it to a data point in $\mathcal{X}$ via the mapping $z\mapsto G_\theta(z)$. For an input $x\in\mathcal{X}$, the discriminator outputs $D_\omega(x)\in[0,1]$, the probability that $x$ comes from $P_r$ (real) as opposed to $P_{G_\theta}$ (synthetic). The generator and the discriminator play a two-player min-max game with a value function $V(\theta,\omega)$, resulting in a saddle-point optimization problem given by
\begin{align}\label{eqn:GANgeneral-background}
    \inf_{\theta\in\Theta}\sup_{\omega\in\Omega} V(\theta,\omega). 
\end{align}
Goodfellow \emph{et al.}~\cite{Goodfellow14} introduced the vanilla GAN using
\begin{align}
    V_\text{VG}(\theta,\omega)
    &=\mathbb{E}_{X\sim P_r}[\log{D_\omega(X)}]+\mathbb{E}_{Z\sim P_{Z}}[\log{(1-D_\omega(G_\theta(Z)))}]\nonumber\\
    &=\mathbb{E}_{X\sim P_r}[\log{D_\omega(X)}]+\mathbb{E}_{X\sim P_{G_\theta}}[\log{(1-D_\omega(X))}],\label{eq:Goodfellowobj}
\end{align}
for which they showed that when the discriminator class $\{D_\omega\}$, parametrized by $\omega$, is rich enough, \eqref{eqn:GANgeneral-background} simplifies to finding $\inf_{\theta\in\Theta} 2D_{\text{JS}}(P_r||P_{G_\theta})-\log{4}$,
where $D_{\text{JS}}(P_r||P_{G_\theta})$ is the Jensen-Shannon divergence~\cite{Lin91} between $P_r$ and $P_{G_\theta}$. This simplification is achieved, for any $G_\theta$, by choosing the optimal discriminator
    \begin{align}
    D_{\omega^*}(x)=\frac{p_r(x)}{p_r(x)+p_{G_\theta}(x)}, \quad x \in \mathcal{X},
    \label{eq:opt-disc-vanilla}
    \end{align}
where $p_r$ and $p_{G_\theta}$ are the corresponding densities of the distributions $P_r$ and $P_{G_\theta}$, respectively, with respect to a base measure $dx$ (e.g., Lebesgue measure).

Generalizing this by leveraging the variational characterization of $f$-divergences~\cite{NguyenWJ10}, Nowozin \emph{et al.}~\cite{NowozinCT16} introduced $f$-GANs via the value function
\begin{align}\label{eqn:fGANobj}
    V_f(\theta,\omega)=\mathbb{E}_{X\sim P_r}[D_\omega(X)]+\mathbb{E}_{X\sim P_{G_\theta}}[-f^*(D_\omega(X))],
\end{align}
where\footnote{This is a slight abuse of notation in that $D_\omega$ is not a probability here. However, we chose this for consistency in  notation of discriminator across various GANs. 
} $D_\omega:\mathcal{X}\rightarrow \mathbb{R}$ and $f^*(t)\coloneqq \sup_u\left\{ut-f(u)\right\}$ is the Fenchel conjugate of a convex lower semicontinuous function $f$ defining
an $f$-divergence
$D_f(P_r||P_{G_\theta})\coloneqq\int_\mathcal{X}p_{G_\theta}(x)f\left(\frac{p_r(x)}{p_{G_\theta}(x)}\right)dx$~\cite{measures_renyi1961,Csiszar67,Alis66}. In particular, $\sup_{\omega\in\Omega} V_f(\theta,\omega)=D_f(P_r||P_{G_\theta})$ when there exists $\omega^*\in\Omega$ such that $D_{\omega^*}(x)=f^\prime\left(\frac{p_r(x)}{p_{G_\theta}(x)}\right)$. In order to respect the domain $\text{dom}(f^*)$ of the conjugate $f^*$,
Nowozin \emph{et al.} further decomposed \eqref{eqn:fGANobj} by assuming the discriminator $D_\omega$ can be represented in the form $D_\omega(x) = g_f(Q_\omega(x))$, yielding the value function
\begin{align}\label{eqn:fGANobj-activation}
    \widetilde{V}_f(\theta,\omega)=\mathbb{E}_{X\sim P_r}[g_f(Q_\omega(x))]+\mathbb{E}_{X\sim P_{G_\theta}}[-f^*(g_f(Q_\omega(x)))],
\end{align}
where $Q_\omega:\mathcal{X}\to\mathbb R$ and $g_f:\mathbb R \to \text{dom}(f^*)$ is an output activation function specific to the $f$-divergence used.

Highlighting the problems with the continuity of various $f$-divergences (e.g., Jensen-Shannon, KL, reverse KL, total variation) over the parameter space $\Theta$~\cite{arjovsky2017towards}, Arjovsky \emph{et al.}~\cite{ArjovskyCB17} proposed Wasserstein-GAN (WGAN) using the following Earth Mover's (also called Wasserstein-1) distance:
\begin{align}
    W(P_r,P_{G_\theta})
    =\inf_{\Gamma_{X_1X_2}\in\Pi(P_r,P_{G_\theta})}\mathbb{E}_{(X_1,X_2)\sim \Gamma_{X_1X_2}}\lVert{X_1-X_2}\rVert_2,  
\end{align}
where $\Pi(P_r,P_{G_\theta})$ is the set of all joint distributions $\Gamma_{X_1X_2}$ with marginals $P_r$ and $P_{G_\theta}$. WGAN employs the Kantorovich-Rubinstein duality \cite{villani2008optimal} using the value function
\begin{align}\label{eqn:WGANobj}
   V_\text{WGAN}(\theta,\omega)=\mathbb{E}_{X\sim P_r}[D_\omega(X)]-\mathbb{E}_{X\sim P_{G_\theta}}[D_\omega(X)],
\end{align}
where the functions $D_\omega:\mathcal{X}\rightarrow \mathbb{R}$ are all 1-Lipschitz,
to simplify $\sup_{\omega\in\Omega}V_{\text{WGAN}}(\theta,\omega)$ to $W(P_r,P_{G_\theta})$ when the class $\Omega$ is rich enough. Although various GANs have been proposed in the literature, each of them exhibits their own strengths and weaknesses in terms of convergence, vanishing/exploding gradients, mode collapse, computational complexity, etc., leaving the problem of addresing GAN training instabilities unresolved~\cite{wiatrak2019stabilizing}.

\subsection{Background on Loss Functions for Classification}

The ideal loss function for classification is the Bayes loss, also known as the 0-1 loss. However, the complexity of implementing such a non-convex loss has led to much interest in seeking surrogate loss functions for classification. Several surrogate losses with desirable properties have been proposed to train classifiers; the most oft-used and popular among them is log-loss, also referred to as cross-entropy loss. However, enhancing robustness of classifier has broadened the search for better surrogate losses or families of losses; one such family is the class probability estimator (CPE) losses that operate on a soft probability or risk estimate. Recently, it has been shown that a large class of known CPE losses can be captured by a tunable loss family called $\alpha$-loss,
which includes the well-studied exponential loss ($\alpha=1/2$), log-loss ($\alpha=1$), and soft 0-1 loss, i.e., the probability of error ($\alpha=\infty$). Formally, $\alpha$-loss is defined as follows.

\begin{definition}[Sypherd \emph{et al.}~\cite{sypherd2022journal}]
\label{def:alphaloss} For a set of distributions $\mathcal{P}(\mathcal{Y})$ over $\mathcal{Y}$, $\alpha$-loss $\ell_{\alpha}:\mathcal{Y} \times \mathcal{P}(\mathcal{Y}) \rightarrow \mathbb{R}_{+}$ for $\alpha \in (0,1) \cup (1,\infty)$ is defined as
\begin{equation} \label{eq:alphaloss_prob}
\ell_{\alpha}(y,\hat{P}) = \frac{\alpha}{\alpha - 1}\left(1 - \hat{P}(y)^{\frac{\alpha-1}{\alpha}}\right).
\end{equation} 
By continuous extension, $\ell_{1}(y,\hat{P}) = -\log{\hat{P}(y)}$, $\ell_{\infty}(y,\hat{P}) = 1 - \hat{P}(y)$, and $\ell_{0}(y,\hat{P})=\infty$.
\end{definition}
Note that $\ell_{1/2}(y,\hat{P}) = \hat{P}(y)^{-1} - 1$, which is related to the exponential loss, particularly in the margin-based form~\cite{sypherd2022journal}. Also, $\alpha$-loss is convex in the probability term $\hat{P}(y)$.
Regarding the history of~\eqref{eq:alphaloss_prob}, Arimoto first studied $\alpha$-loss in finite-parameter estimation problems~\cite{arimoto1971information}, and later Liao \textit{et al.}~\cite{liao2018tunable} independently introduced and used $\alpha$-loss to model the inferential capacity of an adversary to obtain private attributes.
Most recently, Sypherd \textit{et al.}~\cite{sypherd2022journal} studied $\alpha$-loss extensively in the classification setting, which is an impetus for this work.

{\subsection{Background on GAN Training Instabilities}}
\label{subsec:gan-training-instabilities}

GANs face several challenges during training. Imbalanced performance between the generator and discriminator often coincides with the presence of exploding and vanishing gradients. When updating the generator weights during the backward pass of the network $ G_{\theta} \circ D_{\omega}$, the gradients are computed by propagating the gradient of the value function from the output layer of $D_{\omega}$ to the input layer of $G_{\theta}$, following the chain rule of derivatives. Each layer contributes to the gradient update by multiplying the incoming gradient with the local gradient of its activation function, and passing it to the preceding layer. When the gradients become large, the successive multiplication of these gradients across the layers can result in an exponential growth, known as \emph{exploding gradients}. Conversely, small gradients can lead to an exponential decay, referred to as \emph{vanishing gradients}. In both cases, networks with multiple hidden layers are particularly susceptible to unstable weight updates, causing extremely large or small values that may overflow or underflow the numerical range of computations, respectively.

In the context of the vanilla GAN, exploding gradients can occur when the generator successfully produces samples that are severely misclassified (close to 1) by the discriminator. During training, the generator is updated using the loss function $\log\left(1 - D_{\omega}(x)\right)$, which diverges to $-\infty$ as the discriminator output $D_{\omega}(x)$ approaches 1. Consequently, the gradients for the generator weights fail to converge to non-zero values, leading to the generated data potentially overshooting the real data in any direction. This is illustrated in Fig. \ref{fig:vanilla-gan-vanishing-exploding gradients}(b), relative to an initial starting point in Fig. \ref{fig:vanilla-gan-vanishing-exploding gradients}(a). In severe cases of exploding gradients, the weight update can push the generated data towards a region far from the real data. As a result, the discriminator can easily assign probabilities close to zero to the generated data and close to one to the real data. As the discriminator output approaches zero, the generator's loss function saturates, causing the gradients of the generator weights to gradually vanish. This is shown in Fig. \ref{fig:vanilla-gan-vanishing-exploding gradients}(c). The conflation of these two phenomena can prevent the generator from effectively correcting itself and improving its performance over time.

\begin{figure}[t]
\centering
\footnotesize
\setlength{\tabcolsep}{10pt}
\begin{tabular}{@{}ccc@{}}
 \includegraphics[page=1,width=0.3\linewidth]{./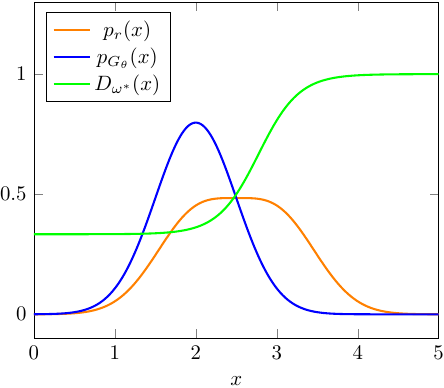}
&  \includegraphics[page=2,width=0.3\linewidth]{./Figures/vanishing_and_exploding_gradients.pdf}
& \includegraphics[page=3,width=0.3\linewidth]{./Figures/vanishing_and_exploding_gradients.pdf}\\
   (a) & (b) & (c)
\end{tabular}
\caption{A toy example of the vanilla GAN illustrating vanishing and exploding gradients, where the real distribution $P_r=0.5\mathcal{N}(2,0.5^2)+0.5\mathcal{N}(3,0.5^2)$ (orange curve) and the assumed initial generated distribution $P_{G_\theta}=\mathcal{N}(2,0.5^2)$ (blue curve). (a) A plot of the optimal discriminator output $D_{\omega^*}(x)$ in \eqref{eq:opt-disc-vanilla} (green curve). (b) A plot of the generator's saturating loss $\log(1-D_{\omega^*}(x))$ (pink curve). The rightmost generated samples receive steep gradients (exploding gradients) which causes the generated data to overshoot the real data mode toward the $D_{\omega^*}(x) \approx 1$ region. (c) For this saturating generator loss setting, following the generator's update using (b), when the discriminator updates, the generated samples now receive flat gradients (vanishing gradients), thus freezing $P_{G_\theta}$. }
\label{fig:vanilla-gan-vanishing-exploding gradients}
\end{figure}

\begin{figure}[t]
\centering
\includegraphics[width=0.4\linewidth]{./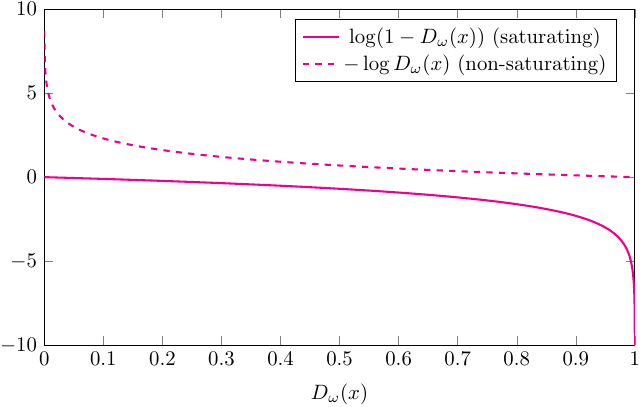}
\caption{A plot of the vanilla GAN generator's saturating loss $\log(1-D_{\omega^*}(x))$ and non-saturating loss $-\log(D_{\omega^*}(x))$.}
\label{fig:vanilla-gan-sat-and-nonsat-losses}
\end{figure}

To alleviate the issues of exploding and vanishing gradients, Goodfellow \textit{et al.} \cite{Goodfellow14} proposed a \textit{non-saturating} (NS) generator objective:
\begin{align}
    V_\text{VG}^\text{NS}(\theta,\omega)=\mathbb{E}_{X\sim P_{G_\theta}}[-\log{D_\omega(X)}].
    \label{eq:v-ns-vg}
\end{align}
The use of this non-saturating objective provides a more intuitive optimization trajectory that allows the generated distribution $P_{G_\theta}$ to converge to the real distribution $P_r$. As the discriminator output $D_\omega(x)$ for a sample $x$ approaches 1, the generator loss $-\log D_\omega(x)$ approaches zero, indicating that the generated data is closer to the real distribution Additionally, with a high-performing discriminator, the generator receives steep gradients (as opposed to vanishing gradients) during the update process; this occurs because the generator loss diverges to $+\infty$ as the discriminator output approaches zero (see Fig. \ref{fig:vanilla-gan-sat-and-nonsat-losses}). As we show in the sequel, using $\alpha$-loss based value functions allow modulating the magnitude of the gradient (and therefore, how steeply it rises), thereby improving over the vanilla GAN performance.

While the non-saturating vanilla GAN (an industry standard) incorporates two different objective functions for the generator and discriminator in order to combat vanishing and exploding gradients, it can still suffer from mode collapse and oscillations \cite{arjovsky2017towards,wiatrak2019stabilizing}. These issues often arise due to the sensitivity of the GAN to hyperparameter initialization. 
The problem of \textit{mode collapse} occurs when the generator produces samples that closely resemble only a limited subset of the real data. In such cases, the generator lacks the incentive to capture the remaining modes since the discriminator struggles to effectively differentiate between the real and generated samples. One possible explanation for this phenomenon, as depicted in Fig. \ref{fig:vanilla-gan-mode-collapse}, is that the generator and/or discriminator become trapped in a local minimum, impeding the necessary adjustments to mitigate mode collapse. In Fig. \ref{fig:vanilla-gan-mode-collapse}(a), the generated distribution approaches a single mode of the real distribution, which causes the optimal discriminator to have uniform predicted probabilities in this region; as a result, when the discriminator landscape is sufficiently flat in the mode neighborhood, the generator will get stuck and won't move out of the mode. We note that an extreme case of complete mode collapse is captured in Fig. \ref{fig:vanilla-gan-vanishing-exploding gradients}(c) where the generator is stuck in a non-mode region. As we show in the sequel, $\alpha$-loss based dual objective GANs can resolve such mode collapse issues which result from vanishing and exploding gradients.  

\begin{figure}[t]
\centering
\footnotesize
\setlength{\tabcolsep}{20pt}
\begin{tabular}{@{}cc@{}}
\includegraphics[page=2,width=0.32\linewidth]{./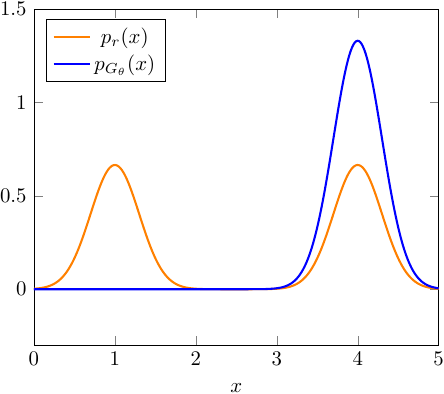}
& \includegraphics[page=3,width=0.32\linewidth]{./Figures/mode_collapse.pdf}\\
   (a) & (b)
\end{tabular}
\caption{A toy example of the vanilla GAN illustrating mode collapse, where the real distribution is $P_r=0.5\mathcal{N}(1,0.3^2)+0.5\mathcal{N}(4,0.3^2)$ (orange curve) and the assumed initial generated distribution is $P_{G_\theta}=\mathcal{N}(4,0.3^2)$ (blue curve). (a) A plot of the optimal discriminator output $D_{\omega^*}(x)$ in \eqref{eq:opt-disc-vanilla}. The discriminator output is flat in the dense $p_{G_\theta}$ region. (b) A plot of the generator's non-saturating loss $-\log(D_{\omega^*}(x))$. The loss is also flat in the dense $p_{G_\theta}$ region, causing the generator to receive near-zero gradients, thus appearing to “collapse” on the real data mode. }
\label{fig:vanilla-gan-mode-collapse}
\end{figure}


Yet another potential cause of mode collapse is \emph{model oscillation}. This occurs when a generator training with the non-saturating value function $V_{\text{VG}}^{\text{NS}}$ fails to converge due to the influence of a generated outlier data sample, as illustrated in  Fig. \ref{fig:vanilla-gan-model-oscillation}. In Fig. \ref{fig:vanilla-gan-model-oscillation}(a), most of the generated data is situated at a real data mode, while some are outliers and are situated very far from the real distribution. The discriminator very confidently classifies such outlier data as fake but is less sure about the generated data that is close to the real data. As shown in Fig. \ref{fig:vanilla-gan-model-oscillation}(b), the outlier data consequently receive gradients of very large magnitude while the generated data closer to the real data receive gradients of much smaller magnitude. The generator then prioritizes directing the outlier data toward the real data over keeping the data close to the real data in place; as a result, the generator update reflects a compromise in Fig. \ref{fig:vanilla-gan-model-oscillation}(c), where the outliers are resolved at the expense of moving the other data away from the real data mode. Although the generator succeeds at bringing down the average loss by eliminating these outliers, the discriminator is now able to confidently distinguish between the distributions, leading to near-zero probabilities assigned to the generated data. In turn, as shown in Fig. \ref{fig:vanilla-gan-model-oscillation}(d), the generated samples all receive very large gradients which may result in oscillations around the real data. For this setting as well, in the sequel, we show that choosing value functions that allow modulating the role of the outliers such as via $\alpha$-loss, can be very beneficial in addressing mode oscillation. We begin our analysis by first introducing a loss function perspective of GANs.

\begin{figure}[t]
\centering
\footnotesize
\setlength{\tabcolsep}{10pt}
\begin{tabular}{@{}ccccc@{}}
\includegraphics[page=2,width=0.23\linewidth]{./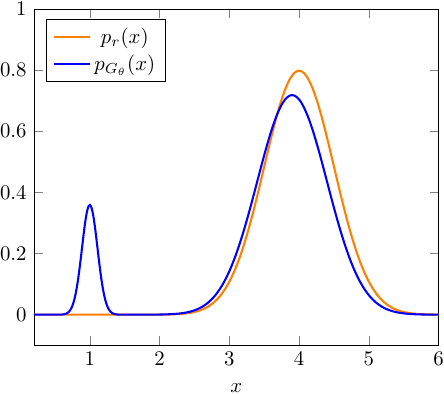}
& \includegraphics[page=3,width=0.22\linewidth]{./Figures/model_oscillation.pdf}
& \includegraphics[page=4,width=0.23\linewidth]{./Figures/model_oscillation.pdf}
& \includegraphics[page=5,width=0.22\linewidth]{./Figures/model_oscillation.pdf}\\
   (a) & (b) & (c) & (d)
\end{tabular}
\caption{A toy example of the vanilla GAN illustrating model oscillation, where the real distribution $P_r=\mathcal{N}(4,0.5^2)$ (orange curve) and the assumed initial generated distribution $P_{G_\theta}=0.1\mathcal{N}(1,(1/9)^2) + 0.9\mathcal{N}(3.9,0.5^2)$ (blue curve). (a) A plot of the optimal discriminator output $D_{\omega^*}(x)$ in \eqref{eq:opt-disc-vanilla}. The discriminator confidently classifies “outlier” generated data and gives
cautious predictions for remaining data. (b) A plot of the generator's non-saturating loss $-\log(D_{\omega^*}(x))$. The outlier generated data receive very large gradients while generated data close to the real data receive relatively small gradients, which causes the generator to prioritize correcting the outlier data at the expense of
preserving the proximity of the generated data close to the real data. (c) A plot of the optimal discriminator output $D_{\omega^*}(x)$ in \eqref{eq:opt-disc-vanilla} after the generator and discriminator both update. The discriminator now confidently distinguishes the generated data from the real data. (d) A plot of the generator's non-saturating loss $-\log(D_{\omega^*}(x))$ after the updates in (c). The generated samples now receive very large gradients, which may lead to oscillations around the real mode.}
\label{fig:vanilla-gan-model-oscillation}
\end{figure}


\section{Loss Function Perspective on GANs}
\label{sec:loss-function-perspective-single-objective}

Noting that a GAN involves a classifier (i.e., discriminator), it is well known that the value function $V_{\text{VG}}(\theta,\omega)$ in \eqref{eq:Goodfellowobj} considered by Goodfellow \emph{et al.}~\cite{Goodfellow14} is related to binary cross-entropy loss. We first formalize this loss function perspective of GANs. In \cite{AroraGLMZ17}, Arora \emph{et al.}~observed that the $\log$ function in \eqref{eq:Goodfellowobj} can be replaced by any (monotonically increasing) concave function $\phi(x)$ (e.g., $\phi(x)=x$ for WGANs). In the context of using classification-based losses, we show that one can write $V(\theta,\omega)$ in terms of \emph{any} class probability estimation (CPE) loss $\ell(y,\hat{y})$ whose inputs are the true label $y\in\{0,1\}$ and predictor $\hat{y}\in[0,1]$ (soft prediction of $y$). For a GAN, we have $(X|y=1)\sim P_r$, $(X|y=0)\sim P_{G_\theta}$, and $\hat{y}=D_\omega(x)$. With this, we define a value function 
\begin{align}
    V(\theta,\omega)&=\mathbb{E}_{X|y=1}[-\ell(y,D_\omega(X))]+\mathbb{E}_{X|y=0}[-\ell(y,D_\omega(X))]\\
    &=\mathbb{E}_{X\sim P_r}[-\ell(1,D_\omega(X))]+\mathbb{E}_{X\sim P_{G_\theta}}[-\ell(0,D_\omega(X))]\label{eqn:lossfnps1}.
\end{align}
For binary cross-entropy loss, i.e., $\ell_{\text{CE}}(y,\hat{y})\coloneqq -y\log{\hat{y}}-(1-y)\log{(1-\hat{y})}$, notice that the expression in \eqref{eqn:lossfnps1} is equal to $V_{\text{VG}}$ in \eqref{eq:Goodfellowobj}.
For the value function in \eqref{eqn:lossfnps1}, we consider a GAN given by the min-max optimization problem:
\begin{align}\label{eqn:lossfnbasedGAN}
    \inf_{\theta\in\Theta}\sup_{\omega\in\Omega}V(\theta,\omega).
\end{align}
Let $\phi(\cdot)\coloneqq-\ell(1,\cdot)$ and $\psi(\cdot)\coloneqq-\ell(0,\cdot)$ in the sequel. The functions $\phi$ and $\psi$ are assumed to be monotonically increasing and decreasing functions, respectively, so as to retain the intuitive interpretation of the vanilla GAN (that the discriminator should output high values to real samples and low values to the generated samples). These functions should also satisfy the constraint
\begin{align}\label{eqn:condnonfnsforGAN}
    \phi(t)+\psi(t)\leq \phi\left(\frac{1}{2}\right)+\psi\left(\frac{1}{2}\right),\ \text{for all}\ t\in[0,1], 
\end{align}
so that the optimal discriminator guesses uniformly at random (i.e., outputs a constant value ${1}/{2}$ irrespective of the input) when $P_r=P_{G_\theta}$. A loss function $\ell(y,\hat{y})$ is said to be \emph{symmetric}~\cite{reid2010composite} if $\psi(t)=\phi(1-t)$, for all $t\in[0,1]$. Notice that the value function considered by Arora \emph{et al.}~\cite{AroraGLMZ17} is a special case of $\eqref{eqn:lossfnps1}$, i.e., $\eqref{eqn:lossfnps1}$ recovers the value function in \cite[Equation~(2)]{AroraGLMZ17} when the loss function $\ell(y,\hat{y})$ is symmetric. For symmetric losses, concavity of the function $\phi$ is a sufficient condition for satisfying \eqref{eqn:condnonfnsforGAN}, but not a necessary condition.

\subsection{CPE Loss GANs and $f$-divergences}


We now establish a precise correspondence between the family of GANs based on CPE loss functions and a family of $f$-divergences. 
We do this by building upon a relationship between margin-based loss functions~\cite{BartlettJM06} and $f$-divergences first demonstrated by Nguyen \emph{et al.}~\cite{NguyenWJ09} and leveraging our CPE loss function perspective of GANs given in~\eqref{eqn:lossfnps1}. 
This complements the connection established  by Nowozin \emph{et al.}~\cite{NowozinCT16} between the variational estimation approach of $f$-divergences~\cite{NguyenWJ10} and $f$-divergence based GANs.
 We call a CPE loss function $\ell(y,\hat{y})$ \emph{symmetric}~\cite{reid2010composite} if $\ell(1,\hat{y})=\ell(0,1-\hat{y})$ and an $f$-divergence $D_f(\cdot\|\cdot)$ \emph{symmetric}~\cite{liese1987convex,sason2015tight} if $D_f(P\|Q)=D_f(Q\|P)$. 
We assume GANs with sufficiently large number of samples and ample discriminator capacity.
\begin{theorem}\label{thm:correspondence}
For any symmetric CPE loss GAN with a value function  in~\eqref{eqn:lossfnps1}, the min-max optimization in \eqref{eqn:GANgeneral-background} reduces to minimizing an $f$-divergence. Conversely, for any GAN designed to minimize a symmetric $f$-divergence, there exists a (symmetric) CPE loss GAN minimizing the same $f$-divergence. 
\end{theorem}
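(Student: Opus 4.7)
The plan is to prove the two directions separately, exploiting the pointwise structure of the inner supremum in~\eqref{eqn:lossfnbasedGAN} under the ample-capacity/infinite-sample assumption, and then passing between the conditional Bayes optimum and an $f$-divergence via the perspective construction.

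For the forward direction, I would first push the supremum inside the integral, using the symmetry $\psi(t) = \phi(1-t)$ to write
\begin{equation*}
\sup_{\omega} V(\theta,\omega) = \int_\mathcal{X} \bigl(p_r(x)+p_{G_\theta}(x)\bigr)\, L(\eta(x))\, dx,
\end{equation*}
where $\eta(x) := p_r(x)/(p_r(x)+p_{G_\theta}(x))$ and $L(\eta) := \sup_{t\in[0,1]}\{\eta\phi(t)+(1-\eta)\phi(1-t)\}$. The substitution $t \mapsto 1-t$ in the sup immediately gives $L(\eta)=L(1-\eta)$, and $L$ is convex in $\eta$ as a pointwise supremum of affine functions. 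I would then define
\begin{equation*}
f(r) := (r+1)\, L\!\left(\tfrac{r}{r+1}\right) - 2L(\tfrac12),
\end{equation*}
which satisfies $f(1)=0$ and is convex in $r$, since the perspective map $(a,b)\mapsto b\,L(a/b)$ is jointly convex on $\{b>0\}$. The symmetry $L(\eta)=L(1-\eta)$ translates (after the substitution $\eta = r/(r+1)$) into $r f(1/r)=f(r)$, so $D_f$ is symmetric. Substituting back yields $\sup_\omega V(\theta,\omega) = D_f(P_r\|P_{G_\theta}) + 2L(\tfrac12)$, so~\eqref{eqn:GANgeneral-background} reduces to $\inf_{\theta\in\Theta} D_f(P_r\|P_{G_\theta})$ up to an additive constant.

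For the converse, given a symmetric $f$-divergence with convex $f$ satisfying $f(1)=0$ and $r f(1/r)=f(r)$, I would invert the perspective step by setting $L(\eta) := (1-\eta)\, f(\eta/(1-\eta)) + c$ for a suitable constant $c$. Convexity of $L$ on $(0,1)$ and the symmetry $L(\eta)=L(1-\eta)$ both follow directly from the corresponding properties of $f$. To recover an explicit symmetric CPE loss whose conditional Bayes reward equals $L$, I would use the Savage-type representation $\phi(t) := L(t) + (1-t)L'(t)$ (with a subgradient in place of $L'$ at points of nondifferentiability). A short calculation shows that the first-order condition of $\sup_{t}\{\eta\phi(t)+(1-\eta)\phi(1-t)\}$ is satisfied at $t=\eta$ with optimal value precisely $L(\eta)$. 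Setting $\ell(1,\hat y):=-\phi(\hat y)$ and $\ell(0,\hat y):=-\phi(1-\hat y)$ then yields a symmetric CPE loss, and plugging into the forward direction reproduces $D_f$.

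The main obstacle lies in the converse: one must check that the constructed $\phi$ is genuinely a valid choice (monotonicity of $\phi$ and $\psi$, together with the constraint~\eqref{eqn:condnonfnsforGAN} that forces the optimal discriminator to be $1/2$ when $P_r=P_{G_\theta}$), and handle boundary cases where $f$ diverges at $0$ or $\infty$ (as with KL), in which case $L$ is unbounded near $\{0,1\}$ and the CPE loss must be interpreted as a limit. A cleaner, more abstract route is to appeal to the two-way Savage correspondence between symmetric proper scoring rules and convex potentials on $[0,1]$, composed with the perspective correspondence between such potentials and symmetric $f$-divergences; this packages both steps and bypasses the explicit first-order verification.
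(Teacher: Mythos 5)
Your route is genuinely different from the paper's. The paper's proof passes to margin-based losses $\tilde\ell(t)=\ell(1,l(t))$ via a bijective link $l$, reduces the inner supremum to $f(u)=-\inf_{t\in\mathbb{R}}\bigl(\tilde\ell(-t)+u\tilde\ell(t)\bigr)$, and for the converse invokes the Nguyen--Wainwright--Jordan correspondence between margin losses and symmetric $f$-divergences to obtain $\tilde\ell$ and then pulls it back through $l^{-1}$. You instead parametrize by the Bayes posterior $\eta(x)=p_r(x)/(p_r(x)+p_{G_\theta}(x))$, introduce the conditional Bayes reward $L(\eta)=\sup_{t\in[0,1]}\{\eta\phi(t)+(1-\eta)\phi(1-t)\}$, pass between $L$ and $f$ via the perspective map, and recover the loss via the Savage representation $\phi(t)=L(t)+(1-t)L'(t)$. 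This is more self-contained (no external characterization is needed in the converse) and it makes the structure transparent: with your $\phi$ and the symmetry of $L$, the pointwise objective simplifies to $\eta\phi(t)+(1-\eta)\phi(1-t)=L(t)+(\eta-t)L'(t)$, the supporting tangent of $L$ at $t$ evaluated at $\eta$, so convexity of $L$ alone shows the supremum equals $L(\eta)$ and is attained at $t=\eta$ --- exactly the Bregman/Savage duality underlying proper scoring rules. Two small points need repair. First, with your normalization $f(r)=(r+1)L\bigl(r/(r+1)\bigr)-2L(1/2)$, the claimed identity $rf(1/r)=f(r)$ is not exact: a direct computation gives $rf(1/r)-f(r)=2L(1/2)(1-r)$, which is affine in $r$ and hence still yields $D_f(P\|Q)=D_f(Q\|P)$, but you should either say so or renormalize to $f(r)=(r+1)\bigl[L\bigl(r/(r+1)\bigr)-L(1/2)\bigr]$, which does satisfy exact self-conjugacy. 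Second, verifying only the first-order condition at $t=\eta$ establishes a critical point, not that it is the maximizer; it is the supporting-tangent identity above (equivalently, $L(t)+(\eta-t)L'(t)\le L(\eta)$ by convexity, with equality at $t=\eta$) that actually closes the converse.
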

\begin{proof}[Proof sketch]\let\qed\relax
Let $\ell$ be the symmetric CPE loss of a given CPE loss GAN; note that $\ell$ has a bivariate input $(y,\hat{y})$ ({e.g.}, in~\eqref{eq:cpealphaloss}), where $y \in \{0,1\}$ and $\hat{y} \in [0,1]$.
We define an associated margin-based loss function $\tilde{\ell}$ using a bijective link function (satisfying a mild regularity condition); note that a margin-based loss function has a univariate input $z \in \mathbb{R}$ ({e.g.}, the logistic loss $\tilde{l}^{\text{log}}(z) = \log{(1+e^{-z})}$) and the bijective link function maps $z \rightarrow \hat{y}$ (see~\cite{BartlettJM06,reid2010composite} for more details). 
We show after some manipulations that the inner optimization of the CPE loss GAN reduces to an $f$-divergence with
\begin{align}\label{eqn:thm1proofsketch1}
f(u):=-\inf_{t\in\mathbb{R}}\left(\tilde{\ell}(-t)+u\tilde{\ell}(t)\right).
\end{align}
For the converse, given a symmetric $f$-divergence, using \cite[Corollary~3 and Theorem~1(b)]{NguyenWJ09}, note that there exists a margin-based loss  $\tilde{\ell}$ such that \eqref{eqn:thm1proofsketch1} holds. The rest of the argument follows from defining a symmetric CPE loss $\ell$ from this margin-based loss $\tilde{\ell}$ via the \textit{inverse} of the same link function. See
Appendix~\ref{apndx:proof-of-thm1} 
for the detailed proof.
\end{proof}
A consequence of Theorem \ref{thm:correspondence} is that it offers an interpretable way to design GANs and connect a desired measure of divergence to a corresponding loss function, where the latter is easier to implement in practice. Moreover, CPE loss based GANs inherit the intuitive and compelling interpretation of vanilla GANs that the discriminator should assign higher likelihood values to real samples and lower ones to generated samples.

We now specialize the loss function perspective of GANs to the GAN obtained by plugging in $\alpha$-loss. We first write $\alpha$-loss in \eqref{eq:alphaloss_prob} in the form of a binary classification loss to obtain
\begin{align}
    \ell_\alpha(y,\hat{y}):=\frac{\alpha}{\alpha-1}\left(1-y\hat{y}^{\frac{\alpha-1}{\alpha}}-(1-y)(1-\hat{y})^{\frac{\alpha-1}{\alpha}}\right),\label{eqn:alphaloss}
\end{align}
 for $\alpha\in(0,1)\cup (1,\infty)$. Note that \eqref{eqn:alphaloss} recovers $\ell_{\text{CE}}$ as $\alpha\rightarrow 1$. Now consider a \emph{tunable $\alpha$-GAN} with the value function 
\begin{align}
    V_\alpha(\theta,\omega)
    &=\mathbb{E}_{X\sim P_r}[-\ell_{\alpha}(1,D_\omega(X))]+\mathbb{E}_{X\sim P_{G_\theta}}[-\ell_{\alpha}(0,D_\omega(X))]\nonumber\\
    &=\frac{\alpha}{\alpha-1}\left(\mathbb{E}_{X\sim P_r}\left[D_\omega(X)^{\frac{\alpha-1}{\alpha}}\right]+\mathbb{E}_{X\sim P_{G_\theta}}\left[\left(1-D_\omega(X)\right)^{\frac{\alpha-1}{\alpha}}\right]-2\right)\label{eqn:alphaGANobjective}.
\end{align}
We can verify that $\lim_{\alpha\rightarrow 1}V_\alpha(\theta,\omega)=V_{\text{VG}}(\theta,\omega)$, recovering the value function of the vanilla GAN. Also, notice that 
\begin{align}\label{eqn:IPM}
\lim_{\alpha\rightarrow \infty}V_\alpha(\theta,\omega)=\mathbb{E}_{X\sim P_r}\left[D_\omega(x)\right]-\mathbb{E}_{X\sim P_{G_\theta}}\left[D_\omega(x)\right]-1
\end{align}
is the value function (modulo a constant) used in Integral Probability Metric (IPM) based GANs\footnote{Note that IPMs do not restrict the function $D_\omega$ to be a probability.}, e.g., WGAN, McGan~\cite{mroueh2017mcgan}, Fisher GAN~\cite{mroueh2017fisher}, and Sobolev GAN~\cite{mroueh2017sobolev}.
The resulting min-max game in $\alpha$-GAN is given by
\begin{align} 
\inf_{\theta\in\Theta}\sup_{\omega\in\Omega}V_\alpha(\theta,\omega)\label{eqn:minimaxalphaGAN}.
\end{align}

The following theorem provides the min-max solution, i.e., Nash equilibrium, to the two-player game in \eqref{eqn:minimaxalphaGAN} for the non-parametric setting, i.e., when the discriminator set $\Omega$ is large enough.
\begin{theorem}\label{thm:alpha-GAN}
For $\alpha\in(0,1)\cup (1,\infty)$ and a generator $G_\theta$, the discriminator $D_{\omega^*}$ optimizing the $\sup$ in \eqref{eqn:minimaxalphaGAN} is 
\begin{align}\label{eqn:optimaldoisc}
    D_{\omega^*}(x)=\frac{p_r(x)^\alpha}{p_r(x)^\alpha+p_{G_\theta}(x)^\alpha}, \quad x \in \mathcal{X},
\end{align}
where $p_r$ and $p_{G_\theta}$ are the corresponding densities of the distributions $P_r$ and $P_{G_\theta}$, respectively, with respect to a base measure $dx$ (e.g., Lebesgue measure).
For this $D_{\omega^*}$, \eqref{eqn:minimaxalphaGAN} simplifies to minimizing a non-negative symmetric $f_\alpha$-divergence $D_{f_\alpha}(\cdot||\cdot)$ to obtain
\begin{align}\label{eqn:inf-obj-alpha}
    \inf_{\theta\in\Theta} D_{f_\alpha}(P_r||P_{G_\theta})+\frac{\alpha}{\alpha-1}\left(2^{\frac{1}{\alpha}}-2\right),
\end{align}
where
\begin{align}\label{eqn:falpha}
f_\alpha(u)=\frac{\alpha}{\alpha-1}\left(\left(1+u^\alpha\right)^{\frac{1}{\alpha}}-(1+u)-2^{\frac{1}{\alpha}}+2\right),
\end{align}
for $u\geq 0$ and\footnote{We note that the divergence $D_{f_\alpha}$ has been referred to as \emph{Arimoto divergence} in the literature~\cite{osterreicher1996class,osterreicher2003new,LieseV06}.}
\begin{align}\label{eqn:alpha-divergence}
D_{f_\alpha}(P||Q)=\frac{\alpha}{\alpha-1}\left(\int_\mathcal{X} \left(p(x)^\alpha+q(x)^\alpha\right)^\frac{1}{\alpha} dx-2^{\frac{1}{\alpha}}\right),
\end{align}
which is minimized iff $P_{G_\theta}=P_r$. 
\end{theorem}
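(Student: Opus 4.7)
The plan is to carry out a pointwise (in $x$) optimization over $D_\omega(x)\in[0,1]$ inside the integral form of $V_\alpha(\theta,\omega)$, substitute the resulting optimizer back, and match the outcome to the Arimoto $f_\alpha$-divergence. Since the nonparametric setting allows $D_\omega$ to be chosen freely at each $x$, rewriting \eqref{eqn:alphaGANobjective} as a single integral reduces $\sup_\omega V_\alpha(\theta,\omega)$ to maximizing, for every fixed $x$, the function
\[\Phi_x(t)=p_r(x)\,t^{\tfrac{\alpha-1}{\alpha}}+p_{G_\theta}(x)\,(1-t)^{\tfrac{\alpha-1}{\alpha}},\qquad t\in[0,1],\]
and then multiplying by $\alpha/(\alpha-1)$. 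Setting $\Phi_x'(t)=0$ yields $p_r(x)\,t^{-1/\alpha}=p_{G_\theta}(x)\,(1-t)^{-1/\alpha}$, hence $(1-t)/t=(p_{G_\theta}(x)/p_r(x))^{\alpha}$, which gives $t^{\ast}=p_r(x)^\alpha/(p_r(x)^\alpha+p_{G_\theta}(x)^\alpha)$, matching \eqref{eqn:optimaldoisc}. For $\alpha>1$, the exponent $(\alpha-1)/\alpha\in(0,1)$ makes $\Phi_x$ concave and the premultiplying coefficient $\alpha/(\alpha-1)$ is positive, so $t^{\ast}$ is a maximizer; for $\alpha\in(0,1)$ the sign of the exponent flips and so does the sign of the coefficient, and the same $t^{\ast}$ remains a maximizer. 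This handles both regimes uniformly.

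Next, I would substitute $D_{\omega^\ast}(x)$ back into $V_\alpha$. A short algebraic computation using $(D_{\omega^\ast})^{(\alpha-1)/\alpha}=p_r^{\alpha-1}/(p_r^\alpha+p_{G_\theta}^\alpha)^{(\alpha-1)/\alpha}$ and the analogous expression for $1-D_{\omega^\ast}$ collapses the sum of the two $\Phi_x(t^{\ast})$ terms to $(p_r(x)^\alpha+p_{G_\theta}(x)^\alpha)^{1/\alpha}$, yielding
\[\sup_{\omega\in\Omega}V_\alpha(\theta,\omega)=\frac{\alpha}{\alpha-1}\left(\int_{\mathcal{X}}\bigl(p_r(x)^\alpha+p_{G_\theta}(x)^\alpha\bigr)^{1/\alpha}dx-2\right).\]
I would then verify \eqref{eqn:alpha-divergence} by plugging $f_\alpha$ from \eqref{eqn:falpha} into the $f$-divergence template $\int p_{G_\theta}\,f_\alpha(p_r/p_{G_\theta})\,dx$; $1$-homogeneity of $(p_r^\alpha+p_{G_\theta}^\alpha)^{1/\alpha}$ lets $p_{G_\theta}$ factor cleanly, giving $D_{f_\alpha}(P_r\|P_{G_\theta})=\frac{\alpha}{\alpha-1}\!\left(\int(p_r^\alpha+p_{G_\theta}^\alpha)^{1/\alpha}dx-2^{1/\alpha}\right)$. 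Adding the constant $\frac{\alpha}{\alpha-1}(2^{1/\alpha}-2)$ then produces \eqref{eqn:inf-obj-alpha}.

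Finally, to justify nonnegativity and the equality condition $P_{G_\theta}=P_r$, I would check that $f_\alpha(1)=0$ (direct substitution) and that $f_\alpha$ is strictly convex on $(0,\infty)$, which yields $D_{f_\alpha}(P_r\|P_{G_\theta})\ge 0$ with equality iff $P_r=P_{G_\theta}$ by the standard Jensen-based argument for $f$-divergences. Strict convexity can be shown either by computing $f_\alpha''(u)>0$ directly, or by noting that the integrand $(p_r^\alpha+p_{G_\theta}^\alpha)^{1/\alpha}$ is the $\ell^\alpha$-norm of $(p_r(x),p_{G_\theta}(x))$, which is strictly concave in its arguments for $\alpha>1$ and strictly convex for $\alpha\in(0,1)$; combined with the sign of $\alpha/(\alpha-1)$, this forces strict inequality unless the ratio $p_r(x)/p_{G_\theta}(x)$ is constant almost everywhere, i.e., $P_r=P_{G_\theta}$.

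The main obstacle is keeping the two parameter regimes $\alpha>1$ and $\alpha\in(0,1)$ straight: both the concavity of $\Phi_x$ and the sign of $\alpha/(\alpha-1)$ flip across $\alpha=1$, so care is needed to argue uniformly that $t^{\ast}$ is a maximizer and that strict convexity of $f_\alpha$ actually yields the desired strict inequality in the nonnegativity claim. The substitution and algebraic identification with $D_{f_\alpha}$ are otherwise mechanical.
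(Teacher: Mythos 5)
Your approach is essentially identical to the paper's: pointwise (in $x$) optimization of the integrand over $D_\omega(x)\in[0,1]$, solving $\Phi_x'(t)=0$ to get $t^*=p_r^\alpha/(p_r^\alpha+p_{G_\theta}^\alpha)$, observing that the objective (with the $\alpha/(\alpha-1)$ prefactor) is concave in both regimes so the critical point is the maximizer, substituting back to collapse the expression to $(p_r^\alpha+p_{G_\theta}^\alpha)^{1/\alpha}$, and matching the result to $D_{f_\alpha}$ via the homogeneity of the integrand. The calculations all check out.

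One factual slip in your aside on strict convexity: you claim $(p_r^\alpha+p_{G_\theta}^\alpha)^{1/\alpha}$ is ``strictly concave'' for $\alpha>1$ and ``strictly convex'' for $\alpha\in(0,1)$. The directions are reversed --- for $\alpha\ge 1$ this is the $\ell^\alpha$ norm, which is convex (check $\alpha=2$, the Euclidean norm), while for $\alpha\in(0,1)$ it is concave (for $\alpha=1/2$ it reduces to $a+b+2\sqrt{ab}$, with $\sqrt{ab}$ concave). Moreover, since the function is $1$-homogeneous it cannot be \emph{strictly} convex or concave (it is linear along rays), so the phrasing would need to be softened even after fixing the sign. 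None of this damages the proof, since your primary route --- computing $f_\alpha''(u)=\alpha u^{\alpha-2}(1+u^\alpha)^{1/\alpha-2}>0$ directly, then invoking Jensen with $f_\alpha(1)=0$ --- is correct and sufficient; the $\ell^\alpha$-norm remark is a redundant (and as stated, incorrect) second path, and I'd drop it.
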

A detailed proof of Theorem~\ref{thm:alpha-GAN}  is in Appendix~\ref{proofofthm1}.

\begin{remark}
As $\alpha \rightarrow 0$, note that~\eqref{eqn:optimaldoisc} implies a more cautious discriminator, i.e., if $p_{G_{\theta}}(x) \geq p_{r}(x)$, then $D_{\omega^{*}}(x)$ decays more slowly from $1/2$, and if $p_{G_{\theta}}(x) \leq p_{r}(x)$, $D_{\omega^{*}}(x)$ increases more slowly from $1/2$. 
Conversely, as $\alpha\rightarrow\infty$,~\eqref{eqn:optimaldoisc} simplifies to $D_{\omega^*}(x)=\mathbbm{1}\{p_r(x)>p_{G_\theta}(x)\}+\frac{1}{2}\mathbbm{1}\{p_r(x)=p_{G_\theta}(x)\}$, where the discriminator implements the Maximum Likelihood (ML) decision rule, i.e., a hard decision whenever $p_r(x)\neq p_{G_\theta}(x)$. In other words,~\eqref{eqn:optimaldoisc} for $\alpha \rightarrow \infty$ induces a very confident discriminator. 
Regarding the generator's perspective,~\eqref{eqn:inf-obj-alpha} implies that the generator seeks to minimize the discrepancy between $P_{r}$ and $P_{G_{\theta}}$
according to the geometry induced by $D_{f_\alpha}$.
Thus, the optimization trajectory traversed by the generator during training is strongly dependent on the practitioner's choice of $\alpha \in (0,\infty)$. 
Please refer to Fig.~\ref{fig:plotofdivergence} in Appendix \ref{proofoftheorem2} for an illustration of this observation. Figure~\ref{fig:non-overlapping_gaussians} illustrates this effect of tuning $\alpha$ on the optimal D and the corresponding loss of the generator for a toy example.
\end{remark}



\begin{figure}[t]
\centering
\footnotesize
\setlength{\tabcolsep}{20pt}
\begin{tabular}{@{}cc@{}}

\includegraphics[page=7,width=0.35\linewidth]{./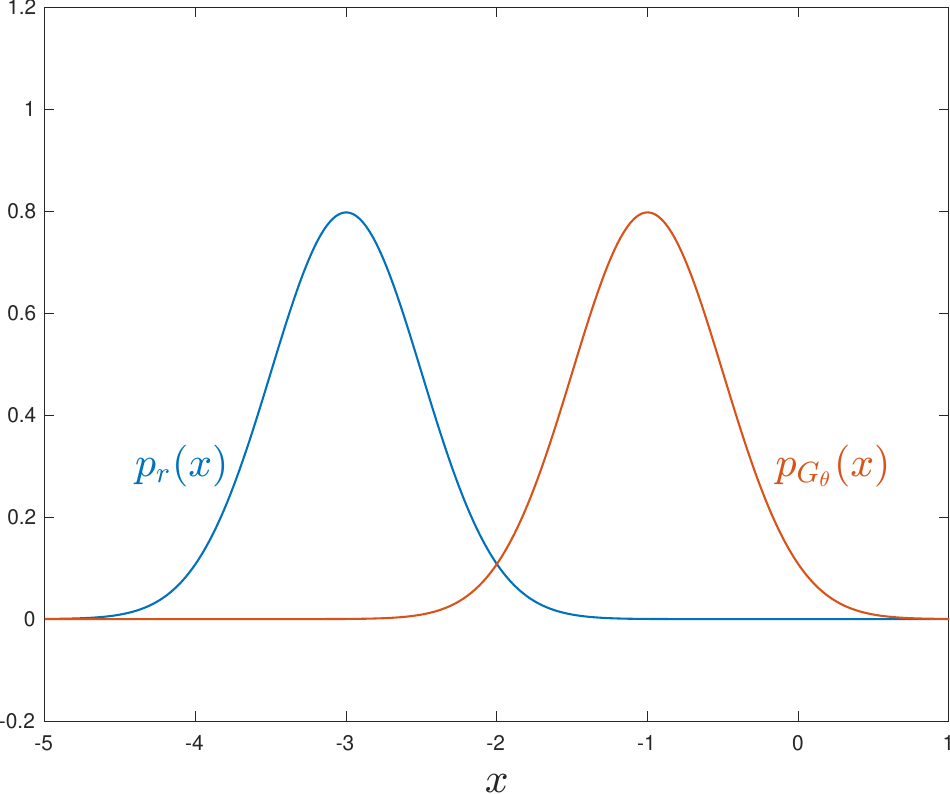}
 & \includegraphics[page=4,width=0.35\linewidth]{./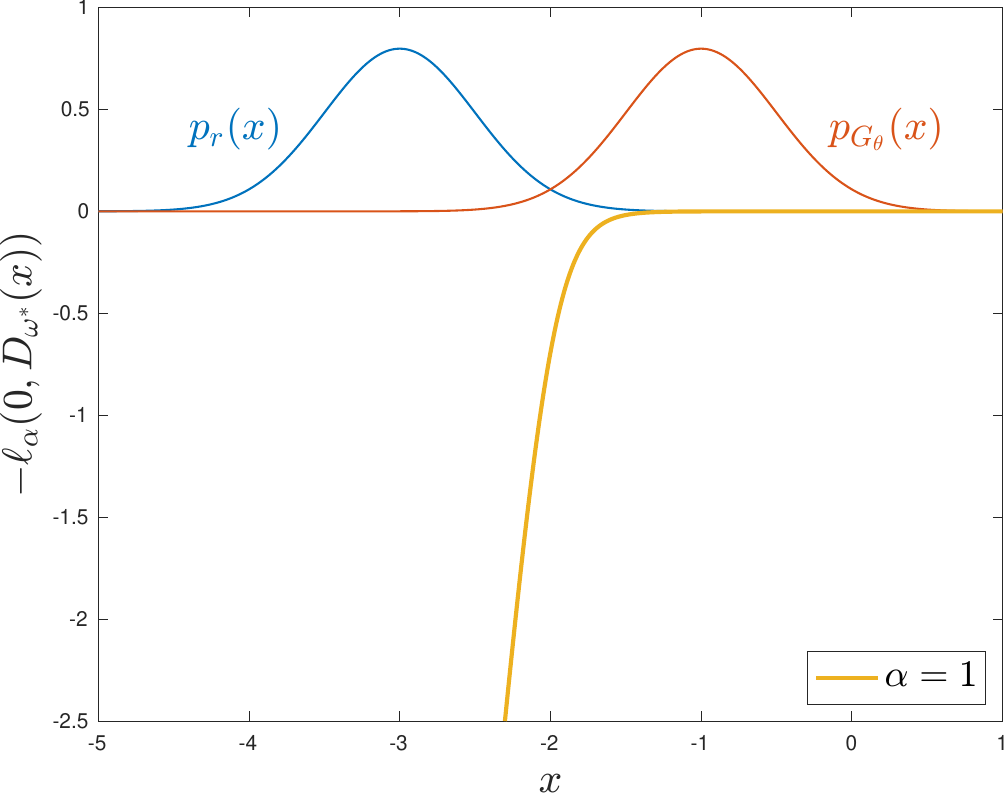}
     \\
   (a)  & (b)
\end{tabular}
\caption{A toy example of $\alpha$-GAN, where the real distribution $P_r=\mathcal{N}(-2,0.5^2)$ (blue curve) and the assumed initial generated distribution $P_{G_\theta}=\mathcal{N}(2,0.5^2)$ (orange curve). (a) A plot of the optimal discriminator output $D_{\omega^*}(x)$ for $\alpha \in \{0,0.2,0.5,1,3,\infty\}$. As $\alpha$ decreases, $D_{\omega^*}$ becomes increasingly less confident in its predictions until it outputs $1/2$ for all $x$ when $\alpha \to 0$. Conversely, as $\alpha$ increases, $D_{\omega^*}$ becomes increasingly more confident until it implements the Maximum Likelihood decision rule when $\alpha \to \infty$.   (b) A plot of the generator's corresponding loss $-\ell_\alpha(0,D_{\omega^*}(x))$ for $\alpha \in \{0.2,0.5,1,3\}$. As $\alpha$ decreases, the magnitude of the gradients of the loss increases, while increasing $\alpha$ saturates the gradients. Note that early in training, if the discriminator is very confident and outputs values close to 0 for the generated data, the generator will not have much gradient to continue learning, resulting in vanishing gradients. Decreasing $\alpha$ reduces the discriminator's confidence and provides more gradient for the generator to learn. }
\label{fig:non-overlapping_gaussians}
\end{figure}

Note that the divergence $D_{f_\alpha}(\cdot||\cdot)$ (in \eqref{eqn:alpha-divergence}) that naturally emerges from the analysis of $\alpha$-GAN was first proposed by \"{O}sterriecher~\cite{osterreicher1996class} in a statistical context of measures and was later referred to as the \emph{Arimoto divergence} by Liese and Vajda~\cite{LieseV06}. Next, we show that $\alpha$-GAN recovers various well known $f$-GANs.

\begin{theorem}\label{thm:fgans}
$\alpha$-GAN recovers vanilla GAN, Hellinger GAN (H-GAN)~\cite{NowozinCT16}, and Total Variation GAN (TV-GAN)~\cite{NowozinCT16} as $\alpha\rightarrow 1$, $\alpha=\frac{1}{2}$, and $\alpha\rightarrow \infty$, respectively.
\end{theorem}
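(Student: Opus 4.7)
The plan is to verify each of the three claims by direct substitution (for $\alpha=1/2$) or by taking the appropriate limit (for $\alpha\to 1$ and $\alpha\to\infty$) in the characterization \eqref{eqn:inf-obj-alpha}--\eqref{eqn:alpha-divergence} of the $\alpha$-GAN objective, and then comparing against the divergences minimized by vanilla GAN, H-GAN, and TV-GAN, respectively. By Theorem~\ref{thm:alpha-GAN}, the inner supremum over $\omega$ reduces to $D_{f_\alpha}(P_r\|P_{G_\theta})$ up to an additive constant, so matching $D_{f_\alpha}$ to the respective divergences (possibly up to positive scaling and additive constants that do not affect the $\inf_\theta$) suffices.

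For $\alpha=1/2$, I would plug $\alpha=1/2$ directly into \eqref{eqn:alpha-divergence}, expand $(p_r^{1/2}+p_{G_\theta}^{1/2})^2$, and simplify to get $D_{f_{1/2}}(P_r\|P_{G_\theta})=2-2\int\sqrt{p_r\,p_{G_\theta}}\,dx=2H^2(P_r,P_{G_\theta})$, twice the squared Hellinger distance. Since H-GAN minimizes the squared Hellinger distance and multiplicative/additive constants do not alter the minimizer, $\alpha$-GAN at $\alpha=1/2$ recovers H-GAN.

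For $\alpha\to\infty$, I would use $\lim_{\alpha\to\infty}(a^\alpha+b^\alpha)^{1/\alpha}=\max(a,b)$ for $a,b\geq 0$ together with $\lim_{\alpha\to\infty}2^{1/\alpha}=1$ and $\lim_{\alpha\to\infty}\alpha/(\alpha-1)=1$ to obtain
\begin{align*}
\lim_{\alpha\to\infty}D_{f_\alpha}(P_r\|P_{G_\theta})=\int_{\mathcal{X}}\max\!\bigl(p_r(x),p_{G_\theta}(x)\bigr)\,dx-1=\lVert P_r-P_{G_\theta}\rVert_{\mathrm{TV}},
\end{align*}
where the last equality uses the identity $\int\max(p,q)\,dx=1+\lVert P-Q\rVert_{\mathrm{TV}}$. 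The additive constant in \eqref{eqn:inf-obj-alpha} converges to $-1$, which is irrelevant for the $\inf_\theta$, so we recover TV-GAN.

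For $\alpha\to 1$, the cleanest route is to invoke the observation already made just below \eqref{eqn:alphaGANobjective} that $\lim_{\alpha\to 1}V_\alpha(\theta,\omega)=V_{\mathrm{VG}}(\theta,\omega)$ (by L'Hôpital or a Taylor expansion of $\hat{y}^{(\alpha-1)/\alpha}$ around $\alpha=1$); hence the min-max problem of $\alpha$-GAN reduces pointwise to that of vanilla GAN. As a consistency check, one can compute $\lim_{\alpha\to 1}D_{f_\alpha}(P_r\|P_{G_\theta})=2D_{\mathrm{JS}}(P_r\|P_{G_\theta})$ via L'Hôpital on the indeterminate form in \eqref{eqn:alpha-divergence}, while $\lim_{\alpha\to 1}\tfrac{\alpha}{\alpha-1}(2^{1/\alpha}-2)=-\log 4$ by the same technique, matching Goodfellow \emph{et al.}'s simplification $2D_{\mathrm{JS}}-\log 4$. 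The only nontrivial step is this $\alpha\to 1$ limit because of the indeterminate $0/0$ forms, but it is handled routinely by L'Hôpital, so I anticipate no real obstacle.
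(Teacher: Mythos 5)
Your proposal is correct and takes essentially the same approach as the paper for $\alpha = 1/2$ (direct substitution and expansion yielding $2D_{\text{H}^2}$) and for $\alpha \to \infty$ (the $\lim (a^\alpha + b^\alpha)^{1/\alpha} = \max\{a,b\}$ identity yielding $D_{\text{TV}}$). For $\alpha \to 1$, the route you label ``cleanest'' (pointwise convergence $V_\alpha \to V_{\text{VG}}$) is the informal remark the paper makes in the text following \eqref{eqn:alphaGANobjective}, but the paper's actual proof in Appendix~\ref{proofoftheorem2} is your ``consistency check'' -- L'H\^{o}pital applied to $D_{f_\alpha}$ to obtain $2D_{\text{JS}}$ -- and rightly so: pointwise convergence of the value functions does not by itself license the conclusion that $\inf_\theta \sup_\omega V_\alpha \to \inf_\theta \sup_\omega V_{\text{VG}}$ without some additional uniformity, so the divergence-limit computation is what carries the rigorous weight. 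One small omission in your write-up that the paper treats explicitly for both the $\alpha \to 1$ and $\alpha \to \infty$ limits: passing the pointwise limit through the integral in $D_{f_\alpha}$ requires justification, which the paper supplies via the dominated convergence theorem using boundedness properties of $f_\alpha$ from~\cite{LieseV06}.
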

\begin{proof}[Proof sketch]\let\qed\relax
We show the following: (i) as $\alpha\rightarrow 1$, \eqref{eqn:inf-obj-alpha} equals $\inf_{\theta\in\Theta}2D_{\text{JS}}(P_r||P_{G_\theta})-\log{4}$ recovering the vanilla GAN; (ii) for $\alpha=\frac{1}{2}$, \eqref{eqn:inf-obj-alpha} gives $2\inf_{\theta\in\Theta}D_{\text{H}^2}(P_r||P_{G_\theta})-2$ recovering Hellinger GAN (up to a constant); and (iii) as $\alpha\rightarrow\infty$, \eqref{eqn:inf-obj-alpha} equals $\inf_{\theta\in\Theta}D_{\text{TV}}(P_r||P_{G_\theta})-1$ recovering TV-GAN (modulo a constant).
A detailed proof is in Appendix~\ref{proofoftheorem2}.
\end{proof}

Next, we present an equivalence between $f_\alpha$-GAN defined using the value function in \eqref{eqn:fGANobj-activation} and $\alpha$-GAN. Define $\overline{\mathbb R} = \mathbb R \cup \{\pm \infty\}$. We first prove that there exists a mapping between the terms involved in the optimization of both GAN formulations in the following theorem. 

\begin{theorem}
For any $\alpha \in (0,1)\cup (1,\infty)$, let $\Tilde{f}_\alpha$ be a slightly modified version of \eqref{eqn:falpha} defined as
\begin{align}\label{eqn:falpha-tilde}
\Tilde{f}_\alpha(u)=\frac{\alpha}{\alpha-1}\left(\left(1+u^\alpha\right)^{\frac{1}{\alpha}}-(1+u)\right),\quad  u \ge 0,
\end{align}
with continuous extensions at $\alpha=1$ and $\alpha=\infty$. Let $\Tilde{f}^*_\alpha$ be the convex conjugate of $\Tilde{f}_\alpha$ given by
\begin{align}
        \Tilde{f}_\alpha^* (t) = \frac{\alpha}{\alpha-1}\left(1- (1-s(t))^{\frac{\alpha-1}{\alpha}} \right),
\end{align}
where
\begin{align}
s(t)=\left(1+\frac{\alpha-1}{\alpha} t\right)^{\frac{\alpha}{\alpha-1}}.
\label{eqn:s}
\end{align}
Let $g_{{f}_\alpha} : \overline{\mathbb R} \to \text{dom}( \Tilde{f}^*_\alpha)$ be a bijective output activation function. 

\begin{itemize}
    \item Given $v \in \overline{\mathbb R}$, there exists $d \in [0,1]$ such that
\begin{equation}
    g_{{f}_\alpha}(v) = -\ell_\alpha\big(1,d\big) \quad \text{and} \quad \Tilde{f}^*_\alpha(g_{{f}_\alpha}(v)) = \ell_\alpha\big(0,d\big).
    \label{eq:thm1-gen}
\end{equation}

    \item Conversely, given $d \in [0,1]$, there exists $v \in \overline{\mathbb R}$ such that \eqref{eq:thm1-gen} holds for the same function $g_{{f}_\alpha}$.
\end{itemize} 
\label{thm:obj-equiv-gen}
\end{theorem}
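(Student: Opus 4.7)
The plan is to reduce both claims to a single direct computation showing that $\tilde{f}_\alpha^*(-\ell_\alpha(1,d)) = \ell_\alpha(0,d)$ for every $d \in [0,1]$, combined with the observation that $d \mapsto -\ell_\alpha(1,d)$ is a continuous bijection from $[0,1]$ onto $\operatorname{dom}(\tilde{f}_\alpha^*)$. Once these two pieces are in hand, the bijectivity of $g_{f_\alpha}$ supplies both directions of the equivalence.

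First I would unfold the definitions: from \eqref{eqn:alphaloss}, $-\ell_\alpha(1,d) = \tfrac{\alpha}{\alpha-1}(d^{(\alpha-1)/\alpha}-1)$ and $\ell_\alpha(0,d) = \tfrac{\alpha}{\alpha-1}(1-(1-d)^{(\alpha-1)/\alpha})$. The key algebraic identity is that substituting $t = -\ell_\alpha(1,d)$ into \eqref{eqn:s} gives $1+\tfrac{\alpha-1}{\alpha}t = d^{(\alpha-1)/\alpha}$, whence $s(t)=d$; plugging $s(t)=d$ into the closed form of $\tilde{f}_\alpha^*$ collapses it to exactly $\ell_\alpha(0,d)$. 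This is a one-line check and delivers the second equality in \eqref{eq:thm1-gen} as soon as the first is in place.

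Next I would show that $h(d) \coloneqq -\ell_\alpha(1,d)$ is a strictly monotone continuous map from $[0,1]$ onto $\operatorname{dom}(\tilde{f}_\alpha^*)$. Monotonicity follows since $h'(d) = d^{-1/\alpha} > 0$ on $(0,1)$. A brief case analysis on $\alpha>1$ versus $\alpha\in(0,1)$, together with the endpoint values of $h$, identifies $\operatorname{image}(h)$ with $\{t : s(t) \in [0,1]\} = \operatorname{dom}(\tilde{f}_\alpha^*)$, so $h$ is a bijection onto its stated codomain.

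Finally, the bijectivity of $g_{f_\alpha}: \overline{\mathbb R} \to \operatorname{dom}(\tilde{f}_\alpha^*)$ closes both directions: for the forward direction, given $v \in \overline{\mathbb R}$, take $d = h^{-1}(g_{f_\alpha}(v)) \in [0,1]$; for the converse, given $d \in [0,1]$, take $v = g_{f_\alpha}^{-1}(h(d)) \in \overline{\mathbb R}$. In either case $g_{f_\alpha}(v) = h(d) = -\ell_\alpha(1,d)$ by construction, and the first step then yields $\tilde{f}_\alpha^*(g_{f_\alpha}(v)) = \ell_\alpha(0,d)$, establishing \eqref{eq:thm1-gen}. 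The main obstacle is the case analysis identifying $\operatorname{image}(h)$ with $\operatorname{dom}(\tilde{f}_\alpha^*)$ in both regimes of $\alpha$, in particular handling the boundary behavior (e.g., $h(0) = -\infty$ when $\alpha<1$ but $h(0)=-\tfrac{\alpha}{\alpha-1}$ when $\alpha>1$); everything else is bookkeeping.
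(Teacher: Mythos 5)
Your proof is correct and follows essentially the same route as the paper's: the key computation showing $\tilde{f}_\alpha^*(-\ell_\alpha(1,d))=\ell_\alpha(0,d)$ via $s(-\ell_\alpha(1,d))=d$, and the case analysis on $\alpha>1$ versus $\alpha\in(0,1)$ to match ranges, are the same steps the paper performs. Your map $h(d)=-\ell_\alpha(1,d)$ is precisely the inverse of the paper's $s$ restricted to $\text{dom}(\tilde f_\alpha^*)$, so packaging the argument around the bijection $h:[0,1]\to\text{dom}(\tilde f_\alpha^*)$ is a cosmetic reorganization of the paper's two-direction verification rather than a genuinely different argument.
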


\begin{proof}[Proof sketch]\let\qed\relax
The result follows from comparing the corresponding terms in the $f$-GAN value function in \eqref{eqn:fGANobj-activation} (specifically for $f=\Tilde{f}_\alpha$) and the $\alpha$-GAN value function in \eqref{eqn:alphaGANobjective}.
{A detailed proof is in Appendix~\ref{appendix:obj-equiv-gen}.}
\end{proof}

Taking a closer look at the first equality in \eqref{eq:thm1-gen} and recalling that a margin-based loss is often obtained by composing a classification function (such as $\alpha$-loss) and the logistic sigmoid function, we can derive an example of such a $g_{f_\alpha}$ using the margin-based $\alpha$-loss \cite{sypherd2022journal} as
\begin{equation}
    g_{f_\alpha}(v) = \frac{\alpha}{\alpha-1}\left((1+e^{-v})^{-\frac{\alpha-1}{\alpha}}-1 \right),
    \label{eq:gfalpha-example}
\end{equation}
for $v\in \overline{\mathbb R}$ and $\alpha \ne 1$, where 
\begin{equation}
    g_{f_1}(v) = \lim_{\alpha \to 1} g_{f_\alpha}(v) = -\log(1+e^{-v})
\end{equation} 
for $v \in \overline{\mathbb R}$. The function $g_{f_\alpha}$ is monotonically increasing for any $\alpha$, with range exactly matching $\text{dom}(f_\alpha^*)$, and is therefore bijective.

The following corollary establishes the equivalence between $\Tilde{f}_\alpha$-GAN and $\alpha$-GAN. 
Two optimization problems $\sup_{v\in A}g(v)$ and $\sup_{t\in B}h(t)$ are said to be equivalent~\cite{ruderman2012tighter,belghazi2018mutual} if there exists a bijective function $k:A\rightarrow B$ such that
\begin{align}
    g(v)=h(k(v))\  \text{and}\ h(t)=g(k^{-1}(t)), \ \text{for all}\  v\in A, t\in B.
    \label{eq:equiv-opt-def}
\end{align}
In other words, two optimization problems are equivalent if a change of variable via the function $k$ can transform one into the other.

\begin{corollary}
For any $\alpha \in (0,\infty]$ and corresponding $\Tilde{f}_\alpha$ defined in \eqref{eqn:falpha-tilde}, the optimization problems involved in $\Tilde{f}_\alpha$-GAN (using \eqref{eqn:fGANobj-activation} with $f=\Tilde{f}_\alpha$) and $\alpha$-GAN (using \eqref{eqn:alphaGANobjective}) are equivalent for the choice 
\begin{equation*}
    g(Q_w)=\mathbb E_{X \sim P_r} \Big[g_{f_\alpha}\big(Q_\omega(X)\big)\Big] + \mathbb E_{X \sim P_{G_\theta}} \Big[-\Tilde{f}_\alpha^*\big(g_{f_\alpha}(Q_\omega(X))\big)\Big]
\end{equation*}
with $A = \{Q_\omega:\mathcal X \to \overline{\mathbb R} \}$ and
\begin{equation*}
    h(D_\omega)=\mathbb E_{X \sim P_r} \Big[-\ell_\alpha\big(1,D_\omega(X)\big)\Big] +\mathbb E_{X \sim P_{G_\theta}} \Big[-\ell_\alpha\big(0,D_\omega(X)\big)\Big]
\end{equation*}
with $B = \{D_\omega:\mathcal X \to [0,1] \}$ using $k:A\to B$ defined by
\[k(v) = s(g_{f_\alpha}(v))= \left(1+\left(\frac{\alpha-1}{\alpha}\right) g_{f_\alpha}(v)\right)^\frac{\alpha}{\alpha-1} , \]
 where $s$ is defined in \eqref{eqn:s} and $g_{f_\alpha}$ is a bijective output activation function mapping from $\overline{\mathbb R}$ to $\text{dom}(\Tilde{f}_\alpha^*)$.
\label{corollary:equivalence-falphaGAN-alphaGAN}
\end{corollary}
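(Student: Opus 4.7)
The plan is to reduce the corollary to a pointwise application of Theorem~\ref{thm:obj-equiv-gen}, and then to lift the resulting identities to the expectations that appear in the two value functions. The key observation is that the map $k(v) = s(g_{f_\alpha}(v))$ is \emph{precisely} the correspondence $v \leftrightarrow d$ guaranteed by Theorem~\ref{thm:obj-equiv-gen}. Indeed, starting from the first identity in~\eqref{eq:thm1-gen}, one can invert
\begin{equation*}
g_{f_\alpha}(v) = -\ell_\alpha(1,d) = \frac{\alpha}{\alpha-1}\bigl(d^{(\alpha-1)/\alpha}-1\bigr)
\end{equation*}
algebraically to obtain $d = \bigl(1+\tfrac{\alpha-1}{\alpha}g_{f_\alpha}(v)\bigr)^{\alpha/(\alpha-1)} = s(g_{f_\alpha}(v)) = k(v)$; and with this $d$, the second identity in~\eqref{eq:thm1-gen} holds by the definition of $\Tilde{f}_\alpha^*$. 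Thus Theorem~\ref{thm:obj-equiv-gen} can be read as asserting exactly that the pointwise map $k$ intertwines the integrands of $g(Q_\omega)$ and $h(D_\omega)$.

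First I would verify that $k:\overline{\mathbb R}\to[0,1]$ is bijective, which then extends by pointwise composition to a bijection $A\to B$ between the function classes. Since $g_{f_\alpha}$ is assumed bijective from $\overline{\mathbb R}$ onto $\text{dom}(\Tilde{f}_\alpha^*)$, it suffices to check that $s$ is a bijection from $\text{dom}(\Tilde{f}_\alpha^*)$ onto $[0,1]$. Monotonicity of $s$ on its natural domain follows by direct differentiation (the derivative has a constant sign for either $\alpha>1$ or $\alpha<1$), and the fact that the image is $[0,1]$ is exactly what the argument of the preceding paragraph shows: each $d = s(g_{f_\alpha}(v))$ is the $d\in[0,1]$ produced by Theorem~\ref{thm:obj-equiv-gen}. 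The boundary cases $d\in\{0,1\}$ correspond to $v\in\{\pm\infty\}$ via continuous extension.

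Given bijectivity of $k$, I would verify $g(v)=h(k(v))$ as follows. Substituting $v=Q_\omega(x)$ and $D_\omega(x)=k(Q_\omega(x))$ into Theorem~\ref{thm:obj-equiv-gen} pointwise and integrating against $P_r$ and $P_{G_\theta}$ respectively yields
\begin{align*}
g(Q_\omega) &= \mathbb E_{X\sim P_r}\!\bigl[g_{f_\alpha}(Q_\omega(X))\bigr] + \mathbb E_{X\sim P_{G_\theta}}\!\bigl[-\Tilde{f}_\alpha^*(g_{f_\alpha}(Q_\omega(X)))\bigr]\\
&= \mathbb E_{X\sim P_r}\!\bigl[-\ell_\alpha(1,k(Q_\omega)(X))\bigr] + \mathbb E_{X\sim P_{G_\theta}}\!\bigl[-\ell_\alpha(0,k(Q_\omega)(X))\bigr] \;=\; h(k(Q_\omega)).
\end{align*}
The converse identity $h(D_\omega)=g(k^{-1}(D_\omega))$ follows symmetrically from the second bullet of Theorem~\ref{thm:obj-equiv-gen}, which guarantees that for every $d\in[0,1]$ there is a $v\in\overline{\mathbb R}$ (namely $v=k^{-1}(d)$) realizing the same pair of identities. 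Together these two equalities match exactly the definition~\eqref{eq:equiv-opt-def} of equivalent optimization problems.

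The only real obstacle is bookkeeping at the boundaries and in the limiting regimes $\alpha\to 1$ and $\alpha\to\infty$, where the closed-form expressions for $\Tilde f_\alpha$, $\Tilde f_\alpha^*$, $g_{f_\alpha}$, and $s$ must be interpreted via continuous extensions, and where one must check that the limits of $k$ still yield a bijection onto $[0,1]$ (for instance, recovering the sigmoid $k(v)=1/(1+e^{-v})$ at $\alpha=1$ for the specific $g_{f_\alpha}$ in~\eqref{eq:gfalpha-example}). Once these edge cases are dispatched, the corollary is essentially a restatement of Theorem~\ref{thm:obj-equiv-gen} in the language of equivalent optimization problems.
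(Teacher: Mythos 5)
Your proposal is correct and follows essentially the same route as the paper's: apply the pointwise correspondence of Theorem~\ref{thm:obj-equiv-gen} at each $x$, lift to the expectations defining $g$ and $h$, and conclude via bijectivity of $k = s \circ g_{f_\alpha}$. The only cosmetic difference is that the paper establishes bijectivity of $s$ by exhibiting the explicit inverse $s^{-1}(u)=\tfrac{\alpha}{\alpha-1}\bigl(u^{(\alpha-1)/\alpha}-1\bigr)$, whereas you argue via monotonicity (sign of $s'$) together with the range characterization already implicit in Theorem~\ref{thm:obj-equiv-gen}; both are fine.
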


The proof of Corollary \ref{corollary:equivalence-falphaGAN-alphaGAN} follows from \eqref{eq:equiv-opt-def} and Theorem~\ref{thm:obj-equiv-gen}. The following theorem generalizes the equivalence demonstrated above between $\Tilde{f}_\alpha$-GAN and $\alpha$-GAN to an equivalence between $f$-GANs (using the original value function in \eqref{eqn:fGANobj}) and CPE loss based GANs. 
\begin{theorem}\label{theorem:equivalence-fGAN-CPEGAN}
For any given symmetric $f$-divergence, the optimization problems involved in $f$-GAN and the CPE loss based GAN minimizing the same $f$-divergence are equivalent under the following regularity conditions on $f$:
\begin{itemize}
    \item there exists a strictly convex and differentiable CPE (partial) loss function $\ell$ such that
    \begin{align}\label{eqn:f-intermsofloss}
        f(u)=\sup_{t\in[0,1]} -u\ell(t)-\ell(1-t)
    \end{align}
    (note that this condition without the requirement of strict convexity of $\ell$ is indeed guaranteed by~\cite[Theorem~2]{NguyenWJ10} for any convex function $f$ resulting in a symmetric divergence) and $-u\ell(t)-\ell(1-t)$ has a local maximum in $t$ for every $u\in\mathbb{R}_+$, and 
    \item the function mapping $u\in\mathbb{R}_+$ to unique optimizer in \eqref{eqn:f-intermsofloss} is bijective. 
\end{itemize}
\end{theorem}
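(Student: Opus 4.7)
The plan is to construct a pointwise bijection between the effective range $\mathrm{dom}(f^*)$ of $f$-GAN discriminators and the range $[0,1]$ of CPE loss GAN discriminators which carries one value function to the other, lift it to a bijection between the function spaces $A = \{D_\omega:\mathcal X \to \mathrm{dom}(f^*)\}$ and $B = \{D_\omega:\mathcal X \to [0,1]\}$, and verify \eqref{eq:equiv-opt-def}. The key observation driving the construction is that \eqref{eqn:f-intermsofloss} is essentially a partial Fenchel duality between $f$ and the partial loss $\ell$, so the conjugate $f^*$ appearing in the $f$-GAN objective admits a clean closed form in terms of $\ell$.

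First, I would extract from \eqref{eqn:f-intermsofloss} a tractable characterization of $f$ and $f^*$. Let $t^*(u)$ denote the unique maximizer, whose existence follows from the local-maximum assumption and whose uniqueness follows from strict convexity of $\ell$. Differentiability of $\ell$ combined with the first-order condition yields $u\,\ell'(t^*(u)) = \ell'(1-t^*(u))$, and the envelope theorem gives $f'(u) = -\ell(t^*(u))$. Substituting into the Fenchel conjugate $f^*(v) = \sup_{u\ge 0}\{uv - f(u)\}$, optimizing in $u$ via $v = f'(u)$, and using $\ell(t^*(u)) = -v$ reduces, after a short calculation, to the identity $f^*(v) = \ell\bigl(1 - \ell^{-1}(-v)\bigr)$.

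Second, guided by this identity, I would define $k_0:\mathrm{dom}(f^*)\to[0,1]$ by $k_0(v) := \ell^{-1}(-v)$ and lift pointwise to $k:A\to B$ via $k(D_\omega)(x) := k_0(D_\omega(x))$. Third, I would verify the value-function identity pointwise. By construction $-\ell(k_0(v)) = v$, and by the computation above $\ell(1 - k_0(v)) = f^*(v)$; consequently, for every $x\in\mathcal X$,
\[
p_r(x)\,D_\omega(x) \;-\; p_{G_\theta}(x)\,f^*(D_\omega(x)) \;=\; -p_r(x)\,\ell\bigl(k_0(D_\omega(x))\bigr)\;-\;p_{G_\theta}(x)\,\ell\bigl(1 - k_0(D_\omega(x))\bigr).
\]
Integrating against $P_r$ and $P_{G_\theta}$ and using the symmetry $\ell(0,\hat y) = \ell(1, 1-\hat y)$ rewrites the right-hand side as the CPE loss GAN value function evaluated at $k(D_\omega)$, giving $V_f(\theta,\omega) = V(\theta,k(D_\omega))$; running the argument through $k^{-1}$ yields the converse identity, thereby establishing \eqref{eq:equiv-opt-def}.

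The main obstacle I anticipate is the bijectivity of $k_0$ between $\mathrm{dom}(f^*)$ and the attainable CPE discriminator values: this is where both regularity hypotheses are indispensable. Strict convexity and differentiability of $\ell$ make $\ell$ strictly monotone on the relevant branch so that $\ell^{-1}$ is well-defined, while the bijectivity assumption on $u\mapsto t^*(u)$, together with $v = -\ell(t^*(u))$, pins down $\mathrm{dom}(f^*) = \{-\ell(t^*(u)):u\ge 0\}$ and identifies $k_0$ as a bijection onto $\mathrm{Image}(t^*)\subseteq [0,1]$. Without this second condition, the pointwise correspondence might collapse different values of the density ratio to the same CPE output, and the lift to a function-space bijection would fail. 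The Fenchel manipulations in Step 1 and the pointwise verification in Step 3 are otherwise largely mechanical.
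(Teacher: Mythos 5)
Your proposal is correct, and it takes a route that is closely related to but genuinely distinct from the paper's. The paper (Appendix F) does not compute $f^*$ explicitly; it shows the CPE pointwise problem $\sup_{t\in[0,1]}-u\ell(t)-\ell(1-t)$ is equivalent, via the change of variable $t=k(v)$, to Shannon's intermediate parametrization $\sup_{v\in\mathbb{R}_+}uf'(v)-[vf'(v)-f(v)]$, and then \emph{cites} \cite{shannon2020properties} for the equivalence of that form with $\sup_{v\in\mathrm{dom}(f^*)}uv-f^*(v)$. You instead run the same first-order condition $u\ell'(t^*(u))=\ell'(1-t^*(u))$ and envelope identity $f'(u)=-\ell(t^*(u))$, but push one step further: by the Legendre relation $f^*(f'(u))=uf'(u)-f(u)$ combined with the paper's computation $uf'(u)-f(u)=\ell(1-t^*(u))$, you obtain the closed form $f^*(v)=\ell(1-\ell^{-1}(-v))$, and the bijection $k_0(v)=\ell^{-1}(-v)$ is exactly the composition $k\circ(f')^{-1}$ of the paper's two change-of-variable maps. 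The payoff is a self-contained argument that makes the discriminator-space bijection fully explicit and exhibits, at a glance, how the two regularity hypotheses enter (invertibility of $\ell$ for $k_0$ to be well defined, bijectivity of $u\mapsto t^*(u)$ for $k_0$ to be onto). Two points worth tightening in a final write-up: (i) the envelope identity should be justified rather than asserted (the paper does this via the implicit function theorem applied to the first-order condition, making $k$ differentiable, but Danskin's theorem would also do without differentiability of $t^*$); and (ii) the formula for $f^*$ holds on the range of $f'$, so for the lift to a bijection between the function spaces $A$ and $B$ one must either restrict $A$ to discriminators valued in $\mathrm{ran}(f')$ or invoke the standard closure/continuous-extension argument matching $\mathrm{dom}(f^*)$ with $\overline{\mathrm{ran}(f')}$, a point the paper also glosses over.
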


\begin{proof}[Proof sketch]\let\qed\relax
Observing that the inner optimization problem in the CPE loss GAN formulation reduces to the pointwise optimization \eqref{eqn:f-intermsofloss} and that of the $f$-GAN formulation reduces to the pointwise optimization
\begin{align}\label{eqn:f-for-fGAN}
    f(u)=\sup_{v\in\text{dom}f^*} uv-f^*(v),
\end{align}
it suffices to show that the variational forms of $f$ in \eqref{eqn:f-intermsofloss} and \eqref{eqn:f-for-fGAN} are equivalent. We do this by showing that \eqref{eqn:f-intermsofloss} is equivalent to the optimization problem
\begin{align}\label{eqn:opt-mattShannon-sketch}  f(u)=\sup_{v\in\mathbb{R}_+}uf^\prime(v)-[vf^\prime(v)-f(v)],
\end{align}
which has been shown to be equivalent to \eqref{eqn:f-for-fGAN} \cite{shannon2020properties}.
A detailed proof is in Appendix~\ref{proofofthm-equv}.
\end{proof}

\begin{remark}
Since $\alpha$-loss, $\ell_\alpha(p)=\frac{\alpha}{\alpha-1}(1-p^{\frac{\alpha-1}{\alpha}})$, $p\in[0,1]$, is strictly convex for $\alpha\in(0,\infty)$, and the function mapping $u\in\mathbb{R}_+$ to unique optimizer in \eqref{eqn:f-intermsofloss} with $\alpha$-loss, i.e., $\frac{u^\alpha}{1+u^\alpha}$, is bijective, Theorem~\ref{theorem:equivalence-fGAN-CPEGAN} implies that $\alpha$-GAN is equivalent to $\Tilde{f}_\alpha$-GAN with $\Tilde{f}_\alpha$ defined in \eqref{eqn:falpha-tilde}.
\end{remark}
\begin{remark}
 Though the CPE loss GAN and $f$-GAN formulations are equivalent, the following aspects differentiate the two:
\begin{itemize}
    \item The $f$-GAN formulation focuses on the generator minimizing an $f$-divergence with no explicit emphasis on the role of the discriminator as a binary classifier in relation to the function $f$. With the CPE loss GAN formulation, we bring into the foreground the connection between the binary classification performed by the discriminator and the $f$-divergence minimization done by the generator.
    \item More importantly, the CPE loss function perspective of GANs allows us to prove convergence properties (Theorem~\ref{thm:equivalenceinconvergence}), generalization error bounds (Theorem~\ref{thm:generalizationofarora}), and estimation error bounds (Theorem~\ref{thm:estimationerror-upperbound}) as detailed in the following sections.
\end{itemize}
\end{remark}

\subsection{Convergence Guarantees for CPE Loss GANs}
Building on the above one-to-one correspondence, 
we now present \emph{convergence} results for CPE loss GANs, including $\alpha$-GAN, thereby providing a unified perspective on the convergence of a variety of $f$-divergences that arise when optimizing GANs. Here again, we assume a sufficiently large number of samples and ample discriminator capacity.
In~\cite{liu2017approximation}, Liu~\emph{et al.} address the following question in the context of convergence analysis of any GAN: 
For a sequence of generated distributions $(P_n)$, does convergence of a divergence between the generated distribution $P_{n}$ and a fixed real distribution $P$ to the global minimum lead to some standard notion of distributional convergence of $P_n$ to $P$? They answer this question in the affirmative provided the sample space $\mathcal{X}$ is a compact metric space.

Liu \emph{et al.}~\cite{liu2017approximation} formally define any divergence that results from the inner optimization of a general GAN in~\eqref{eqn:GANgeneral-background} as an \emph{adversarial divergence}~\cite[Definition~1]{liu2017approximation}, thus broadly capturing the divergences used by a number of existing GANs, including vanilla GAN~\cite{Goodfellow14}, $f$-GAN~\cite{NowozinCT16}, WGAN~\cite{ArjovskyCB17}, and MMD-GAN~\cite{dziugaite2015training}.
Indeed, the divergence that results from the inner optimization of a CPE loss GAN (including $\alpha$-GAN) in \eqref{eqn:lossfnbasedGAN} is also an adversarial divergence.
For \emph{strict adversarial divergences} (a subclass of the adversarial divergences where the minimizer of the divergence is uniquely the real distribution), 
Liu~\emph{et al.}~\cite{liu2017approximation} show that convergence of the divergence to its global minimum implies weak convergence of the generated distribution to the real distribution. Interestingly, this also leads to a structural result on the class of strict adversarial divergences~\cite[Figure~1 and Corollary~12]{liu2017approximation} based on a notion of \emph{relative strength} between adversarial divergences. 
We note that the Arimoto divergence $D_{f_{\alpha}}$ in~\eqref{eqn:alpha-divergence} is a strict adversarial divergence.
We briefly summarize the following terminology from Liu \emph{et al.}~\cite{liu2017approximation} to present our results on convergence properties of CPE loss GANs. Let $\mathcal{P}(\mathcal{X})$ be the probability simplex of distributions over $\mathcal{X}$.
\begin{definition}[Definition~11,\cite{liu2017approximation}] \label{def:equivalenceadversarialdivergence}
A {strict adversarial divergence} $\tau_1$ is said to be stronger than another strict adversarial divergence $\tau_2$ (or $\tau_2$ is said to be weaker than $\tau_1$) if for any sequence of probability distributions $(P_n)$ and target distribution $P$ (both in $\mathcal{P}(\mathcal{X})$), $\tau_1(P\|P_n)\rightarrow 0$ as $n\rightarrow \infty$ implies $\tau_2(P\|P_n)\rightarrow 0$ as $n\rightarrow \infty$. We say $\tau_1$ is equivalent to $\tau_2$ if $\tau_1$ is both stronger and weaker than $\tau_2$.
\end{definition}

Arjovsky \emph{et al.}~\cite{ArjovskyCB17} proved that the Jensen-Shannon divergence (JSD) is equivalent to the total variation distance (TVD). 
Later, Liu \emph{et al.} showed that the squared Hellinger distance is equivalent to both of these divergences, meaning that all three divergences belong to the same equivalence class (see \cite[Figure~1]{liu2017approximation}). Noticing that the squared Hellinger distance, JSD, and TVD correspond to Arimoto divergences $D_{f_\alpha}(\cdot||\cdot)$ for $\alpha=1/2$, $\alpha=1$, and $\alpha=\infty$, respectively, it is natural to ask the question: Are Arimoto divergences for all $\alpha>0$ equivalent? We answer this question in the affirmative in Theorem~\ref{thm:equivalenceinconvergence}. In fact, we prove that all symmetric $f$-divergences, including $D_{f_\alpha}$, are equivalent in convergence. 
\begin{theorem}\label{thm:equivalenceinconvergence}
Let $f_i:[0,\infty)\rightarrow \mathbb{R}$ be a convex function which is continuous at $0$ and strictly convex at $1$ such that $f_i(1)=0$, $uf_i(\frac{1}{u})=f_i(u)$, and $f_i(0)<\infty$, for $i\in\{1,2\}$. Then for a sequence of probability distributions $(P_n)_{n\in\mathbb{N}} \in \mathcal{P}(\mathcal{X})$ and a fixed distribution $P \in \mathcal{P}(\mathcal{X})$, we have $D_{f_1}(P_n||P)\rightarrow 0$ as $n\rightarrow \infty$ if and only if $D_{f_2}(P_n||P)\rightarrow 0$ as $n\rightarrow \infty$.
\end{theorem}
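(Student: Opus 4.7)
The strategy is to show that every symmetric $f$-divergence satisfying the stated hypotheses is \emph{equivalent in convergence} to the total variation distance $D_{\text{TV}}$; this immediately implies $D_{f_1}$ and $D_{f_2}$ are equivalent in convergence to each other. Thus the proof reduces to establishing two implications for a generic symmetric $f$ meeting the hypotheses: $(\Rightarrow)$ $D_f(P_n\|P)\to 0 \Rightarrow D_{\text{TV}}(P_n,P)\to 0$, and $(\Leftarrow)$ $D_{\text{TV}}(P_n,P)\to 0 \Rightarrow D_f(P_n\|P)\to 0$.

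For the forward direction $(\Rightarrow)$, I would use a two-point (data-processing) reduction. For each $n$, choose the measurable set $A_n=\{p_n>p\}$ so that $P_n(A_n)-P(A_n)=D_{\text{TV}}(P_n,P)=:t_n$. The data-processing inequality for $f$-divergences gives
\begin{equation*}
D_f(P_n\|P)\;\ge\; P(A_n)\, f\!\left(\tfrac{P_n(A_n)}{P(A_n)}\right) + (1-P(A_n))\, f\!\left(\tfrac{1-P_n(A_n)}{1-P(A_n)}\right)\;\ge\; h(t_n),
\end{equation*}
where $h(t):=\inf\{qf(p/q)+(1-q)f((1-p)/(1-q)) : 0\le q\le p\le 1,\ p-q=t\}$. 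Continuity of $f$ at $0$ (plus continuity on $(0,\infty)$ from convexity) shows $h$ is continuous, and the strict convexity of $f$ at $1$ combined with $f(1)=0$ forces $f(u)>0$ in a deleted neighbourhood of $1$, whence compactness of the feasible set yields $h(t)>0$ for every $t\in(0,1]$. So $h(t_n)\to 0$ implies $t_n\to 0$, as desired.

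For the reverse direction $(\Leftarrow)$, the key structural fact is that symmetry $uf(1/u)=f(u)$ together with $f(0)<\infty$ and convexity of $f$ with $f(1)=0$ yields $f(u)\le f(0)$ on $[0,1]$ and $f(u)\le u\,f(0)$ on $[1,\infty)$, hence the uniform bound $f(p_n/p)\cdot p\le f(0)(p+p_n)$. Assuming $D_{\text{TV}}(P_n,P)\to 0$, Scheff\'e's lemma gives $p_n\to p$ in $L^1$, hence $p_n/p\to 1$ in $P$-measure, so by continuity of $f$ at $1$ together with $f(1)=0$, $f(p_n/p)\to 0$ in $P$-measure. I would then split
\begin{equation*}
D_f(P_n\|P)\;=\;\int_{\{p_n/p\le M\}} p\,f(p_n/p)\,dx + \int_{\{p_n/p> M\}} p\,f(p_n/p)\,dx.
\end{equation*}
On $\{p_n/p\le M\}$, $f(p_n/p)$ is bounded by $\max(f(0),f(M))$, so dominated convergence forces this piece to $0$. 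On $\{p_n/p>M\}$, symmetry gives $p f(p_n/p)\le p_n f(0)$, and a direct estimate using $p_n>Mp\Rightarrow (M-1)p<p_n-p$ yields $\int_{\{p_n>Mp\}} p_n\,dx\le \tfrac{2M}{M-1}D_{\text{TV}}(P_n,P)\to 0$. Sending $n\to\infty$ then $M\to\infty$ concludes the argument.

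The main obstacle will be the reverse direction: the integrand $p f(p_n/p)$ has no single fixed dominating function (the natural bound $f(0)(p+p_n)$ depends on $n$), so a naive dominated convergence argument does not apply. The crucial trick is using the symmetry relation $uf(1/u)=f(u)$ to transfer mass from the ``tail region'' $\{p_n\gg p\}$ back into a quantity controlled by $D_{\text{TV}}$, enabling the threshold-splitting described above. The forward direction is comparatively routine once the two-point reduction is in place; indeed, the same lower-bound mechanism specializes to a conceptually simpler proof of the Jensen--Shannon--TV equivalence of Arjovsky \emph{et al.}~\cite[Theorem~2(1)]{ArjovskyCB17}, as remarked in the introduction.
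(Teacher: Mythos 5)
Your plan is correct and proves the theorem, but it takes a genuinely different, more self-contained route than the paper's proof, which simply invokes the two-sided Feldman--\"Osterreicher bound $\gamma_f(D_{\text{TV}}(P\|Q)) \le D_f(P\|Q) \le \gamma_f(1)\,D_{\text{TV}}(P\|Q)$ with $\gamma_f(x)=(1+x)f\!\left(\frac{1-x}{1+x}\right)$ continuous, strictly increasing, $\gamma_f(0)=0$, and concludes both directions via continuity of $\gamma_f^{-1}$. Your forward direction reproves (a version of) the lower half of that sandwich from scratch: the binary data-processing inequality gives $D_f(P_n\|P) \ge h(t_n)$, and $h(t)>0$ for $t\in(0,1]$ follows from compactness of the feasible segment together with the strict-Jensen equality case being excluded when $p\neq q$. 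In fact, evaluating your feasible objective at the symmetric point $q=\frac{1-t}{2}$, $p=\frac{1+t}{2}$ and using $uf(1/u)=f(u)$ returns exactly $\gamma_f(t)$; so $h(t)\le\gamma_f(t)$, and Feldman--\"Osterreicher's extra content is that the infimum is attained there. (To make your argument airtight you should also note that $h$ is continuous in $t$, e.g., via Berge's maximum theorem, and that continuity of the objective at the endpoints $q\in\{0,1-t\}$ uses $f'(\infty)=\lim_{u\to\infty}f(u)/u=f(0)<\infty$, which follows from symmetry and continuity at $0$.) Your reverse direction is heavier than necessary: you only extract $f(u)\le f(0)$ on $[0,1]$ and $f(u)\le u f(0)$ on $[1,\infty)$, but the \emph{same} chord argument gives the sharper $f(u)\le (1-u)f(0)$ for $u\in[0,1]$ and, via symmetry, $f(u)\le (u-1)f(0)$ for $u\ge 1$; integrating $p\,f(p_n/p)\le f(0)|p_n-p|$ yields $D_f(P_n\|P)\le 2f(0)\,D_{\text{TV}}(P_n,P)$ in one line, which is exactly the paper's $\gamma_f(1)D_{\text{TV}}$ bound, making the Scheff\'e/truncation/dominated-convergence machinery unnecessary. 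What your approach buys is independence from the external lemma; what it costs is a substantially longer reverse direction. One minor inaccuracy in your closing remark: the ``conceptually simpler'' proof of the JSD--TVD equivalence the paper alludes to rests on the upper bound $D_{\text{JS}}\le(\log 2)D_{\text{TV}}$ (i.e., your reverse direction), not the lower-bound mechanism.
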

\begin{proof}[Proof sketch]\let\qed\relax
Note that it suffices to show that $D_{f}(\cdot\|\cdot)$ is equivalent to $D_{\text{TV}}(\cdot\|\cdot)$ for any function $f$ satisfying the conditions in the theorem. To show this, we employ an elegant result by Feldman and \"{O}sterreicher~\cite[Theorem~2]{FeldmanO89} which gives lower and upper bounds on the Arimoto divergence in terms of TVD as
\begin{align}\label{eqn:boundsonArimotomain}
    \gamma_f(D_{\text{TV}}(P||Q))\leq D_{f}(P||Q)\leq \gamma_f(1)D_{\text{TV}}(P||Q),
\end{align}
for an appropriately defined well-behaved (continuous, invertible, and bounded) function $\gamma_\alpha:[0,1]\rightarrow [0,\infty)$. 
We use the lower and upper bounds in \eqref{eqn:boundsonArimotomain} to show that $D_{f}(\cdot\|\cdot)$ is stronger than $D_{\text{TV}}(\cdot\|\cdot)$, and $D_{f}(\cdot\|\cdot)$ is weaker than $D_{\text{TV}}(\cdot\|\cdot)$, respectively. Proof details are in 
Appendix~\ref{proofoftheorem4}. 
\end{proof}

\begin{remark}
We note that the proof techniques used in proving Theorem~\ref{thm:equivalenceinconvergence} give rise to a conceptually simpler proof of equivalence between JSD ($\alpha = 1$) and TVD ($\alpha = \infty$) proved earlier by Arjovsky \emph{et al.}~\cite[Theorem~2(1)]{ArjovskyCB17}, where measure-theoretic analysis was used. In particular, our proof of equivalence relies on the fact that TVD upper bounds JSD~\cite[Theorem~3]{Lin91}. See 
Appendix \ref{appendix:simpler-equivalence-JSD-TVD}
for details.
\end{remark}

Theorems \ref{thm:correspondence} through \ref{thm:equivalenceinconvergence} hold in the ideal setting of sufficient samples and discriminator capacity. In practice, however, GAN training is limited by both the number of training samples as well as the choice of $G_\theta$ and $D_\omega$. In fact, recent results by Arora \textit{et al.} \cite{AroraGLMZ17} show that under such limitations, convergence in divergence does not imply convergence in distribution, and have led to new metrics for evaluating GANs. To address these limitations, we consider two measures to evaluate the performance of GANs, namely generation and estimation errors, as detailed below.
\subsection{Generalization and Estimation Error Bounds for CPE Loss GANs}
\label{subsec:est-and-gen-error-single-obj}
Arora \emph{et al.}~\cite{AroraGLMZ17} defined \emph{generalization} in GANs as the scenario when the divergence between the real distribution and the generated distribution is well-captured by the divergence between their empirical versions. In particular, a divergence or distance\footnote{For consistency with other works on generalization and estimation error, we refer to a semi-metric as a distance.} $d(\cdot,\cdot)$ between distributions \emph{generalizes} with $m$ training samples and error $\epsilon>0$ if, for the learned distribution $P_{G_\theta}$, the following holds with high probability:
\begin{align}
    \left|d(P_{r},P_{G_\theta})-d(\hat{P}_r,\hat{P}_{G_\theta}) \right|\leq \epsilon,
\end{align}
where $\hat{P}_r$ and $\hat{P}_{G_\theta}$ are the empirical versions of the real  (with $m$ samples) and the generated (with a polynomial number of samples) distributions, respectively. Arora \emph{et al.}~\cite[Lemma~1]{AroraGLMZ17} show that the Jensen-Shannon divergence and Wasserstein distance do not generalize with any polynomial number of samples. However, they show that generalization can be achieved for a new notion of divergence, the \emph{neural net divergence}, with a moderate number of training examples~\cite[Theorem~3.1]{AroraGLMZ17}. 
To this end, they consider
the following optimization problem
\begin{align}
    \inf_{\theta\in\Theta}d_{\mathcal{F}}(P_r,P_{G_\theta}),
\end{align}
where $d_{\mathcal{F}}(P_r,P_{G_\theta})$ is the neural net divergence defined as
\begin{align}
    d_{\mathcal{F}}(P_r,P_{G_\theta})=\sup_{\omega\in\Omega}\left(\mathbb{E}_{X\sim P_r}[\phi\left({D_\omega(X)}\right)]+\mathbb{E}_{X\sim P_{G_\theta}}[\phi\left(1-D_\omega(X)\right)]\right) -2\phi\left(\frac{1}{2}\right)
    \label{eq:nn-divergence}
\end{align}
such that the class of discriminators $\mathcal{F}=\{D_\omega:\omega\in\Omega\}$ is $L$-Lipschitz with respect to the parameters $\omega$, i.e., for every $x\in\mathcal{X}$, $|D_{\omega_1}(x)-D_{\omega_2}(x)|\leq L||\omega_1-\omega_2||$, for all $\omega_1,\omega_2\in\Omega$, and the function $\phi$ takes values in $[-\Delta,\Delta]$ and is $L_{\phi}$-Lipschitz. Let $p$ be the discriminator capacity (i.e., number of parameters) and $\epsilon>0$. For these assumptions, in \cite[Theorem~3.1]{AroraGLMZ17}, Arora \emph{et al.} prove that \eqref{eq:nn-divergence} generalizes. We summarize their result as follows: for the empirical versions $\hat{P}_r$ and $\hat{P}_{G_\theta}$ of two distributions $P_r$ and $P_{G_\theta}$, respectively, with at least $m$ random samples each, there exists a universal constant $c$ such that when $m\geq \frac{cp\Delta^2\log{\left(LL_{\phi}p/\epsilon\right)}}{\epsilon^2}$, with probability at least $1-\exp{(-p)}$ (over the randomness of samples),
\begin{align}
    \left\lvert{d}_\mathcal{F}(P_r,P_{G_\theta})-{d}_\mathcal{F}(\hat{P}_r,\hat{P}_{G_\theta})\right\rvert\leq \epsilon.
\end{align}
Our first contribution is to show that we can generalize \eqref{eq:nn-divergence} and \cite[Theorem~3.1]{AroraGLMZ17} to incorporate any partial losses $\phi$ and $\psi$ (not just those that are symmetric). To this end, we first define the \emph{refined neural net divergence} as  
\begin{align}
    \tilde{d}_{\mathcal{F}}(P_r,P_{G_\theta})=\sup_{\omega\in\Omega}\left(\mathbb{E}_{X\sim P_r}[\phi\left({D_\omega(X)}\right)]+\mathbb{E}_{X\sim P_{G_\theta}}[\psi\left(D_\omega(X)\right)]\right)-\phi\left(\frac{1}{2}\right)-\psi\left(\frac{1}{2}\right),
\end{align}
where the discriminator class is same as the above and the functions $\phi$ and $\psi$ take values in $[-\Delta,\Delta]$ and are $L_\phi$- and $L_\psi$-Lipschitz, respectively. Note that the functions $\phi$ and $\psi$ should also satisfy 
\eqref{eqn:condnonfnsforGAN} so as to respect the optimality of the uniformly random discriminator when $P_r=P_{G_\theta}$. The following theorem shows that the refined neural net divergence generalizes with a moderate number of training examples, thus extending \cite[Theorem~3.1]{AroraGLMZ17}.

\begin{theorem}\label{thm:generalizationofarora}
Let $\hat{P}_r$ and $\hat{P}_{G_\theta}$ be empirical versions of two distributions $P_r$ and $P_{G_\theta}$, respectively, with at least $m$ random samples each. For $\Delta, p, L, L_\phi,L_\psi,\epsilon>0$ defined above, there exists a universal constant $c$ such that when $m\geq \frac{cp\Delta^2\log{\left(L\max\{L_{\phi},L_{\psi}\}p/\epsilon\right)}}{\epsilon^2}$, we have that with probability at least $1-\exp{(-p)}$ (over the randomness of samples),
\begin{align}
    \left\lvert\tilde{d}_\mathcal{F}(P_r,P_{G_\theta})-\tilde{d}_\mathcal{F}(\hat{P}_r,\hat{P}_{G_\theta})\right\rvert\leq \epsilon\label{eqn:thm5}.
\end{align}
\end{theorem}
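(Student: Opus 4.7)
The plan is to adapt the proof of \cite[Theorem~3.1]{AroraGLMZ17} to the asymmetric two-loss setting by tracking the two Lipschitz constants separately and combining them through $L_{\max}:=\max\{L_\phi,L_\psi\}$. The backbone is still a covering-plus-Hoeffding argument on the discriminator class; the main new element is that the $\phi$ piece (under $P_r$ vs.\ $\hat P_r$) and the $\psi$ piece (under $P_{G_\theta}$ vs.\ $\hat P_{G_\theta}$) must be handled as two parallel concentration problems rather than as two appearances of one symmetric loss.

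First, I would use $|\sup_\omega f(\omega)-\sup_\omega g(\omega)|\le\sup_\omega|f(\omega)-g(\omega)|$ together with the triangle inequality to bound
\[|\tilde{d}_\mathcal{F}(P_r,P_{G_\theta})-\tilde{d}_\mathcal{F}(\hat P_r,\hat P_{G_\theta})|\le \Phi+\Psi,\]
where $\Phi:=\sup_{\omega\in\Omega}|\mathbb{E}_{P_r}[\phi(D_\omega(X))]-\mathbb{E}_{\hat P_r}[\phi(D_\omega(X))]|$ and $\Psi$ is the analogous quantity built from $\psi$ under $P_{G_\theta}$ and $\hat P_{G_\theta}$. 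Note that the centering constants $\phi(1/2)+\psi(1/2)$ cancel in this difference, so the constraint \eqref{eqn:condnonfnsforGAN} is not needed for the generalization argument itself. It then suffices to show each of $\Phi,\Psi\le\epsilon/2$ on a high-probability event.

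Second, I would pass to a single $\eta$-net $\mathcal{N}_\eta$ of $\Omega\subset\mathbb{R}^p$ with radius $\eta=\epsilon/(8LL_{\max})$. Since $D_\omega$ is $L$-Lipschitz in $\omega$ uniformly in $x$ and $\phi,\psi$ are $L_\phi$- and $L_\psi$-Lipschitz, both compositions $\omega\mapsto\phi(D_\omega(x))$ and $\omega\mapsto\psi(D_\omega(x))$ are $L L_{\max}$-Lipschitz in $\omega$, so approximating any $\omega$ by its nearest net point costs at most $\epsilon/4$ in each of $\Phi$ and $\Psi$ (uniformly in the underlying distribution). A standard volume comparison gives $|\mathcal{N}_\eta|\le (KLL_{\max}/\epsilon)^p$ for a constant $K$ depending only on the diameter of $\Omega$. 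At each net point, the summands $\phi(D_\omega(X_i))$ and $\psi(D_\omega(X_i))$ lie in $[-\Delta,\Delta]$, so Hoeffding's inequality bounds each deviation by $\epsilon/4$ with failure probability at most $2\exp(-c'm\epsilon^2/\Delta^2)$ for an absolute constant $c'>0$.

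Finally, a union bound over the $2|\mathcal{N}_\eta|$ bad events (net points times two losses) yields a total failure probability of at most $4(KLL_{\max}/\epsilon)^p\exp(-c'm\epsilon^2/\Delta^2)$; forcing this below $e^{-p}$ solves to exactly the stated sample-size condition $m\ge c p\Delta^2\log(LL_{\max}p/\epsilon)/\epsilon^2$ for a suitable universal $c$, and on the complementary high-probability event one has $\Phi+\Psi\le\epsilon$. The main obstacle is really just careful bookkeeping: one must verify that the covering scale and the union bound are driven by $L_{\max}$ rather than by $L_\phi$ alone, and that the symmetry $\psi(t)=\phi(1-t)$ used implicitly in \cite{AroraGLMZ17} is never invoked. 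No new probabilistic ingredient beyond Hoeffding and a Euclidean $\eta$-net is required.
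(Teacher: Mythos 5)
Your proposal is correct and takes essentially the same route as the paper: decompose the deviation into a $\phi$-piece under $P_r$ versus $\hat P_r$ and a $\psi$-piece under $P_{G_\theta}$ versus $\hat P_{G_\theta}$, drive each with a single $\eta$-net at scale $\eta=\epsilon/(8L\max\{L_\phi,L_\psi\})$, and finish with Hoeffding plus a union bound over net points and both losses, which is exactly how the paper adapts Arora et al.'s argument. The only cosmetic difference is that you invoke $|\sup_\omega f-\sup_\omega g|\le\sup_\omega|f-g|$ directly, whereas the paper works through the optimizer $\omega^*$ and the triangle inequality; the covering-number, Lipschitz, and sample-complexity bookkeeping are the same.
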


When $\phi(t)=t$ and $D_\omega=f_\omega$ can take values in $\mathbb R$ (not just in $[0,1]$), \eqref{eq:nn-divergence} yields the so-called \emph{neural net ($nn$) distance}\footnote{This term was first introduced in~\cite{AroraGLMZ17} but with a focus on a discriminator $D_\omega$ taking values in $[0,1]$. Ji \emph{et al.} \cite{ji2018minimax,JiZL21} generalized it to $D_\omega = f_\omega$ taking values in $\mathbb R$.}~\cite{AroraGLMZ17,ji2018minimax,JiZL21} given by
\begin{align}
    d_{\mathcal{F}_{nn}}(P_r,P_{G_\theta}) =\sup_{\omega\in\Omega}\left (\mathbb{E}_{X\sim P_r}\left[f_\omega(X)\right]-\mathbb{E}_{X\sim P_{G_\theta}}\left[f_\omega(X)\right] \right),
    \label{eq:nn-distance}
\end{align}
where the discriminator\footnote{In~\cite{JiZL21}, $f_{\omega}$ indicates a discriminator function that takes values in $\mathbb{R}$.} and generator $f_\omega(\cdot)$ and $G_\theta(\cdot)$, respectively, are neural networks.
Using \eqref{eq:nn-distance}, Ji \textit{et al.} \cite{JiZL21} defined and studied the notion of \textit{estimation error}, which quantifies the effectiveness of the generator (for a corresponding optimal discriminator model) in learning the real distribution with limited samples. In order to define estimation error for CPE-loss GANs (including $\alpha$-GAN), we first introduce a \emph{loss-inclusive neural net divergence}\footnote{We refer to this measure as a divergence since it may not be a semi-metric for all choices of the loss $\ell$.} $d^{(\ell)}_{\mathcal{F}_{nn}}$ to highlight the effect of the \emph{loss} on the error. For training samples $S_x=\{X_1,\dots,X_n\}$ and $S_z=\{Z_1,\dots,Z_m\}$ from $P_r$ and $P_Z$, respectively, 
we begin with the following minimization for GAN training:
\begin{align}\label{eqn:training-empirical}
    \inf_{\theta\in\Theta}d^{(\ell)}_{\mathcal{F}_{nn}}(\hat{P}_r,\hat{P}_{G_\theta}),
\end{align}
where $\hat{P}_r$ and $\hat{P}_{G_\theta}$ are the empirical real and generated distributions estimated from $S_x$ and $S_z$, respectively, and 
\begin{align}
d^{(\ell)}_{\mathcal{F}_{nn}}(\hat{P}_r,\hat{P}_{G_\theta})=\sup_{\omega\in\Omega}\left(\mathbb{E}_{X\sim \hat{P}_{r}}[\phi \big(D_\omega(X)] \big)+\mathbb{E}_{X\sim \hat{P}_{G_\theta}}[\psi \big(D_\omega(X)] \big)\right) -\phi\left(\frac{1}{2}\right)-\psi\left(\frac{1}{2}\right),
    \label{eq:loss-nn-distance}
\end{align}
where for brevity we henceforth use $\phi(\cdot)\coloneqq -\ell(1,\cdot)$ and $\psi(\cdot)\coloneqq -\ell(0,\cdot)$. As proven in Theorem~\ref{thm:fgans}, for $\ell=\ell_\alpha$ and $\alpha=\infty$, \eqref{eq:loss-nn-distance} reduces to the neural net total variation distance. 

As a step towards obtaining bounds on the estimation error, we consider the following setup, analogous to that in \cite{JiZL21}.
For $x\in\mathcal{X}\coloneqq\{x\in\mathbb{R}^d:||x||_2\leq B_x\}$ and  $z\in\mathcal{Z}\coloneqq\{z\in\mathbb{R}^p:||z||_2\leq B_z\}$, we consider  discriminators and generators as neural network models of the form:
\begin{align}
    D_\omega&:x\mapsto \sigma\left(\mathbf{w}_k^\mathsf{T}r_{k-1}(\mathbf{W}_{d-1}r_{k-2}(\dots r_1(\mathbf{W}_1(x)))\right)\,  \label{eqn:disc-model}\\
    G_\theta&:z\mapsto \mathbf{V}_ls_{l-1}(\mathbf{V}_{l-1}s_{l-2}(\dots s_1(\mathbf{V}_1z))),
\end{align}
where $\mathbf{w}_k$ is a parameter vector of the output layer; for $i\in[1:k-1]$ and $j\in[1:l]$, $\mathbf{W}_i$ and $\mathbf{V}_j$ are parameter matrices; $r_i(\cdot)$ and $s_j(\cdot)$ are entry-wise activation functions of layers $i$ and $j$, i.e., for $\mathbf{a}\in\mathbb{R}^t$, $r_i(\mathbf{a})=\left[r_i(a_1),\dots,r_i(a_t)\right]$ and $s_i(\mathbf{a})=\left[s_i(a_1),\dots,s_i(a_t)\right]$; and $\sigma(\cdot)$ is the sigmoid function given by $\sigma(p)=1/(1+\mathrm{e}^{-p})$ (note that $\sigma$ does not appear in the discriminator in \cite[Equation~(7)]{JiZL21} as the discriminator considered in the neural net distance is not a soft classifier mapping to $[0,1]$). We assume that each $r_i(\cdot)$ and $s_j(\cdot)$ are $R_i$- and $S_j$-Lipschitz, respectively, and also that they are positive homogeneous, i.e., $r_i(\lambda p)=\lambda r_i(p)$ and $s_j(\lambda p)=\lambda s_j(p)$, for any $\lambda\geq 0$ and $p\in\mathbb{R}$. Finally, as modelled in \cite{neyshabur2015norm,salimans2016weight,golowich2018size,JiZL21}, we assume that the Frobenius norms of the parameter matrices are bounded, i.e., $||\mathbf{W}_i||_F\leq M_i$, $i\in[1:k-1]$, $||\mathbf{w}_k||_2\leq M_k$, and $||\mathbf{V}_j||_F\leq N_j$, $j\in[1:l]$. 

We define the estimation error for a CPE loss GAN as 
\begin{align}\label{eqn:estimation-error-def}
    d^{(\ell)}_{\mathcal{F}_{nn}}(P_r,{P}_{G_{\hat{\theta}^*}})-\inf_{\theta\in\Theta} d^{(\ell)}_{\mathcal{F}_{nn}}(P_r,P_{G_{\theta}}),
\end{align}
where $\hat{\theta}^*$ is the minimizer of \eqref{eqn:training-empirical} and present the following upper bound on the error. We also specialize these bounds for $\alpha$-GANs, relying on the Rademacher complexity of this loss class to do so.
\begin{theorem}\label{thm:estimationerror-upperbound}
For the setting described above, additionally assume that the functions $\phi(\cdot)$ and $\psi(\cdot)$ are $L_\phi$- and $L_\psi$-Lipschitz, respectively.
Then, with probability at least $1-2\delta$ over the randomness of training samples $S_x=\{X_i\}_{i=1}^n$ and $S_z=\{Z_j\}_{j=1}^m$, we have
\begin{align}
    d^{(\ell)}_{\mathcal{F}_{nn}}(P_r,\hat{P}_{G_{\hat{\theta}^*}})-\inf_{\theta\in\Theta} d^{(\ell)}_{\mathcal{F}_{nn}}(P_r,P_{G_{\theta}}) \leq &\frac{L_\phi B_xU_\omega\sqrt{3k}}{\sqrt{n}}+\frac{L_\psi U_\omega U_\theta B_z\sqrt{3(k+l-1)}}{\sqrt{m}}\nonumber\\
    &\hspace{12pt}+U_\omega\sqrt{\log{\frac{1}{\delta}}}\left(\frac{L_\phi B_x}{\sqrt{2n}}+\frac{L_\psi B_zU_\theta}{\sqrt{2m}}\right), \label{eq:estimationboundrhs2}
\end{align}
where $U_\omega\coloneqq M_k\prod_{i=1}^{k-1}(M_iR_i)$ and $U_\theta\coloneqq N_l\prod_{j=1}^{l-1}(N_jS_j)$.

In particular, when this bound is specialized to the case of $\alpha$-GAN by letting $\phi(p)=\psi(1-p)=\frac{\alpha}{\alpha-1}\left(1-p^{\frac{\alpha-1}{\alpha}}\right)$, the resulting bound is nearly identical to the terms in the RHS of~\eqref{eq:estimationboundrhs2}, except for substitutions $L_\phi \leftarrow 4C_{Q_x}(\alpha)$ and $L_\psi \leftarrow 4C_{Q_z}(\alpha)$, where $Q_x\coloneqq U_\omega B_x$, $Q_z\coloneqq U_\omega U_\theta B_z$, and
\begin{align} \label{eq:clipalpha-single}
    C_h(\alpha)\coloneqq\begin{cases}\sigma(h)\sigma(-h)^{\frac{\alpha-1}{\alpha}}, \ &\alpha\in(0,1]\\
    \left(\frac{\alpha-1}{2\alpha-1}\right)^{\frac{\alpha-1}{\alpha}}\frac{\alpha}{2\alpha-1}, &\alpha\in(1,\infty).
    \end{cases}
\end{align}
\end{theorem}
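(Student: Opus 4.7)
The plan is to follow the standard Rademacher-complexity recipe, adapted here to the loss-inclusive neural net divergence. First I would decompose the estimation error in the usual way: writing $\theta^*\in\arg\inf_\theta d^{(\ell)}_{\mathcal{F}_{nn}}(P_r,P_{G_\theta})$, and noting that $\hat\theta^*$ minimizes the empirical divergence, one obtains
\begin{align*}
d^{(\ell)}_{\mathcal{F}_{nn}}(P_r,P_{G_{\hat\theta^*}})-\inf_{\theta\in\Theta} d^{(\ell)}_{\mathcal{F}_{nn}}(P_r,P_{G_\theta}) \le 2\sup_{\theta\in\Theta}\left\lvert d^{(\ell)}_{\mathcal{F}_{nn}}(P_r,P_{G_\theta})-d^{(\ell)}_{\mathcal{F}_{nn}}(\hat P_r,\hat P_{G_\theta})\right\rvert,
\end{align*}
and then splitting the inner deviation into a ``real'' piece involving $\phi\circ D_\omega$ sampled from $P_r$ vs.\ $\hat P_r$, and a ``generated'' piece involving $\psi\circ D_\omega\circ G_\theta$ sampled from $P_Z$ vs.\ $\hat P_Z$. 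The constants $\phi(1/2)$ and $\psi(1/2)$ cancel.

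Next I would control each supremum by the now-standard symmetrization plus McDiarmid argument: because $\phi,\psi$ are bounded (they inherit a bound on the range from Lipschitz continuity together with the bounded sigmoid inputs, but even without a uniform bound the bounded-difference constant can be extracted from $L_\phi,L_\psi$ and $B_x,B_z,U_\omega,U_\theta$), McDiarmid gives, with probability at least $1-\delta$ over each of $S_x$ and $S_z$, a deviation of order $\tfrac{L_\phi B_x U_\omega}{\sqrt{2n}}\sqrt{\log(1/\delta)}$ and $\tfrac{L_\psi B_z U_\omega U_\theta}{\sqrt{2m}}\sqrt{\log(1/\delta)}$, respectively, around the expected supremum. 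A union bound over the two samples yields the $1-2\delta$ confidence. The expected supremum is then bounded by twice the Rademacher complexity of the relevant function class. Applying Talagrand's contraction lemma peels off $\phi$ (with factor $L_\phi$) and $\psi$ (with factor $L_\psi$), reducing the problem to bounding $\mathfrak{R}_n(\{D_\omega\})$ and $\mathfrak{R}_m(\{D_\omega\circ G_\theta\})$.

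For these two Rademacher complexities I would invoke the positive-homogeneity-plus-Frobenius-bound machinery used in Golowich et al., Neyshabur et al., and in particular the bounds imported by Ji et al. \cite{JiZL21} for the exact architecture in \eqref{eqn:disc-model}. This yields $\mathfrak{R}_n(\{D_\omega\})\le B_x U_\omega\sqrt{3k}/\sqrt{n}$ for the $k$-layer discriminator class, and $\mathfrak{R}_m(\{D_\omega\circ G_\theta\})\le B_z U_\omega U_\theta\sqrt{3(k+l-1)}/\sqrt{m}$ for the composition with the $l$-layer generator, where the $\sqrt{3}$ and the product-of-Frobenius-norms structure come from the layer-by-layer peeling. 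Assembling the McDiarmid term and the two contracted Rademacher bounds gives exactly the right-hand side of \eqref{eq:estimationboundrhs2}.

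The $\alpha$-GAN specialization then reduces to replacing the global Lipschitz constants $L_\phi, L_\psi$ by the effective Lipschitz constants of $-\ell_\alpha(1,\sigma(\cdot))$ and $-\ell_\alpha(0,\sigma(\cdot))$ over the bounded pre-activation ranges $[-Q_x,Q_x]$ and $[-Q_z,Q_z]$, since the network outputs satisfy $|\mathbf w_k^\mathsf T r_{k-1}(\cdots)|\le Q_x$ on $\mathcal X$ (and analogously $Q_z$ for the composition with the generator). A direct computation gives $\tfrac{d}{dq}[-\ell_\alpha(1,\sigma(q))]=\sigma(q)^{(\alpha-1)/\alpha}(1-\sigma(q))$, which for $\alpha\le 1$ is monotone in $q$ and hence maximized at $q=-h$ (yielding $\sigma(h)\sigma(-h)^{(\alpha-1)/\alpha}$), while for $\alpha>1$ the interior maximum is attained at $\sigma(q)=(\alpha-1)/(2\alpha-1)$ (yielding the second branch of \eqref{eq:clipalpha-single}); by symmetry the same bound governs $-\ell_\alpha(0,\sigma(\cdot))$. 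The factor $4$ in front of $C_{Q_x}(\alpha)$ and $C_{Q_z}(\alpha)$ collects the factor $2$ coming from the estimation-error decomposition doubled by the standard Rademacher bound. The main obstacle is the $\alpha$-loss step: the derivative $\sigma(q)^{(\alpha-1)/\alpha}(1-\sigma(q))$ is unbounded on $\mathbb R$ for $\alpha\in(0,1)$, so the clipping to $[-Q_x,Q_x]$ (and analogously $[-Q_z,Q_z]$), which follows from the Frobenius-norm bounds on the network weights, is essential; everything else is routine.
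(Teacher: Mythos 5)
Your proposal follows the paper's proof essentially step for step: the same decomposition of the estimation error into a ``real'' deviation and a ``generated'' deviation, the same McDiarmid plus symmetrization argument to reach Rademacher complexities, the same Talagrand/Shalev-Shwartz contraction step, the same norm-based neural-network Rademacher bounds imported from Ji~\emph{et al.}, and (for the $\alpha$-GAN specialization) the same observation from Sypherd~\emph{et al.}\ that $-\ell_\alpha(1,\sigma(\cdot))$ is $C_h(\alpha)$-Lipschitz on $[-h,h]$.

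The one spot where your accounting is off is the contraction step for the general bound. You peel off $\phi$ alone (factor $L_\phi$) and then claim $\mathfrak{R}_n(\{D_\omega\})\le B_x U_\omega\sqrt{3k}/\sqrt{n}$, citing the Ji/Golowich machinery ``for the exact architecture in \eqref{eqn:disc-model}.'' But that machinery bounds the \emph{pre-sigmoid} class $\{f_\omega\}$; the discriminator in \eqref{eqn:disc-model} has an additional sigmoid, so $\mathfrak{R}_n(\{D_\omega\})\le \tfrac14\,B_xU_\omega\sqrt{3k}/\sqrt{n}$. If you omit that $1/4$, the assembled bound $4\cdot L_\phi\cdot\mathfrak{R}_n(\{D_\omega\})$ comes out a factor of four larger than \eqref{eq:estimationboundrhs2}. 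The paper avoids this by contracting $\phi\circ\sigma$ jointly with constant $L_\phi/4$ and then applying the Ji~\emph{et al.}\ bound to $\{f_\omega\}$. Your $\alpha$-GAN specialization already does this correctly (you differentiate $-\ell_\alpha(1,\sigma(q))$ directly, which bundles the sigmoid), and your McDiarmid constants are consistent with the joint treatment; it is only the intermediate Rademacher claim that drops the $1/4$.
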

\begin{proof}[Proof sketch]\let\qed\relax
Our proof involves the following steps:
\begin{itemize}[leftmargin=*]
    \item Building upon the proof techniques of Ji~\emph{et al.} \cite[Theorem~1]{JiZL21}, we bound the estimation error in terms of Rademacher complexities of \emph{compositional} function classes involving the CPE loss function. 
    \item We then upper bound these Rademacher complexities leveraging a contraction lemma for Lipschitz loss functions~\cite[Lemma~26.9]{shalev2014understanding}. We remark that this differs considerably from the way the bounds on Rademacher complexities in \cite[Corollary~1]{JiZL21} are obtained because of the explicit role of the loss function in our setting.  
    \item For the case of $\alpha$-GAN, we extend a result by Sypherd \emph{et al.}~\cite{sypherd2022journal} where they showed that $\alpha$-loss is Lipschitz for a logistic model with~\eqref{eq:clipalpha}. Noting that similar to the logistic model, we also have a sigmoid in the outer layer of the discriminator, we generalize the preceding observation by proving that $\alpha$-loss is Lipschitz when the input is equal to a sigmoid function acting on a \textit{neural network} model. This is the reason behind the dependence of the Lipschitz constant on the neural network model parameters (in terms of $Q_x$ and $Q_z$). Note that~\eqref{eq:clipalpha} is monotonically decreasing in $\alpha$, indicating the bound saturates. However, one is not able to make definitive statements regarding the estimation bounds for relative values of $\alpha$ because the LHS in~\eqref{eq:estimationboundrhs2} is \textit{also} a function of $\alpha$. Proof details are in
    Appendix~\ref{proofoftheorem3}.
\end{itemize}
\end{proof}

We now focus on developing lower bounds on the estimation error. Due to the fact that oft-used techniques to obtain min-max lower bounds on the quality of an estimator (e.g., LeCam's methods, Fano's methods, etc.) require a semi-metric distance measure, we restrict our attention to a particular $\alpha$-GAN, namely that for $\alpha=\infty$, to derive a matching lower bound on the estimation error. We consider the loss-inclusive neural net divergence in \eqref{eq:loss-nn-distance} with $\ell=\ell_\alpha$ for $\alpha=\infty$, which, for brevity, we henceforth denote as $d^{\ell_\infty}_{\mathcal{F}_{nn}}(\cdot,\cdot)$
As in \cite{JiZL21}, suppose the generator's class $\{G_\theta\}_{\theta \in \Theta}$ is rich enough such that the generator $G_\theta$ can learn the real distribution $P_r$ and that the number $m$ of training samples in $S_z$ scales faster than the number $n$ of samples in $S_x$\footnote{Since the noise distribution $P_Z$ is known, one can generate an arbitrarily large number $m$ of noise samples.}. Then $\inf_{\theta \in \Theta} d^{\ell_\infty}_{\mathcal{F}_{nn}}(P_r,P_{G_\theta}) = 0$, so the estimation error simplifies to the single term $d^{\ell_\infty}_{\mathcal{F}_{nn}}(P_r,P_{G_{\hat{\theta}^*}})$. Furthermore, the upper bound in \eqref{eq:estimationboundrhs2} reduces to $O(c/\sqrt{n})$ for some constant $c$ (note that, in \eqref{eq:clipalpha-single}, $C_h(\infty)=1/4$). In addition to the above assumptions, also assume the activation functions $r_i$ for $i \in [1:k-1]$ are either strictly increasing or ReLU. For the above setting, we derive a matching min-max lower bound (up to a constant multiple) on the estimation error.
\begin{theorem}
\label{thm:est-error-lower-bound-alpha-infinity}
For the setting above, let $\hat{P}_n$ be an estimator of $P_r$ learned using the training samples $S_x=\{X_i \}_{i=1}^n$. Then,
\[\inf_{\hat{P}_n} \sup_{P_r \in \mathcal{P}(\mathcal{X})} \, \mathbb P\left\{d^{\ell_\infty}_{\mathcal{F}_{nn}}(\hat{P}_n,P_r) \ge \frac{C(\mathcal{P}(\mathcal{X}))}{\sqrt{n}} \right\} > 0.24,\]
where the constant $C(\mathcal{P}(\mathcal{X}))$ is given by
\begin{align}
    C(\mathcal{P}(\mathcal{X})) = \frac{\log(2)}{20} \Big[ \sigma&(M_k r_{k-1}(\dots r_1(M_1 B_x))- \sigma(M_k r_{k-1}(\dots r_1(-M_1 B_x)) \Big].
    \label{eq:est-error-lower-bound-constant}
\end{align}
\end{theorem}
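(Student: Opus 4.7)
The plan is to invoke Le Cam's two-point method, the standard route for a minimax probability lower bound with a pseudo-metric loss. A preliminary step is to verify that $d^{\ell_\infty}_{\mathcal{F}_{nn}}$ is a pseudo-metric on $\mathcal{P}(\mathcal{X})$: with $\alpha=\infty$ we have $\phi(t)=t-1$ and $\psi(t)=-t$, so the divergence collapses to $\sup_{\omega\in\Omega}\mathbb{E}_P[D_\omega]-\mathbb{E}_Q[D_\omega]$. Symmetry follows because flipping the sign of the outer weight $\mathbf{w}_k$ combined with $\sigma(-z)=1-\sigma(z)$ keeps the network in the class while mapping $D_\omega \mapsto 1-D_\omega$, and the triangle inequality is immediate from linearity of expectation.

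Next I would set up the two hypotheses on two antipodal points. Let $x_+=B_x e_1$ and $x_-=-B_x e_1$, both in $\mathcal{X}$ since $\|x_\pm\|_2=B_x$. For a scalar $\eta\in(0,1/4)$ to be chosen, define
\begin{equation*}
P_0 = \left(\tfrac12+\eta\right)\delta_{x_+}+\left(\tfrac12-\eta\right)\delta_{x_-}, \qquad P_1 = \left(\tfrac12-\eta\right)\delta_{x_+}+\left(\tfrac12+\eta\right)\delta_{x_-}.
\end{equation*}
Then $d^{\ell_\infty}_{\mathcal{F}_{nn}}(P_0,P_1) = 2\eta\sup_{\omega\in\Omega}[D_\omega(x_+)-D_\omega(x_-)]$. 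I would lower bound this supremum by an explicit ``1-D tower'' construction: pick $\mathbf{W}_1$ with first row $(M_1,0,\ldots,0)$ and all other rows zero (so $\|\mathbf{W}_1\|_F=M_1$), and pick each subsequent $\mathbf{W}_i$ and $\mathbf{w}_k$ to route the signal through the first coordinate only, saturating the Frobenius budget at $M_i$ (resp.\ $M_k$). Positive homogeneity of each $r_i$ forces $r_i(0)=0$, so the signal stays on a single coordinate, producing pre-sigmoid outputs $M_k r_{k-1}(\cdots r_1(M_1 B_x))$ at $x_+$ and $M_k r_{k-1}(\cdots r_1(-M_1 B_x))$ at $x_-$. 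Consequently $d^{\ell_\infty}_{\mathcal{F}_{nn}}(P_0,P_1)\ge 2\eta\Gamma$, where $\Gamma$ is exactly the bracketed sigmoid difference in \eqref{eq:est-error-lower-bound-constant}.

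The last step is to tune $\eta$ so that $P_0^n$ and $P_1^n$ remain statistically indistinguishable. A direct computation gives $\mathrm{KL}(P_0\|P_1)=2\eta\log\tfrac{1+2\eta}{1-2\eta}\le 8\eta^2/(1-4\eta^2)$, which is $O(\eta^2)$ for small $\eta$. Setting $\eta=(\log 2)/(20\sqrt{n})$ keeps $n\,\mathrm{KL}(P_0\|P_1)$ small enough that the Bretagnolle--Huber inequality $\mathrm{TV}(P_0^n,P_1^n)\le 1-\tfrac12 e^{-n\,\mathrm{KL}(P_0\|P_1)}$ delivers $\tfrac12\bigl(1-\mathrm{TV}(P_0^n,P_1^n)\bigr)>0.24$. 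Le Cam's two-point inequality (with separation $\eta\Gamma$) then yields
\begin{equation*}
\inf_{\hat P_n}\max_{i\in\{0,1\}} P_i^n\!\left\{d^{\ell_\infty}_{\mathcal{F}_{nn}}(\hat P_n,P_i)\ge \eta\Gamma\right\} > 0.24,
\end{equation*}
and identifying $\eta\Gamma = (\log 2)\Gamma/(20\sqrt{n}) = C(\mathcal{P}(\mathcal{X}))/\sqrt{n}$ gives the theorem.

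The hardest part will be the architecture-aware calculation in Step~2: showing that the tower construction is admissible under the per-layer Frobenius-norm constraints while saturating the recursive quantity $M_k r_{k-1}(\cdots r_1(M_1 B_x))$, and handling the ReLU and strictly-increasing cases uniformly (here the positive-homogeneity hypothesis, which forces $r_i(0)=0$, is essential for keeping the signal on a single coordinate). A secondary technicality is that the constants $(\log 2)/20$ and $0.24$ are coupled nearly without slack through Bretagnolle--Huber, so one must verify the numerical balance between ``separation large'' and ``KL small'' carefully rather than hoping a loose Pinsker-type bound suffices.
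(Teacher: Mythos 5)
Your argument is correct and parallels the paper's own proof. The preliminary semi-metric verification you sketch is exactly what the paper records as Claims~1--4 in Appendix~K: non-negativity via the constant $\tfrac12$ discriminator, symmetry via $\mathbf{w}_k\mapsto-\mathbf{w}_k$ together with $\sigma(-z)=1-\sigma(z)$, and the triangle inequality from linearity of expectation after adding and subtracting a reference measure. For the minimax step the paper simply defers to \cite[Theorem~2]{JiZL21} via what it labels ``Fano's inequality \cite[Theorem~2.5]{tsybakov2008nonparametric},'' whereas you supply a self-contained two-point Le~Cam reduction with the Bretagnolle--Huber inequality. Both routes are built on the same two-point design at $\pm B_x e_1$ and the same single-coordinate ``tower'' discriminator saturating the per-layer Frobenius budgets, which is precisely what yields the bracketed sigmoid difference in $C(\mathcal{P}(\mathcal{X}))$ (and is what the paper means by ``the additional sigmoid satisfies the monotonicity assumption so that $C>0$''). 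The semi-metric property you verify is load-bearing in either reduction, since it licenses the step ``$d(P_0,P_1)\ge 2s$ implies one of $d(\hat P,P_i)\ge s$ must hold.'' Your numerics also close with room to spare: your bound $\mathrm{KL}(P_0\|P_1)\le 8\eta^2/(1-4\eta^2)$ is valid (a termwise comparison of the series for $\log\tfrac{1+2\eta}{1-2\eta}$ against $\tfrac{4\eta}{1-4\eta^2}$), and with $\eta=(\log 2)/(20\sqrt n)\le (\log 2)/20<\tfrac14$ one gets $n\,\mathrm{KL}(P_0\|P_1)<0.011$, so $\tfrac14 e^{-n\,\mathrm{KL}}>0.247>0.24$ for every $n\ge1$. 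In short, your Le~Cam/Bretagnolle--Huber argument is a correct, slightly more elementary, explicit filling-in of the step the paper imports as a citation; there is no substantive divergence.
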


\begin{proof}[Proof sketch]\let\qed\relax
    To obtain min-max  lower bounds, we first prove that $d^{\ell_\infty}_{\mathcal{F}_{nn}}$ is a semi-metric. The remainder of the proof is similar to that of \cite[Theorem 2]{JiZL21}, replacing $d_{\mathcal{F}_{nn}}$ with $d^{\ell_\infty}_{\mathcal{F}_{nn}}$. Finally, we note that the additional sigmoid activation function after the last layer in D satisfies the monotonicity assumption as detailed in Appendix \ref{appendix:est-error-lower-bound-alpha-infinity}. A challenge that remains to be addressed is to verify if $d^{\ell_\alpha}_{\mathcal{F}_{nn}}$ is a semi-metric for $\alpha<\infty$.  
\end{proof}

\section{Dual-objective GANs}
\label{sec:dual-objective}


As illustrated in Fig. \ref{fig:non-overlapping_gaussians}, tuning $\alpha<1$ provides more gradient for the generator to learn early in training when the discriminator more confidently classifies the generated data as fake, alleviating vanishing gradients, and also creates a smooth landscape for the generated data to descend towards the real data, alleviating exploding gradients. However, tuning $\alpha < 1$  may provide too large of gradients for the generator when the generated samples approach the real samples, which can result in too much movement of the generated data, potentially repelling it from the real data. The following question therefore arises: Can we combine a less confident discriminator with a more stable generator loss? We show that we can do so by using different objectives for the discriminator and generator, resulting in $(\alpha_D,\alpha_G)$-GANs.

\subsection{$(\alpha_D,\alpha_G)$-GANs}

We propose a dual-objective $(\alpha_D,\alpha_G)$-GAN with different objective functions for the generator and discriminator in which the discriminator maximizes $V_{\alpha_D}(\theta,\omega)$ while the generator minimizes $V_{\alpha_G}(\theta,\omega)$, where
\medmuskip=0mu
\begin{align}
    &V_{\alpha}(\theta,\omega)=\mathbb{E}_{X\sim P_r}[-\ell_{\alpha}(1,D_\omega(X))]+\mathbb{E}_{X\sim P_{G_\theta}}[-\ell_{\alpha}(0,D_\omega(X))]\label{eqn:sat-gen-objective},
\end{align}
for $\alpha=\alpha_D,\alpha_G \in (0,\infty]$.
We recover the $\alpha$-GAN \cite{KurriSS21,kurri-2022-convergence} value function when $\alpha_D=\alpha_G=\alpha$. The resulting $(\alpha_D,\alpha_G)$-GAN is given by
\begin{subequations}
\begin{align} 
&\sup_{\omega\in\Omega}V_{\alpha_D}(\theta,\omega) \label{eqn:disc_obj} \\
& \inf_{\theta\in\Theta} V_{\alpha_G}(\theta,\omega)
\label{eqn:gen_obj}.
\end{align}
\label{eqn:alpha_D,alpha_G-GAN}
\end{subequations}

{We maintain the same ordering as the original min-max GAN formulation for this non-zero sum game, wherein for a set of chosen parameters for both players, the discriminator plays first, followed by the generator.} The following theorem presents the conditions under which the optimal generator learns the real distribution $P_r$ when the discriminator set $\Omega$ is large enough.

\begin{theorem}\label{thm:alpha_D,alpha_G-GAN-saturating}
For the game in \eqref{eqn:alpha_D,alpha_G-GAN} with $(\alpha_D,\alpha_G)\in (0,\infty]^2$, given a generator $G_\theta$, the discriminator optimizing \eqref{eqn:disc_obj} is 
\begin{align}
    D_{\omega^*}(x)=\frac{p_r(x)^{\alpha_D}}{p_r(x)^{\alpha_D}+p_{G_\theta}(x)^{\alpha_D}}, \quad x \in \mathcal{X}.
    \label{eqn:optimaldisc-gen-alpha-GAN}
\end{align}
For this $D_{\omega^*}$ and the function $f_{\alpha_D,\alpha_G}:\mathbb R_+ \to \mathbb R$ defined as
\begin{align}\label{eqn:f-alpha_d,alpha_g}
f_{\alpha_D,\alpha_G}(u)=\frac{\alpha_G}{\alpha_G-1}\left(\frac{u^{\alpha_D\left(1-\frac{1}{\alpha_G}\right)+1}+1}{(u^{\alpha_D}+1)^{1-\frac{1}{\alpha_G}}}-2^{\frac{1}{\alpha_G}}\right),
\end{align}
\eqref{eqn:gen_obj} simplifies to minimizing a non-negative symmetric $f_{\alpha_D,\alpha_G}$-divergence $D_{f_{\alpha_D,\alpha_G}}(\cdot||\cdot)$ as
\begin{align}\label{eqn:gen-alpha_d,alpha_g-obj}
    \inf_{\theta\in\Theta} D_{f_{\alpha_D,\alpha_G}}(P_r||P_{G_\theta})+\frac{\alpha_G}{\alpha_G-1}\left(2^{\frac{1}{\alpha_G}}-2\right),
\end{align}
which is minimized iff $P_{G_\theta}=P_r$ for $(\alpha_D,\alpha_G)$ such that  $\Big (\alpha_D \le 1,\;\alpha_G > \frac{\alpha_D}{\alpha_D+1} \Big ) \; \text{ or } \; \Big ( \alpha_D > 1,\;\frac{\alpha_D}{2}< \alpha_G \le \alpha_D \Big )$. 
\end{theorem}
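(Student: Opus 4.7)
The plan is to build on Theorem~\ref{thm:alpha-GAN} and proceed in three stages: first derive the optimal discriminator by exploiting the fact that~\eqref{eqn:disc_obj} is exactly the single-objective $\alpha$-GAN optimization with $\alpha=\alpha_D$, then substitute this optimizer into~\eqref{eqn:gen_obj} and identify the resulting $f$-divergence, and finally establish the parameter conditions under which this divergence is uniquely minimized at $P_{G_\theta}=P_r$.

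For the first stage, $V_{\alpha_D}(\theta,\omega)$ in~\eqref{eqn:sat-gen-objective} is identical in form to the single-objective $\alpha$-GAN value function~\eqref{eqn:alphaGANobjective} with $\alpha$ replaced by $\alpha_D$, so Theorem~\ref{thm:alpha-GAN} applies verbatim to the inner supremum and yields the stated $D_{\omega^*}$. For the second stage, I would substitute $D_{\omega^*}$ into $V_{\alpha_G}(\theta,\omega^*)$ and change variables via $u(x)=p_r(x)/p_{G_\theta}(x)$. The common denominator $(p_r^{\alpha_D}+p_{G_\theta}^{\alpha_D})^{1-1/\alpha_G}$ collapses both expectations into a single integral $\int p_{G_\theta}\,g(u)\,dx$ with $g(u)=[u^{\alpha_D(1-1/\alpha_G)+1}+1]/(u^{\alpha_D}+1)^{1-1/\alpha_G}$; since $g(1)=2^{1/\alpha_G}$, subtracting and re-adding this constant gives $V_{\alpha_G}(\theta,\omega^*)=D_{f_{\alpha_D,\alpha_G}}(P_r\|P_{G_\theta})+\tfrac{\alpha_G}{\alpha_G-1}(2^{1/\alpha_G}-2)$, matching~\eqref{eqn:gen-alpha_d,alpha_g-obj}. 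Symmetry of the divergence follows from the easily checked identity $u\,g(1/u)=g(u)$, which makes $u f_{\alpha_D,\alpha_G}(1/u)-f_{\alpha_D,\alpha_G}(u)$ affine in $u$ and therefore invisible under integration against $p_{G_\theta}$.

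For the third stage, $f_{\alpha_D,\alpha_G}(1)=0$ is immediate, and non-negativity together with the iff claim reduces to showing $f_{\alpha_D,\alpha_G}$ is convex on $(0,\infty)$ and strictly convex at $u=1$. A direct differentiation via the product rule shows $f''_{\alpha_D,\alpha_G}(1)$ simplifies (up to positive factors) to $\alpha_D(2\alpha_G-\alpha_D)/\alpha_G$, yielding the necessary condition $\alpha_G>\alpha_D/2$ for strict convexity at $1$. The additional conditions $\alpha_G>\alpha_D/(\alpha_D+1)$ when $\alpha_D\le 1$ and $\alpha_G\le\alpha_D$ when $\alpha_D>1$ then emerge from requiring $f''_{\alpha_D,\alpha_G}(u)\ge 0$ globally on $(0,\infty)$, and together they describe precisely the parameter region in which $f_{\alpha_D,\alpha_G}$ is a legitimate $f$-divergence generator that is strictly convex at $1$; non-negativity of the divergence then follows from Jensen's inequality and uniqueness of the minimizer from strict convexity at $1$.

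The main obstacle is the global convexity analysis just described: $f''_{\alpha_D,\alpha_G}(u)$ is a multivariate rational expression whose sign on the full positive half-line must be resolved for each parameter regime, and the cleanest route is to seek a factorization in which the boundary curves $\alpha_G=\alpha_D/(\alpha_D+1)$, $\alpha_G=\alpha_D/2$, and $\alpha_G=\alpha_D$ appear naturally as zero loci of distinct elementary factors. The limiting cases $\alpha_G\in\{1,\infty\}$ and $\alpha_D=\infty$ require separate handling via the continuous extensions of $f_{\alpha_D,\alpha_G}$, with a direct check that these limits preserve the convexity properties on the closure of the stated region.
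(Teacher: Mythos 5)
Stages one and two of your plan match the paper's proof in Appendix L exactly, and your symmetry argument is in fact cleaner than the paper's: the identity $u\,g(1/u)=g(u)$ does hold, from which $u\,f_{\alpha_D,\alpha_G}(1/u)-f_{\alpha_D,\alpha_G}(u)=\tfrac{\alpha_G 2^{1/\alpha_G}}{\alpha_G-1}(1-u)$ is affine and vanishes at $u=1$, so its contribution to the divergence integrates to zero under $p_{G_\theta}$; the paper instead verifies $D_{f_{\alpha_D,\alpha_G}}(Q\|P)=D_{f_{\alpha_D,\alpha_G}}(P\|Q)$ by a direct rewriting of the integrand. Your observation that $f''_{\alpha_D,\alpha_G}(1)$ is a positive multiple of $\alpha_D(2\alpha_G-\alpha_D)/\alpha_G$ is also correct and matches the paper's formula for the second derivative evaluated at $u=1$.

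The gap is in stage three, which carries the real weight of the theorem. A local condition at $u=1$ only gives necessity; the claim that $D_{f_{\alpha_D,\alpha_G}}$ is a nonnegative $f$-divergence uniquely minimized at $P_r=P_{G_\theta}$ requires $f_{\alpha_D,\alpha_G}$ to be convex on all of $(0,\infty)$, and you leave this global positivity of $f''$ to a speculative ``factorization in which the boundary curves appear as zero loci of distinct elementary factors.'' That is not what makes the argument work, and no such clean factorization appears to exist. The paper's route is different: it writes $f''_{\alpha_D,\alpha_G}(u)=A_{\alpha_D,\alpha_G}(u)\big[(\alpha_G+\alpha_D\alpha_G-\alpha_D)(u+u^{\alpha_D+\alpha_D/\alpha_G})+\alpha_G(1-\alpha_D)(u^{\alpha_D/\alpha_G}+u^{\alpha_D+1})\big]$ with $A>0$, observes that for $\alpha_D\le 1$ both bracketed coefficients are nonnegative exactly when $\alpha_G(\alpha_D+1)>\alpha_D$, and for $\alpha_D>1$ rewrites positivity as $\alpha_G+\alpha_D\alpha_G-\alpha_D>\alpha_G(\alpha_D-1)B_{\alpha_D,\alpha_G}(u)$ for an explicit ratio $B_{\alpha_D,\alpha_G}$, which must then be shown by a separate monotonicity analysis of $B'_{\alpha_D,\alpha_G}$ to have supremum $B_{\alpha_D,\alpha_G}(1)=1$ when $\alpha_G\le\alpha_D$ and to be unbounded when $\alpha_G>\alpha_D$. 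That uniform bound on an auxiliary function, not a factorization, is where the conditions $\alpha_G\le\alpha_D$ and $\alpha_G>\alpha_D/2$ emerge for $\alpha_D>1$; until you supply that argument (or a genuine substitute), the stated parameter region is unjustified.
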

\begin{proof}[Proof sketch]\let\qed\relax
    We substitute the optimal discriminator of \eqref{eqn:disc_obj} into the objective function of \eqref{eqn:gen_obj} and write the resulting expression in the form
    \begin{align}
    \int_\mathcal{X} p_{G_\theta}(x)f_{\alpha_D,\alpha_G}\left(\frac{p_r(x)}{p_{G_\theta}(x)}\right) dx + \frac{\alpha_G}{\alpha_G-1}\left(2^{\frac{1}{\alpha_G}}-2\right).
    \label{eq:gen-obj-with-opt-disc}
    \end{align}
    We then find the conditions on $\alpha_D$ and $\alpha_G$ for $f_{\alpha_D,\alpha_G}$ to be strictly convex so that the first term in \eqref{eq:gen-obj-with-opt-disc} is an $f$-divergence. Figure \ref{fig:convexity_regions}(a)  in Appendix \ref{appendix:alpha_D,alpha_G-GAN-saturating} illustrates the feasible $(\alpha_D,\alpha_G)$-region. A detailed proof can be found in Appendix \ref{appendix:alpha_D,alpha_G-GAN-saturating}. See Fig. \ref{fig:(alpha_d,alpha_g)-GAN-sat-gen-loss} for a toy example illustrating the value of tuning $\alpha_D<1$ and $\alpha_G\ge1$.
\end{proof}


\begin{figure}[t]
\centering
\footnotesize
\setlength{\tabcolsep}{10pt}
\begin{tabular}{@{}ccc@{}}
  \includegraphics[page=3,width=0.3\linewidth]{./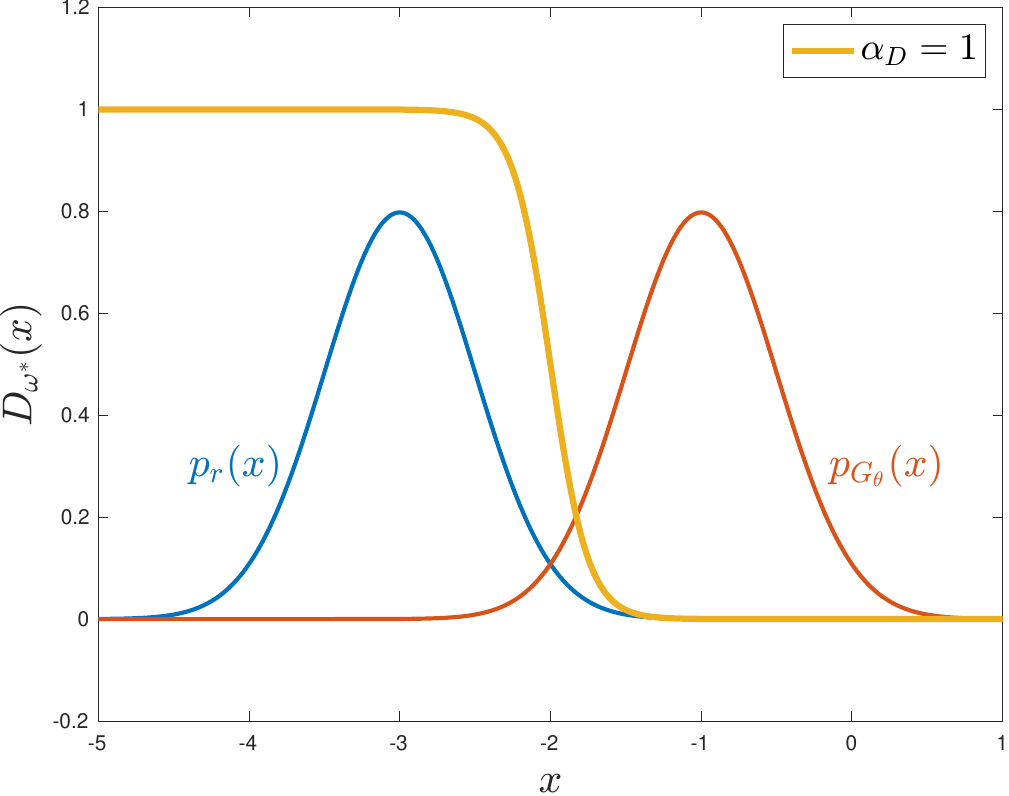}  & {\includegraphics[page=5,width=0.3\linewidth]{./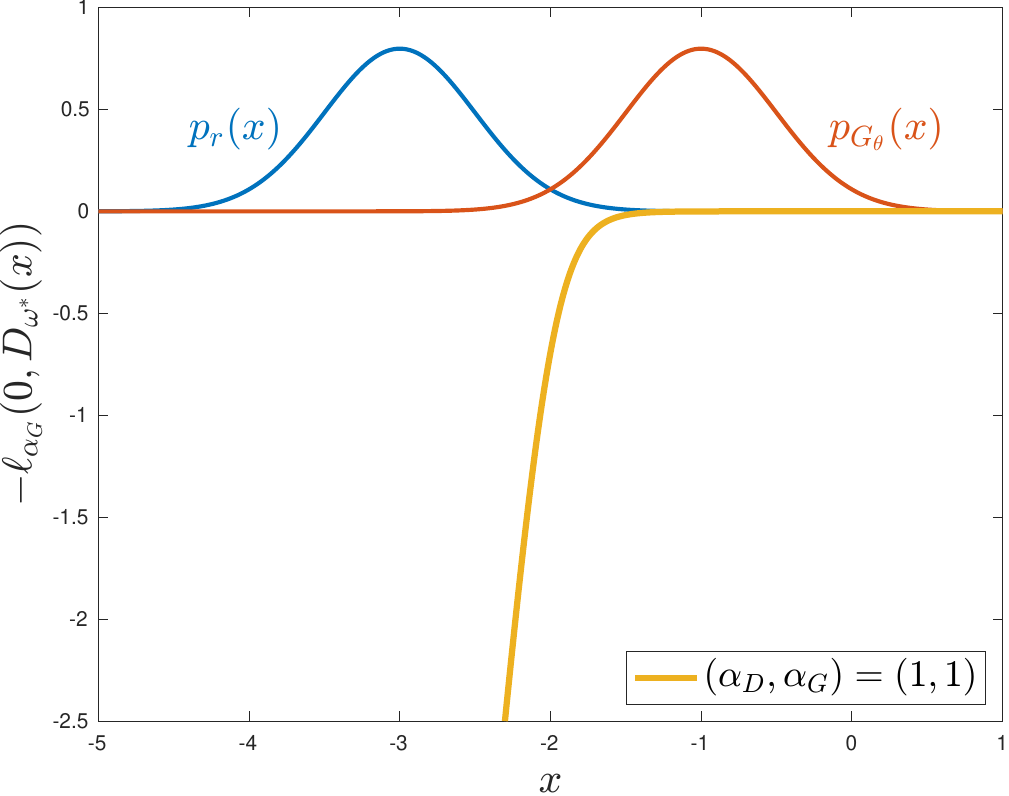}} 
  & \includegraphics[page=5,width=0.3\linewidth]{./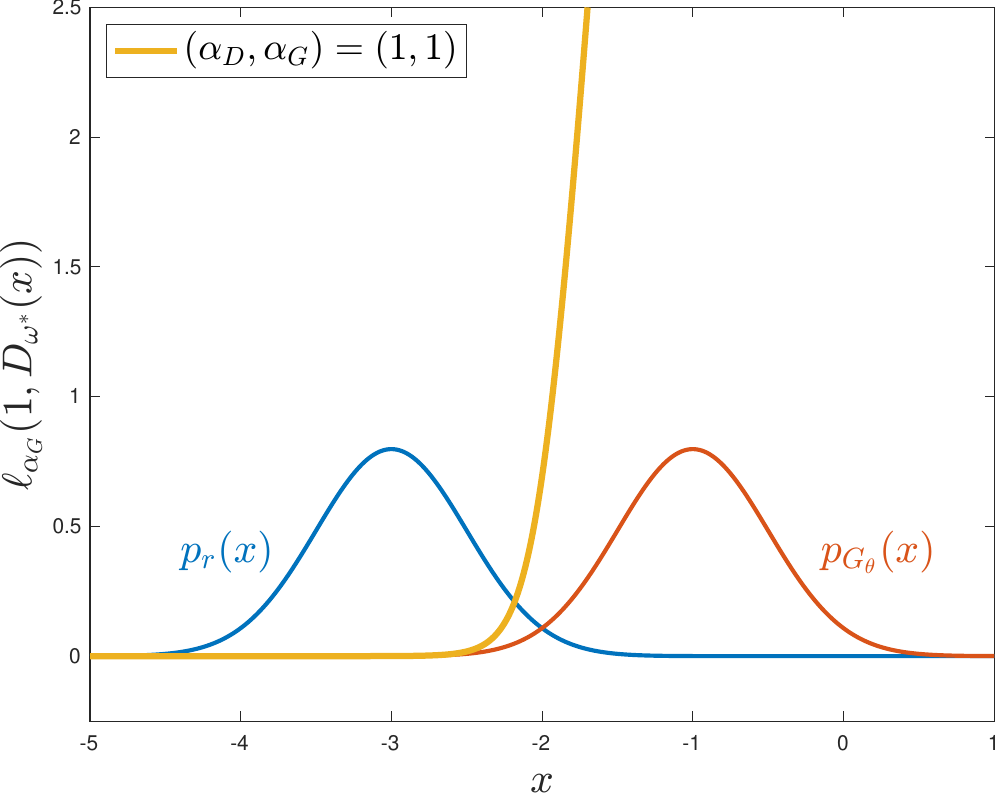}\\
   (a) & (b) & (c)
\end{tabular}
\caption{(a) A plot of the optimal discriminator output $D_{\omega^*}(x)$ in \eqref{eqn:optimaldisc-gen-alpha-GAN} for several values of $\alpha_D \le 1$ for the same toy example as in Figure \ref{fig:non-overlapping_gaussians}. Tuning $\alpha_D < 1$ reduces the confidence of the optimal discriminator $D_{\omega^*}$. (b) A plot of the generator's loss $-\ell_{\alpha_G}(0,D_{\omega^*}(x))$ for several values of $(\alpha_D\le1,\alpha_G\ge1)$. Tuning $\alpha_D<1$ and $\alpha_G=1$ provides larger gradients for the generated data far from the real data, thereby alleviating vanishing gradients, and also provides smaller gradients for generated data close to the real data, helping to combat exploding gradients. Tuning $\alpha_G \ge 1$ yields a quasiconcave objective, further reducing the magnitude of the gradients for generated data approaching the real data. (c) A plot of the generator's NS loss $\ell_{\alpha_G}(1,D_{\omega^*}(x))$ for several values of $(\alpha_D\le1,\alpha_G\ge1)$. Tuning $\alpha_D<1$ and $\alpha_G = 1$ reduces the magnitude of the gradients for generated data far from the real data, which can help stabilize training by decreasing sensitivity to hyperparameter initialization and alleviating model oscillation; tuning $\alpha_G > 1$ yields a quasiconvex generator objective, which can potentially further improve training stability.}
\label{fig:(alpha_d,alpha_g)-GAN-sat-gen-loss}
\end{figure}



Noting that $\alpha$-GAN recovers various well-known GANs, including the vanilla GAN, which is prone to saturation, the $(\alpha_D,\alpha_G)$-GAN formulation using the generator objective function in \eqref{eqn:sat-gen-objective} can similarly saturate early in training, potentially causing vanishing gradients. We propose the following NS alternative to the generator's objective in \eqref{eqn:sat-gen-objective}:
\begin{align}
    V^\text{NS}_{\alpha_G}(\theta,\omega) &= \mathbb{E}_{X\sim P_{G_\theta}}[\ell_{\alpha_G}(1,D_\omega(X))],
    \label{eqn:nonsat-gen-objective}
\end{align}
thereby replacing \eqref{eqn:gen_obj} with
\begin{align}
    \inf_{\theta\in\Theta} V^\text{NS}_{\alpha_G}(\theta,\omega).
\label{eqn:gen_obj_ns}
\end{align}

Comparing \eqref{eqn:gen_obj} and \eqref{eqn:gen_obj_ns}, note that the additional expectation term over $P_r$ in \eqref{eqn:sat-gen-objective} results in \eqref{eqn:gen_obj} simplifying to a symmetric divergence for $D_{\omega^*}$ in \eqref{eqn:optimaldisc-gen-alpha-GAN}, whereas the single term in \eqref{eqn:nonsat-gen-objective} will result in \eqref{eqn:gen_obj_ns} simplifying to an asymmetric divergence.
The optimal discriminator for this NS game remains the same as in \eqref{eqn:optimaldisc-gen-alpha-GAN}. The following theorem provides the solution to \eqref{eqn:gen_obj_ns} under the assumption that the optimal discriminator can be attained.

\begin{theorem}\label{thm:alpha_D,alpha_G-GAN-nonsaturating}
For the same $D_{\omega^*}$ in \eqref{eqn:optimaldisc-gen-alpha-GAN} and the function $f_{\alpha_D,\alpha_G}^\text{NS}:\mathbb R_+ \to \mathbb R$ defined as
\begin{align}\label{eqn:f-alpha_d,alpha_g-ns}
f^\text{NS}_{\alpha_D,\alpha_G}(u)=\frac{\alpha_G}{\alpha_G-1}\left(2^{\frac{1}{\alpha_G}-1}-\frac{u^{\alpha_D\left(1-\frac{1}{\alpha_G}\right)}}{(u^{\alpha_D}+1)^{1-\frac{1}{\alpha_G}}}\right),
\end{align}
\eqref{eqn:gen_obj} simplifies to minimizing a non-negative asymmetric $f^\text{NS}_{\alpha_D,\alpha_G}$-divergence $D_{f^{\text{NS}}_{\alpha_D,\alpha_G}}(\cdot||\cdot)$ as
\begin{align}\label{eqn:gen-alpha_d,alpha_g-obj-ns}
    \inf_{\theta\in\Theta} D_{f^\text{NS}_{\alpha_D,\alpha_G}}(P_r||P_{G_\theta})+\frac{\alpha_G}{\alpha_G-1}\left(1-2^{\frac{1}{\alpha_G}-1}\right),
\end{align}
which is minimized iff $P_{G_\theta}=P_r$ for $(\alpha_D,\alpha_G) \in (0,\infty]^2$ such that $\alpha_D + \alpha_G > \alpha_G\alpha_D.$
\end{theorem}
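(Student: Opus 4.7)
The plan is to follow the same strategy as the proof of Theorem~\ref{thm:alpha_D,alpha_G-GAN-saturating}: substitute the optimal discriminator, rewrite the resulting integrand as $f^{\text{NS}}_{\alpha_D,\alpha_G}(p_r/p_{G_\theta})$ plus an additive constant, and then isolate the $(\alpha_D,\alpha_G)$-region on which $f^{\text{NS}}_{\alpha_D,\alpha_G}$ is strictly convex. Because \eqref{eqn:gen_obj_ns} modifies only the generator's term, the maximizer of $V_{\alpha_D}(\theta,\cdot)$ is unchanged and still given by~\eqref{eqn:optimaldisc-gen-alpha-GAN}. Substituting this $D_{\omega^*}$ into $V^{\text{NS}}_{\alpha_G}(\theta,\omega^*)=\int p_{G_\theta}(x)\ell_{\alpha_G}(1,D_{\omega^*}(x))\,dx$, using $\ell_{\alpha_G}(1,\hat y)=\tfrac{\alpha_G}{\alpha_G-1}(1-\hat y^{1-1/\alpha_G})$, and changing variables via $u(x)=p_r(x)/p_{G_\theta}(x)$, I would obtain
\[
D_{\omega^*}(x)^{1-1/\alpha_G}=\frac{u(x)^{\alpha_D(1-1/\alpha_G)}}{(u(x)^{\alpha_D}+1)^{1-1/\alpha_G}},
\]
matching precisely the ratio inside $f^{\text{NS}}_{\alpha_D,\alpha_G}$ defined in~\eqref{eqn:f-alpha_d,alpha_g-ns}.

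Adding and subtracting $2^{1/\alpha_G-1}$ inside the integrand then expresses $V^{\text{NS}}_{\alpha_G}(\theta,\omega^*)$ as
\[
\int p_{G_\theta}(x)\,f^{\text{NS}}_{\alpha_D,\alpha_G}\!\left(\tfrac{p_r(x)}{p_{G_\theta}(x)}\right)dx+\frac{\alpha_G}{\alpha_G-1}\left(1-2^{\frac{1}{\alpha_G}-1}\right),
\]
which is exactly \eqref{eqn:gen-alpha_d,alpha_g-obj-ns}. A direct substitution confirms $f^{\text{NS}}_{\alpha_D,\alpha_G}(1)=0$, the normalization needed for the integral term to be a genuine $f$-divergence. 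Non-negativity of $D_{f^{\text{NS}}_{\alpha_D,\alpha_G}}(P_r\|P_{G_\theta})$ and the ``iff'' uniqueness of $P_r=P_{G_\theta}$ as the minimizer then reduce to verifying that $f^{\text{NS}}_{\alpha_D,\alpha_G}$ is strictly convex on $(0,\infty)$.

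For the convexity step, I would set $t(u)=u^{\alpha_D}/(u^{\alpha_D}+1)$ and $\gamma=(\alpha_G-1)/\alpha_G$, so that $f^{\text{NS}}_{\alpha_D,\alpha_G}(u)=\gamma^{-1}(2^{-\gamma}-t(u)^\gamma)$. Computing $t'$ and $t''$ explicitly and collecting terms yields
\[
\big(f^{\text{NS}}_{\alpha_D,\alpha_G}\big)''(u)=-\frac{\alpha_D\,u^{2\alpha_D-2}\,t(u)^{\gamma-2}}{(u^{\alpha_D}+1)^{4}}\Big[\gamma\alpha_D-1-(\alpha_D+1)u^{\alpha_D}\Big].
\]
The prefactor is strictly positive on $(0,\infty)$, so strict convexity reduces to the bracket being negative for every $u>0$. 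Since $-(\alpha_D+1)u^{\alpha_D}<0$ already, this holds iff $\gamma\alpha_D\le 1$; rearranging and clearing denominators produces exactly $\alpha_D+\alpha_G\ge\alpha_D\alpha_G$, equivalently $1/\alpha_D+1/\alpha_G\ge 1$. The strict inequality stated in the theorem both ensures the bracket is bounded away from zero uniformly and cleanly accommodates the endpoint $\alpha_G=\infty$ (where $\gamma=1$ and the condition collapses to $\alpha_D<1$).

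The main obstacle I anticipate is sign bookkeeping around $\alpha_G=1$: the prefactor $\alpha_G/(\alpha_G-1)=1/\gamma$ flips sign there, and $t(u)^\gamma$ is concave in $u$ for $\alpha_G>1$ but convex for $\alpha_G<1$, so one must verify that these two sign flips cancel to yield a single uniform convexity criterion. The boundary cases $\alpha_G\in\{1,\infty\}$, where $f^{\text{NS}}_{\alpha_D,\alpha_G}$ is defined by continuous extension, will require a short limiting argument (a L'Hôpital computation at $\alpha_G=1$, direct substitution at $\alpha_G=\infty$); the remaining algebra is routine.
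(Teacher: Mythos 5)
Your proposal is correct and follows essentially the same route as the paper's proof in Appendix M: substitute the optimal discriminator, factor the resulting integrand into $p_{G_\theta}(x)\,f^{\text{NS}}_{\alpha_D,\alpha_G}(p_r(x)/p_{G_\theta}(x))$ plus the additive constant $\frac{\alpha_G}{\alpha_G-1}\bigl(1-2^{1/\alpha_G-1}\bigr)$, and then determine strict convexity of $f^{\text{NS}}_{\alpha_D,\alpha_G}$ from the sign of its second derivative. Your $(t,\gamma)$ reparametrization is algebraically identical to the paper's direct expansion: writing
\[
\bigl(f^{\text{NS}}_{\alpha_D,\alpha_G}\bigr)''(u)=-\alpha_D\,u^{\alpha_D\gamma-2}(1+u^{\alpha_D})^{-\gamma-2}\bigl[\gamma\alpha_D-1-(\alpha_D+1)u^{\alpha_D}\bigr]
\]
and multiplying through by $-\alpha_G$ recovers exactly the paper's $A_{\alpha_D,\alpha_G}(u)\bigl[(\alpha_G-\alpha_D\alpha_G+\alpha_D)+\alpha_G(1+\alpha_D)u^{\alpha_D}\bigr]$, so the sign-bookkeeping concern you flag resolves cleanly because the $\gamma^{-1}$ from the loss and the $\gamma$ from differentiating $t^\gamma$ cancel; there is no actual $\alpha_G=1$ discontinuity to patch. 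One small slip: you write that ``the prefactor is strictly positive,'' but in your displayed formula the prefactor (including the leading minus) is strictly \emph{negative}; your conclusion that the bracket must be negative is still correct, it just follows because the overall sign is already negative. Finally, your observation that the computation actually yields the closed condition $\alpha_D+\alpha_G\ge\alpha_D\alpha_G$ rather than the strict inequality stated in the theorem is accurate — the paper's own inequality chain also only establishes the non-strict version (taking $u\to 0^+$ forces $\ge$, not $>$) — so the strict inequality in the theorem is a slightly conservative restatement rather than something your argument fails to deliver.
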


The proof mimics that of Theorem \ref{thm:alpha_D,alpha_G-GAN-saturating} and is detailed in Appendix \ref{appendix:alpha_D,alpha_G-GAN-nonsaturating}.  Figure \ref{fig:convexity_regions}(b) in Appendix \ref{appendix:alpha_D,alpha_G-GAN-nonsaturating} illustrates the feasible $(\alpha_D,\alpha_G)$-region; in contrast to the saturating setting of Theorem \ref{thm:alpha_D,alpha_G-GAN-saturating}, the NS setting constrains $\alpha\le 2$ when  $\alpha_D=\alpha_G=\alpha$. See Figure \ref{fig:(alpha_d,alpha_g)-GAN-sat-gen-loss}(c) for a toy example illustrating how tuning $\alpha_D<1$ and $\alpha_G\ge1$ can also alleviate training instabilities in the NS setting. 


We note that the input to the discriminator is a random variable $X$ which can be viewed as being sampled from a mixture distribution, i.e., $X\sim \delta P_r + (1-\delta)P_{G_\theta}$ where $\delta\in (0,1)$. Without loss of generality, we assume $\delta=1/2$ but the analysis that follows can be generalized for arbitrary $\delta$. We use the Bernoulli random variable $Y \in \{0,1\}$ to indicate that $X=x$ is from the real 
($Y=1$) or generated ($Y=0$) distributions. Therefore, the marginal probabilities of the two classes are $P_Y(1)=1-P_Y(0)=\delta=1/2$.
Thus, one can then compute the true posterior $P_{Y|X}(1|x)$ and its tilted version $P^{(\alpha_D)}_{Y|X}(1|x)$  as follows: 
\begin{equation}  
    \label{eq:true-posterior}
    P_{Y|X}(1|x)=\frac{ p_r(x)}{ p_r(x)+ p_{G_\theta}(x)} \quad \text{ and } \quad  P^{(\alpha_D)}_{Y|X}(1|x) = \frac{p_r(x)^{\alpha_D}}{ p_r(x)^{\alpha_D}+p_{G_\theta}(x)^{\alpha_D}},
\end{equation}
where both expressions simplify to the optimal discriminator of the vanilla GAN in \eqref{eq:opt-disc-vanilla} for $\alpha_D=1$. 

We now present a theorem to quantify precisely the effect of tuning $\alpha_D$ and $\alpha_G$. To this end, we begin by first taking a closer look at the gradients induced by the generator's loss during training. To simplify our analysis, we assume that at every step of training, the discriminator can achieve its optimum, $D_{\omega^*}$\footnote{We note that a related gradient analysis was considered by Shannon~\cite[Section~3.1]{shannon2020properties} for $f$-GANs assuming an optimal discriminator.}. For any sample $x=G_\theta(z)$ generated by G, we can write the gradient of the generator's loss for an $(\alpha_D,\alpha_G)$-GAN \textit{w.r.t.} its weight vector $\theta$ as
\begin{equation}
    -\frac{\partial \ell_{\alpha_{G}}\left(0, D_{\omega^{*}}(x)\right)}{\partial \theta} =-\frac{\partial \ell_{\alpha_{G}}\left(0, D_{\omega^{*}}(x)\right)}{\partial x}\times\frac{\partial x}{\partial \theta}  =  - \frac{\partial \ell_{\alpha_{G}}\left(0, D_{\omega^{*}}(x)\right)}{\partial D_{\omega^{*}}(x)} \times \frac{\partial D_{\omega^{*}}(x)}{\partial x} \times \frac{\partial x}{\partial \theta}.
    \label{eq:sat_grad}
\end{equation} 
We note that while we cannot explicitly analyze the term $\frac{\partial x}{\partial \theta}$ in \eqref{eq:sat_grad}, we assume that by using models satisfying boundedness and Lipschitz assumptions\footnote{These assumptions match practical settings.}, this term will not be unbounded. We thus focus on the first two terms on the right side of  \eqref{eq:sat_grad} for any $\alpha_G$.  For $\alpha_D=1$, from \eqref{eqn:optimaldisc-gen-alpha-GAN}, we see that in regions densely populated by the generated but not the real data, $D_{\omega^*}(x)\rightarrow 0$. Further, the first term in \eqref{eqn:optimaldisc-gen-alpha-GAN} is bounded thus causing the gradient in \eqref{eq:sat_grad} to vanish. On the other hand, when $\alpha_D<1$, $D_{\omega^*}$ increases (resp. decreases) in areas denser in generated (resp. real) data, thereby providing more gradients for G. This is clearly illustrated in Fig. \ref{fig:(alpha_d,alpha_g)-GAN-sat-gen-loss}(a) and \ref{fig:(alpha_d,alpha_g)-GAN-sat-gen-loss}(b) and reveals how strongly dependent the optimization trajectory traversed by G during training is on the practitioner’s choice of $(\alpha_{D}, \alpha_{G}) \in (0,\infty]^{2}$. In fact, this holds irrespective of the saturating or the NS $(\alpha_{D}, \alpha_{G})$-GAN. In the following theorem, we offer deeper insights into how such an optimization trajectory is influenced by tuning $\alpha_D$ and $\alpha_G$.  


\begin{theorem}
\label{thm:sat-gradient}
For a given $P_r$ and $P_{G_\theta}$, let $x$ be a sample generated according to $P_{G_{\theta}}$, and $D_{\omega^{*}}$ be optimal with respect to $V_{\alpha_{D}}(\theta, \omega)$. Then

    \begin{itemize}
    
        \item[\textbf{(a)}] the \textbf{saturating} and \textbf{non-saturating} gradients, $-\partial \ell_{\alpha_{G}}\left(0,D_{\omega^{*}}(x)\right) / \partial x$ and $\partial \ell_{\alpha_{G}}\left(1, D_{\omega^{*}}(x)\right) / \partial x$, respectively, demonstrate the following behavior: 
        \begin{align}
    -\frac{\partial \ell_{\alpha_{G}}\left(0, D_{\omega^{*}}(x)\right)}{\partial x} & = C_{x,\alpha_{D},\alpha_{G}} \left(\frac{1}{p_{G_{\theta}}(x)}\frac{\partial p_{G_{\theta}}}{\partial x} - \frac{1}{p_{r}(x)}\frac{\partial p_{r}}{\partial x}\right)\label{eqn:gradients64} \\
    \frac{\partial \ell_{\alpha_{G}}\left(1, D_{\omega^{*}}(x)\right)}{\partial x} & = C^{\text{NS}}_{x,\alpha_{D},\alpha_{G}} \left(\frac{1}{p_{G_{\theta}}(x)}\frac{\partial p_{G_{\theta}}}{\partial x} - \frac{1}{p_{r}(x)}\frac{\partial p_{r}}{\partial x}\right),\label{eqn:gradients65}
    \end{align}
    where using the tilted probability  $P^{(\alpha_D)}_{Y|X}(1|x) $ as written in \eqref{eq:true-posterior}, 
    \begin{align}
C_{x,\alpha_{D}, \alpha_{G}}& \coloneqq \alpha_{D} P^{(\alpha_D)}_{Y|X}(1|x) \left(1 - P^{(\alpha_D)}_{Y|X}(1|x) \right)^{1 - 1/\alpha_{G}}, \label{eq:c-sat} \\
C^{\text{NS}}_{x,\alpha_{D}, \alpha_{G}} & \coloneqq \alpha_{D} \left(1 - P^{(\alpha_D)}_{Y|X}(1|x)\right) P^{(\alpha_D)}_{Y|X}(1|x)^{1 - 1/\alpha_{G}},\ {\text{and}}
\label{eq:c-nonsat}
\end{align}    
  \item[\textbf{(b)}] the gradients in both \eqref{eqn:gradients64} and \eqref{eqn:gradients65} have directions that are independent of $\alpha_{D}$ and $\alpha_{G}$.
    \end{itemize}
\end{theorem}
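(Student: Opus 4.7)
The plan is to prove both parts by direct computation via the chain rule, using the explicit form of $\alpha$-loss in \eqref{eqn:alphaloss} and the optimal discriminator in \eqref{eqn:optimaldisc-gen-alpha-GAN}. Since both gradients factor through $D_{\omega^*}(x)$, I would first compute $\partial D_{\omega^*}/\partial x$ once and then combine it with two separate derivatives of $\ell_{\alpha_G}$ with respect to its second argument.

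First I would compute $\partial D_{\omega^*}(x)/\partial x$. Writing $u = p_r(x)^{\alpha_D}$ and $v = p_{G_\theta}(x)^{\alpha_D}$, the quotient rule together with $u' = \alpha_D p_r^{\alpha_D-1}\,\partial p_r/\partial x$ and similarly for $v'$ yields
\begin{equation*}
\frac{\partial D_{\omega^*}(x)}{\partial x} \;=\; \frac{u'v - uv'}{(u+v)^2} \;=\; \alpha_D\,\frac{p_r^{\alpha_D} p_{G_\theta}^{\alpha_D}}{(p_r^{\alpha_D}+p_{G_\theta}^{\alpha_D})^2}\left(\frac{1}{p_r}\frac{\partial p_r}{\partial x}-\frac{1}{p_{G_\theta}}\frac{\partial p_{G_\theta}}{\partial x}\right).
\end{equation*}
The key observation is that the prefactor equals $\alpha_D\, D_{\omega^*}(x)\bigl(1-D_{\omega^*}(x)\bigr)$, since $1-D_{\omega^*}(x) = p_{G_\theta}^{\alpha_D}/(p_r^{\alpha_D}+p_{G_\theta}^{\alpha_D})$. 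Moreover, $D_{\omega^*}(x) = P^{(\alpha_D)}_{Y|X}(1|x)$ by \eqref{eq:true-posterior}, which prepares the final form of the constants.

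Next I would compute the outer derivatives. From \eqref{eqn:alphaloss}, $\ell_{\alpha_G}(0,d) = \tfrac{\alpha_G}{\alpha_G-1}\bigl(1-(1-d)^{(\alpha_G-1)/\alpha_G}\bigr)$ and $\ell_{\alpha_G}(1,d) = \tfrac{\alpha_G}{\alpha_G-1}\bigl(1-d^{(\alpha_G-1)/\alpha_G}\bigr)$, so straightforward differentiation gives
\begin{equation*}
\frac{\partial \ell_{\alpha_G}(0,d)}{\partial d} = (1-d)^{-1/\alpha_G}, \qquad \frac{\partial \ell_{\alpha_G}(1,d)}{\partial d} = -d^{-1/\alpha_G}.
\end{equation*}
Applying the chain rule $\partial\ell_{\alpha_G}(y,D_{\omega^*}(x))/\partial x = (\partial\ell_{\alpha_G}/\partial d)\cdot(\partial D_{\omega^*}/\partial x)$ and negating in the saturating case, the factors $(1-D_{\omega^*})^{-1/\alpha_G}$ and $D_{\omega^*}^{-1/\alpha_G}$ combine with $D_{\omega^*}(1-D_{\omega^*})$ to yield $D_{\omega^*}(1-D_{\omega^*})^{1-1/\alpha_G}$ and $(1-D_{\omega^*})D_{\omega^*}^{1-1/\alpha_G}$, respectively. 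Collecting signs to flip the bracket into $\bigl(\tfrac{1}{p_{G_\theta}}\partial p_{G_\theta}/\partial x - \tfrac{1}{p_r}\partial p_r/\partial x\bigr)$ produces exactly the constants $C_{x,\alpha_D,\alpha_G}$ in \eqref{eq:c-sat} and $C^{\text{NS}}_{x,\alpha_D,\alpha_G}$ in \eqref{eq:c-nonsat}, establishing \eqref{eqn:gradients64} and \eqref{eqn:gradients65}.

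Part (b) then follows immediately: since $\alpha_D>0$ and $D_{\omega^*}(x) = P^{(\alpha_D)}_{Y|X}(1|x)\in[0,1]$, both $C_{x,\alpha_D,\alpha_G}$ and $C^{\text{NS}}_{x,\alpha_D,\alpha_G}$ are nonnegative scalars, so the sign (direction) of each gradient is determined entirely by the vector $\tfrac{1}{p_{G_\theta}(x)}\partial p_{G_\theta}/\partial x - \tfrac{1}{p_r(x)}\partial p_r/\partial x$, which does not depend on $\alpha_D$ or $\alpha_G$. There is no serious obstacle here; the only mildly delicate step is matching signs and recognizing the $D_{\omega^*}(1-D_{\omega^*})$ factor in $\partial D_{\omega^*}/\partial x$ so that the constants collapse cleanly into the tilted-posterior form. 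Boundary edge cases where $p_r(x)=0$ or $p_{G_\theta}(x)=0$ are handled by interpreting the expressions as limits consistent with the convention $0\cdot\infty = 0$ for the vanishing constants.
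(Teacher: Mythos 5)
Your proposal is correct and mirrors the paper's own proof step for step: compute $\partial D_{\omega^*}/\partial x$ by the quotient rule and recognize the prefactor $\alpha_D D_{\omega^*}(1-D_{\omega^*})$, differentiate the two partial losses, then chain and collect signs to identify the scalar multipliers $C_{x,\alpha_D,\alpha_G}$ and $C^{\text{NS}}_{x,\alpha_D,\alpha_G}$, with part (b) following from their nonnegativity. The only cosmetic difference is your introduction of auxiliary variables $u,v$ before the quotient rule, which the paper does inline.
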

\begin{remark}
    One can view the results in Theorem \ref{thm:sat-gradient} above as a one-shot (in any iteration) analysis of the gradients of the generator's loss, and thus, we fix $P_{G_\theta}$. Doing so allows us to ignore the implicit dependence on $(\alpha_D,\alpha_G)$ of the $P_{G_\theta}$ learned up to this iteration, thus allowing us to obtain tractable expressions for any iteration.
\end{remark}

A detailed proof of Theorem \ref{thm:sat-gradient} can be found in Appendix \ref{appendix:sat-gradient-proof}. 
Focusing first on saturating $(\alpha_{D}, \alpha_{G})$-GANs, in Fig. \ref{fig:p-real-vs-c}(a) in Appendix \ref{appendix:sat-gradient-proof}, we plot $C_{x,\alpha_{D}, \alpha_{G}}$ as a function of the true probability that $X \sim \frac{1}{2}P_{r} + \frac{1}{2} P_{G_{\theta}}$ is real, namely $P_{Y|X}(1|x)$, for five $(\alpha_{D}, \alpha_{G})$ combinations. In the $(1,1)$ case (i.e., vanilla GAN), $C_{x,1, 1} \approx 0$ for generated samples far away from the real data (where $P_{Y|X}(1|x) \approx 0$).
As discussed earlier, this optimization strategy is troublesome 
when the real and generated data are fully separable, since the sample gradients are essentially zeroed out by the scalar, leading to vanishing gradients. To address this issue, Fig. \ref{fig:p-real-vs-c}(a) shows that tuning $\alpha_{D}$ below 1 (e.g., 0.6) ensures that samples most likely to be ``generated'' ($P_{Y|X}(1|x) \approx 0$) receive sufficient gradient for updates that direct them closer to the real distribution.


The vanilla GAN also suffers from convergence issues since generated samples close to the real data (when $P_{Y|X}(1|x) \approx 1$) 
receive gradients large in magnitude ($C_{x,1,1} \approx 1$). Ideally, these generated samples should not be instructed to move since they convincingly pass as real to $D_{\omega^*}$. As explained in Section \ref{subsec:gan-training-instabilities}, an excessive gradient can push the generated data away from the real data, which ultimately separates the distributions and forces the GAN to restart training. Although the $(0.6,1)$-GAN in Fig. \ref{fig:p-real-vs-c}(a) appears to decrease $C_{x,\alpha_{D}, \alpha_{G}}$ for samples close to the real data ($P_{Y|X}(1|x) \approx 1$), tuning $\alpha_{G}>1$ allows this gradient to converge to zero as desired (see Fig. \ref{fig:(alpha_d,alpha_g)-GAN-sat-gen-loss}(b)). 

Although tuning the saturating $(\alpha_{D}, \alpha_{G})$-GAN formulation away from vanilla GAN promotes a more favorable optimization trajectory for G, this approach continues to suffer from the problem of providing small gradients for generated samples far from $P_r$.  
This suggests looking at the behavior of the NS $(\alpha_D,\alpha_G)$-GAN formulation. 
Figure \ref{fig:p-real-vs-c}(b) in Appendix \ref{appendix:sat-gradient-proof} illustrates the relationship between the gradient scalar $C^{\text{NS}}_{x,\alpha_{D},\alpha_{G}}$ and the probability that a sample $X \sim \frac{1}{2}P_{r} + \frac{1}{2}P_{G_{\theta}}$ is real, namely $P_{Y|X}(1|x)$, for several values of $(\alpha_{D},\alpha_{G})$. In the vanilla $(1,1)$-GAN case, we observe a negative linear relationship, i.e., the samples least likely to be real ($P_{Y|X}(1|x) \approx 0$) receive large gradients ($C^{\text{NS}}_{x,1,1} \approx 1$) while the samples most likely to be real receive minimal gradients ($C^{\text{NS}}_{x,1,1} \approx 0$). While this seems desirable, unfortunately, the vanilla GAN's optimization strategy often renders it vulnerable to model oscillation, a common GAN failure detailed in Section \ref{subsec:gan-training-instabilities}, as a result of such large gradients of the outlier (far from real) samples causing the generated data to oscillate around the real data modes.
By tuning $\alpha_{D}$ below 1, as shown in Fig. \ref{fig:p-real-vs-c}(b), one can slightly increase (resp. decrease) $C^{\text{NS}}_{x,\alpha_{D}, \alpha_{G}}$ for the generated samples close to (resp. far from) the real modes.
As a result, the generated samples are more robust to outliers and therefore more likely to converge to the real modes. Finally, tuning $\alpha_{G}$ above 1 can further improve this robustness. 
A caveat here is the fact that $C^{\text{NS}}_{x,\alpha_{D}, \alpha_{G}} \approx 0$ when $P_{Y|X}(1|x)\approx 0$ can potentially be problematic since the near-zero gradients may immobilize generated data far from the real distribution. This is borne out in our results for several large image datasets in Section \ref{sec:experimental-results} where choosing $\alpha_G=1$ yields the best results. The cumulative effects of tuning $(\alpha_D,\alpha_G)$ are further illustrated in Fig. \ref{fig:(alpha_d,alpha_g)-GAN-sat-gen-loss}(c).

\subsection{CPE Loss Based Dual-objective GANs}
Similarly to the single-objective loss function perspective in Section~\ref{sec:loss-function-perspective-single-objective}, we can generalize the $(\alpha_D,\alpha_G)$-GAN formulation to incorporate general CPE losses. To this end, we introduce a dual-objective loss function perspective of GANs
in which the discriminator maximizes $V_{\ell_D}(\theta,\omega)$ while the generator minimizes $V_{\ell_G}(\theta,\omega)$, where
\medmuskip=0mu
\begin{align}
    &V_{\ell}(\theta,\omega)=\mathbb{E}_{X\sim P_r}[-\ell(1,D_\omega(X))]+\mathbb{E}_{X\sim P_{G_\theta}}[-\ell(0,D_\omega(X))]\label{eqn:cpe-objective},
\end{align}
for any CPE losses $\ell=\ell_D,\ell_G$.
The resulting CPE loss dual-objective GAN is given by
\begin{subequations}
\begin{align} 
&\sup_{\omega\in\Omega}V_{\ell_D}(\theta,\omega) \label{eqn:cpe-disc_obj} \\
& \inf_{\theta\in\Theta} V_{\ell_G}(\theta,\omega)
\label{eqn:cpe-gen_obj}.
\end{align}
\label{eqn:cpe-loss-dual-obj-GAN}
\end{subequations}
The CPE losses $\ell_D$ and $\ell_G$ can be completely different losses, the same loss but with different parameter values, or the same loss with the same parameter values, in which case the above formulation reduces to the single-objective formulation in \eqref{eqn:lossfnbasedGAN}. For example, choosing $\ell_D = \ell_G = \ell_\alpha$, we recover the $\alpha$-GAN formulation in \eqref{eqn:minimaxalphaGAN}; choosing $\ell_D = \ell_{\alpha_D}$ and $\ell_G = \ell_{\alpha_G}$, we obtain the $(\alpha_D,\alpha_G)$-GAN formulation in \eqref{eqn:alpha_D,alpha_G-GAN}. Note that $\ell_D$ should satisfy the constraint in \eqref{eqn:condnonfnsforGAN} so that the optimal discriminator outputs ${1}/{2}$ for any input when $P_r=P_{G_\theta}$. We once again maintain the same ordering as the original min-max GAN formulation and present the conditions under which the optimal generator minimizes a symmetric $f$-divergence when the discriminator set $\Omega$ is large enough in the following proposition.

{
\begin{proposition}
\label{prop:dual-objective-CPE-loss-GAN-strategies}
Let $\ell_D$ and $\ell_G$ be symmetric CPE loss functions with $\ell_D(1,\cdot)$ also differentiable with derivative $\ell_D^\prime(1,\cdot)$ and strictly convex. Then the optimal discriminator $D_{\omega^*}$ optimizing \eqref{eqn:cpe-disc_obj} satisfies the implicit equation, provided it has a solution,
\begin{equation}
    \ell_D^\prime(1,1-D_{\omega^*}(x)) = \frac{p_r(x)}{p_{G_\theta}(x)}\ell_D^\prime(1,D_{\omega^*}(x)), \quad x \in \mathcal{X}.
    \label{eq:opt-disc-cpe-implicit-eq}
\end{equation}
If \eqref{eq:opt-disc-cpe-implicit-eq} does not have a solution for a particular $x\in\mathcal{X}$, then $D_{\omega^*}(x)=0$ or $D_{\omega^*}(x)=1$. Let $A\left(\frac{p_r(x)}{p_{G_\theta}(x)}\right) \coloneqq D_{\omega^*}(x)$. For this $D_{\omega^*}$, \eqref{eqn:cpe-gen_obj} simplifies to minimizing a symmetric $f$-divergence $D_f(P_r||P_{G_\theta})$ if the function $f:\mathbb R_+ \to \mathbb R$ is convex, where $f$ is defined as
\begin{equation}
    f(u) = -u\ell_G(1,A(u))-\ell_G(1,1-A(u))+2\ell_G(1,1/2)
    \label{eq:dual-obj-cpe-f}.
\end{equation}
\end{proposition}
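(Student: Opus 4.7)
The plan is to find $D_{\omega^*}$ by pointwise maximization, then substitute it into the generator's objective and massage the result into the form of an $f$-divergence. Using the symmetry of $\ell_D$ to rewrite $\ell_D(0,\hat{y}) = \ell_D(1,1-\hat{y})$, we have
\[
V_{\ell_D}(\theta,\omega) = -\int_\mathcal{X}\bigl[p_r(x)\,\ell_D(1,D_\omega(x)) + p_{G_\theta}(x)\,\ell_D(1,1-D_\omega(x))\bigr]\,dx.
\]
Because $\Omega$ is rich enough, we can optimize the integrand pointwise at each $x$. Strict convexity of $\ell_D(1,\cdot)$ makes the integrand strictly concave in $t = D_\omega(x)$ on $[0,1]$, so the maximizer is unique: if an interior critical point exists, setting the derivative to zero produces exactly \eqref{eq:opt-disc-cpe-implicit-eq}; otherwise the maximum lies at $t=0$ or $t=1$. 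In particular, at any $x$ where $p_r(x)=p_{G_\theta}(x)$, strict monotonicity of $\ell_D'(1,\cdot)$ (from strict convexity) forces the interior solution $t=1/2$, so $A(1)=1/2$.

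Substituting $D_{\omega^*}(x)=A(u)$ with $u=p_r(x)/p_{G_\theta}(x)$ into the generator's objective and again using the symmetry of $\ell_G$,
\begin{align*}
V_{\ell_G}(\theta,\omega^*) &= \int_\mathcal{X} p_{G_\theta}(x)\bigl[-u\,\ell_G(1,A(u)) - \ell_G(1,1-A(u))\bigr]\,dx \\
&= \int_\mathcal{X} p_{G_\theta}(x)\,f(u)\,dx - 2\ell_G(1,1/2),
\end{align*}
where $f$ is as defined in \eqref{eq:dual-obj-cpe-f}. The additive constant $2\ell_G(1,1/2)$ is chosen precisely so that $f(1) = 0$ (using $A(1) = 1/2$). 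If $f$ is convex, then the integral is a genuine $f$-divergence $D_f(P_r\|P_{G_\theta})$, so minimizing $V_{\ell_G}$ over $\theta$ is equivalent to minimizing this divergence.

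To verify the $f$-divergence is symmetric, I would first establish $A(1/u) = 1 - A(u)$: plugging $1/u$ into \eqref{eq:opt-disc-cpe-implicit-eq} and setting $\tilde{A} = 1 - A(1/u)$ reproduces the defining equation for $A(u)$, so uniqueness (again from strict convexity of $\ell_D(1,\cdot)$) gives $\tilde{A} = A(u)$. Substituting $A(1/u) = 1 - A(u)$ into the definition of $f$ yields $u\,f(1/u) - f(u) = 2\ell_G(1,1/2)(u-1)$, which is linear in $u$; hence $D_f(P\|Q) - D_f(Q\|P) = \int q(x)[f(u) - u\,f(1/u)]\,dx = -2\ell_G(1,1/2)\int(p - q)\,dx = 0$. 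The main obstacle I anticipate is the careful handling of boundary cases $A(u) \in \{0,1\}$ when \eqref{eq:opt-disc-cpe-implicit-eq} admits no interior solution: these require a limiting argument on $\ell_G(1,\cdot)$ near $0$ and $1$, together with appropriate extended-value conventions for $f$, to ensure both the $f$-divergence representation and the identity $A(1/u) = 1 - A(u)$ continue to hold at such $x$.
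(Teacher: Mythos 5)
Your proof follows essentially the same approach as the paper: pointwise maximization of the discriminator objective using strict concavity of $t \mapsto -u\ell_D(1,t)-\ell_D(1,1-t)$ to derive the first-order condition \eqref{eq:opt-disc-cpe-implicit-eq}, then substitution of $D_{\omega^*}(x)=A(u)$ into the generator objective and factoring out $p_{G_\theta}$ to exhibit the $f$-divergence form, with the constant $2\ell_G(1,1/2)$ chosen so that $f(1)=0$ via $A(1)=1/2$. You go a step further than the paper's proof by actually \emph{verifying} the symmetry of $D_f$: the identity $A(1/u)=1-A(u)$ (which follows from uniqueness of the concave maximizer) gives $uf(1/u)-f(u)=2\ell_G(1,1/2)(u-1)$, an affine perturbation, so $D_f(P\|Q)=D_f(Q\|P)$; the paper asserts symmetry in the statement but omits this check, so your addition is a genuine improvement in completeness. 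Your flagged caveat about boundary cases $A(u)\in\{0,1\}$ is also well taken — the paper glosses over this too.
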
}

\begin{proof}[Proof sketch]\let\qed\relax
The proof involves a straightforward application of KKT conditions when optimizing \eqref{eqn:cpe-disc_obj} and substituting in  \eqref{eqn:cpe-gen_obj}. A detailed proof can be found in Appendix \ref{appendix:dual-objective-CPE-loss-GAN-strategies}.
\end{proof}

As it is difficult to come up with conditions without having the explicit forms of the losses $\ell_D$ and $\ell_G$, Proposition \ref{prop:dual-objective-CPE-loss-GAN-strategies} provides a broad outline of what the optimal strategies will look like. The assumption of the losses being symmetric can be relaxed, in which case the resulting $f$-divergence will no longer be guaranteed to be symmetric. Theorem \ref{thm:alpha_D,alpha_G-GAN-saturating} is a special case of Proposition \ref{prop:dual-objective-CPE-loss-GAN-strategies} for $\ell_D=\ell_{\alpha_D}$ and $\ell_G=\ell_{\alpha_G}$. As another example, consider the following square loss based CPE losses \footnote{Note that these losses were considered in \cite{veiner2023unifying} and were shown to result in a special case of a shifted LSGAN minimizing a certain Jensen-$f$-divergence.}:
\begin{align}
    \ell_D(y,\hat{y})&=\frac{1}{2}\left[y(\hat{y}-1)^2+(1-y)\hat{y}^2\right] \label{eq:square-loss-disc}\\  \ell_G(y,\hat{y})&=-\frac{1}{2}\left[y(\hat{y}^2-1)+(1-y)\left((1-\hat{y})^2-1\right)\right] \label{eq:square-loss-gen}.
\end{align}
Note that \eqref{eq:square-loss-disc} and \eqref{eq:square-loss-gen} are both symmetric and $\ell_D(1,\cdot)$ is both convex (and therefore $\ell_D$ satisfies \eqref{eqn:condnonfnsforGAN}) and differentiable with $\ell_D^\prime(1,\hat{y})=\hat{y}-1$. The implicit equation in \eqref{eq:opt-disc-cpe-implicit-eq} then becomes 
\[(1-D_{\omega^*}(x))-1=u(D_{\omega^*}(x)-1), \quad \text{where} \quad D_{\omega^*}
(x) = \frac{u}{u+1}=\frac{p_r(x)}{p_r(x) + p_{G_\theta}(x)}.\]
The corresponding $f$ in \eqref{eq:dual-obj-cpe-f} is 
$f(u)=[3(1-u)]/[4(u+1)]$,
which is convex. Therefore, the dual-objective CPE loss GAN using \eqref{eq:square-loss-disc} and \eqref{eq:square-loss-gen} minimizes a symmetric $f$-divergence.
\subsection{Estimation Error for CPE Loss Dual-objective GANs}
\label{subsec:est-err}
In order to analyze what occurs in practice when both the number of training samples and model capacity are usually limited, we now consider the same setting as in Section \ref{subsec:est-and-gen-error-single-obj} with finite training samples $S_x=\{X_1,\dots,X_n\}$ and $S_z=\{Z_1,\dots,Z_m\}$ from $P_r$ and $P_Z$, respectively, and with neural networks chosen as the discriminator and generator models. The sets of samples $S_x$ and $S_z$ induce the empirical real and generated distributions $\hat{P}_r$ and $\hat{P}_{G_\theta}$, respectively. A useful quantity to evaluate the performance of GANs in this setting is again that of the estimation error. In Section \ref{subsec:est-and-gen-error-single-obj}, we define estimation error for CPE loss GANs. However, such a definition requires a common value function for both discriminator and generator, and therefore, does not directly apply to the dual-objective setting we consider here.

Our definition relies on the observation that estimation error inherently captures the effectiveness of the generator (for a corresponding optimal discriminator model) in learning with limited samples. We formalize this intuition below.

Since CPE loss dual-objective GANs use different objective functions for the discriminator and generator, we start by defining the optimal discriminator ${\omega}^*$ for a generator model $G_\theta$ as
\begin{align}
    {\omega}^*(P_r,P_{G_\theta}) \coloneqq \argmax_{\omega \in \Omega} \; V_{\ell_D}(\theta,\omega)\big\rvert_{P_r,P_{G_\theta}},
    \label{eq:est-err-opt-disc}
\end{align}
where the notation $|_{\cdot,\cdot}$ allows us to make explicit the distributions used in the value function. 
In keeping with the literature where the value function being minimized is referred to as the neural net (NN) distance (since D and G are modeled as neural networks) \cite{AroraGLMZ17,JiZL21,kurri-2022-convergence}, we define the generator's NN distance $d_{\omega^*(P_r,P_{G_\theta})}$ as
\begin{align}
    d_{\omega^*(P_r,P_{G_\theta})}(P_r,{P}_{G_{{\theta}}}) \coloneqq V_{\ell_G}(\theta,\omega^*(P_r,P_{G_\theta}))\big\rvert_{P_r,P_{G_\theta}}.
    \label{eq:est-err-gen-obj}
\end{align}
The resulting minimization for training the CPE-loss dual-objective GAN using finite samples is
\begin{align}
    \inf_{\theta\in\Theta} d_{\omega^*(\hat{P}_r,\hat{P}_{G_\theta})}(\hat{P}_r,\hat{P}_{G_{{\theta}}}).
    \label{eq:training-empirical-alpha_d,alpha_g-GAN}
\end{align}
Denoting $\hat{\theta}^*$ as the minimizer of \eqref{eq:training-empirical-alpha_d,alpha_g-GAN}, we define the estimation error for CPE loss dual-objective GANs as
\begin{align}
    d_{\omega^*(P_r,P_{G_{\hat{\theta}^*}})}(P_r,{P}_{G_{\hat{\theta}^*}})-\inf_{\theta\in\Theta} d_{\omega^*({P}_r,{P}_{G_\theta})}(P_r,P_{G_{\theta}})
    \label{eq:est-error-def-alpha_d,alpha_g-GAN}.
\end{align}


We use the same notation as in Section \ref{subsec:est-and-gen-error-single-obj}, detailed again in the following for easy reference. \textcolor{red}{}For $x\in\mathcal{X}\coloneqq\{x\in\mathbb{R}^d:||x||_2\leq B_x\}$ and  $z\in\mathcal{Z}\coloneqq\{z\in\mathbb{R}^p:||z||_2\leq B_z\}$, we model the discriminator and generator as $k$- and $l$-layer neural networks, respectively, such that $D_\omega$ and $G_\theta$ can be written as:
\begin{align}
    D_\omega&:x\mapsto \sigma\left(\mathbf{w}_k^\mathsf{T}r_{k-1}(\mathbf{W}_{k-1}r_{k-2}(\dots r_1(\mathbf{W}_1(x)))\right)\,  \label{eqn:disc-model-dual}\\
    G_\theta&:z\mapsto \mathbf{V}_ls_{l-1}(\mathbf{V}_{l-1}s_{l-2}(\dots s_1(\mathbf{V}_1z))),
\end{align}
where (i) $\mathbf{w}_k$ is a parameter vector of the output layer; (ii) for $i\in[1:k-1]$ and $j\in[1:l]$, $\mathbf{W}_i$ and $\mathbf{V}_j$ are parameter matrices; (iii) $r_i(\cdot)$ and $s_j(\cdot)$ are entry-wise activation functions of layers $i$ and $j$, respectively, i.e., for $\mathbf{a}\in\mathbb{R}^t$, $r_i(\mathbf{a})=\left[r_i(a_1),\dots,r_i(a_t)\right]$ and $s_i(\mathbf{a})=\left[s_i(a_1),\dots,s_i(a_t)\right]$; and (iv) $\sigma(\cdot)$ is the sigmoid function given by $\sigma(p)=1/(1+\mathrm{e}^{-p})$. We assume that each $r_i(\cdot)$ and $s_j(\cdot)$ are $R_i$- and $S_j$-Lipschitz, respectively, and also that they are positive homogeneous, i.e., $r_i(\lambda p)=\lambda r_i(p)$ and $s_j(\lambda p)=\lambda s_j(p)$, for any $\lambda\geq 0$ and $p\in\mathbb{R}$. Finally, as is common in such analysis \cite{neyshabur2015norm,salimans2016weight,golowich2018size,JiZL21}, 
we assume that the Frobenius norms of the parameter matrices are bounded, i.e., $||\mathbf{W}_i||_F\leq M_i$, $i\in[1:k-1]$, $||\mathbf{w}_k||_2\leq M_k$, and $||\mathbf{V}_j||_F\leq N_j$, $j\in[1:l]$. We now present an upper bound on \eqref{eq:est-error-def-alpha_d,alpha_g-GAN} in the following theorem.

\begin{theorem}\label{thm:estimationerror-upperbound-double-objective}
For the setting described above, additionally assume that the functions $\phi(\cdot) \coloneqq -\ell_G(1,\cdot)$ and $\psi(\cdot) \coloneqq -\ell_G(0,\cdot)$ are $L_\phi$- and $L_\psi$-Lipschitz, respectively.
Then, with probability at least $1-2\delta$ over the randomness of training samples $S_x=\{X_i\}_{i=1}^n$ and $S_z=\{Z_j\}_{j=1}^m$, we have
\begin{align}
    d_{\omega^*(P_r,P_{G_{\hat{\theta}^*}})}(P_r,{P}_{G_{\hat{\theta}^*}})-\inf_{\theta\in\Theta} d_{\omega^*({P}_r,{P}_{G_\theta})}(P_r,P_{G_{\theta}})
    \leq & \frac{L_\phi B_xU_\omega\sqrt{3k}}{\sqrt{n}}+\frac{L_\psi U_\omega U_\theta B_z\sqrt{3(k+l-1)}}{\sqrt{m}}\nonumber\\
    &\hspace{12pt}+U_\omega\sqrt{\log{\frac{1}{\delta}}}\left(\frac{L_\phi B_x}{\sqrt{2n}}+\frac{L_\psi B_zU_\theta}{\sqrt{2m}}\right), \label{eq:estimationbound-dual-objective}
\end{align}
where $U_\omega\coloneqq M_k\prod_{i=1}^{k-1}(M_iR_i)$ and $U_\theta\coloneqq N_l\prod_{j=1}^{l-1}(N_jS_j)$.

In particular, when specialized to the case of $(\alpha_D,\alpha_G)$-GANs by letting $\phi(p)=\psi(1-p)=\frac{\alpha_G}{\alpha_G-1}\left(1-p^{\frac{\alpha_G-1}{\alpha_G}}\right)$, the resulting bound is nearly identical to the terms in the RHS of~\eqref{eq:estimationbound-dual-objective}, except for substitutions $L_\phi \leftarrow 4C_{Q_x}(\alpha_G)$ and $L_\psi \leftarrow 4C_{Q_z}(\alpha_G)$, where $Q_x\coloneqq U_\omega B_x$, $Q_z\coloneqq U_\omega U_\theta B_z$, and
\begin{align} \label{eq:clipalpha}
    C_h(\alpha)\coloneqq\begin{cases}\sigma(h)\sigma(-h)^{\frac{\alpha-1}{\alpha}}, \ &\alpha\in(0,1]\\
    \left(\frac{\alpha-1}{2\alpha-1}\right)^{\frac{\alpha-1}{\alpha}}\frac{\alpha}{2\alpha-1}, &\alpha\in(1,\infty).
    \end{cases}
\end{align}
\end{theorem}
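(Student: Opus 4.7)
The plan is to extend the proof of Theorem~\ref{thm:estimationerror-upperbound} to the dual-objective setting. The crucial observation is that although the discriminator parameter $\omega^*$ is determined by the optimization of $V_{\ell_D}$, the NN distance in \eqref{eq:est-err-gen-obj} that defines the estimation error is $V_{\ell_G}$ evaluated at this $\omega^*$. Thus only the Lipschitz constants of $\ell_G$ (namely $L_\phi$ and $L_\psi$) enter the final bound, consistent with~\eqref{eq:estimationbound-dual-objective}, while the role of $\ell_D$ is confined to selecting which point in $\Omega$ we evaluate at.

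\textbf{Decomposition of the estimation error.} First I would split \eqref{eq:est-error-def-alpha_d,alpha_g-GAN} via the optimality of $\hat{\theta}^*$ for the empirical problem \eqref{eq:training-empirical-alpha_d,alpha_g-GAN}, which gives, for every $\theta \in \Theta$,
\begin{align*}
d_{\omega^*(\hat{P}_r,\hat{P}_{G_{\hat{\theta}^*}})}(\hat{P}_r,\hat{P}_{G_{\hat{\theta}^*}}) \le d_{\omega^*(\hat{P}_r,\hat{P}_{G_\theta})}(\hat{P}_r,\hat{P}_{G_\theta}).
\end{align*}
Inserting this as a pivot and using the crude bound that for any $\theta$ and any choice of $\omega_1,\omega_2 \in \Omega$,
\begin{align*}
V_{\ell_G}(\theta,\omega_1)\big|_{P_r,P_{G_\theta}} - V_{\ell_G}(\theta,\omega_2)\big|_{\hat P_r,\hat P_{G_\theta}} \le 2 \sup_{\omega \in \Omega}\Big|V_{\ell_G}(\theta,\omega)\big|_{P_r,P_{G_\theta}} - V_{\ell_G}(\theta,\omega)\big|_{\hat P_r,\hat P_{G_\theta}}\Big|,
\end{align*}
one reduces the estimation error to twice a uniform deviation quantity, taken over both $\theta \in \Theta$ and $\omega \in \Omega$. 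The benefit is that the actual choice of $\omega^*$ (driven by $\ell_D$) is absorbed into the supremum, so no Lipschitz property of $\ell_D$ is needed.

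\textbf{Rademacher-complexity step.} Next, using standard symmetrization and McDiarmid's inequality (to get the concentration penalty $U_\omega\sqrt{\log(1/\delta)}(L_\phi B_x/\sqrt{2n} + L_\psi B_z U_\theta/\sqrt{2m})$), I would bound the uniform deviation in expectation by the Rademacher complexities of the two compositional function classes $\{x \mapsto \phi(D_\omega(x)) : \omega \in \Omega\}$ and $\{z \mapsto \psi(D_\omega(G_\theta(z))) : \omega \in \Omega,\, \theta \in \Theta\}$. I would then strip off $\phi$ and $\psi$ via the contraction lemma for Lipschitz losses (\cite[Lemma~26.9]{shalev2014understanding}), leaving Rademacher complexities of $\{D_\omega\}$ and $\{D_\omega \circ G_\theta\}$, which are bounded by $L_\phi B_x U_\omega \sqrt{3k}/\sqrt{n}$ and $L_\psi B_z U_\omega U_\theta \sqrt{3(k+l-1)}/\sqrt{m}$ by the Frobenius-norm/positive-homogeneity arguments already used in the proof of Theorem~\ref{thm:estimationerror-upperbound} (and originally in~\cite{golowich2018size,JiZL21}).

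\textbf{Specialization to $(\alpha_D,\alpha_G)$-GANs.} For the specialization, since $\ell_G = \ell_{\alpha_G}$ and the discriminator has a sigmoid output layer, I would extend the Lipschitz computation of Sypherd \emph{et al.}~\cite{sypherd2022journal} (which covers $\alpha$-loss composed with logistic regression) to $\alpha$-loss composed with the sigmoid of a bounded neural-network pre-activation. Since the pre-sigmoid output lies in $[-Q_x,Q_x]$ on $\mathcal{X}$ and in $[-Q_z,Q_z]$ on $G_\theta(\mathcal{Z})$, with $Q_x = U_\omega B_x$ and $Q_z = U_\omega U_\theta B_z$, the Lipschitz constants are $4C_{Q_x}(\alpha_G)$ and $4C_{Q_z}(\alpha_G)$ respectively, with $C_h(\alpha_G)$ as in \eqref{eq:clipalpha}. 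Substituting these for $L_\phi$ and $L_\psi$ in the bound gives the specialized result.

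\textbf{Main obstacle.} The main technical hurdle is handling the fact that the NN distance evaluates $V_{\ell_G}$ at $\omega^*(\cdot,\cdot)$ rather than at a supremum over $\omega$. Unlike the single-objective case where $\omega^*$ is itself the $V_{\ell_G}$-maximizer and one can apply symmetrization directly, here $\omega^*(P_r,P_{G_\theta})$ and $\omega^*(\hat P_r,\hat P_{G_\theta})$ are generically different parameters. The clean resolution is the loose bound above, which sacrifices nothing in the rate because the bound is already uniform over $\Omega$; once this pointwise-to-uniform step is taken, the remainder follows the single-objective template verbatim. Finally, subsumption of the $(\alpha_D,\alpha_G)$-GAN result from \cite{welfert2023alpha_d} is immediate since the $\alpha_D$-dependence in that bound comes only through the discriminator set's sigmoid structure, which matches ours.
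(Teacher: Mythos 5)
Your high‐level strategy is right — reduce the estimation error to a uniform deviation over $\Omega\times\Theta$, apply McDiarmid and symmetrization, then the Lipschitz contraction lemma and the Frobenius‐norm Rademacher bound, and finally plug in the sigmoid‐composed $\alpha$‐loss Lipschitz constant for the specialization — and your reading that only $\ell_G$ enters the bound is exactly what the paper finds. But the ``crude bound'' you offer as the resolution of the main obstacle is false: the inequality
\begin{align*}
V_{\ell_G}(\theta,\omega_1)\big|_{P_r,P_{G_\theta}} - V_{\ell_G}(\theta,\omega_2)\big|_{\hat P_r,\hat P_{G_\theta}} \le 2 \sup_{\omega \in \Omega}\Bigl|\,V_{\ell_G}(\theta,\omega)\big|_{P_r,P_{G_\theta}} - V_{\ell_G}(\theta,\omega)\big|_{\hat P_r,\hat P_{G_\theta}}\Bigr|
\end{align*}
does not hold for arbitrary $\omega_1,\omega_2$: take $\hat P_r = P_r$ and $\hat P_{G_\theta}= P_{G_\theta}$ so the right‐hand side is zero, while the left‐hand side equals $V_{\ell_G}(\theta,\omega_1)-V_{\ell_G}(\theta,\omega_2)$, which is nonzero whenever $\omega_1\neq\omega_2$. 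The supremum only controls the deviation of $V_{\ell_G}$ evaluated at the \emph{same} $\omega$ across the two distribution pairs; it gives no control over the difference between $V_{\ell_G}$ at two different discriminator parameters. Since $\omega^*$ maximizes $V_{\ell_D}$ rather than $V_{\ell_G}$, you also cannot invoke the sup/inf ``stability'' trick used for the single‐objective neural‐net distance (where $d_{\mathcal F}$ itself is a sup over $\omega$).

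The paper's proof handles this with a three‐term decomposition, adding and subtracting $d_{\omega^*(P_r,P_{G_{\hat\theta^*}})}(\hat P_r,P_{G_{\hat\theta^*}})$ and $\inf_\theta d_{\omega^*(P_r,P_{G_\theta})}(\hat P_r,P_{G_\theta})$, so that within each resulting piece the discriminator parameter is held fixed while only one of the two measures ($\hat P_r\leftrightarrow P_r$ or $\hat P_{G_\theta}\leftrightarrow P_{G_\theta}$) is swapped at a time. The $\phi$‐ or $\psi$‐terms that do not change across the swap then cancel exactly, leaving a genuine deviation of the form $\mathbb E_{P}[\cdot]-\mathbb E_{\hat P}[\cdot]$ at a single $\omega$, which \emph{is} bounded by the corresponding supremum. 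Your factor of $2$ appears correctly in the end, but it comes from carefully bookkeeping the pieces of this decomposition (the $\psi$‐deviation appears twice in the third term), not from a generic pointwise‐to‐uniform step. To repair your proof you should replace the crude bound by this exact add‐and‐subtract bookkeeping; the remaining Rademacher/McDiarmid/contraction machinery and the $C_{Q_x}(\alpha_G)$, $C_{Q_z}(\alpha_G)$ specialization you describe then go through verbatim.
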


The proof is similar to that of Theorem \ref{thm:estimationerror-upperbound} (and also \cite[Theorem 1]{JiZL21}). We observe that  
\eqref{eq:estimationbound-dual-objective} does not depend on $\ell_D$, an artifact of the proof techniques used, and is therefore most likely not the tightest bound possible. See Appendix \ref{appendix:estimationerror-upperbound-alpha_d,alpha_g-GAN} for proof details.




\section{Illustration of Results}
\label{sec:experimental-results}
\begin{figure*}[t]
    \centering
    \footnotesize
\setlength{\tabcolsep}{1pt}
\begin{tabular}{@{}cc@{}}
  \includegraphics[height=4.3cm]{./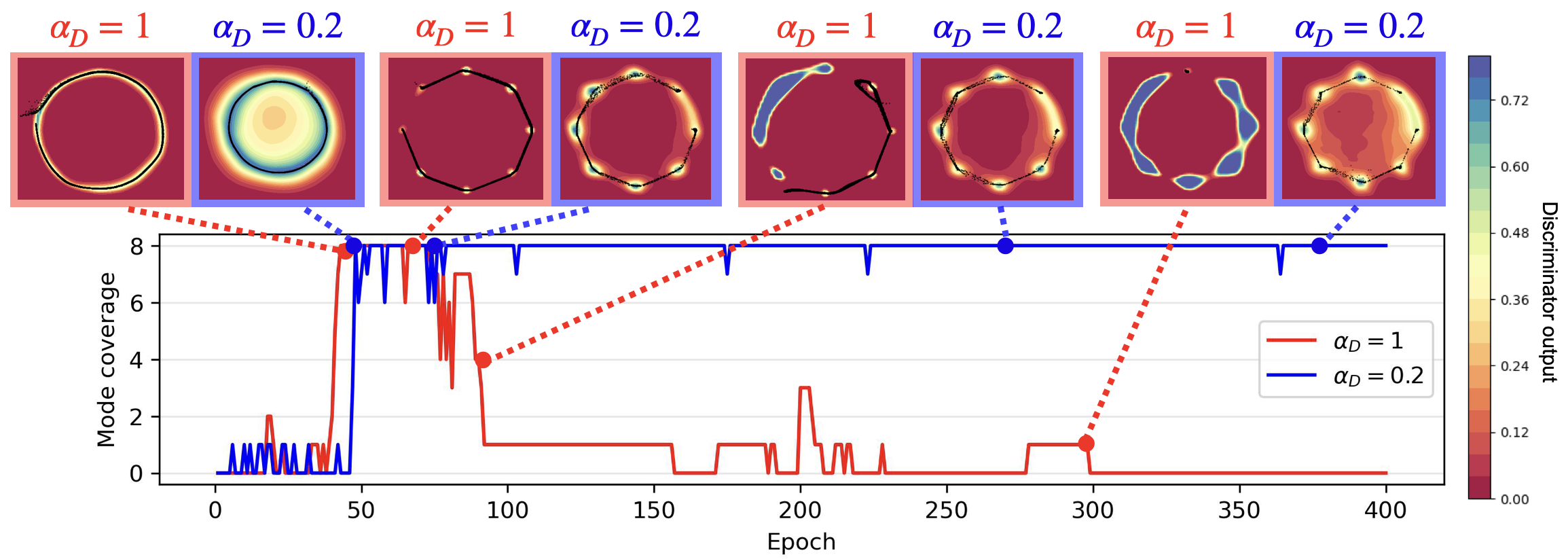}  & {\includegraphics[height=4cm]{./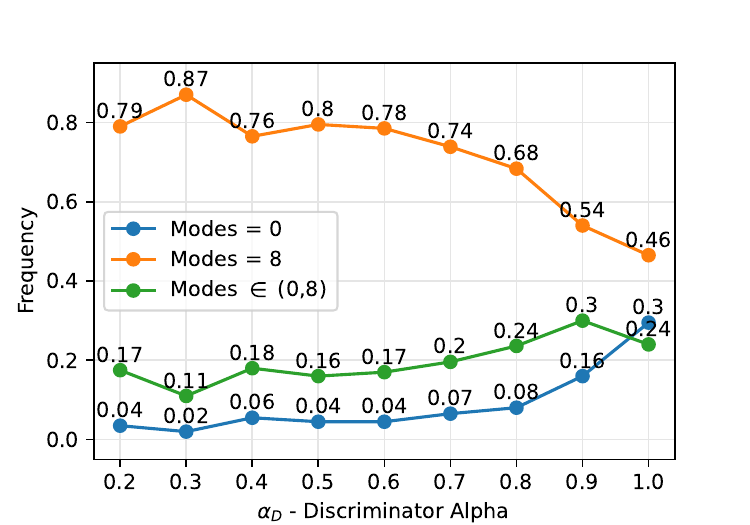}} \\
   (a)  & (b)
\end{tabular}
    \caption{(a) Plot of mode coverage over epochs for $(\alpha_{D}, \alpha_{G})$-GAN training with the \textbf{saturating} objectives in \eqref{eqn:alpha_D,alpha_G-GAN}. Fixing $\alpha_{G}=1$, we compare $\alpha_{D} = 1$ (vanilla GAN) with $\alpha_{D} = 0.2$. Placed above this plot are 2D visuals of the generated samples (in black) at different epochs; these show that both GANs successfully capture the ring-like structure, but the vanilla GAN fails to maintain the ring over time. We illustrate the discriminator output in the same visual as a heat map to show that the $\alpha_{D} = 1$ discriminator exhibits more confident predictions (tending to 0 or 1), which in turn subjects G to vanishing 
    and exploding gradients 
    when its objective $\log(1-D)$ saturates as $D\rightarrow 0$ and diverges as $D\rightarrow 1$, respectively. This combination tends to repel the generated data when it approaches the real data, thus freezing any significant weight update in the future. In contrast, the less confident predictions of the $(0.2,1)$-GAN create a smooth landscape for the generated output to descend towards the real data. (b) Plot of success and failure rates over 200 seeds vs. $\alpha_{D}$ with $\alpha_{G} = 1$ for the \textbf{saturating} $(\alpha_{D}, \alpha_{G})$-GAN on the 2D-ring, which underscores the stability of $(\alpha_{D} < 1,\alpha_G)$-GANs relative to vanilla GAN.
    }

    \label{fig:sat-figure}
\end{figure*}

We illustrate the value of $(\alpha_{D}, \alpha_{G})$-GAN as compared to the vanilla GAN (i.e., the $(1,1)$-GAN). {Focusing on DCGAN architectures \cite{radford2015}, we compare against LSGANs \cite{Mao_2017_LSGAN}, the current state-of-the-art (SOTA) dual-objective approach}. While WGANs \cite{ArjovskyCB17} have also been proposed to address the training instabilities, their training methodology is distinctly different and uses a different optimizer (RMSprop), requires gradient clipping or penalty, and does not leverage batch normalization, all of which make meaningful comparisons difficult. 

We evaluate our approach on three datasets: (i) a synthetic dataset generated by a two-dimensional, ring-shaped Gaussian mixture distribution (2D-ring) \cite{srivastava2017veegan}; (ii) the $64 \times 64$ Celeb-A image dataset \cite{liu2015}; and (iii) the $112 \times 112$ LSUN Classroom dataset~\cite{yu2015}. For each dataset and pair of GAN objectives, we report several metrics that encapsulate the stability of GAN training over hundreds of random seeds. This allows us to clearly 
showcase the potential for tuning $(\alpha_{D}, \alpha_{G})$ to obtain stable and robust solutions for image generation.
\vspace{-5pt}
\subsection{2D Gaussian Mixture Ring}

The 2D-ring is an oft-used synthetic dataset for evaluating GANs. We draw samples from a mixture of 8 equal-prior Gaussian distributions, indexed $i \in \{1,2,\hdots ,  8 \}$, with a mean of $(\cos(2\pi i / 8), \text{ } \sin(2\pi i / 8))$ and variance $10^{-4}$. We generate 50,000 training and 25,000 testing samples and the same number of 2D latent Gaussian noise vectors, where each entry is a standard Gaussian.

Both the D and G networks have 4 fully-connected layers with 200 and 400 units, respectively. 
We train for 400 epochs with a batch size of 128, and optimize with Adam \cite{kingma2014adam} and a learning rate of $10^{-4}$ for both models. We consider three distinct settings that differ in the objective functions as: \textbf{(i)} $(\alpha_{D}, \alpha_{G})$-GAN in \eqref{eqn:alpha_D,alpha_G-GAN}; \textbf{(ii)} NS $(\alpha_{D}, \alpha_{G})$-GAN's 
in \eqref{eqn:disc_obj}, \eqref{eqn:gen_obj_ns}; \textbf{(iii)} LSGAN with the 0-1 binary coding scheme (see Appendix \ref{appendix:experimental-details-results} for details).

For every setting listed above, we train our models on the 2D-ring dataset for 200 random state seeds, where each seed contains different weight initializations for D and G. Ideally, a stable method will reflect similar performance across randomized initializations and also over training epochs; thus, we explore how GAN training performance for each setting varies across seeds and epochs. Our primary performance metric is \textit{mode coverage}, defined as the number of Gaussians (0-8) that contain a generated sample within 3 standard deviations of its mean. A score of 8 conveys successful training, while a score of 0 conveys a significant GAN failure; on the other hand, a score in between 0 and 8 may be indicative of common GAN issues, such as mode collapse or failure to converge. 

For the saturating setting, the improvement in stability of the $(0.2,1)$-GAN relative to the vanilla GAN is illustrated in Figure \ref{fig:sat-figure} as detailed in the caption. 
Vanilla GAN fails to converge to the true distribution 30\% of the time while succeeding only 46\% of the time. In contrast, the $(\alpha_{D}, \alpha_{G})$-GAN with $\alpha_{D} < 1$ learns a more stable G due to a less confident D (see also Figure~\ref{fig:sat-figure}(a)). For example, the $(0.3,1)$-GAN success and failure rates improve to 87\% and 2\%, respectively. 
For the NS setting in Figure \ref{fig:NS}, we find that tuning $\alpha_D$ and $\alpha_G$ yields more consistently stable outcomes than vanilla and LSGANs. Mode coverage rates over 200 seeds for saturating (Tables \ref{table:2d-ring-sat-success-rates} and \ref{table:2d-ring-sat-failure-rates}) and NS (Table \ref{table:2d-ring-ns-success-rates}) are in Appendix \ref{appendix:experimental-details-results}.

\begin{figure}[t]
    \centering
    \includegraphics[width=0.5\linewidth]{./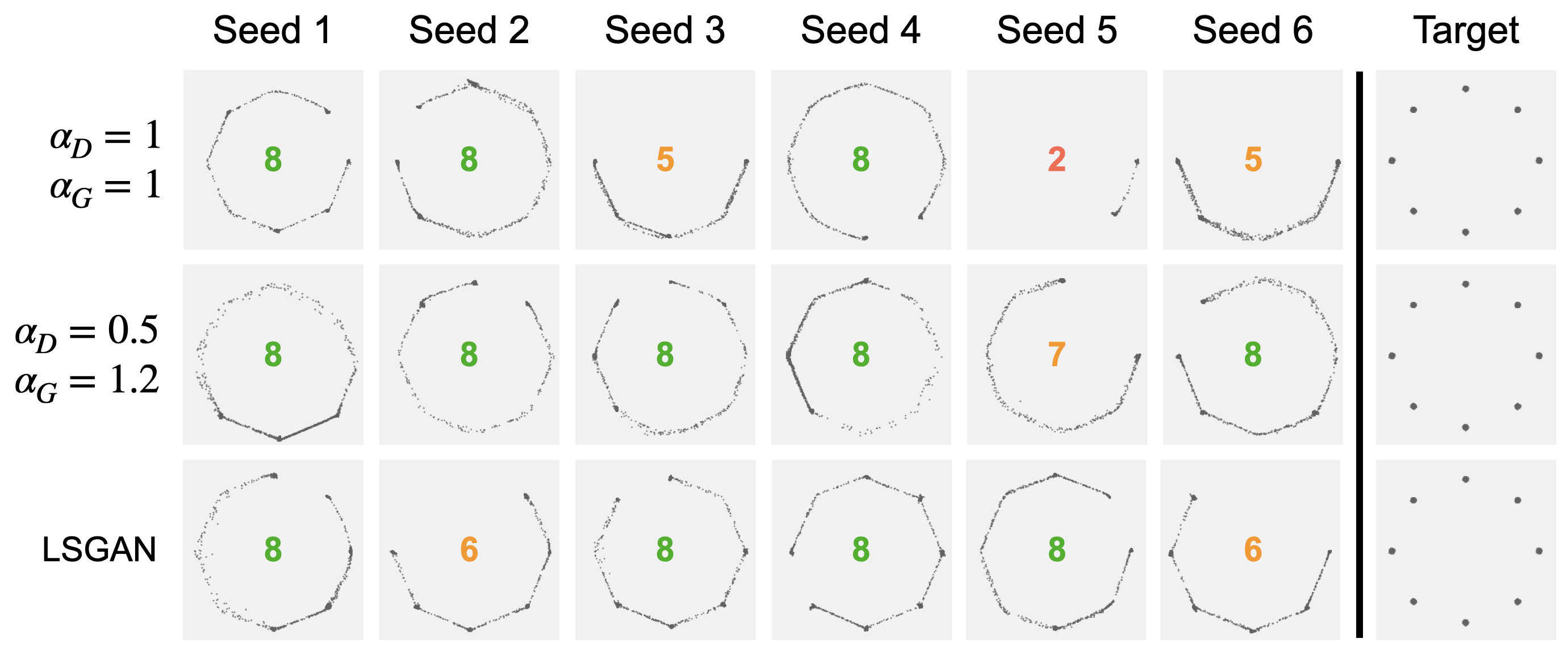}
    \caption{Generated samples from two $(\alpha_{D}, \alpha_{G})$-GANs trained with the \textbf{NS} objectives in \eqref{eqn:disc_obj}, \eqref{eqn:gen_obj_ns}, as well as LSGAN. We provide 6 seeds to illustrate the stability in performance for each GAN across multiple runs.}
    \label{fig:NS}
    \vspace{-0.075in}
\end{figure}

\vspace{-0.075in}

\subsection{Celeb-A \& LSUN Classroom}
\label{subsec:celeb-a&LSUN-experiments}
The Celeb-A dataset \cite{liu2015} is a widely recognized large-scale collection of over 200,000 celebrity headshots, encompassing images with diverse aspect ratios, camera angles, backgrounds, lighting conditions, and other variations. Similarly, the LSUN Classroom dataset \cite{yu2015} is a subset of the comprehensive Large-scale Scene Understanding (LSUN) dataset; it contains over 150,000 classroom images captured under diverse conditions and with varying aspect ratios. To ensure consistent input for the discriminator, we follow the standard practice of resizing the images to $64 \times 64$ for Celeb-A and $112 \times 112$ for LSUN Classroom. For both experiments, we randomly select 80\% of the images for training and leave the remaining 20\% for validation (evaluation of goodness metrics). Finally, for the generator, for each dataset, we generate a similar 80\%-20\% training-validation split of 100-dimensional latent Gaussian noise vectors, where each entry is a standard Gaussian, for a total matching the size of the true data. 

For training, we employ the DCGAN architecture \cite{radford2015} that leverages deep convolutional neural networks (CNNs) for both D and G. In Appendix \ref{appendix:experimental-details-results}, detailed descriptions of the D and G architectures can be found in Tables \ref{tab:arch-celeba} and \ref{tab:arch-lsun} for the Celeb-A and LSUN Classroom datasets, respectively. Following SOTA methods, we focus on the non-saturating setting, utilizing appropriate objectives for vanilla GAN, $(\alpha_{D}, \alpha_{G})$-GAN, and LSGAN. We consider a variety of learning rates, ranging from $10^{-4}$ to $10^{-3}$, for Adam optimization. We evaluate our models every 10 epochs up to a total of 100 epochs and report the Fréchet Inception Distance (FID), an unsupervised similarity metric between the real and generated feature distributions extracted by InceptionNet-V3~\cite{heusel2017fid}. For both datasets, we train each combination of objective function, number of epochs, and learning rate for 50 seeds. In the following subsections, we empirically demonstrate the dependence of the FID on learning rate and number of epochs for the vanilla GAN, $(\alpha_{D}, \alpha_{G})$-GAN, and LSGAN. Achieving robustness to hyperparameter initialization is especially desirable in the unsupervised GAN setting as the choices that facilitate steady model convergence are not easily determined \emph{a priori}. 


\begin{figure}[t]
    \centering
    \footnotesize
\setlength{\tabcolsep}{20pt}
\begin{tabular}{@{}cc@{}}
  \includegraphics[width=0.4\linewidth]{./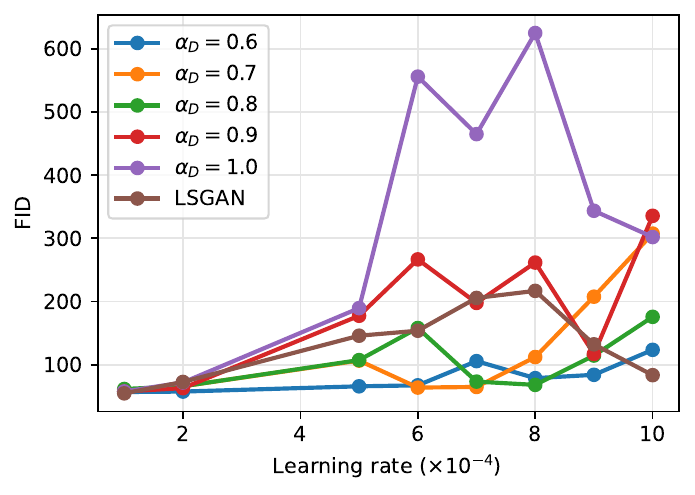}  & \includegraphics[width=0.4\linewidth]{./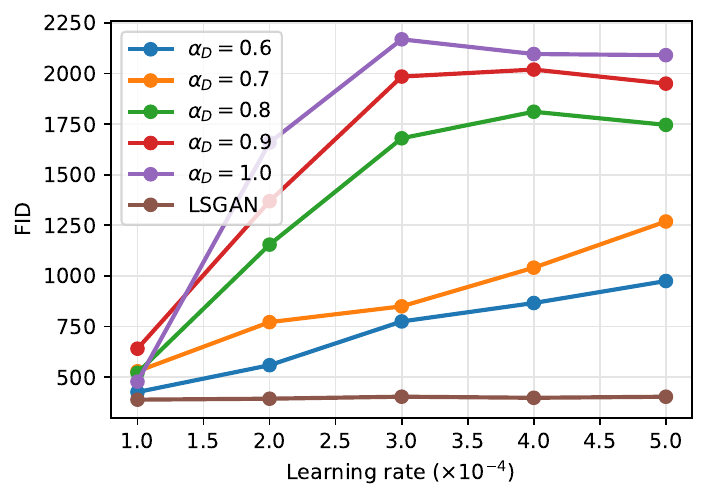} \\
   (a)  & (b)
\end{tabular}
\caption{(a) Plot of \textbf{Celeb-A} FID scores averaged over 50 seeds vs. learning rates for 6 different GANs, trained for 100 epochs. (b) Plot of \textbf{LSUN Classroom} FID scores averaged over 50 seeds vs. learning rates for 6 different GANs, trained for 100 epochs. } 
\label{fig:celeba-lsun-fid-lr}
\end{figure}

\begin{figure*}[t]
    \centering
    \footnotesize
\setlength{\tabcolsep}{5pt}
\begin{tabular}{@{}cc@{}}
  \includegraphics[height=3.6cm]{./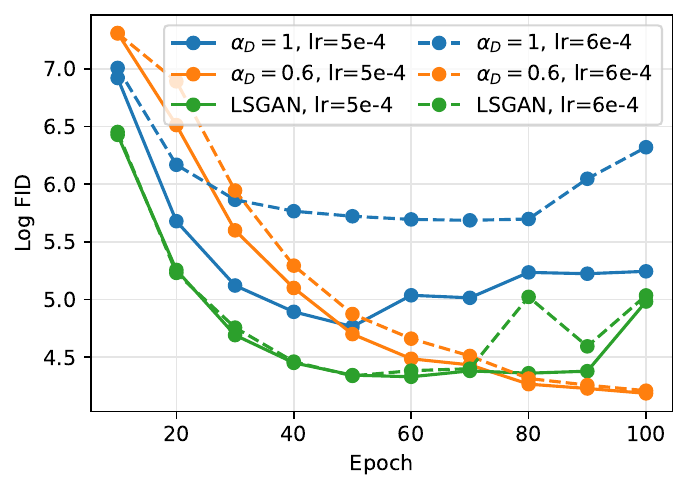}  & \includegraphics[height=3.6cm]{./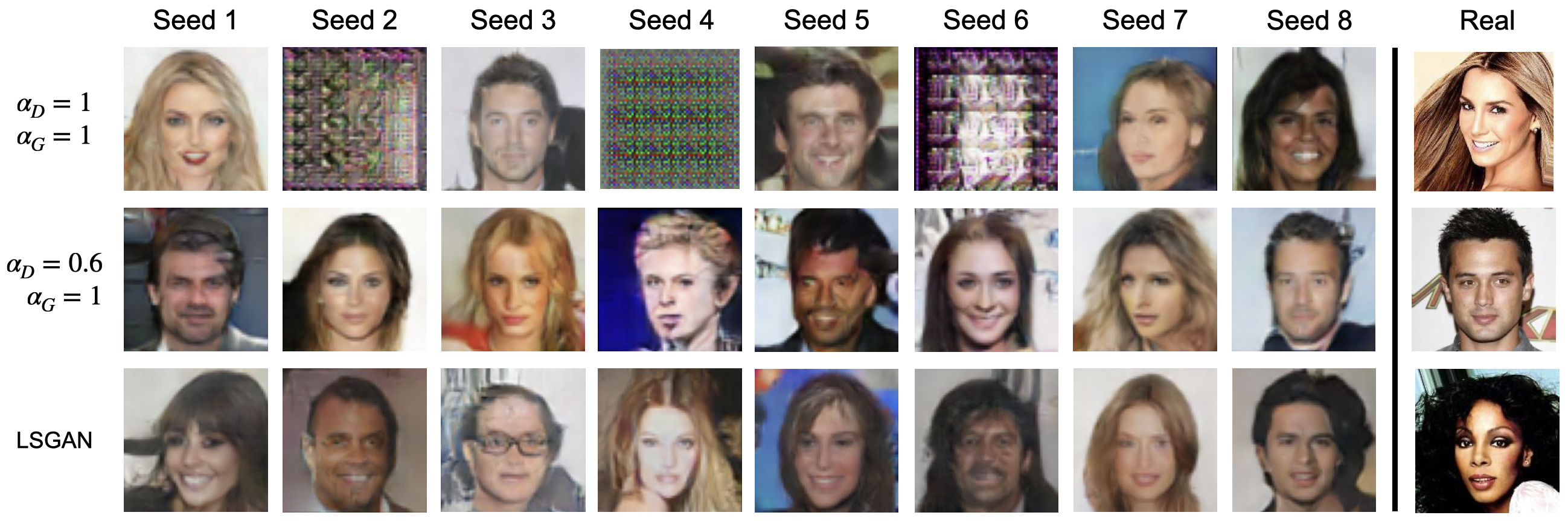} \\
   (a)  & (b)
\end{tabular}
\caption{(a) Log-scale plot of \textbf{Celeb-A} FID scores over training epochs in steps of 10 up to 100 total, for three noteworthy GANs-- $(1,1)$-GAN (vanilla), $(0.6,1)$-GAN, and LSGAN-- and for two similar learning rates-- $5 \times 10^{-4}$ and $6 \times 10^{-4}$. Results show that the vanilla GAN performance is sensitive to learning rate choice, while the other two GANs achieve consistently low FIDs. (b) Generated Celeb-A faces from the same three GANs over 8 seeds when trained for 100 epochs with a learning rate of $5 \times 10^{-4}$. These samples show that the vanilla $(1,1)$-GAN training is sensitive to random model weight initializations, while the other two GANs demonstrate both robustness to random weight initializations as well as realistic face generation.}
\label{fig:celeba-epochs-fid-faces}
\end{figure*}

\subsubsection{Celeb-A Results}

In Figure~\ref{fig:celeba-lsun-fid-lr}(a), we examine the relationship between learning rate and FID for each GAN trained for 100 epochs on the Celeb-A dataset. When using learning rates of $1 \times 10^{-4}$ and $2 \times 10^{-4}$, all GANs consistently perform well. However, when the learning rate increases,
 the vanilla $(1,1)$-GAN begins to exhibit instability across the 50 seeds. As the learning rate surpasses $5 \times 10^{-4}$, the performance of the vanilla GAN becomes even more erratic, underscoring the importance of GANs being robust to the choice of learning rate. Figure \ref{fig:celeba-lsun-fid-lr}(a) also demonstrates that the GANs with $\alpha_{D} < 1$ perform on par with, if not better than, the SOTA LSGAN. For instance, the $(0.6,1)$-GAN consistently achieves low FIDs across all tested learning rates. 


In Figure \ref{fig:celeba-epochs-fid-faces}(a), for different learning rates, we compare the dependence on the number of training epochs (hyperparameter)  of the vanilla $(1,1)$-GAN, $(0.6,1)$-GAN, and LSGAN by plotting their FIDs every 10 epochs, up to 100 epochs, for two similar learning rates: $5 \times 10^{-4}$ and $6 \times 10^{-4}$. We discover that the vanilla $(1,1)$-GAN performs significantly worse for the higher learning rate and deteriorates over time for both learning rates. Conversely, both the $(0.6,1)$-GAN and LSGAN consistently exhibit favorable FID performance for both learning rates. However, the $(0.6,1)$-GAN converges to a low FID, while the FID of the LSGAN slightly increases as training approaches 100 epochs. Finally, Fig. \ref{fig:celeba-epochs-fid-faces}(b) displays a grid of generated Celeb-A faces, randomly sampled over 8 seeds for three GANs trained for 100 epochs with a learning rate of $5 \times 10^{-4}$. Here, we observe that the faces generated by the $(0.6,1)$-GAN and LSGAN exhibit a comparable level of quality to the rightmost column images, which are randomly sampled from the real Celeb-A dataset. On the other hand, the vanilla $(1,1)$-GAN shows clear signs of performance instability, as some seeds yield high-quality images while others do not.

\subsubsection{LSUN Classroom Results}

In Figure \ref{fig:celeba-lsun-fid-lr}(b), we illustrate the relationship between learning rate and FID for GANs trained on the LSUN dataset for 100 epochs. In fact, when all GANs are trained with a learning rate of $1 \times 10^{-4}$, they consistently deliver satisfactory performance. 
However, increasing it to $2 \times 10^{-4}$ leads to instability in the vanilla $(1,1)$-GAN across 50 seeds. 

On the other hand, we observe that $\alpha_{D}<1$ contributes to stabilizing the FID across the 50 seeds even when trained with slightly higher learning rates. In Figure \ref{fig:celeba-lsun-fid-lr}(b), we see that as $\alpha_{D}$ is tuned down to 0.6, the mean FIDs consistently decrease across all tested learning rates. These lower FIDs can be attributed to the increased stability of the network. 
Despite the gains in GAN stability achieved by tuning down $\alpha_{D}$, Figure \ref{fig:celeba-lsun-fid-lr} demonstrates a noticeable disparity between the best $(\alpha_{D}, \alpha_{G})$-GAN and the SOTA LSGAN. This suggests that there is still room for improvement in generating high-dimensional images with $(\alpha_{D}, \alpha_{G})$-GANs. 

In Appendix \ref{appendix:experimental-details-results}, Figure \ref{fig:lsun_epochs_fids_images}(a), we illustrate the average FID throughout the training process for three GANs: $(1,1)$-GAN, $(0.6,1)$-GAN, and LSGAN, using two different learning rates: $1 \times 10^{-4}$ and $2 \times 10^{-4}$. These findings validate that the vanilla $(1,1)$-GAN performs well when trained with the lower learning rate, but struggles significantly with the higher learning rate. In contrast, the $(0.6,1)$-GAN exhibits less sensitivity to learning rate, while the LSGAN achieves nearly identical scores for both learning rates. 
In Figure \ref{fig:lsun_epochs_fids_images}(b), we showcase the image quality generated by each GAN at epoch 100 with the higher learning rate. This plot highlights that the vanilla $(1,1)$-GAN frequently fails during training, whereas the $(0.6,1)$-GAN and LSGAN produce images that are more consistent in mimicking the real distribution.
Finally, we present the FID vs. learning rate results for both datasets in Table \ref{table:celeba-lsun-stability} in Appendix \ref{appendix:experimental-details-results}. This allows yet another way to evaluate performance by comparing the percentage (out of 50 seeds) of FID scores below a desired threshold for each dataset, as detailed in the appendix.


\section{Conclusion}
Building on our prior work introducing CPE loss GANs and $\alpha$-GANs, we have introduced new results on the equivalence of CPE loss GANs and $f$-GANs, convergence properties of the symmetric $f$-divergences induced by CPE loss GANs under certain conditions, and the generalization and estimation error for CPE loss GANs including $\alpha$-GANs. We have introduced a dual-objective GAN formulation, focusing in particular on using $\alpha$-loss with potentially different $\alpha$ values for both players' objectives. GANs offer an alternative to diffusion models in being faster to train but training instabilities stymie such advantages. In this context, our results are very promising and highlight how tuning $\alpha$ can not only alleviate training instabilities but also enhance robustness to learning rates and training epochs, hyperparameters whose optimal values are generally not known \emph{a priori}. A natural extension to our work is to define and study generalization of dual-objective GANs. An equally important problem is to evaluate if our observations hold more broadly, including, when the training data is noisy \cite{nietert2022outlier}. 

While different $f$-divergence based GANs have been introduced, no principled reasons have been proposed thus far for choosing a specific $f$-divergence measure and corresponding loss functions to optimize. Even in the more practical finite sample and model capacity settings, different choices of objectives, as shown earlier, lead to different neural network divergence measures.
Using tunable losses, our work has the advantage of motivating the choice of appropriate loss functions and the resulting $f$-divergence/neural network divergence from the crucial viewpoint of avoiding training instabilities. This connection between loss functions and divergences to identify the appropriate measure of goodness can be of broader interest both to the IT and ML communities.


\bibliographystyle{IEEEtran}
\bibliography{Bibliography}

\newpage
\appendices
\nobalance

\section{Proof of Theorem~\ref{thm:correspondence}}\label{apndx:proof-of-thm1}
Consider a symmetric CPE loss $\ell(y,\hat{y})$, i.e., $\ell(1,\hat{y})=\ell(0,1-\hat{y})$. We may define an associated margin-based loss using an increasing bijective link function $l:\mathbb{R}\rightarrow [0,1]$ as
\begin{align}\label{eqn:margin-from-CPE}
    \tilde{\ell}(t):=\ell(1,l(t)),
\end{align}
where the link $l$ satisfies the following mild regularity conditions: 
\begin{align}
l(-t)=1-l(t)\label{eqn:regularityonlink},\\
l(0)=\frac{1}{2}\label{eqn:regularityonlink1},\\
l^{-1}(t)+l^{-1}(1-t)=0\label{eqn:regularityonlink2}
\end{align}
(e.g., sigmoid function, $\sigma(t)=1/(1+\mathrm{e}^{-t})$ satisfies this condition). Consider the inner optimization problem in \eqref{eq:Goodfellowobj} with the value function in \eqref{eqn:lossfnps1} for this CPE loss $\ell$.
\begin{align}
    &\sup_\omega\int_{\mathcal{X}}(-p_r(x)\ell(1,D_\omega(x))-p_{G_\theta}(x)\ell(0,D_\omega(x)))\ dx\nonumber\\
    &= \int_{\mathcal{X}}\sup_{p_x\in[0,1]}(-p_r(x)\ell(1,p_x)-p_{G_\theta}(x)\ell(0,p_x))\ dx\label{eqn:thm1proof6}\\
   &=\int_{\mathcal{X}}\sup_{p_x\in[0,1]}(-p_r(x)\ell(1,p_x)-p_{G_\theta}(x)\ell(1,1-p_x))\ dx\label{eqn:thm1proof1}\\
    &=\int_{\mathcal{X}}\sup_{t_x\in\mathbb{R}}(-p_r(x)\ell(1,l(t_x))-p_{G_\theta}(x)\ell(1,1-l(t_x)))dx\\
        &=\int_{\mathcal{X}}\sup_{t_x\in\mathbb{R}}(-p_r(x)\ell(1,l(t_x))-p_{G_\theta}(x)\ell(1,l(-t_x)))\ dx\label{eqn:thm1proof2}\\
    &=\int_{\mathcal{X}}\sup_{t_x\in\mathbb{R}}(-p_r(x)\tilde{\ell}(t_x)-p_{G_\theta}(x)\tilde{\ell}(-t_x)\ dx\label{eqn:thm1proof3}\\
   & =\int_{\mathcal{X}}p_{G_\theta}(x)\left(-\inf_{t_x\in\mathbb{R}}\left(\tilde{l}(-t_x)+\frac{p_r(x)}{p_{G_\theta}(x)}\tilde{l}(t_x)\right)\right) dx\label{eqn:eqn:tm1proof4}
\end{align}
where \eqref{eqn:thm1proof1} follows because the CPE loss $\ell(y,\hat{y})$ is symmetric, \eqref{eqn:thm1proof2} follows from \eqref{eqn:regularityonlink}, and \eqref{eqn:thm1proof3} follows from the definition of the margin-based loss $\tilde{\ell}$ in \eqref{eqn:margin-from-CPE}. Now note that the function $f$ defined as
\begin{align}\label{eqn:tm1proof5}
    f(u)=-\inf_{t\in\mathbb{R}}\left(\tilde{\ell}(-t)+u\tilde{\ell}(t)\right), \quad u \ge 0
\end{align}
is convex since the infimum of affine functions is concave (observed earlier in \cite{NguyenWJ09} in a correspondence between margin-based loss functions and $f$-divergences). So, from \eqref{eqn:eqn:tm1proof4}, we get
\begin{align}
    \sup_\omega\int_{\mathcal{X}}&(-p_r(x)\ell(1,D_\omega(x))-p_{G_\theta}(x)\ell(0,D_\omega(x)))\ dx\nonumber\\
    &=\int_{\mathcal{X}}p_{G_\theta}(x)f\left(\frac{p_r(x)}{p_{G_\theta}(x)}\right)\ dx\\
    &=D_f(P_r\|P_{G_\theta}).
\end{align}
Thus, the resulting min-max optimization in \eqref{eqn:GANgeneral-background} reduces to minimizing the $f$-divergence, $D_f(P_r\|P_{G_\theta})$ with $f$ as given in \eqref{eqn:tm1proof5}.

For the converse statement, first note that given a symmetric $f$-divergence, it follows from \cite[Theorem~1(b) and Corollary~3]{NguyenWJ09} that there exists a decreasing and convex margin-based loss function $\tilde{\ell}$ such that $f$ can be expressed in the form
\eqref{eqn:tm1proof5}. We may define an associated symmetric CPE loss $\ell(y,\hat{y})$ with
\begin{align}\label{eqn:margintocpe}
\ell(1,\hat{y}):=\tilde{\ell}(l^{-1}(\hat{y})),
\end{align}
where $l^{-1}$ is the inverse of the same link function. Now repeating the steps as in $\eqref{eqn:thm1proof6}-\eqref{eqn:eqn:tm1proof4}$, it is clear that the GAN based on this (symmetric) CPE loss results in minimizing the same symmetric $f$-divergence. It remains to verify that the symmetric CPE loss defined in \eqref{eqn:margintocpe} is such that $\ell(1,\hat{y})$ is decreasing so that the intuitive interpretation of vanilla GAN is retained and that it satisfies \eqref{eqn:condnonfnsforGAN} so that the optimal discriminator guesses uniformly at random when $P_r=P_{G_\theta}$. Note that $\ell^\prime(1,\hat{y})=\tilde{\ell}^\prime(l^{-1}(\hat{y}))(l^{-1})^\prime(\hat{y})\leq 0$ since the margin-based loss $\tilde{\ell}$ is decreasing and the link function $l$ (and hence its inverse) is increasing. So, $\ell(1,\hat{y})$ is decreasing. Observe that the loss function $\ell(1,\hat{y})=\tilde{\ell}(l^{-1}(\hat{y}))$ may not be convex in $y$ even though the margin-based loss function $\tilde{\ell}(\cdot)$ is convex. However, we show that the symmetric CPE loss associated with \eqref{eqn:margintocpe} indeed satisfies \eqref{eqn:condnonfnsforGAN}.  
\begin{align}
    -\ell(1,t)-\ell(0,t)&=-\ell(1,t)-\ell(1,1-t)\\
    &=-\tilde{\ell}(l^{-1}(t))-\tilde{\ell}(l^{-1}(1-t))\\
    &\leq -2\tilde{\ell}\left(\frac{1}{2}l^{-1}(t)+\frac{1}{2}l^{-1}(1-t)\right)\label{eqn:convexofmargin}\\
    &=-2\tilde{\ell}(0)\label{eqn:linkeqn}\\
    &=-2\tilde{\ell}\left(l^{-1}\left(\frac{1}{2}\right)\right)\label{eqn:linkeqn2}\\
    &=-\ell\left(1,\frac{1}{2}\right)-\ell\left(0,\frac{1}{2}\right),
\end{align}
where \eqref{eqn:convexofmargin} follows since the margin-based loss $\tilde{\ell}(\cdot)$ is convex, and \eqref{eqn:linkeqn} and \eqref{eqn:linkeqn2} follow from \eqref{eqn:regularityonlink1} and \eqref{eqn:regularityonlink2}, respectively.

\section{Proof of Theorem~\ref{thm:alpha-GAN}}\label{proofofthm1}
For a fixed generator, $G_\theta$, we first solve the optimization problem
\begin{align}
   \sup_{\omega\in\Omega}\int_\mathcal{X}\frac{\alpha}{\alpha-1}\left(p_r(x)D_\omega(x)^{\frac{\alpha-1}{\alpha}}+p_{G_\theta}(x)(1-D_\omega(x))^{\frac{\alpha-1}{\alpha}}\right)dx.
\end{align}
Consider the function
\begin{align}
    g(y)=\frac{\alpha}{\alpha-1}\left(ay^{\frac{\alpha-1}{\alpha}}+b(1-y)^{\frac{\alpha-1}{\alpha}}\right),
\end{align}
for $a,b>0$ and $y\in[0,1]$. To show that the optimal discriminator is given by the expression in \eqref{eqn:optimaldoisc}, it suffices to show that $g(y)$ achieves its maximum in $[0,1]$ at $y^*=\frac{a^\alpha}{a^\alpha+b^\alpha}$. Notice that for $\alpha>1$, $y^{\frac{\alpha-1}{\alpha}}$ is a concave function of $y$, meaning the function $g$ is concave. For $0<\alpha<1$, $y^{\frac{\alpha-1}{\alpha}}$ is a convex function of $y$, but since $\frac{\alpha}{\alpha-1}$ is negative, the overall function $g$ is again concave. Consider the derivative 
    $g^\prime(y^*)=0$,
which gives us
\begin{align}
    y^*=\frac{a^\alpha}{a^\alpha+b^\alpha}.
\end{align}
This gives \eqref{eqn:optimaldoisc}. With this, the optimization problem in \eqref{eqn:minimaxalphaGAN} can be written as $\inf_{\theta\in\Theta}C(G_\theta)$,
where
\begin{align}
   C(G_\theta)&=\frac{\alpha}{\alpha-1}
   \left[\int_\mathcal{X}\left(p_r(x)D_{\omega^*}(x)^{\frac{\alpha-1}{\alpha}}+p_{G_\theta}(x)(1-D_{\omega^*}(x))^{\frac{\alpha-1}{\alpha}}\right)dx-2\right]\\
   &=\frac{\alpha}{\alpha-1}\Bigg[\int_\mathcal{X}\Bigg(p_r(x)\left( \frac{p_r(x)^\alpha}{p_r(x)^\alpha+p_{G_\theta}(x)^\alpha}\right)^{\frac{\alpha-1}{\alpha}}+p_{G_\theta}(x)\left( \frac{p_r(x)^\alpha}{p_r(x)^\alpha+p_{G_\theta}(x)^\alpha}\right)^{\frac{\alpha-1}{\alpha}}\Bigg)dx-2\Bigg]\\
    &=\frac{\alpha}{\alpha-1}\left(\int_{\mathcal{X}}\left(p_r(x)^\alpha+p_{G_\theta}(x)^\alpha\right)^{\frac{1}{\alpha}}dx-2\right)\\
    &=D_{f_\alpha}(P_r||P_{G_\theta})+\frac{\alpha}{\alpha-1}\left(2^{\frac{1}{\alpha}}-2\right),
\end{align}
where for the convex function $f_\alpha$ in \eqref{eqn:falpha},
\begin{align}
    D_{f_\alpha}(P_r||P_{G_\theta})=\int_\mathcal{X} p_{G_\theta}(x)f_\alpha\left(\frac{p_r(x)}{p_{G_\theta}(x)}\right) dx=\frac{\alpha}{\alpha-1}\left(\int_{\mathcal{X}}\left(p_r(x)^\alpha+p_{G_\theta}(x)^\alpha\right)^{\frac{1}{\alpha}}dx-2^{\frac{1}{\alpha}}\right).
\end{align}
This gives us \eqref{eqn:inf-obj-alpha}. Since $D_{f_\alpha}(P_r||P_{G_\theta})\geq 0$ with equality if and only if $P_r=P_{G_\theta}$, we have $C(G_\theta)\geq \frac{\alpha}{\alpha-1}\left(2^{\frac{1}{\alpha}}-2\right)$ with equality if and only if $P_r=P_{G_\theta}$.
\balance
\section{Proof of Theorem~\ref{thm:fgans}}\label{proofoftheorem2}
First, using L'H\^{o}pital's rule we can verify that, for $a,b>0$,
\begin{align}
\lim_{\alpha\rightarrow 1}\frac{\alpha}{\alpha-1}\left(\left(a^\alpha+b^\alpha\right)^{\frac{1}{\alpha}}-2^{\frac{1}{\alpha}-1}(a+b)\right)
=a\log{\left(\frac{a}{\frac{a+b}{2}}\right)}+b\log{\left(\frac{b}{\frac{a+b}{2}}\right)}.
\end{align}
Using this, we have
\begin{align}
D_{f_1}(P_r||P_{G_\theta})&\coloneqq\lim_{\alpha\rightarrow 1}D_{f_\alpha}(P_r||P_{G_\theta})\\
&=\lim_{\alpha\rightarrow 1}\frac{\alpha}{\alpha-1}\left(\int_\mathcal{X}\left(p_r(x)^\alpha+p_{G_\theta}(x)^\alpha\right)^{\frac{1}{\alpha}}dx-2^{\frac{1}{\alpha}}\right)\\
&=\lim_{\alpha\rightarrow 1}\Bigg[\frac{\alpha}{\alpha-1}\int_\mathcal{X}\big(\big(p_r(x)^\alpha+p_{G_\theta}(x)^\alpha\big)^{\frac{1}{\alpha}}-2^{\frac{1}{\alpha}-1}(p_r(x)+p_{G_\theta}(x))\big)dx\Bigg]\label{eqn:lim-integrand}\\
&=\int_{\mathcal{X}}p_r(x)\log{\frac{p_r(x)}{\left(\frac{p_r(x)+p_{G_\theta}(x)}{2}\right)}}dx+\int_{\mathcal{X}}p_{G_\theta}(x)\log{\frac{p_{G_\theta}(x)}{\left(\frac{p_r(x)+p_{G_\theta}(x)}{2}\right)}}dx\label{eqn:lim-int-interchange}\\
&=:2D_{\text{JS}}(P_r||P_{G_\theta})\label{eqn:JSD},
\end{align}
where {\eqref{eqn:lim-int-interchange} follows by interchanging the limit and the integral by invoking the dominated convergence theorem because of the boundedness of $f_\alpha$~\cite[Theorem~8]{LieseV06}} and  $D_{\text{JS}}(\cdot||\cdot)$ in \eqref{eqn:JSD} is the Jensen-Shannon divergence.
Now, as $\alpha\rightarrow 1$, \eqref{eqn:inf-obj-alpha} equals $\inf_{\theta\in\Theta}2D_{\text{JS}}(P_r||P_{G_\theta})-\log{4}$ recovering the vanilla GAN.

Substituting $\alpha=\frac{1}{2}$ in \eqref{eqn:alpha-divergence}, we get
\begin{align}
    D_{f_{\frac{1}{2}}}(P_r||P_{G_\theta})&=-\int_\mathcal{X}\left(\sqrt{p_r(x)}+\sqrt{p_{G_\theta}(x)}\right)^2dx+4\\
    &=\int_{\mathcal{X}}\left(\sqrt{p_r(x)}-\sqrt{p_{G_\theta}(x)}\right)^2dx\\
    &=:2D_{\text{H}^2}(P_r||P_{G_\theta}),
\end{align}
where $D_{\text{H}^2}(P_r||P_{G_\theta})$ is the squared Hellinger distance. For $\alpha=\frac{1}{2}$, \eqref{eqn:inf-obj-alpha} gives $2\inf_{\theta\in\Theta}D_{\text{H}^2}(P_r||P_{G_\theta})-2$ recovering Hellinger GAN (up to a constant). 

Noticing that, for $a,b>0$, $\lim_{\alpha\rightarrow \infty}\left(a^\alpha+b^\alpha\right)^{\frac{1}{\alpha}}=\max\{a,b\}$ and defining $\mathcal{A}:=\{x\in\mathcal{X}:p_r(x)\geq p_{G_\theta}(x)\}$, we have
\begin{align}
D_{f_\infty}(P_r||P_{G_\theta})&\coloneqq \lim_{\alpha\rightarrow\infty}D_{f_\alpha}(P_r||P_{G_\theta})\\
&=\lim_{\alpha\rightarrow\infty}\frac{\alpha}{\alpha-1}\left(\int_\mathcal{X}\left(p_r(x)^\alpha+p_{G_\theta}(x)^\alpha\right)^{\frac{1}{\alpha}}dx-2^{\frac{1}{\alpha}}\right)\label{integrand1}\\
&=\int_\mathcal{X}\max\{p_r(x),p_{G_\theta}(x)\}\ dx-1\label{eqn:lim-interchange}\\
&=\int_{\mathcal{X}}\max\{p_r(x)-p_{G_\theta}(x),0\}\ dx\\
&=\int_{\mathcal{A}}(p_r(x)-p_{G_\theta}(x))\ dx\\
&=\int_{\mathcal{A}}\frac{p_r(x)-p_{G_\theta}(x)}{2}\ dx+\int_{\mathcal{A}^c}\frac{p_{G_\theta}(x)-p_r(x)}{2}\ dx\\
&=\frac{1}{2}\int_{\mathcal{X}}\left|p_r(x)-p_{G_\theta}(x)\right|\ dx\\
&=:D_{\text{TV}}(P_r||P_{G_\theta})\label{eqn:TVD},
\end{align}
where {\eqref{eqn:lim-interchange} follows by interchanging the limit and the integral by invoking the dominated convergence theorem because of the boundedness of $f_\alpha$~\cite[Theorem~10]{LieseV06}} and $D_{\text{TV}}(P_r||P_{G_\theta})$ in \eqref{eqn:TVD} is the total variation distance between $P_r$ and $P_{G_\theta}$. Thus, as $\alpha\rightarrow\infty$, \eqref{eqn:inf-obj-alpha} equals $\inf_{\theta\in\Theta}D_{\text{TV}}(P_r||P_{G_\theta})-1$ recovering TV-GAN (modulo a constant).

See Fig.~\ref{fig:plotofdivergence} for an illustration of the behavior of $D_{f_\alpha}$ for different values of $\alpha$.

\begin{figure}[t]
\centering
\footnotesize
{\includegraphics[width=0.4\linewidth]{./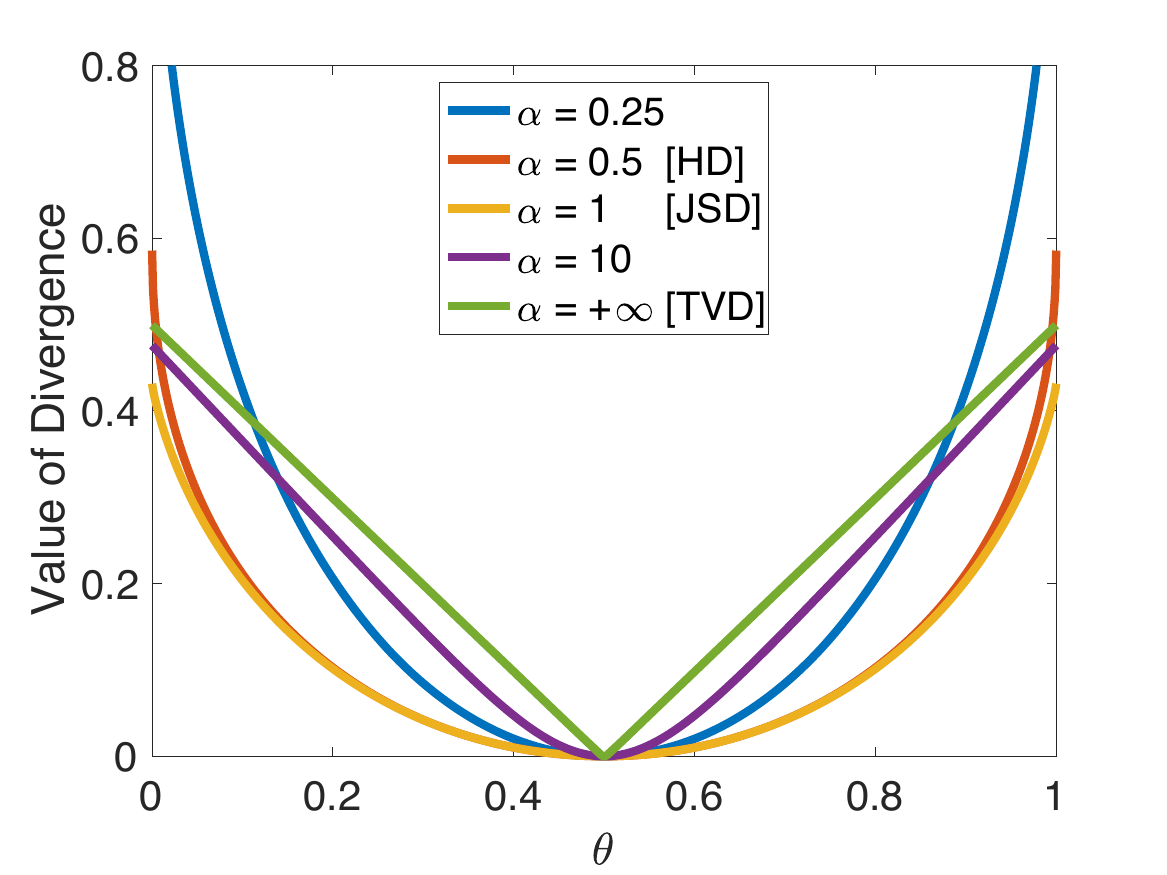}} \\

\caption{A plot of $D_{f_{\alpha}}$ in~\eqref{eqn:alpha-divergence} for several values of $\alpha$ where $p \sim \text{Ber}(1/2)$ and $q \sim \text{Ber}(\theta)$. Note that HD, JSD, and TVD, are abbreviations for Hellinger, Jensen-Shannon, and Total Variation divergences, respectively. As $\alpha \rightarrow 0$, the curvature of the divergence increases, placing increasingly more weight on $\theta \neq 1/2$. 
Conversely, for $\alpha \rightarrow \infty$, $D_{f_{\alpha}}$ quickly resembles $D_{f_{\infty}}$, hence a saturation effect of $D_{f_{\alpha}}$.}
\label{fig:plotofdivergence}
\end{figure}

\section{Proof of Theorem \ref{thm:obj-equiv-gen}}
\label{appendix:obj-equiv-gen}
We first derive the Fenchel conjugate $\Tilde{f}^*_\alpha$ of $\Tilde{f}_\alpha$ as follows:
\begin{align}
        \Tilde{f}_\alpha^* (t) = \underset{u}{\sup} \, \left(ut -  \Tilde{f}_\alpha(u)\right) = \frac{\alpha}{\alpha-1} \, \underset{u}{\sup} \, \left(1+\left(1+\frac{\alpha-1}{\alpha} t\right)u - (1+u^\alpha)^{\frac{1}{\alpha}} \right).
\end{align}
The optimum $u_*$ is obtained by setting the derivative of $ut -  \Tilde{f}_\alpha(u)$ to zero, yielding
\begin{align}
1+\frac{\alpha-1}{\alpha} t = u^{\alpha-1}_*(1+u^\alpha_*)^{\frac{1}{\alpha}-1} = \Big(\frac{u_*^\alpha}{1+u_*^\alpha} \Big)^\frac{\alpha-1}{\alpha},
\label{eq:deriv}
\end{align}
i.e.,
\begin{align}
u_*=u_*(t) = \Big(\frac{s(t)}{1-s(t)}\Big)^{\frac{1}{\alpha}}
\label{eq:ustar}
\end{align}
with
\begin{align}
    s(t)=\left(1+\frac{\alpha-1}{\alpha} t\right)^{\frac{\alpha}{\alpha-1}}.
\label{eq:s}
\end{align}
The verification that $u_*$ is a global maximizer over $u\ge0$ follows from
\begin{align*}
    (ut -  \Tilde{f}_\alpha(u))^{\prime\prime} = -\Big(\frac{u^\alpha}{1+u^\alpha}\Big)^{\alpha-1}(1+u^\alpha)^{-2}\alpha u^{\alpha-1}<0
\end{align*}
for all $u > 0$.
The relations \eqref{eq:deriv} and \eqref{eq:ustar} then lead to
\begin{align}
        \Tilde{f}_\alpha^* (t) = u_*(t)t -  \Tilde{f}_\alpha(u_*(t)) &= \frac{\alpha}{\alpha-1}\left(1+\left(1+\frac{\alpha-1}{\alpha} t\right)u_*(t) - (1+u_*(t)^\alpha)^{\frac{1}{\alpha}} \right) \nonumber \\ 
        &= \frac{\alpha}{\alpha-1}\left(1- (1+u_*(t)^\alpha)^{\frac{1}{\alpha}-1} \right) \nonumber \\
        & = \frac{\alpha}{\alpha-1}\left(1- (1-s(t))^{\frac{\alpha-1}{\alpha}} \right),
        \label{eq:fconj}
\end{align}
where $s$ is given by \eqref{eq:s}. The domain $\text{dom}(\Tilde{f}_\alpha^*)$ consists of values $t$ such that $1+\frac{\alpha-1}{\alpha} t \ge 0$ and $s(t) \le 1$, i.e., $t \in [-\frac{\alpha}{\alpha-1},0]$ for $\alpha>1$ and $t \le 0$ for $\alpha\in(0,1)$. Also note that 
\begin{equation*}
    \Tilde{f}_1^*(t)=\lim_{\alpha \to 1} \Tilde{f}_\alpha^*(t)=\lim_{\alpha \to 1} \frac{\alpha}{\alpha-1}\left(1- (1-s(t))^{\frac{\alpha-1}{\alpha}} \right)= -\log(1-e^t)
\end{equation*} for $t\le0$, where $s$ is again given by \eqref{eq:s}.

In the following we consider $\alpha \ne 1$ with results also valid for $\alpha =1$ by continuity. Let $v \in \overline{\mathbb R}$ and consider
\begin{equation}
    d = s(g_{f_\alpha}(v)) = \left(1+\frac{\alpha-1}{\alpha} g_{f_\alpha}(v)\right)^\frac{\alpha}{\alpha-1} 
    \label{eq:d-gen}.
\end{equation}
We first show that $d \in [0,1]$ and then show that \eqref{eq:thm1-gen} is satisfied. 

If $\alpha >1$, then $g_{f_\alpha}(v) \in [-\frac{\alpha}{\alpha-1},0]=\text{dom}(\Tilde{f}^*_\alpha)$. Therefore, $d \in [0,1]$. If $\alpha \in (0,1)$, then $g_{f_\alpha}(v) \in [-\infty,0]=\text{dom}(\Tilde{f}^*_\alpha)$. Therefore, $\left(1+\frac{\alpha-1}{\alpha} g_{f_\alpha}(v)\right) \in [1,\infty]$, and hence $d \in [0,1]$.

Using \eqref{eq:d-gen},
\begin{equation*}
    \ell_\alpha(1,d) = \frac{\alpha}{\alpha-1}\left(1-d^{\frac{\alpha-1}{\alpha}} \right) = \frac{\alpha}{\alpha-1}\left(1-s(g_{f_\alpha}(v))^{\frac{\alpha-1}{\alpha}} \right) =  -g_{f_\alpha}(v),
\end{equation*}
and
\begin{equation*}
    \ell_\alpha(0,d) =\frac{\alpha}{\alpha-1}\left(1-\left(1-d\right)^{\frac{\alpha-1}{\alpha}} \right) = \frac{\alpha}{\alpha-1}\left(1-\left(1-s(g_{f_\alpha}(v))\right)^{\frac{\alpha-1}{\alpha}} \right) =\Tilde{f}^*_\alpha(g_{f_\alpha}(v)).
\end{equation*}
Conversely, let $d \in [0,1]$ and consider
\begin{equation}
    v = g_{f_\alpha}^{-1}\left(-\ell_\alpha(1,d)\right) =g_{f_\alpha}^{-1}\left(\frac{\alpha}{\alpha-1}(d^\frac{\alpha-1}{\alpha} - 1)\right).
    \label{eq:v-gen}
\end{equation}
We first show that $v \in \overline{\mathbb R}$ and then show that \eqref{eq:thm1-gen} is satisfied. 

If $\alpha >1$, then $-\ell_\alpha(1,d) \in [-\frac{\alpha}{\alpha-1},0]=\text{dom}(\Tilde{f}^*_\alpha)$. Therefore, $v \in \overline{\mathbb R}$. If $\alpha \in (0,1)
$, then $d^\frac{\alpha-1}{\alpha} \in [0,\infty]$ and $-\ell_\alpha(1,d) \in [-\infty,0]=\text{dom}(\Tilde{f}^*_\alpha)$. Hence, $v \in \overline{\mathbb R}$.

Using \eqref{eq:v-gen},
\begin{equation*}
    g_{f_\alpha}(v) = -\ell_\alpha(1,d),
\end{equation*}
and
\begin{equation*}
s(g_{f_\alpha}(v)) = \left(1+\frac{\alpha-1}{\alpha} g_{f_\alpha}(v) \right)^\frac{\alpha}{\alpha-1}=\left(1+\frac{\alpha-1}{\alpha} \left(\frac{\alpha}{\alpha-1}(d^\frac{\alpha-1}{\alpha} -1) \right) \right)^\frac{\alpha}{\alpha-1} = d,
\end{equation*}
so that
\begin{equation*}
\Tilde{f}^*_\alpha(g_{f_\alpha}(v)) = \frac{\alpha}{\alpha-1}\left(1-\left(1-s(g_{f_\alpha}(v))\right)^{\frac{\alpha-1}{\alpha}} \right) = \frac{\alpha}{\alpha-1}\left(1-\left(1-d\right)^{\frac{\alpha-1}{\alpha}} \right) = \ell_\alpha(0,d).
\end{equation*}

\section{Proof of Corollary \ref{corollary:equivalence-falphaGAN-alphaGAN}}
\label{appendix:equivalence-falphaGAN-alphaGAN}

For $Q_\omega \in A$ define $D_\omega \in B$ such that $d=D_\omega(x)$ is obtained from \eqref{eq:d-gen} with $v = Q_\omega(x)$ for all $x \in \mathcal{X}$. By Theorem \ref{thm:obj-equiv-gen}, $g(Q_\omega)=h(D_\omega)$. Conversely, for $D_\omega \in B$ define $Q_\omega \in A$ such that $v=Q_\omega(x)$ is obtained from \eqref{eq:v-gen} with $d=D_\omega(x)$ for all $x \in \mathcal{X}$. Again by Theorem \ref{thm:obj-equiv-gen}, $h(D_\omega)=g(V_\omega)$.

To show that $k$ is bijective, we first show that $s:\text{dom}(\Tilde{f}_\alpha^*)\to [-\infty,1]$ defined in \eqref{eq:s} is bijective. Let the function $s^{-1}:[-\infty,1]\to\text{dom}(\Tilde{f}_\alpha^*)$ be defined by $s^{-1}(u)=\frac{\alpha}{\alpha-1}(u^\frac{\alpha-1}{\alpha}-1)$. Let $t \in \text{dom}(\Tilde{f}_\alpha^*)$. Then
\[s^{-1}(s(t)) = \frac{\alpha}{\alpha-1}\left[\left(\left(1+\frac{\alpha-1}{\alpha} t\right)^\frac{\alpha}{\alpha-1} \right)^\frac{\alpha-1}{\alpha} -1\right] = t.\]
Now, let $u \in [-\infty,1]$. Then
\[s(s^{-1}(u)) = \left(1+\frac{\alpha-1}{\alpha} \left(\frac{\alpha}{\alpha-1}\left(u^\frac{\alpha-1}{\alpha}-1\right) \right) \right)^\frac{\alpha}{\alpha-1} = u.\]
Therefore, $s^{-1}$ is the inverse of $s$, and hence $s$ is bijective. As the composition of two bijective functions, $k$ is also bijective.

\section{Proof of Theorem~\ref{theorem:equivalence-fGAN-CPEGAN}}\label{proofofthm-equv}
As noted in the proof of Theorem~\ref{thm:correspondence}, given a symmetric $f$-divergence, it follows from \cite[Theorem~1(b) and Corollary~3]{NguyenWJ09} that there exists a CPE (partial) loss ${\ell}$ such that 
\begin{align}\label{thm-equvprof:1}
f(u)=\sup_{t\in[0,1]}-u\ell(t)-\ell(1-t).
\end{align}
We assume that the loss $l$ is strictly convex as mentioned in the theorem statement. Note that 
\begin{align}\label{eqn:f-0f-fGAN}
    f(u)=\sup_{v\in\text{dom}f^*} uv-f^*(v).
\end{align}
Noticing that the inner optimization problems in the CPE loss GAN  and $f$-GAN formulations reduce to pointwise optimizations \eqref{thm-equvprof:1} and \eqref{eqn:f-0f-fGAN}, respectively, it suffices to show that the variational forms of $f$ in \eqref{thm-equvprof:1} and \eqref{eqn:f-0f-fGAN} are equivalent. To this end, we show that \eqref{thm-equvprof:1} is equivalent to the optimization problem
\begin{align}\label{eqn:opt-mattShannon}
    f(u)=\sup_{v\in\mathbb{R}_+}uf^\prime(v)-[vf^\prime(v)-f(v)]
\end{align}
which is known to be equivalent to \eqref{eqn:f-0f-fGAN}~\cite{shannon2020properties}. Let $k:\mathbb{R}_+\rightarrow [0,1]$ denote the bijective mapping from $u\in\mathbb{R}_+$ to the optimizer in \eqref{thm-equvprof:1}. {So, $k(u)$ satisfies
\begin{align}\label{eqn:proofofthm-equv1}
-u\ell^\prime(k(u))+\ell^\prime(1-k(u))=0.
\end{align}
Note that it follows from implicit function theorem that $k(u)$ is also differentiable.
}
Fix a $v\in\mathbb{R}_+$. With this, we have 
\begin{align}\label{thm-equvprof:4}
f(v)=-v\ell(k(v))-\ell(1-k(v)).
\end{align}
{On differentiating both sides of \eqref{thm-equvprof:4} with respect to $v$, we get
\begin{align}
    f^\prime(v)&=-\ell(k(v))+k^\prime(v)(v\ell^\prime(-k(v))+\ell^\prime(1-k(v)))\label{eqn:proofofthm-equv2}\\
    &=-\ell(k(v))\label{eqn:proofofthm-equv3},
\end{align}
where \eqref{eqn:proofofthm-equv3} follows from \eqref{eqn:proofofthm-equv1} by replacing $u$ with $v$.
}
Consider
\begin{align}
    vf^\prime(v)-f(v)&=-v\ell(k(v))+v\ell(k(v))+\ell(1-k(v))\label{thm-equvprof:3}\\
    &=\ell(1-k(v)),
\end{align}
where \eqref{thm-equvprof:3} follows from \eqref{eqn:proofofthm-equv3} and \eqref{thm-equvprof:4}. Thus, with the change of variable $t=k(v)$, the objective function in \eqref{eqn:opt-mattShannon} is equal to that of \eqref{thm-equvprof:1}. Since the function $k$ is invertible, for a fixed $t\in[0,1]$, we can also show that the change of variable $v=k^{-1}(t)$ in the objective function of \eqref{thm-equvprof:1} gives the objective function of \eqref{eqn:opt-mattShannon}.  
 
 \section{Proof of Theorem~\ref{thm:equivalenceinconvergence}}\label{proofoftheorem4}
Without loss of generality we take the functions $f_1$ and $f_2$ to be non-negative using the fact that $D_f(\cdot\|\cdot)=D_{f^\prime}(\cdot\|\cdot)$ whenever $f^\prime(x)=f(x)+c(x-1)$, for some $c\in\mathbb{R}$ (see \cite[Theorem~2]{LieseV06}). Note that it suffices to show that any symmetric $f$-divergence $D_f(\cdot\|\cdot)$ is equivalent to $D_{\text{TV}}(\cdot\|\cdot)$, i.e., $D_f(P_n||P)\rightarrow 0$ as $n\rightarrow \infty$ if and only if $D_{\text{TV}}(P_n||P)\rightarrow 0$ as $n\rightarrow \infty$. To this end, we employ a property of any symmetric $f$-divergence which gives lower and upper bounds on it in terms of the total variation distance, $D_{\text{TV}}$. In particular, Feldman and \"{O}sterreicher ~\cite[Theorem~2]{FeldmanO89} proved that for any symmetric $f$-divergence $D_f$, probability distributions $P$ and $Q$, we have
\begin{align}\label{eqn:boundsonArimoto}
    \gamma_f(D_{\text{TV}}(P||Q))\leq D_{f}(P||Q)\leq \gamma_f(1)D_{\text{TV}}(P||Q),
\end{align} 
where the function $\gamma_\alpha:[0,1]\rightarrow [0,\infty)$ defined by $\gamma_f(x)=(1+x)f\left(\frac{1-x}{1+x}\right)$ is convex, strictly increasing and continuous on $[0,1]$ such that $\gamma_f(0)=0$ and $\gamma_f(1)=2f(0)$. 

We first prove the `only if' part, i.e., $D_{f}(P_n||P)\rightarrow 0$ as $n\rightarrow \infty$ implies $D_{\text{TV}}(P_n||P)\rightarrow 0$ as $n\rightarrow \infty$. Suppose $D_{f}(P_n||P)\rightarrow 0$. From the lower bound in \eqref{eqn:boundsonArimoto}, it follows that $\gamma_f(D_{\text{TV}}(P_n||P))\leq D_{f}(P_n||P)$, for each $n\in\mathbbm{N}$. This implies that $\gamma_f(D_{\text{TV}}(P_n||P))\rightarrow 0$ as $n\rightarrow \infty$. We show below that $\gamma_f$ is invertible and $\gamma_f^{-1}$ is continuous. Then it would follow that $\gamma_f^{-1}\gamma_f(D_{\text{TV}}(P_n||P))=D_{\text{TV}}(P_n||P)\rightarrow \gamma_f^{-1}(0)=0$ as $n\rightarrow \infty$ proving that Arimoto divergence is stronger than the total variation distance. It remains to show that $\gamma_f$ is invertible and $\gamma_f^{-1}$ is continuous. Invertibility follows directly from the fact that $\gamma_f$ is strictly increasing function. For the continuity of $\gamma_\alpha^{-1}$, it suffices to show that $\gamma_f(C)$ is closed for a closed set $C\subseteq [0,1]$. The closed set $C$ is compact since a closed subset of a compact set ($[0,1]$ in this case) is also compact. Now since $\gamma_f$ is continuous, $\gamma_f(C)$ is compact because a continuous function of a compact set is compact. By Heine-Borel theorem, this gives that $\gamma_f(C)$ is closed (and bounded) as desired.

We prove the `if part' now, i.e., $D_{\text{TV}}(P_n||P)\rightarrow 0$ as $n\rightarrow \infty$ implies $D_{f}(P_n||P)\rightarrow 0$. It follows from the upper bound in $\eqref{eqn:boundsonArimoto}$ that $D_{f}(P_n||P)\leq D_{\text{TV}}(P_n||P)$, for each $n\in\mathbbm{N}$. This implies that $D_{f}(P_n||P)\rightarrow 0$ as $n\rightarrow \infty$ which completes the proof. 

\section{Equivalence of the Jensen-Shannon Divergence and the Total Variation Distance}\label{appendix:simpler-equivalence-JSD-TVD}
We first show that the total variation distance is stronger than the Jensen-Shannon divergence, i.e., $D_{\text{TV}}(P_n\|P)\rightarrow 0$ as $n\rightarrow \infty$ implies $D_{\text{JS}}(P_n\|P)\rightarrow 0$ as $n\rightarrow \infty$. Suppose $D_{\text{TV}}(P_n||P)\rightarrow 0$ as $n\rightarrow \infty$. Using the fact that the total variation distance upper bounds the Jensen-Shannon divergence~\cite[Theorem 3]{Lin91}, we have $D_{\text{JS}}(P_n||P)\leq (\log_\mathrm{e}{2}) D_{\text{TV}}(P_n||P)$, for each $n\in\mathbbm{N}$. This implies that $D_{\text{JS}}(P_n||P)\rightarrow 0$ as $n\rightarrow \infty$ since $D_{\text{TV}}(P_n||P)\rightarrow 0$ as $n\rightarrow \infty$. The proof for the other direction, i.e., the Jensen-Shannon divergence is stronger than the total variation distance, is exactly along the same lines as that of \cite[Theorem~2(1)]{ArjovskyCB17} using triangle and Pinsker's inequalities. 

\section{Proof of Theorem~\ref{thm:generalizationofarora}}\label{appendix:generalizationofarora}
The proof is along similar lines as that of \cite[Theorem~3.1]{AroraGLMZ17}. Below we argue that, with high probability, for every discriminator $D_\omega$,
\begin{align}
  \left\lvert\mathbb{E}_{X\sim P_r}[\phi\left({D_\omega(X)}\right)]-\mathbb{E}_{X\sim P_{G_\theta}}[\phi\left(D_\omega(X)\right)]\right\rvert\leq \frac{\epsilon}{2},\label{eq8fromarora}\\ 
    \left\lvert\mathbb{E}_{X\sim P_r}[\psi\left({D_\omega(X)}\right)]-\mathbb{E}_{X\sim P_{G_\theta}}[\psi\left(D_\omega(X)\right)]\right\rvert\leq \frac{\epsilon}{2}\label{neweqnarora}.
\end{align}
Assuming $\omega^*$ to be an optimizer attaining $\tilde{d}_{\mathcal{F}}(P_r,P_{G_\theta})$, it would then follow that
\begin{align}
    \tilde{d}_\mathcal{F}(\hat{P}_r,\hat{P}_{G_\theta})
    &=\sup_{\omega\in\Omega}\left\vert\mathbb{E}_{X\sim \hat{P}_r}[\phi\left({D_\omega(X)}\right)]+\mathbb{E}_{X\sim \hat{P}_{G_\theta}}[\psi\left(D_\omega(X)\right)]\right\rvert\\
    &\geq \left\lvert\mathbb{E}_{X\sim \hat{P}_r}[\phi\left({D_{\omega^*}(X)}\right)]+\mathbb{E}_{X\sim \hat{P}_{G_\theta}}[\psi\left(D_{\omega^*}(X)\right)]\right\rvert\\
    &\geq \left\lvert\mathbb{E}_{X\sim {P}_r}[\phi\left({D_{\omega^*}(X)}\right)]+\mathbb{E}_{X\sim {P}_{G_\theta}}[\psi\left(D_{\omega^*}(X)\right)]\right\rvert\nonumber\\
    &\hspace{12pt}- \left\lvert\mathbb{E}_{X\sim P_r}[\phi\left({D_\omega(X)}\right)]-\mathbb{E}_{X\sim \hat{P}_r}[\phi\left(D_\omega(X)\right)]\right\rvert\nonumber\\
    &\hspace{12pt}-\left\lvert\mathbb{E}_{X\sim P_{G_\theta}}[\psi\left({D_\omega(X)}\right)]-\mathbb{E}_{X\sim \hat{P}_{G_\theta}}[\psi\left(D_\omega(X)\right)]\right\rvert\label{eqn:thm5eqn1}\\
    &\geq \tilde{d}_{\mathcal{F}}(P_r,P_G)-\epsilon\label{eqn:thm5eqn2},
\end{align}
where \eqref{eqn:thm5eqn1} follows from the triangle inequality, \eqref{eqn:thm5eqn2} follows from \eqref{eq8fromarora} and \eqref{neweqnarora}. Similarly, we can prove the other direction, i.e., $\tilde{d}_\mathcal{F}(\hat{P}_r,\hat{P}_G)\leq \tilde{d}_\mathcal{F}({P}_r,{P}_G)+\epsilon$, which implies \eqref{eqn:thm5}.

It remains to argue for the concentration bounds \eqref{eq8fromarora} and \eqref{neweqnarora}. Recall that the concentration bound in  \eqref{eq8fromarora} was proved in \cite[Proof of Theorem~3.1]{AroraGLMZ17} by considering a $\frac{\epsilon}{8LL_\phi}$-net in $\Omega$ and leveraging the Lipschitzianity of the discriminator class $\mathcal{F}$  and the function $\phi$. Using the exact same analysis, the concentration bound in \eqref{neweqnarora} can be proved separately by considering a $\frac{\epsilon}{8LL_\psi}$-net. For both the bounds to hold simultaneously, it suffices to consider a $\frac{\epsilon}{8L\max\{L_\phi,L_\psi\}}$-net along the same lines as the last part of \cite[Proof of Theorem~3.1]{AroraGLMZ17}, thus completing the proof.

\section{Proof of Theorem~\ref{thm:estimationerror-upperbound}}\label{proofoftheorem3}
We upper bound the estimation error in terms of the Rademacher complexities of appropriately defined \emph{compositional} classes building upon the proof techniques of \cite[Theorem~1]{JiZL21}. We then bound these Rademacher complexities using a contraction lemma~\cite[Lemma~26.9]{shalev2014understanding}. Details are in order.

We first review the notion of Rademacher complexity.
\begin{definition}[Rademacher complexity]
Let $\mathcal{G}_\Omega:=\{g_\omega: \mathcal{X} \to \mathbb{R} \mid \omega\in\Omega\}$ and $S=\{X_1.\dots,X_n\}$ be a set of random samples in $\mathcal{X}$ drawn independent and identically distributed (i.i.d.) from a distribution $P_X$. Then, the Rademacher complexity of $\mathcal{G}_\Omega$ is defined as
\begin{align}
    \mathcal{R}_S(\mathcal{G}_\Omega)=\mathbb{E}_{X,\epsilon}\sup_{\omega\in\Omega}\left\lvert\frac{1}{n}\sum_{i=1}^n\epsilon_ig_\omega(x_i)\right\rvert
\end{align}
where $\epsilon_1,\dots,\epsilon_n$ are independent random variables uniformly distributed on $\{-1,+1\}$. 
\end{definition}
We write our discriminator model in \eqref{eqn:disc-model} in the form
\begin{align}\label{eqn:thm3proof1}
    D_\omega(x)=\sigma(f_\omega(x)),
\end{align}
where $f_\omega$ is exactly the same discriminator model defined in \cite[Equation~(26)]{JiZL21}. Now by following the similar steps as in \cite[Equations~(16)-(18)]{JiZL21} by replacing $f_\omega(\cdot)$ in the first and second expectation terms in the definition of $d_{\mathcal{F}_{nn}}(\cdot,\cdot)$ by $\phi(D_\omega(\cdot))$ and $-\psi(D_\omega(\cdot))$, respectively, we get
\begin{align}
    d^{(\ell)}_{\mathcal{F}_{nn}}(P_r,\hat{P}_{G_{\hat{\theta}^*}})-\inf_{\theta\in\Theta} d^{(\ell)}_{\mathcal{F}_{nn}}(P_r,P_{G_{\theta}})
    &\leq 2\sup_{\omega}\left\lvert\mathbb{E}_{X\sim P_r}\phi(D_\omega(X))-\frac{1}{n}\sum_{i=1}^n\phi(D_\omega(X_i))\right\rvert\nonumber\\
    &\hspace{12pt}+2\sup_{\omega,\theta}\left\lvert\mathbb{E}_{Z\sim P_Z}\psi(D_\omega(g_\theta(Z)))-\frac{1}{m}\sum_{j=1}^m\psi(D_\omega(g_\theta(Z_j)))\right\rvert\label{eqn:thm3proof2}
\end{align}
Let us denote the supremums in the first and second terms in \eqref{eqn:thm3proof2} by $F^{(\phi)}(X_1,\dots,X_n)$ and $G^{(\psi)}(Z_1,\dots,Z_m)$, respectively. We next bound $G^{(\psi)}(Z_1,\dots,Z_m)$. Note that $\psi(\sigma(\cdot))$ is $\frac{L_\psi}{4}$-Lipschitz since it is a composition of two Lipschitz functions $\psi(\cdot)$ and $\sigma(\cdot)$ which are $L_\psi$- and $\frac{1}{4}$-Lipschitz respectively. For any $z_1,\dots,z_j,\dots,z_m,z_j^\prime$, using $\sup_r|h_1(r)|-\sup_r|h_2(r)|\leq \sup_r |h_1(r)-h_2(r)|$, we have 
\begin{align}
   G^{(\psi)}(z_1,\dots,z_j,\dots,z_m)-G^{(\psi)}(z_1,\dots,z_j^\prime,\dots,z_m)
   &\leq \sup_{\omega,\theta}\frac{1}{m}\left\lvert\psi(D_\omega(g_\theta(z_j)))-\psi(D_\omega(g_\theta(z_j^\prime)))\right\rvert\\
   &\leq \sup_{\omega,\theta}\frac{1}{m}\left\lvert\psi(\sigma(f_\omega(g_\theta(z_j))))-\psi(\sigma(f_\omega(g_\theta(z_j^\prime))))\right\rvert\label{eqn:thm3proof3}\\
      &\leq \frac{L_\psi}{4}\sup_{\omega,\theta}\frac{1}{m}\left\lvert \sigma(f_\omega(g_\theta(z_j)))-\sigma(f_\omega(g_\theta(z_j^\prime)))\right\rvert\label{eqn:thm3proof4}\\
            &\leq \frac{L_\psi}{4}\frac{2}{m}\left(M_k\prod_{i=1}^{k-1}(M_iR_i)\right)\left(N_l\prod_{j=1}^{l-1}(N_jS_j)\right)B_z\label{eqn:thm3proof6}\\
            &=\frac{L_\psi Q_z}{2m}\label{eqn:thm3proof7},
\end{align}
where \eqref{eqn:thm3proof3} follows from \eqref{eqn:thm3proof1}, \eqref{eqn:thm3proof4} follows because $\psi(\sigma(\cdot))$ is $\frac{L_\psi}{4}$-Lipschitz, \eqref{eqn:thm3proof6} follows by using the Cauchy-Schwarz inequality and the fact that $||Ax||_2\leq ||A||_F|||x||_2$ (as observed in \cite{JiZL21}), and \eqref{eqn:thm3proof7} follows by defining
\begin{align}\label{eqn:Q_zparameter}
    Q_z\coloneqq\left(M_k\prod_{i=1}^{k-1}(M_iR_i)\right)\left(N_l\prod_{j=1}^{l-1}(N_jS_j)\right)B_z.
\end{align}
Using \eqref{eqn:thm3proof7}, the McDiarmid's inequality~\cite[Lemma~26.4]{shalev2014understanding} implies that, with probability at least $1-\delta$,
\begin{align}
   G^{(\psi)}(Z_1,\dots,Z_j,\dots,Z_m)\leq\mathbb{E}_ZG^{(\psi)}(Z_1,\dots,Z_j,\dots,Z_m)+\frac{L_\psi Q_z}{2}\sqrt{\log{\frac{1}{\delta}}/(2m)}.\label{eqn:thm3proof8}
\end{align}
Following the standard steps similar to \cite[Equation~(20)]{JiZL21}, the expectation term in \eqref{eqn:thm3proof8} can be upper bounded as
\begin{align}
  \mathbb{E}_ZG^{(\psi)}(Z_1,\dots,Z_j,\dots,Z_m)
  \leq 2\mathbb{E}_{Z,\epsilon}\sup_{\omega,\theta}\left\lvert\frac{1}{m}\sum_{j=1}^m\epsilon_j\psi(D_\omega(g_\theta(Z_j)))\right\lvert =:2\mathcal{R}_{S_z}(\mathcal{H}^{(\psi)}_{\Omega\times\Theta}).
\end{align}
So, we have, with probability at least $1-\delta$,
\begin{align}
    G^{(\psi)}(Z_1,\dots,Z_j,\dots,Z_m)
    \leq 2\mathcal{R}_{S_z}(\mathcal{H}^{(\psi)}_{\Omega\times\Theta})+\sqrt{\log{\frac{1}{\delta}}}\frac{L_\psi Q_z}{2\sqrt{2m}}.\label{eqn:thm3proof9}
\end{align}
Using a similar approach, we have, with probability at least $1-\delta$,
\begin{align}\label{eqn:thm3proof10}
    F^{(\phi)}(X_1,\dots,X_n)\leq 2\mathcal{R}_{S_x}(\mathcal{F}_\Omega^{(\phi)})+\sqrt{\log{\frac{1}{\delta}}}\frac{L_\phi Q_x}{2\sqrt{2n}},
\end{align}
where 
\begin{align}
    \mathcal{R}_{S_x}(\mathcal{F}_\Omega^{(\phi)}):=\mathbb{E}_{X,\epsilon}\sup_{\omega}\left\lvert\frac{1}{n}\sum_{i=1}^n\epsilon_i\phi(D_\omega(X_i))\right\lvert. 
\end{align}
Combining \eqref{eqn:thm3proof2}, \eqref{eqn:thm3proof9}, and \eqref{eqn:thm3proof10} using a union bound, we get, with probability at least $1-2\delta$,
\begin{align}
    d^{(\ell)}_{\mathcal{F}_{nn}}(P_r,{P}_{G_{\hat{\theta}^*}})-\inf_{\theta\in\Theta} d^{(\ell)}_{\mathcal{F}_{nn}}(P_r,P_{G_{\theta}})
    \leq 4\mathcal{R}_{S_x}(\mathcal{F}_\Omega^{(\phi)})+4\mathcal{R}_{S_z}(\mathcal{H}^{(\psi)}_{\Omega\times\Theta})+\sqrt{\log{\frac{1}{\delta}}}\left(\frac{L_\phi Q_x}{\sqrt{2n}}+\frac{L_\psi Q_z}{\sqrt{2m}}\right)\label{eqn:thm3proof11}. 
\end{align}
Now we bound the Rademacher complexities in the RHS of \eqref{eqn:thm3proof11}. We present the contraction lemma on Rademacher complexity required to obtain these bounds. For $A\subset\mathbb{R}^n$, let $\mathcal{R}(A):=\mathbb{E}_\epsilon\left[\sup_{a\in A}\left\lvert\frac{1}{n}\sum_{i=1}^n\epsilon_ia_i\right\rvert\right]$.
\begin{lemma}[Lemma~26.9, \cite{shalev2014understanding}]\label{lemma:contraction}
For each $i\in\{1,\dots,n\}$, let $\gamma_i:\mathbb{R}\rightarrow\mathbb{R}$ be a $\rho$-Lipschitz function. Then, for $A\subset\mathbb{R}^n$,
\begin{align}
    \mathcal{R}(\gamma\circ A)\leq \rho \mathcal{R}(A),
\end{align}
where $\gamma\circ A:=\{(\gamma_1(a_1),\dots,\gamma_n(a_n)):a\in A\}$.
\end{lemma}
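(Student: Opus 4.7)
The plan is to establish the contraction lemma via the classical pairing-plus-induction argument, which reduces the general case to a one-coordinate contraction. First I would recast the absolute-value Rademacher complexity without loss of generality: since $|s| = \max(s, -s)$, one can write $\sup_{a\in A}|\sum_i \epsilon_i \gamma_i(a_i)| = \sup_{b\in A \cup (-A)} \sum_i \epsilon_i \gamma_i^{\pm}(b_i)$, where $\gamma_i^{\pm}$ matches $\gamma_i$ on $A$ and $-\gamma_i(-\cdot)$ on $-A$; this preserves the $\rho$-Lipschitz property. The problem then reduces to proving the plain (sup-only) contraction inequality
\[
\mathbb{E}_\epsilon \sup_{a\in A'} \sum_{i=1}^n \epsilon_i \gamma_i(a_i) \;\leq\; \rho\, \mathbb{E}_\epsilon \sup_{a\in A'} \sum_{i=1}^n \epsilon_i a_i,
\]
where $A' = A \cup (-A)$.

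Next I would peel off one Rademacher variable at a time. Fixing $\epsilon_2, \ldots, \epsilon_n$ and setting $g(a) = \sum_{i\geq 2}\epsilon_i \gamma_i(a_i)$, the key one-coordinate lemma is
\[
\mathbb{E}_{\epsilon_1}\sup_{a\in A'}\bigl[\epsilon_1 \gamma_1(a_1) + g(a)\bigr] \;\leq\; \rho\, \mathbb{E}_{\epsilon_1}\sup_{a\in A'}\bigl[\epsilon_1 a_1 + g(a)\bigr].
\]
To prove it, expand the expectation as $\tfrac{1}{2}\bigl(\sup_a [\gamma_1(a_1)+g(a)] + \sup_{a'}[-\gamma_1(a_1')+g(a')]\bigr)$, combine the two suprema into a single supremum over pairs $(a,a')$, apply $\gamma_1(a_1)-\gamma_1(a_1') \leq \rho|a_1-a_1'|$, and then use the symmetry of the pair-supremum to drop the absolute value and replace $|a_1-a_1'|$ by $a_1-a_1'$. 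The right-hand side is then recognized as $\rho\, \mathbb{E}_{\epsilon_1}\sup_a[\epsilon_1 a_1 + g(a)]$. Iterating this step $n$ times (one Rademacher variable per step), and then multiplying by $1/n$, yields the desired bound.

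The main obstacle will be the bookkeeping around the absolute value: the statement uses $|\cdot|$ inside the supremum (unlike the most common form of the contraction lemma), so I must be careful that the symmetrization to $A \cup (-A)$ does not spoil Lipschitzness, and that the ``WLOG drop the $|\cdot|$'' step after applying the Lipschitz bound is fully justified by the symmetry of supremizing over \emph{ordered} pairs $(a,a')$. Once that reduction is in place, the induction is essentially mechanical, and this is precisely the form in which the inequality will be applied in the sequel to replace $\phi\circ\sigma \circ f_\omega$ and $\psi\circ\sigma\circ f_\omega \circ g_\theta$ by their un-composed versions, producing the $L_\phi$ and $L_\psi$ factors in \eqref{eq:estimationboundrhs2}.
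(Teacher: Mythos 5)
The paper does not give a proof of this lemma---it simply cites Lemma~26.9 of Shalev-Shwartz and Ben-David \cite{shalev2014understanding}---so there is no internal argument to compare against.

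Your plan follows the right template for the one-coordinate peeling proof of contraction, and that argument is correct for the Rademacher complexity \emph{without} absolute values, which is what the cited textbook lemma actually asserts. The ``main obstacle'' you flag, however, is fatal rather than bookkeeping: the gluing $\gamma_i^\pm$ does not preserve $\rho$-Lipschitzness, and no fix is possible because the statement as printed (with $\mathcal{R}$ denoting the absolute-value Rademacher complexity defined just above it) is false. Take $n=1$, $A=\{0\}$, and $\gamma_1(t)=t+1$, a $1$-Lipschitz map; then $\gamma\circ A=\{1\}$, so $\mathcal{R}(\gamma\circ A)=\mathbb{E}_{\epsilon_1}|\epsilon_1|=1>0=\rho\,\mathcal{R}(A)$. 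Correspondingly, your $\gamma_1^\pm$ would send $1\mapsto\gamma_1(1)=2$ and $-1\mapsto-\gamma_1(1)=-2$, which has slope $2$ between those two points; more generally the glued function is $\rho$-Lipschitz only when $\gamma_i(t)+\gamma_i(-t)=0$, and it is not even well-defined when the $i$th coordinate projections of $A$ and $-A$ intersect. The absolute-value version of contraction classically requires $\gamma_i(0)=0$ (and in the Ledoux--Talagrand form then holds only with an extra factor of $2$), or else additional structure on the underlying class such as closure under negation. Your peeling argument applied directly with $A'=A$, and no symmetrization detour, gives a clean proof of the textbook statement; reconciling that with the absolute-value $\mathcal{R}$ used downstream in Theorems~\ref{thm:estimationerror-upperbound} and~\ref{thm:estimationerror-upperbound-double-objective} would require a separate argument exploiting the negation-symmetry of the un-composed neural network class.
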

Note that $\phi(\sigma(\cdot))$ is $\frac{L_\phi}{4}$-Lipschitz since it is a composition of two Lipschitz functions $\phi(\cdot)$ and $\sigma(\cdot)$ which are $L_\phi$- and $\frac{1}{4}$-Lipschitz respectively. Consider
 \begin{align}
       \mathcal{R}_{S_x}(\mathcal{F}_\Omega^{(\phi)})
       &= \mathbb{E}_X\left[\mathcal{R}\left(\{\left(\phi(D_\omega(X_1)),\dots,\phi(D_\omega(X_n))\right):\omega\in\Omega\}\right)\right]\\
       &= \mathbb{E}_X\left[\mathcal{R}\left(\{\left(\phi(\sigma(f_\omega(X_1))),\dots,\phi(\sigma(f_\omega(X_n)))\right):\omega\in\Omega\}\right)\right]\label{eqn:thm3proof12}\\
       &\leq \frac{L_\phi}{4}\mathbb{E}_X\left[\mathcal{R}\left(\{\left(f_\omega(X_1),\dots,(f_\omega(X_n)\right):\omega\in\Omega\}\right)\right]\label{eqn:thm3proof13}\\
       &\leq \frac{L_\phi Q_x\sqrt{3k}}{4\sqrt{n}}\label{eqn:thm3proof14}
    \end{align}
    where \eqref{eqn:thm3proof12} follows from \eqref{eqn:thm3proof1}, \eqref{eqn:thm3proof13} follows from Lemma~\ref{lemma:contraction} by substituting $\gamma(\cdot)=\phi(\sigma(\cdot))$, and \eqref{eqn:thm3proof14} follows from \cite[Proof of Corollary~1]{JiZL21}. Using a similar approach,  we obtain
    \begin{align}
        \mathcal{R}_{S_z}(\mathcal{H}^{(\psi)}_{\Omega\times\Theta})\leq \frac{L_\psi Q_z\sqrt{3(k+l-1)}}{4\sqrt{m}}\label{eqn:thm3proof15}.
    \end{align}
Substituting \eqref{eqn:thm3proof14} and \eqref{eqn:thm3proof15} into \eqref{eqn:thm3proof11} gives \eqref{eq:estimationboundrhs2}.     
\subsection{Specialization to $\alpha$-GAN}
Let $\phi_\alpha(p)=\psi_\alpha(1-p)=\frac{\alpha}{\alpha-1}\left(1-p^{\frac{\alpha-1}{\alpha}}\right)$. It is shown in \cite[Lemma~6]{sypherd2022journal} that $\phi_\alpha(\sigma(\cdot))$ is  $C_h(\alpha)$-Lipschitz in $[-h,h]$, for $h>0$, with $C_h(\alpha)$ as given in $\eqref{eq:clipalpha}$. Now using the Cauchy-Schwarz inequality and the fact that $||Ax||_2\leq ||A||_F||x||_2$, it follows that 
\begin{align}
    |f_\omega(\cdot)|\leq Q_x,\\
    |f_\omega(g_\theta(\cdot))|\leq Q_z,
\end{align}
where $Q_x:=M_k\prod_{i=1}^{k-1}(M_iR_i)B_x$ and with $Q_z$ as in \eqref{eqn:Q_zparameter}. So, we have $f_\omega(\cdot)\in[-Q_x,Q_x]$ and $f_\omega(g_\theta(\cdot))\in[-Q_z,Q_z]$. Thus, we have that $\psi_\alpha(\sigma(\cdot))$ and $\phi_\alpha(\sigma(\cdot))$ are $C_{Q_z}(\alpha)$- and $C_{Q_x}(\alpha)$-Lipschitz, respectively. Now specializing the steps \eqref{eqn:thm3proof4} and \eqref{eqn:thm3proof13} with these Lipschitz constants, we get the following bound with the substitutions $\frac{L_\phi}{4} \leftarrow C_{Q_x}(\alpha)$ and $\frac{L_\psi}{4} \leftarrow 4C_{Q_z}(\alpha)$ in \eqref{eq:estimationboundrhs2}:
\begin{align}
    d^{(\ell_\alpha)}_{\mathcal{F}_{nn}}(P_r,\hat{P}_{G_{\hat{\theta}^*}})-\inf_{\theta\in\Theta} d^{(\ell_\alpha)}_{\mathcal{F}_{nn}}(P_r,P_{G_{\theta}})
    &\leq \frac{4C_{Q_x}(\alpha) Q_x\sqrt{3k}}{\sqrt{n}}+\frac{4C_{Q_z}(\alpha) Q_z\sqrt{3(k+l-1)}}{\sqrt{m}}\nonumber\\
    &\hspace{12pt}+2\sqrt{2\log{\frac{1}{\delta}}}\left(\frac{C_{Q_x}(\alpha)Q_x}{\sqrt{n}}+\frac{C_{Q_z}(\alpha)Q_z}{\sqrt{m}}\right).
\end{align}

\section{Proof of Theorem \ref{thm:est-error-lower-bound-alpha-infinity}}
\label{appendix:est-error-lower-bound-alpha-infinity}
Let $\phi(\cdot)=-\ell_{\alpha}(1,\cdot)$ and consider the following modified version of $d^{\ell_\alpha}_{\mathcal{F}_{nn}}(\cdot,\cdot)$ (defined in \cite[eq. (13)]{kurri-2022-convergence}):
\begin{align*}
d^{\ell_\alpha}_{\mathcal{F}_{nn}}(P,Q) = \sup_{\omega \in \Omega} \Big (\mathbb{E}_{X\sim P}[\phi(D_\omega(X))]+\mathbb{E}_{X\sim Q}[\phi(1-D_\omega(X))] \Big) -2\phi(1/2),
\label{eq:dfnn-alpha-loss-modified}
\end{align*}
where
\[D_\omega(x)= \sigma\left(\mathbf{w}_k^\mathsf{T}r_{k-1}(\mathbf{W}_{d-1}r_{k-2}(\dots r_1(\mathbf{W}_1(x)))\right)\coloneqq \sigma\left(f_\omega(x)\right).\]
Taking $\alpha\to\infty$, we obtain
\begin{align} d^{\ell_\infty}_{\mathcal{F}_{nn}}(P,Q) =\sup_{\omega \in \Omega} \Big (\mathbb{E}_{X\sim P}[D_\omega(X)]-\mathbb{E}_{X\sim Q}[D_\omega(X)] \Big).
\label{eq:dfnn-infinity--alpha-loss-modified}
\end{align}
We first prove that $d^{\ell_\infty}_{\mathcal{F}_{nn}}$ is a semi-metric.
\\\noindent \textbf{Claim 1:} For any distribution pair $(P,Q)$, $d^{\ell_\infty}_{\mathcal{F}_{nn}}(P,Q)\ge0$.
\begin{proof}\let\qed\relax
Consider a discriminator which always outputs 1/2, i.e., $D_\omega(x)=1/2$ for all $x$. Note that such a neural network discriminator exists, as setting $\mathbf{w}_k=0$ results in $D_\omega(x)=\sigma(0)=0$. For this discriminator, the objective function in \eqref{eq:dfnn-infinity--alpha-loss-modified} evaluates to $1/2-1/2=0$. Since $d^{\ell_\infty}_{\mathcal{F}_{nn}}$ is a supremum over all discriminators, we have $d^{\ell_\infty}_{\mathcal{F}_{nn}}(P,Q)\ge0$.
\end{proof}
\noindent \textbf{Claim 2:} For any distribution pair $(P,Q)$, $d^{\ell_\infty}_{\mathcal{F}_{nn}}(P,Q)=d^{\ell_\infty}_{\mathcal{F}_{nn}}(Q,P)$.
\begin{proof}\let\qed\relax
\begin{align*}
d^{\ell_\infty}_{\mathcal{F}_{nn}}(P,Q)
&=\sup_{\omega \in \Omega} \Big (\mathbb{E}_{X\sim P}[D_\omega(X)]-\mathbb{E}_{X\sim Q}[D_\omega(X)] \Big)  \\
& =\sup_{\mathbf{W}_1,\dots,\mathbf{w}_k} \Big (\mathbb{E}_{X\sim P}[D_\omega(X)]-\mathbb{E}_{X\sim Q}[D_\omega(X)] \Big) \\
& \overset{(i)}{=}\sup_{\mathbf{W}_1,\dots,-\mathbf{w}_k} \Big (\mathbb{E}_{X\sim P}[\sigma\left(-f_\omega(x)\right)]-\mathbb{E}_{X\sim Q}[\sigma\left(-f_\omega(x)\right)] \Big) \\
&\overset{(ii)}{=}\sup_{\mathbf{W}_1,\dots,\mathbf{w}_k} \Big (\mathbb{E}_{X\sim P}[1-\sigma\left(f_\omega(x)\right)]-\mathbb{E}_{X\sim Q}[1-\sigma\left(f_\omega(x)\right)] \Big)
 \\
&=\sup_{\mathbf{W}_1,\dots,\mathbf{w}_k} \Big (\mathbb{E}_{X\sim Q}[\sigma\left(f_\omega(x)\right)]-\mathbb{E}_{X\sim P}[\sigma\left(f_\omega(x)\right)] \Big) \\
& = d^{\ell_\infty}_{\mathcal{F}_{nn}}(Q,P),
\end{align*}
where $(i)$ follows from replacing $\mathbf{w}_k$ with $-\mathbf{w}_k$ and $(ii)$ follows from the sigmoid property $\sigma(-x)=1-\sigma(x)$ for all $x$.
\end{proof}
\noindent \textbf{Claim 3:} For any distribution $P$, $d^{\ell_\infty}_{\mathcal{F}_{nn}}(P,P)=0$.
\begin{proof}\let\qed\relax
\begin{align*}
&d^{\ell_\infty}_{\mathcal{F}_{nn}}(P,P)  =\sup_{\omega \in \Omega} \Big (\mathbb{E}_{X\sim P}[D_\omega(X)]-\mathbb{E}_{X\sim P}[D_\omega(X)] \Big)=0.
\end{align*}
\end{proof}
\noindent \textbf{Claim 4:} For any distributions $P,Q,R$, $d^{\ell_\infty}_{\mathcal{F}_{nn}}(P,Q)\le d^{\ell_\infty}_{\mathcal{F}_{nn}}(P,R)+d^{\ell_\infty}_{\mathcal{F}_{nn}}(R,Q)$.
\begin{proof}\let\qed\relax
\begin{align*}
d^{\ell_\infty}_{\mathcal{F}_{nn}}(P,Q)
&=\sup_{\omega \in \Omega} \Big (\mathbb{E}_{X\sim P}[D_\omega(X)]-\mathbb{E}_{X\sim Q}[D_\omega(X)] \Big)  \\
& =\sup_{\omega \in \Omega} \Big (\mathbb{E}_{X\sim P}[D_\omega(X)] - \mathbb{E}_{X\sim R}[D_\omega(X)] + \mathbb{E}_{X\sim R}[D_\omega(X)]-\mathbb{E}_{X\sim Q}[D_\omega(X)] \Big)  \\
& \le\sup_{\omega \in \Omega} \Big (\mathbb{E}_{X\sim P}[D_\omega(X)] - \mathbb{E}_{X\sim R}[D_\omega(X)] \Big) +  \sup_{\omega \in \Omega} \Big (\mathbb{E}_{X\sim R}[D_\omega(X)]-\mathbb{E}_{X\sim Q}[D_\omega(X)] \Big)  \\
& = d^{\ell_\infty}_{\mathcal{F}_{nn}}(P,R)+d^{\ell_\infty}_{\mathcal{F}_{nn}}(R,Q).
\end{align*}
\end{proof}

Thus, $d^{\ell_\infty}_{\mathcal{F}_{nn}}$ is a semi-metric. The remaining part of the proof of the lower bound follows along the same lines as that of \cite[Theorem 2]{JiZL21} by an application of Fano's inequality \cite[Theorem 2.5]{tsybakov2008nonparametric} (that requires the involved divergence measure to be a semi-metric), replacing $d_{\mathcal{F}_{nn}}$ with $d^{\ell_\infty}_{\mathcal{F}_{nn}}$ and noting that the additional sigmoid activation function after the last layer in the discriminator satisfies the monotonicity assumption so that
$C(\mathcal{P}(\mathcal{X}))>0$ (for $C(\mathcal{P}(\mathcal{X}))$ defined in \eqref{eq:est-error-lower-bound-constant}).

\section{Proof of Theorem \ref{thm:alpha_D,alpha_G-GAN-saturating}}
\label{appendix:alpha_D,alpha_G-GAN-saturating}
The proof to obtain \eqref{eqn:optimaldisc-gen-alpha-GAN} is the same as that for Theorem \ref{thm:alpha-GAN}, where $\alpha=\alpha_D$. The generator's optimization problem in \eqref{eqn:gen_obj} with the optimal discriminator in \eqref{eqn:optimaldisc-gen-alpha-GAN} can be written as $\inf_{\theta\in\Theta}V_{\alpha_G}(\theta,\omega^*)$,
where
\begin{align*}
   V_{\alpha_G}(\theta,\omega^*)&=\frac{\alpha_G}{\alpha_G-1}\left[\int_\mathcal{X}\left(p_r(x)D_{\omega^*}(x)^{\frac{\alpha_G-1}{\alpha_G}}+p_{G_\theta}(x)(1-D_{\omega^*}(x))^{\frac{\alpha_G-1}{\alpha_G}}\right)dx-2\right]\\
   &=\frac{\alpha_G}{\alpha_G-1}\Bigg[\int_\mathcal{X}\Bigg(p_r(x)\left( \frac{p_r(x)^{\alpha_D}}{p_r(x)^{\alpha_D}+p_{G_\theta}(x)^{\alpha_D}}\right)^{\frac{\alpha_G-1}{\alpha_G}} + p_{G_\theta}(x)\left( \frac{p_{G_\theta}(x)^{\alpha_D}}{p_r(x)^{\alpha_D}+p_{G_\theta}(x)^{\alpha_D}}\right)^{\frac{\alpha_G-1}{\alpha_G}}\Bigg)dx-2\Bigg]\\
    &=\frac{\alpha_G}{\alpha_G-1}\left[\int_{\mathcal{X}}p_{G_\theta}(x)\left(\frac{(p_r(x)/p_{G_\theta}(x))^{\alpha_D(1-1/\alpha_G)+1}+1}{((p_r(x)/p_{G_\theta}(x))^{\alpha_D}+1)^{1-1/\alpha_G}}\right)dx-2\right] \\
    &  =   \int_\mathcal{X} p_{G_\theta}(x)f_{\alpha_D,\alpha_G}\left(\frac{p_r(x)}{p_{G_\theta}(x)}\right) dx + \frac{\alpha_G}{\alpha_G-1}\left(2^{\frac{1}{\alpha_G}}-2\right ),
\end{align*}
where $f_{\alpha_D,\alpha_G}$ is as defined in \eqref{eqn:f-alpha_d,alpha_g}. Observe that if $f_{\alpha_D,\alpha_G}$ is strictly convex, the first term in the last equality above equals an $f$-divergence which is minimized if and only if $P_r = P_{G_\theta}$. {We note that continuous extensions of $D_{f_{\alpha_D,\alpha_G}}(P_r||P_{G_\theta})$ for $\alpha_D,\alpha_G \in \{1,\infty\}$ exist and can be computed by interchanging the limit and integral following the dominated convergence theorem. In particular, as $(\alpha_D,\alpha_G) \to (1,1)$, $D_{f_{\alpha_D,\alpha_G}}(P_r || P_{G_\theta})$ recovers $D_{\text{JS}}(P_r || P_{G_\theta})$, and as $(\alpha_D,\alpha_G) \to (\infty,\infty)$, $D_{f_{\alpha_D,\alpha_G}}(P_r || P_{G_\theta})$ recovers $D_{\text{TV}}(P_r || P_{G_\theta})$. We also note that since the  $\alpha$-loss functions for both D and G have continuous extensions at 1 and $\infty$, we can obtain the same simplifications noted above by using the optimal discriminator strategies for the limiting points and the corresponding divergences for the generator's objective.}

Define the regions $R_1$ and $R_2$ as follows:
\begin{align*}
    R_1 \coloneqq \Big\{(\alpha_D,\alpha_G) \in (0,\infty]^2 \bigm\vert \alpha_D \le 1,\alpha_G > \frac{\alpha_D}{\alpha_D+1}\Big\}
\end{align*}
and 
\begin{align*}
    R_2 \coloneqq \Big\{(\alpha_D,\alpha_G) \in (0,\infty]^2 \bigm\vert \alpha_D > 1,\frac{\alpha_D}{2}< \alpha_G \le \alpha_D\Big\}.
\end{align*}
 In order to prove that $f_{\alpha_D,\alpha_G}$ is strictly convex for $(\alpha_D,\alpha_G)\in R_1\cup R_2$, we take its second derivative, which yields
\begin{align}
    f^{\prime\prime}_{\alpha_D,\alpha_G}(u)
    = A_{\alpha_D,\alpha_G}(u) \bigg[(\alpha_G+\alpha_D\alpha_G-\alpha_D)\left(u+u^{\alpha_D+\frac{\alpha_D}{\alpha_G}}\right) +(\alpha_G-\alpha_D\alpha_G)\left(u^\frac{\alpha_D}{\alpha_G}+u^{\alpha_D+1}\right)\bigg],
    \label{eq:sec_deriv_sat_sym}
\end{align}
where 
\begin{align}
    &A_{\alpha_D,\alpha_G}(u)=\frac{\alpha_D}{\alpha_G}u^{\alpha_D-\frac{\alpha_D}{\alpha_G}-2}(1+u^{\alpha_D})^{\frac{1}{\alpha_G}-3}.
    \label{eq:f-sec-deriv-mult-const}
\end{align}
Note that $A_{\alpha_D,\alpha_G}(u)> 0$ for all $u > 0$ and $\alpha_D,\alpha_G\in(0,\infty]$. Therefore, in order to ensure $f^{\prime\prime}_{\alpha_D,\alpha_G}(u)>0$ for all $u>0$ it is sufficient to have
\begin{align}
    \alpha_G+\alpha_D\alpha_G-\alpha_D > \alpha_G(\alpha_D-1)B_{\alpha_D,\alpha_G}(u),
    \label{eq:sat-main-convexity-condition}
\end{align}
where
\begin{align}
    B_{\alpha_D,\alpha_G}(u) = \frac{u^\frac{\alpha_D}{\alpha_G}+u^{\alpha_D+1}}{u+u^{\alpha_D+\frac{\alpha_D}{\alpha_G}}}
    \label{eq:B-function}
\end{align} for $u > 0$. Since $B_{\alpha_D,\alpha_G}(u) > 0$ for all $u  > 0$, the sign of the RHS of \eqref{eq:sat-main-convexity-condition} is determined by whether $\alpha_D \le 1$ or $\alpha_D > 1$. We look further into these two cases in the following:

\noindent\textbf{Case 1:} $\alpha_D \le 1$. Then $\alpha_G(\alpha_D-1)B_{\alpha_D,\alpha_G}(u) \le 0$ for all $u > 0$ and $(\alpha_D,\alpha_G)\in(0,\infty]^2$. Therefore, we need
\begin{align}
    \alpha_G(1+\alpha_D)-\alpha_D > 0 \Leftrightarrow \alpha_G > \frac{\alpha_D}{\alpha_D+1}.
\end{align}
\noindent\textbf{Case 2:} $\alpha_D > 1$. Then $\alpha_G(\alpha_D-1)B_{\alpha_D,\alpha_G}(u) > 0$ for all $u > 0$ and $(\alpha_D,\alpha_G)\in(0,\infty]^2$. In order to obtain conditions on $\alpha_D$ and $\alpha_G$, we determine the monotonicity of $B_{\alpha_D,\alpha_G}$ by finding its first derivative as follows:
\begin{align*}
    B^\prime_{\alpha_D,\alpha_G}(u) = \frac{(\alpha_G-\alpha_D)(u^{2\alpha_D}-1)+\alpha_D\alpha_G\Big(u^{\alpha_D-\frac{\alpha_D}{\alpha_G}+1}-u^{\alpha_D+\frac{\alpha_D}{\alpha_G}-1}\Big)}{\alpha_G u^{-\frac{\alpha_D}{\alpha_G}}\Big(u+u^{\alpha_D+\frac{\alpha_D}{\alpha_G}}\Big)^2}.
\end{align*}
Since the denominator of $B^\prime_{\alpha_D,\alpha_G}$ is positive for all $u>0$ and $(\alpha_D,\alpha_G)\in (0,\infty]^2$, we just need to check the sign of the numerator.
\\\textbf{Case 2a:} $\alpha_D>\alpha_G$. For $u \in (0,1)$, 
\[u^{2\alpha_D}-1 < 0 \quad \text{and} 
 \quad u^{\alpha_D-\frac{\alpha_D}{\alpha_G}+1}-u^{\alpha_D+\frac{\alpha_D}{\alpha_G}-1} >0,\]
so $B^\prime_{\alpha_D,\alpha_G}(u) > 0$. For $u > 1$, 
\[u^{2\alpha_D}-1 > 0 \quad \text{and} 
 \quad u^{\alpha_D-\frac{\alpha_D}{\alpha_G}+1}-u^{\alpha_D+\frac{\alpha_D}{\alpha_G}-1} < 0,\]
 so $B^\prime_{\alpha_D,\alpha_G}(u) < 0$. For $u=1$, $B^\prime_{\alpha_D,\alpha_G}(u) = 0$.  Hence, $B^\prime_{\alpha_D,\alpha_G}$ is strictly increasing for $u\in (0,1)$ and strictly decreasing for $u \ge 1$. Therefore, $B_{\alpha_D,\alpha_G}$ attains a maximum value of 1 at $u=1$. This means $B_{\alpha_D,\alpha_G}$ is bounded, i.e. $B_{\alpha_D,\alpha_G}\in (0,1]$ for all $u>0$. Thus, in order for \eqref{eq:sat-main-convexity-condition} to hold, it suffices to ensure that
\begin{align}
    \alpha_G+\alpha_D\alpha_G-\alpha_D > \alpha_G(\alpha_D-1) \Leftrightarrow \alpha_G > \frac{\alpha_G}{2}.
\end{align}
\textbf{Case 2b:} $\alpha_D<\alpha_G$. For $u \in (0,1)$, $u^{2\alpha_D}-1 < 0$ and $u^{\alpha_D-\frac{\alpha_D}{\alpha_G}+1}-u^{\alpha_D+\frac{\alpha_D}{\alpha_G}-1} < 0$, so $B^\prime_{\alpha_D,\alpha_G}(u) < 0$. For $u > 1$, $u^{2\alpha_D}-1 > 0$ and $u^{\alpha_D-\frac{\alpha_D}{\alpha_G}+1}-u^{\alpha_D+\frac{\alpha_D}{\alpha_G}-1} > 0$, so $B^\prime_{\alpha_D,\alpha_G}(u) > 0$. Hence, $B^\prime_{\alpha_D,\alpha_G}$ is strictly decreasing for $u\in (0,1)$ and strictly increasing for $u \ge 1$. Therefore, $B_{\alpha_D,\alpha_G}$ attains a minimum value of 1 at $u=1$. This means that $B_{\alpha_D,\alpha_G}$ is not bounded above, so it is not possible to satisfy \eqref{eq:sat-main-convexity-condition} without restricting the domain of $B_{\alpha_D,\alpha_G}$.

 Thus, for $(\alpha_D,\alpha_G)\in R_1 \cup R_2$,
\[
   V_{\alpha_G}(\theta,\omega^*)
    =D_{f_{\alpha_D,\alpha_G}}(P_r||P_{G_\theta})+\frac{\alpha_G}{\alpha_G-1}\left(2^{\frac{1}{\alpha_G}}-2\right).
\]
This yields \eqref{eqn:gen-alpha_d,alpha_g-obj}. Figure \ref{fig:convexity_regions}(a) illustrates the feasible $(\alpha_D,\alpha_G)$-region $R_1\cup R_2$. Note that $D_{f_{\alpha_D,\alpha_G}}(P||Q)$ is symmetric since
\begin{align*}
    D_{f_{\alpha_D,\alpha_G}}(Q||P)
    &= \int_\mathcal{X} p(x)f_{\alpha_D,\alpha_G}\left(\frac{q(x)}{p(x)}\right) dx \\
    &=\frac{\alpha_G}{\alpha_G-1} \left[\int_{\mathcal{X}}p(x)\left(\frac{(p(x)/q(x))^{-{\alpha_D\left(1-\frac{1}{\alpha_G}\right)}-1}+1}{((p(x)/q(x))^{-\alpha_D}+1)^{1-\frac{1}{\alpha_G}}}\right)dx-2^\frac{1}{\alpha_G}\right] \\
    &=\frac{\alpha_G}{\alpha_G-1} \left[\int_{\mathcal{X}}p(x)\left(\frac{q(x)/p(x)+(p(x)/q(x))^{\alpha_D\left(1-\frac{1}{\alpha_G}\right)}}{(1+(p(x)/q(x))^{\alpha_D})^{1-\frac{1}{\alpha_G}}}\right)dx-2^\frac{1}{\alpha_G}\right]\\
    &=\frac{\alpha_G}{\alpha_G-1}\left[\int_{\mathcal{X}}q(x)\left(\frac{1+(p(x)/q(x))^{\alpha_D\left(1-\frac{1}{\alpha_G}\right)}}{(1+(p(x)/q(x))^{\alpha_D})^{1-\frac{1}{\alpha_G}}}\right)dx-2^\frac{1}{\alpha_G}\right]\\
    & = D_{f_{\alpha_D,\alpha_G}}(P||Q).
\end{align*}
Since $f_{\alpha_D,\alpha_G}$ is strictly convex and $f_{\alpha_D,\alpha_G}(1)=0$, $D_{f_{\alpha_D,\alpha_G}}(P_r||P_{G_\theta})\geq 0$ with equality if and only if $P_r=P_{G_\theta}$. Thus, we have $V_{\alpha_G}(\theta,\omega^*)\geq \frac{\alpha_G}{\alpha_G-1}\left(2^{\frac{1}{\alpha_G}}-2\right)$ with equality if and only if $P_r=P_{G_\theta}$. 

\begin{figure}[t]
\centering
\footnotesize
\setlength{\tabcolsep}{20pt}
\begin{tabular}{@{}cc@{}}
  \includegraphics[page=1,width=0.3\linewidth]{./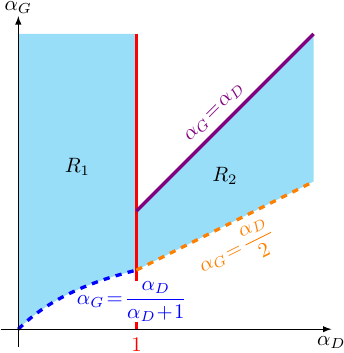}  & \raisebox{1pt}{\includegraphics[page=2,width=0.3\linewidth]{./Figures/plots_sat_and_nonsat.pdf}} \\
   (a)  & (b)
\end{tabular}
\caption{(a) Plot of regions $R_1 = \{(\alpha_D,\alpha_G) \in (0,\infty]^2 \bigm\vert \alpha_D \le 1,\alpha_G > \frac{\alpha_D}{\alpha_D+1}\}$ and $R_2 = \{(\alpha_D,\alpha_G) \in (0,\infty]^2 \bigm\vert \alpha_D > 1,\frac{\alpha_D}{2}< \alpha_G \le \alpha_D\}$ for which $f_{\alpha_D,\alpha_G}$ is strictly convex. (b) Plot of region $R_\text{NS}=\{(\alpha_D,\alpha_G)\in (0,\infty]^2 \mid \alpha_D+\alpha_G > \alpha_D\alpha_G\}$ for which $f^\text{NS}_{\alpha_D,\alpha_G}$ is strictly convex.}
\label{fig:convexity_regions}
\end{figure}

\section{Proof of Theorem \ref{thm:alpha_D,alpha_G-GAN-nonsaturating}}
\label{appendix:alpha_D,alpha_G-GAN-nonsaturating}
The generator's optimization problem in \eqref{eqn:gen_obj} with the optimal discriminator in \eqref{eqn:optimaldisc-gen-alpha-GAN} can be written as $\inf_{\theta\in\Theta}V^\text{NS}_{\alpha_G}(\theta,\omega^*)$,
where
\begin{align*}
   V^\text{NS}_{\alpha_G}(\theta,\omega^*)
   &=\frac{\alpha_G}{\alpha_G-1}\left[1-\int_\mathcal{X}\left(p_{G_\theta}(x)D_{\omega^*}(x)^{\frac{\alpha_G-1}{\alpha_G}}\right)dx\right]\\
   &=\frac{\alpha_G}{\alpha_G-1}\Bigg[1-\int_\mathcal{X}p_{G_\theta}(x)\left( \frac{p_r(x)^{\alpha_D}}{p_r(x)^{\alpha_D}+p_{G_\theta}(x)^{\alpha_D}}\right)^{\frac{\alpha_G-1}{\alpha_G}}dx\Bigg]\\
    &=\frac{\alpha_G}{\alpha_G-1}\Bigg[1-\int_\mathcal{X}p_{G_\theta}(x) \frac{(p_r(x)/p_{G_\theta}(x))^{\alpha_D(1-1/\alpha_G)}}{((p_r(x)/p_{G_\theta}(x))^{\alpha_D}+1)^{1-1/\alpha_G}}dx\Bigg] \\
    & = \int_\mathcal{X} p_{G_\theta}(x)f^\text{NS}_{\alpha_D,\alpha_G}\left(\frac{p_r(x)}{p_{G_\theta}(x)}\right) dx + \frac{\alpha_G}{\alpha_G-1}\left(1-2^{\frac{1}{\alpha_G}-1}\right),
\end{align*}
where $f^\text{NS}_{\alpha_D,\alpha_G}$ is as defined in \eqref{eqn:f-alpha_d,alpha_g-ns}. {Continuous extensions of $D_{f^\text{NS}_{\alpha_D,\alpha_G}}(P_r||P_{G_\theta})$ for $\alpha_D,\alpha_G \in \{1,\infty\}$ exist and can be computed by interchanging the limit and integral following the dominated convergence theorem.} In order to prove that $f^\text{NS}_{\alpha_D,\alpha_G}$ is strictly convex for $(\alpha_D,\alpha_G)\in R_\text{NS}= \{(\alpha_D,\alpha_G) \in (0,\infty]^2 \mid \alpha_D > \alpha_G(\alpha_D-1)\}$, we take its second derivative, which yields
\begin{align}
    f^{\prime\prime}_{\alpha_D,\alpha_G}(u) = A_{\alpha_D,\alpha_G}(u) \bigg[(\alpha_G-\alpha_D\alpha_G+\alpha_D) +\alpha_G(1+\alpha_D)u^{\alpha_D}\bigg],
    \label{eq:sec_deriv_nonsat}
\end{align}
where $A_{\alpha_D,\alpha_G}$ is defined as in \eqref{eq:f-sec-deriv-mult-const}.
Since $A_{\alpha_D,\alpha_G}(u)> 0$ for all $u > 0$ and $(\alpha_D,\alpha_G)\in(0,\infty]^2$, to ensure $f^{\prime\prime}_{\alpha_D,\alpha_G}(u)>0$ for all $u>0$ it suffices to have
\[\frac{\alpha_G-\alpha_D\alpha_G+\alpha_D}{\alpha_G(1+\alpha_D)} > -u^{\alpha_D} \]
for all $u > 0$. This is equivalent to
\[\frac{\alpha_G-\alpha_D\alpha_G+\alpha_D}{\alpha_G(1+\alpha_D)} > 0, \]
which results in the condition
\[ \alpha_D > \alpha_G(\alpha_D-1)\]
for $(\alpha_D,\alpha_G)\in(0,\infty]^2$. Thus, for $(\alpha_D,\alpha_G)\in R_\text{NS}$,
\[
   V^\text{NS}_{\alpha_G}(\theta,\omega^*)
    =D_{f^\text{NS}_{\alpha_D,\alpha_G}}(P_r||P_{G_\theta})+\frac{\alpha_G}{\alpha_G-1}\left(1-2^{\frac{1}{\alpha_G}-1}\right).
\]
This yields \eqref{eqn:gen-alpha_d,alpha_g-obj-ns}. Figure \ref{fig:convexity_regions}(b) illustrates the feasible $(\alpha_D,\alpha_G)$-region $R_{\text{NS}}$. Note that $D_{f^\text{NS}_{\alpha_D,\alpha_G}}(P||Q)$ is not symmetric since $D_{f^\text{NS}_{\alpha_D,\alpha_G}}(P||Q) \ne D_{f^\text{NS}_{\alpha_D,\alpha_G}}(Q||P)$.
Since $f^\text{NS}_{\alpha_D,\alpha_G}$ is strictly convex and $f^\text{NS}_{\alpha_D,\alpha_G}(1)=0$, $D_{f^\text{NS}_{\alpha_D,\alpha_G}}(P_r||P_{G_\theta})\geq 0$ with equality if and only if $P_r=P_{G_\theta}$. Thus, we have $V^\text{NS}_{\alpha_G}(\theta,\omega^*)\geq \frac{\alpha_G}{\alpha_G-1}\left(1-2^{\frac{1}{\alpha_G}-1}\right)$ with equality if and only if $P_r=P_{G_\theta}$. 

\section{Proof of Theorem \ref{thm:sat-gradient}}
\label{appendix:sat-gradient-proof}
\noindent\textbf{Saturating $(\alpha_D,\alpha_G)$-GANs:}

For the optimal discriminator $D_{\omega^{*}}$ defined in \eqref{eqn:optimaldisc-gen-alpha-GAN}, we first
derive $\partial D_{\omega^{*}} / \partial x$ using the quotient rule as
\begin{align}
\frac{\partial D_{\omega^{*}}}{\partial x} & =
\left(p_{r}(x)^{\alpha_{D}} + p_{G_{\theta}}(x)^{\alpha_{D}}\right)^{-2} \bigg[
(p_{r}(x)^{\alpha_{D}} + p_{G_{\theta}}(x)^{\alpha_{D}})\Big(\alpha_{D}p_{r}(x)^{\alpha_{D}-1}\frac{\partial p_{r}}{\partial x}\Big) \nonumber \\ & \qquad \qquad \qquad \qquad - p_{r}(x)^{\alpha_{D}}\Big(\alpha_{D}p_{r}(x)^{\alpha_{D}-1}\frac{\partial p_{r}}{\partial x} + \alpha_{D}p_{G_{\theta}}(x)^{\alpha_{D}-1}\frac{\partial p_{G_{\theta}}}{\partial x}\Big) \bigg] \\
& = \left(p_{r}(x)^{\alpha_{D}} + p_{G_{\theta}}(x)^{\alpha_{D}}\right)^{-2} \bigg[
\alpha_{D}p_{r}(x)^{2\alpha_{D}-1}\frac{\partial p_{r}}{\partial x}
+ \alpha_{D}p_{r}(x)^{\alpha_{D}-1}p_{G_{\theta}}(x)^{\alpha_{D}}\frac{\partial p_{r}}{\partial x} \nonumber \\
& \qquad \qquad \qquad \qquad - \alpha_{D}p_{r}(x)^{2\alpha_{D}-1}\frac{\partial p_{r}}{\partial x}
- \alpha_{D}p_{r}(x)^{\alpha_{D}}p_{G_{\theta}}(x)^{\alpha_{D}-1}\frac{\partial p_{G_{\theta}}}{\partial x} 
\bigg] \\
& = \left(p_{r}(x)^{\alpha_{D}} + p_{G_{\theta}}(x)^{\alpha_{D}}\right)^{-2}\left(\alpha_{D}p_{r}(x)^{\alpha_{D}-1}p_{G_{\theta}}(x)^{\alpha_{D}}\frac{\partial p_{r}}{\partial x} - \alpha_{D}p_{r}(x)^{\alpha_{D}}p_{G_{\theta}}(x)^{\alpha_{D}-1}\frac{\partial p_{G_{\theta}}}{\partial x}  \right) \\
& = \alpha_{D}p_{r}(x)^{\alpha_{D}}p_{G_{\theta}}(x)^{\alpha_{D}} \left(p_{r}(x)^{\alpha_{D}} + p_{G_{\theta}}(x)^{\alpha_{D}}\right)^{-2} \left(\frac{1}{p_{r}(x)}\frac{\partial p_{r}}{\partial x} - \frac{1}{p_{G_{\theta}}(x)}\frac{\partial p_{G_{\theta}}}{\partial x}\right) \\
& = \alpha_{D} D_{\omega^{*}}(x)\left(1 - D_{\omega^{*}}(x)\right) \left(\frac{1}{p_{r}(x)}\frac{\partial p_{r}}{\partial x} - \frac{1}{p_{G_{\theta}}(x)}\frac{\partial p_{G_{\theta}}}{\partial x}\right). \label{eq:dd-dx}
\end{align}
Next, we set $\mu = D_{\omega^{*}}(x)$ and derive $-\partial \ell_{\alpha_{G}}\left(0, \mu\right)/\partial \mu$ as follows
\begin{align}
    -\frac{\partial \ell_{\alpha_{G}}(0,\mu)}{\partial \mu} & = \frac{\partial}{\partial \mu} \left[ -\frac{\alpha_{G}}{\alpha_{G}-1}\left(1 - \left(1 - \mu\right)^{1 - 1/\alpha_{G}}\right)\right] \\
    & = -(1 - \mu)^{-1/\alpha_{G}}. \label{eq:dl-dd-s}
\end{align}
Lastly, to find the gradient $-\partial \ell_{\alpha_{G}}\left(0, D_{\omega^{*}}(x)\right)/\partial x$, we apply the chain rule and substitute \eqref{eq:dd-dx} and \eqref{eq:dl-dd-s}:
\begin{align}
    -\frac{\partial \ell_{\alpha_{G}}\left(0, D_{\omega^{*}}(x)\right)}{\partial x} & = -\frac{\partial \ell_{\alpha_{G}}\left(0, D_{\omega^{*}}(x)\right)}{\partial D_{\omega^{*}}} \times \frac{\partial D_{\omega^{*}}}{\partial x} \\
    & = C_{x,\alpha_{D},\alpha_{G}} \left(\frac{1}{p_{G_{\theta}}(x)}\frac{\partial p_{G_{\theta}}}{\partial x} - \frac{1}{p_{r}(x)}\frac{\partial p_{r}}{\partial x}\right),
\end{align}
where
\[C_{x,\alpha_{D},\alpha_{G}} = \alpha_{D}D_{\omega^{*}}(x)\left(1 - D_{\omega^{*}}(x)\right)^{1 - 1/\alpha_{G}},\]
or equivalently,
\[C_{x,\alpha_{D}, \alpha_{G}} = \alpha_{D} P^{(\alpha_D)}_{Y|X}(1|x) \left(1 - P^{(\alpha_D)}_{Y|X}(1|x) \right)^{1 - 1/\alpha_{G}}.\]
Since the scalar $C_{x,\alpha_{D}, \alpha_{G}}$ is positive and the only term reliant on $\alpha_{D}$ and $\alpha_{G}$ for a fixed $P_{G_\theta}$, we conclude that the direction of $-\partial \ell_{\alpha_{G}}\left(0, D_{\omega^{*}}(x)\right)/\partial x$ is independent of these parameters. See Fig.~\ref{fig:p-real-vs-c}(a) for a plot of $C_{x,\alpha_{D}, \alpha_{G}}$ as a function of $P_{Y|X}(1|x)$ for five $(\alpha_{D}, \alpha_{G})$ combinations.

\noindent\textbf{Non-saturating $(\alpha_D,\alpha_G)$-GANs:}

The proof for NS $(\alpha_D,\alpha_G)$-GANs follows similarly to that for saturating $(\alpha_D,\alpha_G)$-GANs. First, we set $\mu = D_{\omega^{*}}(x)$ and derive $\partial \ell_{\alpha_{G}}(1, \mu) /\partial \mu$ as follows:
\begin{align}
    \frac{\partial \ell_{\alpha_{G}}(1, \mu)}{\partial \mu} & = \frac{\partial}{\partial \mu} \left[\frac{\alpha_{G}}{\alpha_{G} - 1}\left(1 - \mu^{1 - 1/\alpha_{G}}\right)\right] \\
    & = -\mu^{-1/\alpha_{G}}. \label{eq:dl-dd-ns}
\end{align}
Then we derive the gradient $\partial \ell_{\alpha_{G}}\left(1, D_{\omega^{*}}(x)\right)/\partial x$ using the chain rule and substituting \eqref{eq:dl-dd-s} and \eqref{eq:dl-dd-ns}:
\begin{align}
    \frac{\partial \ell_{\alpha_{G}}\left(1, D_{\omega^{*}}(x)\right)}{\partial x} & = \frac{\partial \ell_{\alpha_{G}}\left(1, D_{\omega^{*}}(x)\right)}{\partial D_{\omega^{*}}} \times \frac{\partial D_{\omega^{*}}}{\partial x} \\
    & = C^{\text{NS}}_{x,\alpha_{D},\alpha_{G}} \left(\frac{1}{p_{G_{\theta}}(x)}\frac{\partial p_{G_{\theta}}}{\partial x} - \frac{1}{p_{r}(x)}\frac{\partial p_{r}}{\partial x}\right),
\end{align}
where \[C^{\text{NS}}_{x,\alpha_{D},\alpha_{G}} = \alpha_{D}\left(1 - D_{\omega^{*}}(x)\right)D_{\omega^{*}}(x)^{1 - 1/\alpha_{G}},\]
or equivalently,
\[C^{\text{NS}}_{x,\alpha_{D}, \alpha_{G}} = \alpha_{D} \left(1 - P^{(\alpha_D)}_{Y|X}(1|x)\right) P^{(\alpha_D)}_{Y|X}(1|x)^{1 - 1/\alpha_{G}}.\]
Since the scalar $C^{\text{NS}}_{x,\alpha_{D}, \alpha_{G}}$ is positive and the only term reliant on $\alpha_{D}$ and $\alpha_{G}$ for a fixed $P_{G_\theta}$, we conclude that the direction of $\partial \ell_{\alpha_{G}}\left(1, D_{\omega^{*}}(x)\right)/\partial x$ is independent of these parameters. See Fig.~\ref{fig:p-real-vs-c}(b) for a plot of $C^{\text{NS}}_{x,\alpha_{D}, \alpha_{G}}$ as a function of $P_{Y|X}(1|x)$ for five $(\alpha_{D}, \alpha_{G})$ combinations.

\begin{figure}[t]
\centering
\footnotesize
\setlength{\tabcolsep}{20pt}
\begin{tabular}{@{}cc@{}}
    \includegraphics[width=0.4\linewidth]{./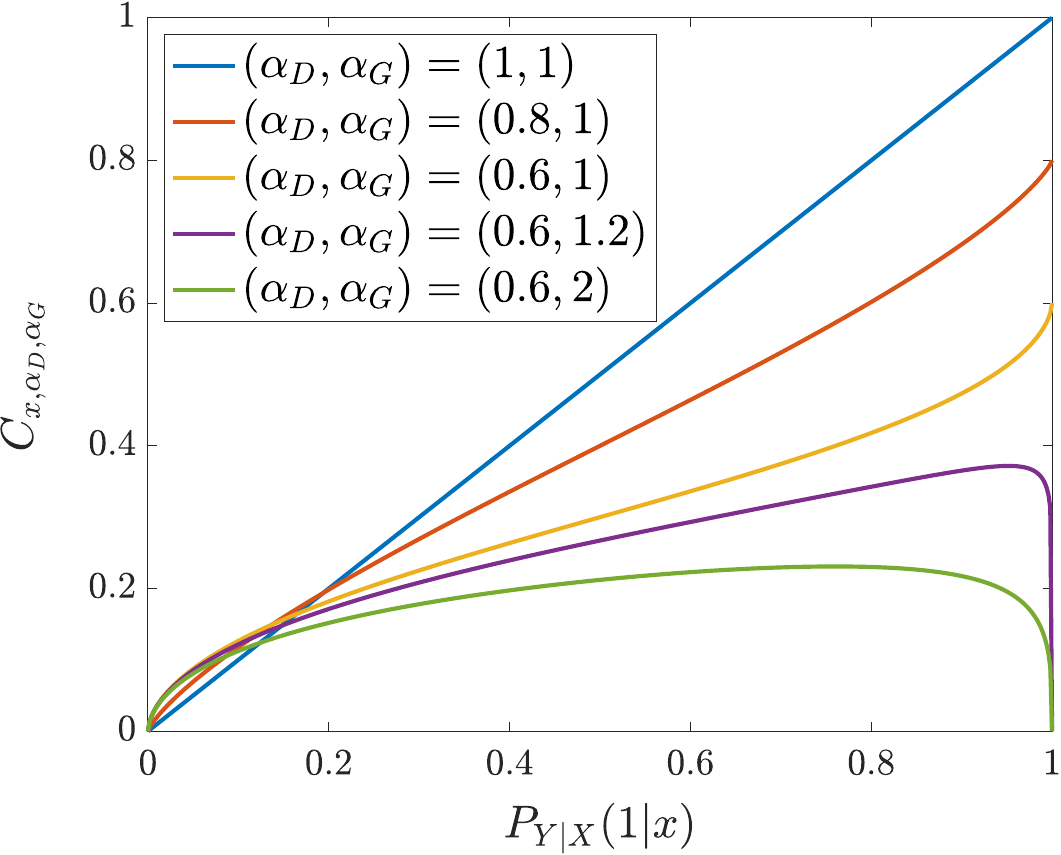}
    & {\includegraphics[width=0.4\linewidth]{./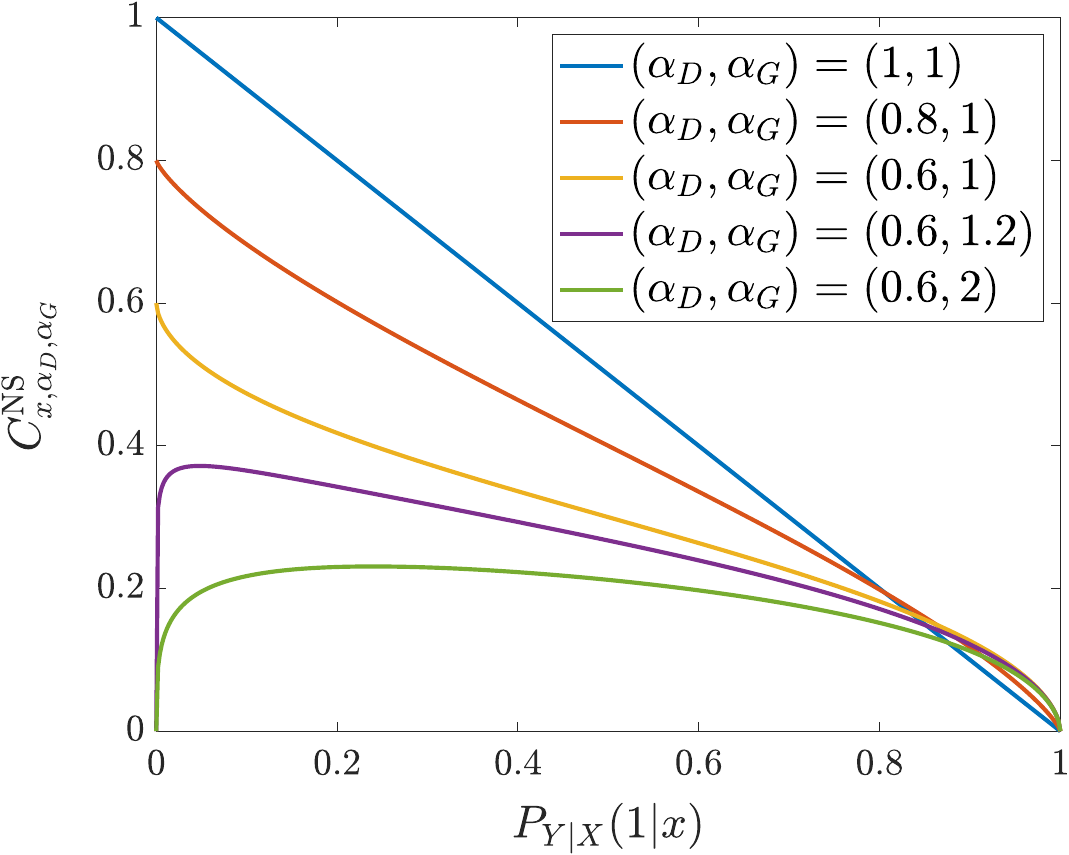}} \\
   (a)  & (b)
\end{tabular}
\caption
{(a) Plot of the gradient scalar $C_{x,\alpha_{D},\alpha_{G}}$ defined in \eqref{eq:c-sat} over the true posterior $P_{Y|X}(1|x)$ for five different saturating $(\alpha_{D},\alpha_{G})$-GANs. (b) Plot of the gradient scalar $C^{\text{NS}}_{x,\alpha_{D},\alpha_{G}}$ defined in \eqref{eq:c-nonsat} over the true posterior $P_{Y|X}(1|x)$ for five different NS $(\alpha_{D},\alpha_{G})$-GANs.}
\label{fig:p-real-vs-c}
\end{figure}

\section{Proof of Proposition \ref{prop:dual-objective-CPE-loss-GAN-strategies}}
\label{appendix:dual-objective-CPE-loss-GAN-strategies}
As noted in the proof of Theorems~\ref{thm:correspondence} and \ref{theorem:equivalence-fGAN-CPEGAN}, since $\ell_D$ is a symmetric CPE loss, the discriminator's optimization problem in \eqref{eqn:cpe-disc_obj} reduces to solving the pointwise optimization
\begin{align}
\sup_{t\in[0,1]}-u\ell_D(1,t)-\ell_D(1,1-t), \quad u \ge 0,
\end{align}
For a fixed $u \ge 0$, consider the function
\begin{align}
    g(t)=-u\ell_D(1,t)-\ell_D(1,1-t), \quad t \in [0,1].
\end{align}
To show that the optimal discriminator $D_{\omega^*}$ satisfies the implicit equation in \eqref{eqn:optimaldoisc}, we find when $g$ is maximized over $[0,1]$. Since $\ell_D(1,\cdot)$ is strictly convex and therefore $g$ is strictly concave, there exists a unique global maximum attained either at $t^*\in\{0,1\}$ or at $t^*\in (0,1)$, in which case it occurs when $g^\prime(t^*) = 0$, which yields \eqref{eq:opt-disc-cpe-implicit-eq} as follows:
\[u\ell_D^\prime(1,1-t^*)-\ell_D^\prime(t^*) = 0 \implies \ell_D^\prime(t^*) = u\ell_D^\prime(1-t^*). \]
The generator's optimization problem in \eqref{eqn:cpe-gen_obj} with the optimal discriminator $D_{\omega^*}$ satisfying \eqref{eq:opt-disc-cpe-implicit-eq} can be written as $\inf_{\theta\in\Theta}V_{\ell_G}(\theta,\omega^*)$,
where
\begin{align}
   V_{\ell_G}(\theta,\omega^*)&=\int_\mathcal{X}\left(-p_r(x)\ell_G(1,D_{\omega^*}(x))-p_{G_\theta}(x)\ell_G(1,1-D_{\omega^*}(x))\right)dx\\
    &  =   \int_\mathcal{X} p_{G_\theta}(x)\left(-\frac{p_r(x)}{p_{G_\theta}(x)}\ell_G(1,D_{\omega^*}(x)) -\ell_G(1,1-D_{\omega^*}(x))\right) dx. \label{eq:gen-obj-dual-cpe-f-divergence}
\end{align}
Let $A\left(\frac{p_r(x)}{p_{G_\theta}(x)}\right) \coloneqq D_{\omega^*}(x)$ for $x \in \mathcal{X}$ and
\begin{align}
    f(u) = -u\ell_G(1,A(u)) -\ell_G(1,1-A(u)) + 2\ell_G(1,1/2), \quad u \ge 0.
    \label{eq:f_dual_cpe}
\end{align}
Note that the additional term 2$\ell_G(1,1/2)$ in \ref{eq:f_dual_cpe} is required to satisfy $f(1)=0$, where the $1/2$ comes from the fact that $D_{\omega^*}(x)=1/2$ for any $x\in\mathcal{X}$ such that $p_r(x)=p_{G_\theta}(x)$. Then $f$ is convex by assumption, so \eqref{eq:gen-obj-dual-cpe-f-divergence} becomes $D_f(P_r||P_{G_\theta}) -2\ell_G(1,1/2)$, where $D_f(P_r||P_{G_\theta})$ is the $f$-divergence with $f$ as given in \eqref{eq:f_dual_cpe}.

\section{Proof of Theorem \ref{thm:estimationerror-upperbound-double-objective}}
\label{appendix:estimationerror-upperbound-alpha_d,alpha_g-GAN}
By adding and subtracting relevant terms, we obtain
\begin{subequations}
\begin{align} 
d_{\omega^*(P_r,P_{G_{\hat{\theta}^*}})}(P_r,{P}_{G_{\hat{\theta}^*}})&-\inf_{\theta\in\Theta} d_{\omega^*(P_r,P_{G_{\theta}})}(P_r,P_{G_{\theta}}) \nonumber \\
& = d_{\omega^*(P_r,P_{G_{\hat{\theta}^*}})}(P_r,{P}_{G_{\hat{\theta}^*}}) - d_{\omega^*(P_r,P_{G_{\hat{\theta}^*}})}(\hat{P}_r,{P}_{G_{\hat{\theta}^*}}) \label{eq:estimation-err-expanded-term1} \\
&\quad + \inf_{\theta\in\Theta} d_{\omega^*(P_r,P_{G_{\theta}})}(\hat{P}_r,P_{G_{\theta}}) - \inf_{\theta\in\Theta} d_{\omega^*(P_r,P_{G_{\theta}})}(P_r,P_{G_{\theta}}) \label{eq:estimation-err-expanded-term2} \\
& \quad + d_{\omega^*(P_r,P_{G_{\hat{\theta}^*}})}(\hat{P}_r,{P}_{G_{\hat{\theta}^*}}) - \inf_{\theta\in\Theta} d_{\omega^*(P_r,P_{G_{\theta}})}(\hat{P}_r,P_{G_{\theta}}). \label{eq:estimation-err-expanded-term3}
\end{align}
\label{eq:estimation-err-expanded}
\end{subequations}

We upper-bound \eqref{eq:estimation-err-expanded} in the following three steps. Let $\phi(\cdot) = -\ell_{G}(1,\cdot)$ and $\psi(\cdot) = -\ell_{G}(0,\cdot)$.

We first upper-bound \eqref{eq:estimation-err-expanded-term1}. Let $\omega^*(\hat{\theta}^*)=\omega^*(P_r,P_{G_{\hat{\theta}^*}})$. Using \eqref{eq:est-err-gen-obj} yields
\begin{align}
    d_{\omega^*(P_r,P_{G_{\hat{\theta}^*}})}(P_r,{P}_{G_{\hat{\theta}^*}}) &- d_{\omega^*(P_r,P_{G_{\hat{\theta}^*}})}(\hat{P}_r,{P}_{G_{\hat{\theta}^*}}) \nonumber \\
    & = \mathbb{E}_{X\sim P_r}[\phi(D_{\omega^*(\hat{\theta}^*)}(X))] +\mathbb{E}_{X\sim P_{G_{\hat{\theta}^*}}}[\psi(D_{\omega^*(\hat{\theta}^*)}(X))] \nonumber \\
    & \quad - \left(\mathbb{E}_{X\sim \hat{P}_r}[\phi(D_{\omega^*(\hat{\theta}^*)}(X))] + \mathbb{E}_{X\sim P_{G_{\hat{\theta}^*}}}[\psi(D_{\omega^*(\hat{\theta}^*)}(X))] \right) \nonumber \\
    & \le \left| \mathbb{E}_{X\sim P_r}[\phi(D_{\omega^*(\hat{\theta}^*)}(X))] - \mathbb{E}_{X\sim \hat{P}_r}[\phi(D_{\omega^*(\hat{\theta}^*)}(X))] \right| \nonumber \\
    & \le \sup_{\omega \in \Omega} \left| \mathbb{E}_{X\sim P_r}[\phi(D_{\omega}(X))] - \mathbb{E}_{X\sim \hat{P}_r}[\phi(D_{\omega}(X))] \right|.
    \label{eq:est-err-term1-bound}
\end{align}

Next, we upper-bound \eqref{eq:estimation-err-expanded-term2}. Let $\theta^* = \arg\min_{\theta \in \Theta} d_{\omega^*(P_r,P_{G_{{\theta}}})}(P_r,{P}_{G_{{\theta}}})$ and $\omega^*({\theta}^*)=\omega^*(P_r,P_{G_{{\theta}^*}})$. Then
\begin{align}
     \inf_{\theta\in\Theta} d_{\omega^*(P_r,P_{G_{{\theta}}})}(\hat{P}_r,P_{G_{\theta}}) &- \inf_{\theta\in\Theta} d_{\omega^*(P_r,P_{G_{{\theta}}})}(P_r,P_{G_{\theta}}) \nonumber \\
    & \le d_{\omega^*(\theta^*)}(\hat{P}_r,P_{G_{\theta^*}}) - d_{\omega^*(\theta^*)}(P_r,P_{G_{\theta^*}}) \nonumber \\
    & = \mathbb{E}_{X\sim \hat{P}_r}[\phi(D_{\omega^*({\theta}^*)}(X))] +\mathbb{E}_{X\sim P_{G_{{\theta}^*}}}[\psi(D_{\omega^*({\theta}^*)}(X))] \nonumber \\
    & \quad - \left(\mathbb{E}_{X\sim {P}_r}[\phi(D_{\omega^*({\theta}^*)}(X))] + \mathbb{E}_{X\sim P_{G_{{\theta}^*}}}[\psi(D_{\omega^*({\theta}^*)}(X))] \right) \nonumber \\
    & = \mathbb{E}_{X\sim \hat{P}_r}[\phi(D_{\omega^*({\theta}^*)}(X))] - \mathbb{E}_{X\sim {P}_r}[\phi(D_{\omega^*({\theta}^*)}(X))] \nonumber \\
    & \le \sup_{\omega \in \Omega} \left| \mathbb{E}_{X\sim P_r}[\phi(D_{\omega}(X))] - \mathbb{E}_{X\sim \hat{P}_r}[\phi(D_{\omega}(X))] \right|.
    \label{eq:est-err-term2-bound}
\end{align}

Lastly, we upper-bound \eqref{eq:estimation-err-expanded-term3}. Let $\Tilde{\theta} = \arg\min_{\theta \in \Theta} d_{\omega^*(P_r,P_{G_{\theta}})}(\hat{P}_r,{P}_{G_{{\theta}}})$ and $\omega^*(\Tilde{\theta})=\omega^*(P_r,P_{G_{\Tilde{\theta}}})$. Then
\begin{align}
    d_{\omega^*(P_r,P_{G_{\hat{\theta}^*}})}(\hat{P}_r,{P}_{G_{\hat{\theta}^*}}) &- \inf_{\theta\in\Theta} d_{\omega^*(P_r,P_{G_{\theta}})}(\hat{P}_r,P_{G_{\theta}}) \nonumber \\
    & = d_{\omega^*(\hat{\theta}^*)}(\hat{P}_r,{P}_{G_{\hat{\theta}^*}}) - d_{\omega^*({\Tilde{\theta}})}(\hat{P}_r,\hat{P}_{G_{\Tilde{\theta}}}) + d_{\omega^*({\Tilde{\theta}})}(\hat{P}_r,\hat{P}_{G_{\Tilde{\theta}}}) - d_{\omega^*(\Tilde{\theta})}(\hat{P}_r,P_{G_{\Tilde{\theta}}}) \nonumber \\
    & \le d_{\omega^*(\hat{\theta}^*)}(\hat{P}_r,{P}_{G_{\hat{\theta}^*}}) - d_{\omega^*({\hat{\theta}^*})}(\hat{P}_r,\hat{P}_{G_{\hat{\theta}^*}}) + d_{\omega^*({\Tilde{\theta}})}(\hat{P}_r,\hat{P}_{G_{\Tilde{\theta}}}) - d_{\omega^*(\Tilde{\theta})}(\hat{P}_r,P_{G_{\Tilde{\theta}}}) \nonumber \\
    & = \mathbb{E}_{X\sim \hat{P}_r}[\phi(D_{\omega^*(\hat{\theta}^*)}(X))] +\mathbb{E}_{X\sim P_{G_{\hat{\theta}^*}}}[\psi(D_{\omega^*(\hat{\theta}^*)}(X))] \nonumber \\
    & \quad - \left(\mathbb{E}_{X\sim \hat{P}_r}[\phi(D_{\omega^*(\hat{\theta}^*)}(X))] + \mathbb{E}_{X\sim \hat{P}_{G_{\hat{\theta}^*}}}[\psi(D_{\omega^*(\hat{\theta}^*)}(X))] \right) \nonumber \\
    & \quad + \mathbb{E}_{X\sim \hat{P}_r}[\phi(D_{\omega^*(\Tilde{\theta})}(X))] +\mathbb{E}_{X\sim \hat{P}_{G_{\Tilde{\theta}}}}[\psi(D_{\omega^*(\Tilde{\theta})}(X))] \nonumber \\
    & \quad - \left(\mathbb{E}_{X\sim \hat{P}_r}[\phi(D_{\omega^*(\Tilde{\theta})}(X))] + \mathbb{E}_{X\sim {P}_{G_{\Tilde{\theta}}}}[\psi(D_{\omega^*(\Tilde{\theta})}(X))] \right) \nonumber \\
    & = \mathbb{E}_{X\sim P_{G_{\hat{\theta}^*}}}[\psi(D_{\omega^*(\hat{\theta}^*)}(X))] - \mathbb{E}_{X\sim \hat{P}_{G_{\hat{\theta}^*}}}[\psi(D_{\omega^*(\hat{\theta}^*)}(X))] \nonumber \\
    & \quad + \mathbb{E}_{X\sim \hat{P}_{G_{\Tilde{\theta}}}}[\psi(D_{\omega^*(\Tilde{\theta})}(X))] - \mathbb{E}_{X\sim {P}_{G_{\Tilde{\theta}}}}[\psi(D_{\omega^*(\Tilde{\theta})}(X))] \nonumber \\
    & \le 2\sup_{\omega \in \Omega,\theta \in \Theta} \left| \mathbb{E}_{X\sim P_{G_{{\theta}}}}[\psi(D_{\omega}(X))] - \mathbb{E}_{X\sim \hat{P}_{G_{{\theta}}}}[\psi(D_{\omega}(X))] \right|.
    \label{eq:est-err-term3-bound}
\end{align}
Combining \eqref{eq:est-err-term1-bound}-\eqref{eq:est-err-term3-bound}, we obtain the following bound for \eqref{eq:estimation-err-expanded}:
\begin{align}
    d_{\omega^*(P_r,P_{G_{\hat{\theta}^*}})}(P_r,{P}_{G_{\hat{\theta}^*}})&-\inf_{\theta\in\Theta} d_{\omega^*(P_r,P_{G_{\theta}})}(P_r,P_{G_{\theta}}) \nonumber \\
    & \le 2\sup_{\omega \in \Omega} \Big| \mathbb{E}_{X\sim P_r}[\phi(D_{\omega}(X))] - \mathbb{E}_{X\sim \hat{P}_r}[\phi(D_{\omega}(X))] \Big| \nonumber \\
    & \quad + 2\sup_{\omega \in \Omega,\theta \in \Theta} \Big| \mathbb{E}_{X\sim P_{G_{{\theta}}}}[\psi(D_{\omega}(X))] - \mathbb{E}_{X\sim \hat{P}_{G_{{\theta}}}}[\psi(D_{\omega}(X))] \Big| \nonumber \\
    & = 2\sup_{\omega \in \Omega} \Big| \mathbb{E}_{X\sim P_r}[\phi(D_{\omega}(X))] - \frac{1}{n}\sum_{i=1}^n \phi(D_{\omega}(X_i)) \Big| \nonumber \\
    & \quad + 2\sup_{\omega \in \Omega,\theta \in \Theta} \Big| \mathbb{E}_{X\sim P_{G_{{\theta}}}}[\psi(D_{\omega}(X))] - \frac{1}{m}\sum_{j=1}^m \psi(D_{\omega}(X_j)) \Big|.
    \label{eq:est-err-proof-bound}
\end{align}
Note that \eqref{eq:est-err-proof-bound} is exactly the same bound as that in \eqref{eq:estimationboundrhs2}. Hence, the remainder of the proof follows from the proof of Theorem \ref{thm:estimationerror-upperbound}, where $\phi(\cdot) \coloneqq -\ell_{G}(1,\cdot)$ and $\psi(\cdot) \coloneqq -\ell_{G}(0,\cdot)$. The specialization to $(\alpha_D,\alpha_D)$-GANs follows from setting $\ell_D=\ell_{\alpha_D}$ and $\ell_G=\ell_{\alpha_G}$.


\section{Additional Experimental Results}
\label{appendix:experimental-details-results}

\subsection{Brief Overview of LSGAN}
\label{subsec:lsgan-overview}
The Least Squares GAN (LSGAN) is a dual-objective min-max game introduced in \cite{Mao_2017_LSGAN}. The LSGAN objective functions, as the name suggests, involve squared loss functions for D and G which are written as
\begin{align}
    \inf_{\omega\in\Omega} \; \frac{1}{2}\Big(\mathbb{E}_{X\sim P_r}[(D_\omega(X)-b)^2]+\mathbb{E}_{X\sim P_{G_\theta}}[(D_\omega(X)-a)^2]\Big) \nonumber \\
    \inf_{\theta\in\Theta} \; \frac{1}{2}\Big(\mathbb{E}_{X\sim P_r}[(D_\omega(X)-c)^2]+\mathbb{E}_{X\sim P_{G_\theta}}[(D_\omega(X)-c)^2]\Big).
    \label{eq:lsgan-objectives}
\end{align}
For appropriately chosen values of the parameters $a$, $b$, and $c$,
\eqref{eq:lsgan-objectives} reduces to minimizing the Pearson $\chi^2$-divergence between $P_r+P_{G_\theta}$ and $2P_{G_\theta}$. As done in the original paper \cite{Mao_2017_LSGAN}, we use $a=0$, $b=1$ and $c=1$ for our experiments to make fair comparisons. The authors refer to this choice of parameters as the 0-1 binary coding scheme. 

\subsection{2D Gaussian Mixture Ring}
\label{appendix:2d-ring}

In Tables \ref{table:2d-ring-sat-success-rates} and \ref{table:2d-ring-sat-failure-rates}, we
report the success (8/8 mode coverage) and failure (0/8 mode coverage) rates over 200 seeds for a grid of $(\alpha_{D}, \alpha_{G})$ combinations for the  \emph{saturating} setting. Compared to the vanilla GAN performance, we find that tuning $\alpha_{D}$ below 1 leads to a greater success rate and lower failure rate. However, in this saturating loss setting, we find that tuning $\alpha_{G}$ away from 1 has no significant impact on GAN performance.

\begin{table*}[ht]
\centering
\caption{Success rates for 2D-ring with the saturating $(\alpha_{D}, \alpha_{G})$-GAN over 200 seeds, with top 4 combinations emboldened.
}
\label{table:2d-ring-sat-success-rates}
\renewcommand{\arraystretch}{1.2}
\centering
\begin{sc}
\begin{tabular}{cl|llllll|}
\cline{3-8}
\multicolumn{2}{c|}{\multirow{2}{*}{\begin{tabular}[c]{@{}c@{}}\% of success \\ (8/8 modes)\end{tabular}}} & \multicolumn{6}{c|}{$\alpha_{D}$} \\ \cline{3-8} 
\multicolumn{2}{c|}{} & 0.5 & 0.6 & 0.7 & 0.8 & 0.9 & 1.0 \\ \hline
\multicolumn{1}{|c|}{\multirow{4}{*}{$\alpha_{G}$}} & 0.9 & \multicolumn{1}{l|}{73} & \multicolumn{1}{l|}{79} & \multicolumn{1}{l|}{69} & \multicolumn{1}{l|}{60} & \multicolumn{1}{l|}{46} & 34 \\ \cline{3-8} 
\multicolumn{1}{|c|}{} & 1.0 & \multicolumn{1}{l|}{\textbf{80}} & \multicolumn{1}{l|}{\textbf{79}} & \multicolumn{1}{l|}{74} & \multicolumn{1}{l|}{68} & \multicolumn{1}{l|}{54} & 47 \\ \cline{3-8} 
\multicolumn{1}{|c|}{} & 1.1 & \multicolumn{1}{l|}{\textbf{79}} & \multicolumn{1}{l|}{77} & \multicolumn{1}{l|}{68} & \multicolumn{1}{l|}{70} & \multicolumn{1}{l|}{59} & 47 \\ \cline{3-8} 
\multicolumn{1}{|c|}{} & 1.2 & \multicolumn{1}{l|}{\textbf{75}} & \multicolumn{1}{l|}{74} & \multicolumn{1}{l|}{71} & \multicolumn{1}{l|}{65} & \multicolumn{1}{l|}{57} & 46 \\ \hline
\end{tabular}
\end{sc}
\end{table*}
\begin{table*}[ht]
\centering
\caption{Failure rates for 2D-ring with the saturating $(\alpha_{D}, \alpha_{G})$-GAN over 200 seeds, with top 3 combinations emboldened.
}
\label{table:2d-ring-sat-failure-rates}
\renewcommand{\arraystretch}{1.2}
\centering
\begin{sc}
\begin{tabular}{cl|llllll|}
\cline{3-8}
\multicolumn{2}{c|}{\multirow{2}{*}{\begin{tabular}[c]{@{}c@{}}\% of failure \\ (0/8 modes)\end{tabular}}} & \multicolumn{6}{c|}{$\alpha_{D}$} \\ \cline{3-8} 
\multicolumn{2}{c|}{} & 0.5 & 0.6 & 0.7 & 0.8 & 0.9 & 1.0 \\ \hline
\multicolumn{1}{|c|}{\multirow{4}{*}{$\alpha_{G}$}} & 0.9 & \multicolumn{1}{l|}{11} & \multicolumn{1}{l|}{10} & \multicolumn{1}{l|}{12} & \multicolumn{1}{l|}{13} & \multicolumn{1}{l|}{29} & 49 \\ \cline{3-8} 
\multicolumn{1}{|c|}{} & 1.0 & \multicolumn{1}{l|}{\textbf{5}} & \multicolumn{1}{l|}{\textbf{5}} & \multicolumn{1}{l|}{7} & \multicolumn{1}{l|}{8} & \multicolumn{1}{l|}{16} & 30 \\ \cline{3-8} 
\multicolumn{1}{|c|}{} & 1.1 & \multicolumn{1}{l|}{7} & \multicolumn{1}{l|}{9} & \multicolumn{1}{l|}{13} & \multicolumn{1}{l|}{12} & \multicolumn{1}{l|}{13} & 26 \\ \cline{3-8} 
\multicolumn{1}{|c|}{} & 1.2 & \multicolumn{1}{l|}{9} & \multicolumn{1}{l|}{\textbf{5}} & \multicolumn{1}{l|}{9} & \multicolumn{1}{l|}{12} & \multicolumn{1}{l|}{17} & 31 \\ \hline
\end{tabular}
\end{sc}
\end{table*}
\begin{table*}[h!]
\caption{Success rates for 2D-ring with the NS $(\alpha_{D}, \alpha_{G})$-GAN over 200 seeds, with top 5 combinations emboldened.
}
\label{table:2d-ring-ns-success-rates}
\renewcommand{\arraystretch}{1.2}
\centering
\begin{small}
\begin{sc}
\begin{tabular}{cl|cccccccc|}
\cline{3-10}
\multicolumn{2}{c|}{\multirow{2}{*}{\begin{tabular}[c]{@{}c@{}}\% of success \\ (8/8 modes)\end{tabular}}} & \multicolumn{8}{c|}{$\alpha_{D}$} \\ \cline{3-10} 
\multicolumn{2}{c|}{} & \multicolumn{1}{l}{0.5} & \multicolumn{1}{l}{0.6} & \multicolumn{1}{l}{0.7} & \multicolumn{1}{l}{0.8} & \multicolumn{1}{l}{0.9} & \multicolumn{1}{l}{1.0} & \multicolumn{1}{l}{1.1} & \multicolumn{1}{l|}{1.2} \\ \hline
\multicolumn{1}{|c|}{\multirow{6}{*}{$\alpha_{G}$}} & 0.8 & \multicolumn{1}{c|}{35} & \multicolumn{1}{c|}{24} & \multicolumn{1}{c|}{19} & \multicolumn{1}{c|}{19} & \multicolumn{1}{c|}{14} & \multicolumn{1}{c|}{16} & \multicolumn{1}{c|}{18} & 10 \\ \cline{3-10} 
\multicolumn{1}{|c|}{} & 0.9 & \multicolumn{1}{c|}{\textbf{39}} & \multicolumn{1}{c|}{37} & \multicolumn{1}{c|}{19} & \multicolumn{1}{c|}{22} & \multicolumn{1}{c|}{16} & \multicolumn{1}{c|}{20} & \multicolumn{1}{c|}{19} & 21 \\ \cline{3-10} 
\multicolumn{1}{|c|}{} & 1.0 & \multicolumn{1}{c|}{34} & \multicolumn{1}{c|}{35} & \multicolumn{1}{c|}{29} & \multicolumn{1}{c|}{28} & \multicolumn{1}{c|}{26} & \multicolumn{1}{c|}{22} & \multicolumn{1}{c|}{20} & 32 \\ \cline{3-10} 
\multicolumn{1}{|c|}{} & 1.1 & \multicolumn{1}{c|}{\textbf{40}} & \multicolumn{1}{c|}{36} & \multicolumn{1}{c|}{31} & \multicolumn{1}{c|}{22} & \multicolumn{1}{c|}{24} & \multicolumn{1}{c|}{15} & \multicolumn{1}{c|}{23} & 25 \\ \cline{3-10} 
\multicolumn{1}{|c|}{} & 1.2 & \multicolumn{1}{c|}{\textbf{45}} & \multicolumn{1}{c|}{38} & \multicolumn{1}{c|}{34} & \multicolumn{1}{c|}{25} & \multicolumn{1}{c|}{26} & \multicolumn{1}{c|}{28} & \multicolumn{1}{c|}{20} & 22 \\ \cline{3-10} 
\multicolumn{1}{|c|}{} & 1.3 & \multicolumn{1}{c|}{\textbf{44}} & \multicolumn{1}{c|}{\textbf{39}} & \multicolumn{1}{c|}{26} & \multicolumn{1}{c|}{28} & \multicolumn{1}{c|}{28} & \multicolumn{1}{c|}{25} & \multicolumn{1}{c|}{31} & 29 \\ \hline
\end{tabular}
\end{sc}
\end{small}
\end{table*}

In Table \ref{table:2d-ring-ns-success-rates}, we detail the success rates for the NS setting. We note that for this dataset, no failures, and therefore, no vanishing/exploding gradients, occurred in the NS setting. In particular, we find that the $(0.5,1.2)$-GAN doubles the success rate of the vanilla $(1,1)$-GAN, which is more susceptible to mode collapse as illustrated in Figure \ref{fig:NS}. We also find that LSGAN achieves a success rate of 32.5\%, which is greater than vanilla GAN but less than the best-performing $(\alpha_{D}, \alpha_{G})$-GAN.

\subsection{Celeb-A \& LSUN Classroom}

The discriminator and generator architectures used for the Celeb-A and LSUN Classroom datasets are described in Tables \ref{tab:arch-celeba} and \ref{tab:arch-lsun} respectively. Each architecture consists of four CNN layers, with parameters such as kernel size (i.e., size of the filter, denoted as ``Kernel"), stride (the amount by which the filter moves), and the activation functions applied to the layer outputs. Zero padding is also assumed. In both tables, ``BN" represents batch normalization, a technique that normalizes the inputs to each layer using a batch of samples during model training. Batch normalization is commonly employed in deep learning to prevent cumulative floating point errors and overflows, and to ensure that all features remain within a similar range. This technique serves as a computational tool to address vanishing and/or exploding gradients.

\begin{table*}[]
\caption{Discriminator and generator architectures for Celeb-A.\\ The final sigmoid activation layer is removed for the LSGAN discriminator.}
\label{tab:arch-celeba}
\renewcommand{\arraystretch}{1.3}
\centering
\footnotesize
\begin{sc}
\resizebox{\textwidth}{!}{
\begin{tabular}{|cccccc||cccccc|}
\hline
\multicolumn{6}{|c||}{Discriminator} & \multicolumn{6}{c|}{Generator} \\ \hline
\multicolumn{1}{|c|}{Layer} & \multicolumn{1}{c|}{Output size} & \multicolumn{1}{c|}{Kernel} & \multicolumn{1}{c|}{Stride} & \multicolumn{1}{c|}{BN} & Activation & \multicolumn{1}{c|}{Layer} & \multicolumn{1}{c|}{Output size} & \multicolumn{1}{c|}{Kernel} & \multicolumn{1}{c|}{Stride} & \multicolumn{1}{c|}{BN} & Activation \\ \hline
\multicolumn{1}{|c|}{Input} & \multicolumn{1}{c|}{$3 \times 64 \times 64$} & \multicolumn{1}{c|}{} & \multicolumn{1}{c|}{} & \multicolumn{1}{c|}{} & Leaky ReLU & \multicolumn{1}{c|}{Input} & \multicolumn{1}{c|}{$100 \times 1 \times 1$} & \multicolumn{1}{c|}{} & \multicolumn{1}{c|}{} & \multicolumn{1}{c|}{} & ReLU \\
\multicolumn{1}{|c|}{Convolution} & \multicolumn{1}{c|}{$64 \times 32 \times 32$} & \multicolumn{1}{c|}{$4 \times 4$} & \multicolumn{1}{c|}{2} & \multicolumn{1}{c|}{Yes} & Leaky ReLU & \multicolumn{1}{c|}{ConvTranspose} & \multicolumn{1}{c|}{$512 \times 4 \times 4$} & \multicolumn{1}{c|}{$4 \times 4$} & \multicolumn{1}{c|}{2} & \multicolumn{1}{c|}{Yes} & ReLU \\
\multicolumn{1}{|c|}{Convolution} & \multicolumn{1}{c|}{$128 \times 16 \times 16$} & \multicolumn{1}{c|}{$4 \times 4$} & \multicolumn{1}{c|}{2} & \multicolumn{1}{c|}{Yes} & Leaky ReLU & \multicolumn{1}{c|}{ConvTranspose} & \multicolumn{1}{c|}{$256 \times 8 \times 8$} & \multicolumn{1}{c|}{$4 \times 4$} & \multicolumn{1}{c|}{2} & \multicolumn{1}{c|}{Yes} & ReLU \\
\multicolumn{1}{|c|}{Convolution} & \multicolumn{1}{c|}{$256 \times 8 \times 8$} & \multicolumn{1}{c|}{$4 \times 4$} & \multicolumn{1}{c|}{2} & \multicolumn{1}{c|}{Yes} & Leaky ReLU & \multicolumn{1}{c|}{ConvTranspose} & \multicolumn{1}{c|}{$128 \times 16 \times 16$} & \multicolumn{1}{c|}{$4 \times 4$} & \multicolumn{1}{c|}{2} & \multicolumn{1}{c|}{Yes} & ReLU \\
\multicolumn{1}{|c|}{Convolution} & \multicolumn{1}{c|}{$512 \times 4 \times 4$} & \multicolumn{1}{c|}{$4 \times 4$} & \multicolumn{1}{c|}{2} & \multicolumn{1}{c|}{Yes} & Leaky ReLU & \multicolumn{1}{c|}{ConvTranspose} & \multicolumn{1}{c|}{$64 \times 32 \times 32$} & \multicolumn{1}{c|}{$4 \times 4$} & \multicolumn{1}{c|}{2} & \multicolumn{1}{c|}{Yes} & ReLU \\
\multicolumn{1}{|c|}{Convolution} & \multicolumn{1}{c|}{$1 \times 1 \times 1$} & \multicolumn{1}{c|}{$4 \times 4$} & \multicolumn{1}{c|}{2} & \multicolumn{1}{c|}{} & Sigmoid & \multicolumn{1}{c|}{ConvTranspose} & \multicolumn{1}{c|}{$3 \times 64 \times 64$} & \multicolumn{1}{c|}{$4 \times 4$} & \multicolumn{1}{c|}{2} & \multicolumn{1}{c|}{} & Tanh \\ \hline
\end{tabular}}
\end{sc}
\end{table*}

\begin{table*}[]
\caption{Discriminator and generator architectures for LSUN Classroom.\\The final sigmoid activation layer is removed for the LSGAN discriminator.}
\label{tab:arch-lsun}
\renewcommand{\arraystretch}{1.3}
\centering
\footnotesize
\begin{sc}
\resizebox{\textwidth}{!}{%
\begin{tabular}{|cccccc||cccccc|}
\hline
\multicolumn{6}{|c||}{Discriminator} & \multicolumn{6}{c|}{Generator} \\ \hline
\multicolumn{1}{|c|}{Layer} & \multicolumn{1}{c|}{Output size} & \multicolumn{1}{c|}{Kernel} & \multicolumn{1}{c|}{Stride} & \multicolumn{1}{c|}{BN} & Activation & \multicolumn{1}{c|}{Layer} & \multicolumn{1}{c|}{Output size} & \multicolumn{1}{c|}{Kernel} & \multicolumn{1}{c|}{Stride} & \multicolumn{1}{c|}{BN} & Activation \\ \hline
\multicolumn{1}{|c|}{Input} & \multicolumn{1}{c|}{$3 \times 112 \times 112$} & \multicolumn{1}{c|}{} & \multicolumn{1}{c|}{} & \multicolumn{1}{c|}{} & Leaky ReLU & \multicolumn{1}{c|}{Input} & \multicolumn{1}{c|}{$100 \times 1 \times 1$} & \multicolumn{1}{c|}{} & \multicolumn{1}{c|}{} & \multicolumn{1}{c|}{} & ReLU \\
\multicolumn{1}{|c|}{Convolution} & \multicolumn{1}{c|}{$64 \times 56 \times 56$} & \multicolumn{1}{c|}{$4 \times 4$} & \multicolumn{1}{c|}{2} & \multicolumn{1}{c|}{Yes} & Leaky ReLU & \multicolumn{1}{c|}{ConvTranspose} & \multicolumn{1}{c|}{$512 \times 7 \times 7$} & \multicolumn{1}{c|}{$7 \times 7$} & \multicolumn{1}{c|}{2} & \multicolumn{1}{c|}{Yes} & ReLU \\
\multicolumn{1}{|c|}{Convolution} & \multicolumn{1}{c|}{$128 \times 28 \times 28$} & \multicolumn{1}{c|}{$4 \times 4$} & \multicolumn{1}{c|}{2} & \multicolumn{1}{c|}{Yes} & Leaky ReLU & \multicolumn{1}{c|}{ConvTranspose} & \multicolumn{1}{c|}{$256 \times 14 \times 14$} & \multicolumn{1}{c|}{$4 \times 4$} & \multicolumn{1}{c|}{2} & \multicolumn{1}{c|}{Yes} & ReLU \\
\multicolumn{1}{|c|}{Convolution} & \multicolumn{1}{c|}{$256 \times 14 \times 14$} & \multicolumn{1}{c|}{$4 \times 4$} & \multicolumn{1}{c|}{2} & \multicolumn{1}{c|}{Yes} & Leaky ReLU & \multicolumn{1}{c|}{ConvTranspose} & \multicolumn{1}{c|}{$128 \times 28 \times 28$} & \multicolumn{1}{c|}{$4 \times 4$} & \multicolumn{1}{c|}{2} & \multicolumn{1}{c|}{Yes} & ReLU \\
\multicolumn{1}{|c|}{Convolution} & \multicolumn{1}{c|}{$512 \times 7 \times 7$} & \multicolumn{1}{c|}{$4 \times 4$} & \multicolumn{1}{c|}{2} & \multicolumn{1}{c|}{Yes} & Leaky ReLU & \multicolumn{1}{c|}{ConvTranspose} & \multicolumn{1}{c|}{$64 \times 56 \times 56$} & \multicolumn{1}{c|}{$4 \times 4$} & \multicolumn{1}{c|}{2} & \multicolumn{1}{c|}{Yes} & ReLU \\
\multicolumn{1}{|c|}{Convolution} & \multicolumn{1}{c|}{$1 \times 1 \times 1$} & \multicolumn{1}{c|}{$7 \times 7$} & \multicolumn{1}{c|}{2} & \multicolumn{1}{c|}{} & Sigmoid & \multicolumn{1}{c|}{ConvTranspose} & \multicolumn{1}{c|}{$3 \times 112 \times 112$} & \multicolumn{1}{c|}{$4 \times 4$} & \multicolumn{1}{c|}{2} & \multicolumn{1}{c|}{} & Tanh \\ \hline
\end{tabular}}
\end{sc}
\end{table*}

In Table \ref{table:celeba-lsun-stability}, we collate the FID results for both datasets as a function of the learning rates. This table captures the percentage (out of 50 seeds) of FID scores below a desired threshold, which is 80 for the CELEB-A dataset and 800 for the LSUN Classroom dataset. 

We first focus on the CELEB-A dataset: Table \ref{table:celeba-lsun-stability} demonstrates that for a learning rate of $1 \times 10^{-4}$, all GANs (vanilla, different $(\alpha_D,\alpha_G)$-GANs, and LSGANs) achieve an FID score below 80 at least 93\% of the time. However, the instability of vanilla GAN is also evident in Table \ref{table:celeba-lsun-stability}, where for a slightly higher learning rate of $6 \times 10^{-4}$, the $(1,1)$-GAN achieves an FID score below 80 only 60\% of the time whereas at least one $(\alpha_D,\alpha_G=1)$-GAN consistently performs better than 76\% over all chosen learning rates. We observe that tuning $\alpha_{D}$ below 1 contributes to stabilizing the FID scores over the 50 seeds while maintaining relatively low scores on average. This stability is emphasized in Table \ref{table:celeba-lsun-stability}, in particular for the $(0.7,1)$-GAN, as it achieves an FID score below 80 at least 80\% of the time for 7 out of the 10 the learning rates.

Table \ref{table:celeba-lsun-stability} also illustrates similar results for the LSUN Classroom dataset. 
However, increasing it to $2 \times 10^{-4}$ leads to instability in the vanilla $(1,1)$-GAN across 50 seeds. 

\begin{table*}[h]
\caption{Percentage out of 50 seeds of FID scores below 80 (Celeb-A) or 800 (LSUN Classroom) for each combination of $(\alpha_{D}, \alpha_{G})$-GAN and learning rate, trained for 100 epochs. Best results for each dataset and learning rate are \textbf{emboldened}.}
\label{table:celeba-lsun-stability}
\renewcommand{\arraystretch}{1.3}
\centering
\begin{small}
\begin{sc}
\begin{tabular}{|c||ccccccccccccc|}
\hline
\multirow{2}{*}{\begin{tabular}[c]{@{}c@{}}GAN\end{tabular}} & \multicolumn{8}{c||}{Celeb-A} & \multicolumn{5}{c|}{LSUN Classroom} \\ \cline{2-14} 
 & \multicolumn{13}{c|}{Learning rate ($\times 10^{-4}$)} \\ \hline
$(\alpha_{D}, \alpha_{G})$ & \multicolumn{1}{c|}{1} & \multicolumn{1}{c|}{2} & \multicolumn{1}{c|}{5} & \multicolumn{1}{c|}{6} & \multicolumn{1}{c|}{7} & \multicolumn{1}{c|}{8} & \multicolumn{1}{c|}{9} & \multicolumn{1}{c||}{10} & \multicolumn{1}{c|}{1} & \multicolumn{1}{c|}{2} & \multicolumn{1}{c|}{3} & \multicolumn{1}{c|}{4} & 5 \\ \hline
$(1,1)$ & \multicolumn{1}{c|}{\textbf{100}} & \multicolumn{1}{c|}{93.2} & \multicolumn{1}{c|}{82.6} & \multicolumn{1}{c|}{59.5} & \multicolumn{1}{c|}{58.5} & \multicolumn{1}{c|}{39.0} & \multicolumn{1}{c|}{53.7} & \multicolumn{1}{c||}{54.8} & \multicolumn{1}{c|}{92.0} & \multicolumn{1}{c|}{36.2} & \multicolumn{1}{c|}{12.5} & \multicolumn{1}{c|}{13.0} & 12.2 \\ \hline
$(0.9,1)$ & \multicolumn{1}{c|}{\textbf{100}} & \multicolumn{1}{c|}{95.2} & \multicolumn{1}{c|}{78.3} & \multicolumn{1}{c|}{72.3} & \multicolumn{1}{c|}{81.4} & \multicolumn{1}{c|}{66.7} & \multicolumn{1}{c|}{74.4} & \multicolumn{1}{c||}{46.5} & \multicolumn{1}{c|}{76.0} & \multicolumn{1}{c|}{53.1} & \multicolumn{1}{c|}{22.2} & \multicolumn{1}{c|}{17.0} & 22.2 \\ \hline
$(0.8,1)$ & \multicolumn{1}{c|}{97.8} & \multicolumn{1}{c|}{\textbf{97.6}} & \multicolumn{1}{c|}{\textbf{88.9}} & \multicolumn{1}{c|}{82.2} & \multicolumn{1}{c|}{81.4} & \multicolumn{1}{c|}{72.1} & \multicolumn{1}{c|}{68.4} & \multicolumn{1}{c||}{75.6} & \multicolumn{1}{c|}{88.5} & \multicolumn{1}{c|}{60.8} & \multicolumn{1}{c|}{36.2} & \multicolumn{1}{c|}{27.9} & 29.2 \\ \hline
$(0.7,1)$ & \multicolumn{1}{c|}{\textbf{100}} & \multicolumn{1}{c|}{90.7} & \multicolumn{1}{c|}{\textbf{88.9}} & \multicolumn{1}{c|}{\textbf{91.5}} & \multicolumn{1}{c|}{\textbf{86.4}} & \multicolumn{1}{c|}{\textbf{81.2}} & \multicolumn{1}{c|}{67.6} & \multicolumn{1}{c||}{\textbf{80.0}} & \multicolumn{1}{c|}{90.2} & \multicolumn{1}{c|}{80.4} & \multicolumn{1}{c|}{78.4} & \multicolumn{1}{c|}{67.4} & 55.1 \\ \hline
$(0.6,1)$ & \multicolumn{1}{c|}{97.8} & \multicolumn{1}{c|}{93.0} & \multicolumn{1}{c|}{88.4} & \multicolumn{1}{c|}{76.6} & \multicolumn{1}{c|}{84.6} & \multicolumn{1}{c|}{75.6} & \multicolumn{1}{c|}{\textbf{76.9}} & \multicolumn{1}{c||}{69.2} & \multicolumn{1}{c|}{\textbf{95.7}} & \multicolumn{1}{c|}{\textbf{90.4}} & \multicolumn{1}{c|}{\textbf{85.1}} & \multicolumn{1}{c|}{\textbf{78.3}} & \textbf{66.0} \\ \hline
\end{tabular}
\end{sc}
\end{small}
\vskip -0.1in
\end{table*}

\begin{figure*}[t]
    \centering
    \footnotesize
\setlength{\tabcolsep}{1pt}
\begin{tabular}{@{}cc@{}}
  \includegraphics[height=3.5cm]{./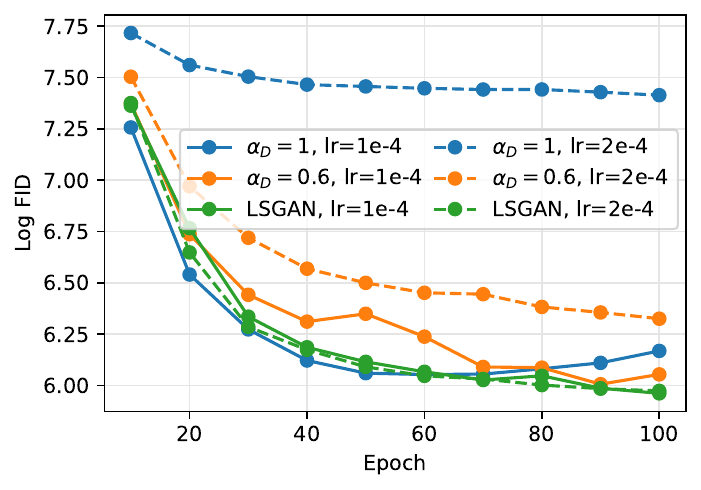}  & \includegraphics[height=4cm]{./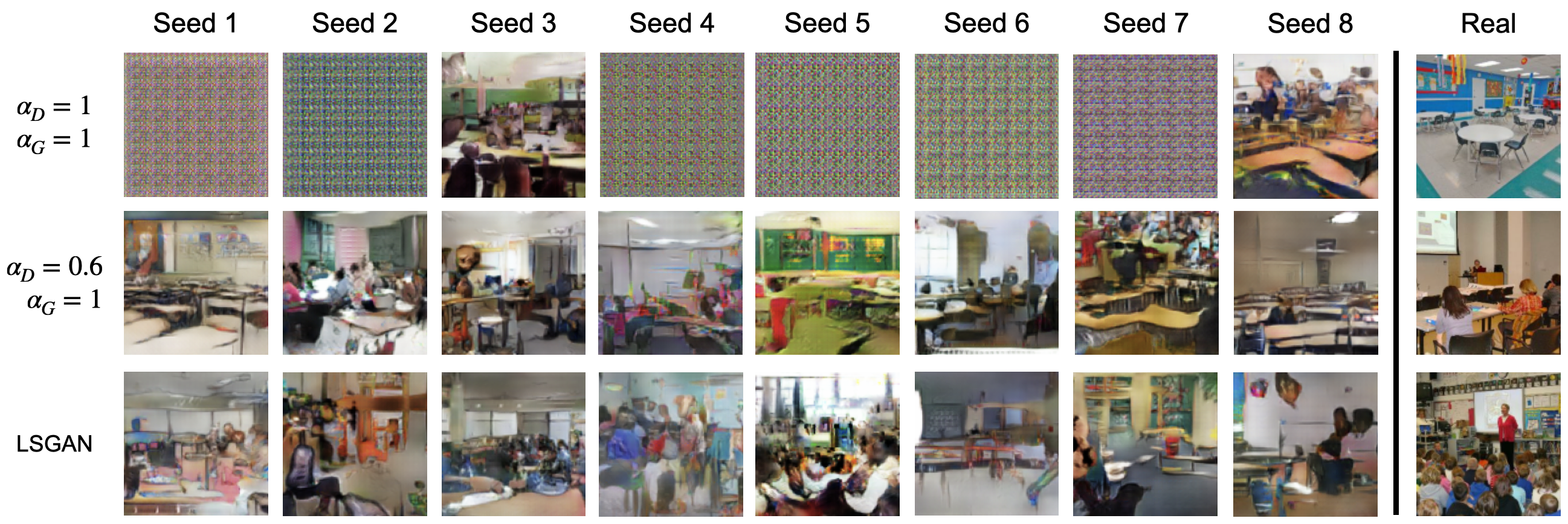} \\
   (a)  & (b)
\end{tabular}
\caption{(a) Log-scale plot of \textbf{LSUN Classroom} FID scores over training epochs in steps of 10 up to 100 total, for three noteworthy GANs-- $(1,1)$-GAN (vanilla), $(0.6,1)$-GAN, and LSGAN-- and for two similar learning rates-- $1 \times 10^{-4}$ and $2 \times 10^{-4}$. Results show that the vanilla GAN performance is very sensitive to learning rate choice as the difference between training with $1 \times 10^{-4}$ and $2 \times 10^{-4}$ is drastic. On the other hand, the other two GANs achieve consistently lower FIDs, with the LSGAN performing the best. (b) Generated LSUN Classroom images from the same three GANs over 8 seeds when trained for 100 epochs with a learning rate of $2 \times 10^{-4}$. These samples show that the vanilla $(1,1)$-GAN training fails for most of seeds while the other two GANs perform fairly well across all seeds, thus exhibiting robustness to random weight initializations.}
\label{fig:lsun_epochs_fids_images}
\end{figure*}

\end{document}